\newcommand{\todo}{\textcolor{black}}%
\newcommand{\unsure}{\textcolor{black}}%
\newtheorem{definition}{Definition}
\newtheorem{proposition}{Proposition}
\DeclareMathOperator*{\argmax}{arg\,max}
\DeclareMathOperator*{\argmin}{arg\,min}
\renewcommand{\algorithmicrequire}{\textbf{Input:}}
\renewcommand{\algorithmicensure}{\textbf{Output:}}
\begin{document}
    \prefrontmatter
    \frontmatter
    \pagenumbering{roman} 
    \cleardoublepage
    \chapter*{Abstract}
From its inception, AI has had a rather ambivalent relationship with humans---swinging between their augmentation and replacement. Now, as AI technologies enter our everyday lives at an ever increasing pace, there is a greater need for AI systems to work synergistically with humans. One critical requirement for such synergistic human-AI interaction is that the AI systems be explainable to the humans in the loop. To do this effectively, AI agents need to go beyond planning with their own
models of the world, and take into account the mental model of the human in the loop. Drawing from several years of research in our lab, we will discuss how the AI agent can use these mental models to either conform to human expectations, or change those expectations through explanatory communication. While the main focus of the book is on cooperative scenarios, we will point out how the same mental models can be used for obfuscation and deception. Although the book is primarily driven by our own research in these areas, in every chapter, we will provide ample connections to relevant research from other groups. 
\mbox{}\linebreak
\\
\\
\noindent {\bf Keywords:} Human-aware AI Systems, Human-AI Interaction, Explainability, Interpretability, Human-aware planning, Obfuscation
    \cleardoublepage
    \pdfbookmark{\contentsname}{toc}
    \renewcommand{\sectionmark}[1]{\markright{#1}}
    \addtolength{\parskip}{-\baselineskip}  
    \tableofcontents
    \addtolength{\parskip}{\baselineskip}
    \renewcommand{\sectionmark}[1]{\markright{\thesection\ #1}}
    \mainmatter
     

{
\chapter*{Preface}
\addcontentsline{toc}{chapter}{\protect\numberline{}{Preface}}



Artificial Intelligence (AI) systems that interact with us the way we interact with each other have long typified Hollywood’s image, whether you think of HAL in “2001: A Space Odyssey,” Samantha in “Her,” or Ava in “Ex Machina.” It thus might surprise people that making systems that interact, assist or collaborate with humans has never been high on the technical agenda.

From its beginning, AI has had a rather ambivalent relationship with humans. The biggest AI successes have come either at a distance from humans (think of the “Spirit” and “Opportunity” rovers navigating the Martian landscape) or in cold adversarial faceoffs (the Deep Blue defeating world chess champion Gary Kasparov, or AlphaGo besting Lee Sedol). In contrast to the magnetic pull of these “replace/defeat humans” ventures, the goal of designing AI systems that are human-aware, capable of interacting and collaborating with humans and engendering trust in them, has received much less attention.

More recently, as AI technologies started capturing our imaginations, there has been a conspicuous change — with “human” becoming the desirable adjective for AI systems. There are so many variations — human-centered, human-compatible, human-aware AI, etc. — that there is almost a need for a dictionary of terms. Some of this interest arose naturally from a desire to understand and regulate the impacts of AI technologies on people. 
%
Of particular interest for us are the 
challenges and impacts of AI systems that continually interact with humans — as decision support systems, personal assistants, intelligent tutoring systems, robot helpers, social robots, AI conversational companions, etc.

To be aware of humans, and to interact with them fluently, an AI agent needs to exhibit social intelligence. Designing agents with social intelligence received little attention when AI development was focused on autonomy rather than coexistence. Its importance for humans cannot be overstated, however. After all, evolutionary theory shows that we developed our impressive brains not so much to run away from lions on the savanna but to get along with each other.

A cornerstone of social intelligence is the so-called “theory of mind” — the ability to model mental states of humans we interact with. Developmental psychologists have shown (with compelling experiments like the Sally-Anne test) that children, with the possible exception of those on the autism spectrum, develop this ability quite early.

Successful AI agents need to acquire, maintain and use such mental models to modulate their own actions. At a minimum, AI agents need approximations of humans’ task and goal models, as well as the human’s model of the AI agent’s task and goal models. The former will guide the agent to anticipate and manage the needs, desires and attention of humans in the loop (think of the prescient abilities of the character Radar on the TV series “M*A*S*H*”), and the latter allow it to act in ways that are interpretable to humans — by conforming to their mental models of it — and be ready to provide customized explanations when needed.

With the increasing use of AI-based decision support systems in many high-stakes areas, including health and criminal justice, the need for AI systems exhibiting interpretable or explainable behavior to humans has become quite critical. The European Union’s General Data Protection Regulation posits a right to contestable explanations for all machine decisions that affect humans (e.g., automated approval or denial of loan applications). While the simplest form of such explanations could well be a trace of the reasoning steps that lead to the decision, things get complex quickly once we recognize that an explanation is not a soliloquy and that the comprehensibility of an explanation depends crucially on the mental states of the receiver. After all, your physician gives one kind of explanation for her diagnosis to you and another, perhaps more technical one, to her colleagues.

Provision of explanations thus requires a shared vocabulary between AI systems and humans, and the ability to customize the explanation to the mental models of humans. This task becomes particularly challenging since many modern data-based decision-making systems develop their own internal representations that may not be directly translatable to human vocabulary. Some emerging methods for facilitating comprehensible explanations include explicitly having the machine learn to translate explanations based on its internal representations to an agreed-upon vocabulary.

AI systems interacting with humans will need to understand and leverage insights from human factors and psychology. Not doing so could lead to egregious miscalculations. Initial versions of Tesla’s auto-pilot self-driving assistant, for example, seemed to have been designed with the unrealistic expectation that human drivers can come back to full alertness and manually override when the self-driving system runs into unforeseen modes, leading to catastrophic failures. Similarly, the systems will need to provide an appropriate emotional response when interacting with humans (even though there is no evidence, as yet, that emotions improve an AI agent’s solitary performance). Multiple studies show that people do better at a task when computer interfaces show appropriate affect. Some have even hypothesized that part of the reason for the failure of Clippy, the old Microsoft Office assistant, was because it had a permanent smug smile when it appeared to help flustered users.

AI systems with social intelligence capabilities also produce their own set of ethical quandaries. After all, trust can be weaponized in far more insidious ways than a rampaging robot. The potential for manipulation is further amplified by our own very human tendency to anthropomorphize anything that shows even remotely human-like behavior. Joe Weizenbaum had to shut down Eliza, history’s first chatbot, when he found his staff pouring their hearts out to it; and scholars like Sherry Turkle continue to worry about the artificial intimacy such artifacts might engender. Ability to manipulate mental models can also allow AI agents to engage in lying or deception with humans, leading  to a form of “head fakes” that will make today’s deep fakes tame by comparison. While a certain level of “white lies” are seen as the glue for human social fabric, it is not clear whether we want AI agents to engage in them.

As AI systems increasingly become human-aware, even quotidian tools surrounding us will start gaining mental-modeling capabilities. This adaptivity can be both a boon and a bane. While we talked about the harms of our tendency to anthropomorphize AI artifacts that are not human-aware, equally insidious are the harms that can arise when we fail to recognize that what we see as a simple tool is actually mental-modeling us. Indeed, micro-targeting by social media can be understood as a weaponized version of such manipulation; people would be much more guarded with social media platforms if they realized that those platforms are actively profiling them.

Given the potential for misuse, we should aim to design AI systems that must understand human values, mental models and emotions, and yet not exploit them with intent to cause harm. In other words, they must be designed with an overarching goal of beneficence to us.

All this requires a meaningful collaboration between AI and humanities — including sociology, anthropology and behavioral psychology. Such interdisciplinary collaborations were the norm rather than the exception at the beginning of the AI field and are coming back into vogue. 

Formidable as this endeavor might be, it is worth pursuing. We should be proactively building a future where AI agents work along with us, rather than passively fretting about a dystopian one where they are indifferent or adversarial. By designing AI agents to be human-aware from the ground up, we can increase the chances of a future where such agents both collaborate and get along with us.     

This book then is a step towards designing such a future. We focus in particular on recent research efforts on making AI systems explainable. Of particular interest are settings where the AI agents make a sequence of decisions in support of their objectives, and the humans in the loop get to observe the resulting behavior. We consider techniques for making this behavior explicable to the humans out of the box, or after an explanation from the AI agent. These explanations are modeled as reconciliations of the mental models the humans have of the AI agents' goals and capabilities. The central theme of many of these techniques is reasoning with the mental models of the humans in the loop. While much of our focus is on cooperative scenarios, we also discuss how the same techniques can be adapted to support lies and deception in adversarial scenarios. In addition to the formal frameworks and algorithms, this book also discusses several applications of these techniques in decision-support and human-robot interaction scernarios.  While we focus on the developments from our group, we provide ample context of related developments across several research groups and areas of AI currently focusing on the explainability of AI systems. 

\vspace*{2pc}

    {
\chapter*{Acknowledgements}
\addcontentsline{toc}{chapter}{\protect\numberline{}{Acknowledgements}}

This book is largely the result of a strand of research conducted at the Yochan Lab at Arizona State University over the last five years. The authors would like to express their sincere thanks to multiple past and present members of the Yochan group as well as external collaborators for their help and role in the development of these ideas.

First and foremost, we would like to thank Tathagata Chakraborti, who was the driving force behind multiple topics and frameworks described in this paper. Tathagata could not actively take part in the writing of this book, but his imprints are there throughout the manuscript. 

Other Yochan group members who played a significant role in the development of the ideas described in this book include Sailik Sengupta (currently at Amazon Science), Sachin Grover and Yantian Zha. Siddhant Bhambri and Karthik Valmeekam read a draft of the book and gave comments. 
We would also like to thank Yu (Tony) Zhang and Satya Gautam Vadlamudi, who were post-doctoral scholars at Yochan during the initial stages of this work.  

David Smith (formerly of NASA AMES) and Hankz Hankui Zhuo (of Sun-Yat Sen University) have been frequent visitors to our group; David in particular is a co-author on multiple papers covered in this book. Matthias Scheutz of Tufts has been a long standing collaborator; an ONR MURI project with him a decade back was the original motivation for our interest in human-AI teaming. Our colleague Nancy Cooke, an expert in human-factors and human-human teaming, has been a co-investigator on several of the projects whose results are included in this book. 


Other external collaborators of the work described here include Siddharth Srivastava of ASU, Christian Muise (formerly of IBM AI Research, currently at Queens University, Canada) and Sarah Keren of Technion.

Additionally, we have benefited from our conversations on Human-AI interaction (as well as encouragement) from multiple colleagues within and outside ASU, including: 
Dan Weld (U. Washington), 
Barbara Grosz (Harvard), 
Manuela Veloso (J.P. Morgan Research),
Pat Langley (ISLE), 
Ece Kamar (MSR),
Been Kim (Google),
David Aha (NRL), 
Eric Horvitz (Microsoft), and
Julie Shah (MIT).


Much of the research reported here has been supported over the years by generous support from multiple federal funding agencies. We would like to particularly thank the Office of Naval Research--and the program managers Behzad Kamgar-Parsi, Tom McKenna,  Jeff Morrison, Marc Steinberg and John Tangney for their sustained support. Thanks are also due to  Benjamin Knott formerly of AFOSR, Laura Steckman of AFOSR, and Purush Iyer of Army Research Labs for their support and encouragement. 

Parts of the material in this book is drawn from our refereed research papers published in several venues; we cite all the sources in the bibliography. Parts of the preface are drawn from a column  on human-aware AI systems that first appeared in The Hill. 


\medskip
\medskip
   \newcommand{\droptext}[1]{{\em #1}}

\chapter{Introduction}
\label{ch01}

Artificial Intelligence, the discipline many of us call our intellectual home, is suddenly having a rather huge cultural moment. It is hard to turn anywhere without running into mentions of AI technology and hype about its expected positive and negative societal impacts. AI has been compared  to fire {\em and} electricity, and commercial interest in the AI technologies has sky rocketed. 
Universities -- even high schools -- are rushing to start new degree programs or colleges dedicated to AI. Civil society organizations are scrambling to understand the impact of AI technology on humanity, and governments are competing to encourage or regulate AI research and deployment. 

There is considerable hand-wringing by pundits of all stripes on whether in the future, AI agents will get along with us or turn on us. Much is being written about the need to make AI technologies safe and delay the ``doomsday''. We believe that as AI researchers, we are not (and cannot be) passive observers.  It is {\em our} responsibility to design agents that can and will get along with us. Making such {\em human-aware} AI agents, however poses several foundational research challenges that go beyond simply adding user interfaces \textit{post facto}. In particular, human-aware AI systems need to be designed such that their behavior is \textit{explainable} to the humans interacting with them. 
This book describes some of the state-of-the-art approaches in making AI systems explainable. 







\section{Humans \& AI Agents: An Ambivalent Relationship}

In this book we focus on human-aware AI systems---goal directed autonomous systems that are capable of effectively interacting, collaborating and teaming with humans.
Although developing such systems seems like a rather self-evidently fruitful enterprise, and popular imaginations of AI, dating back to HAL,  almost always assume we already do have human-aware AI systems technology, little of the actual energies of the AI  research community have gone in this direction. 

From its inception, AI has had a rather ambivalent relationship to humans---swinging between their augmentation and replacement. Most high profile achievements of AI have either been far away from the humans---think Spirit and Opportunity exploring Mars; or in a decidedly adversarial stance with humans, be it Deep Blue, AlphaGo or Libratus. Research into effective ways of making AI systems {\em interact, team and collaborate with  humans}  has received significantly less attention. It is perhaps no wonder that many lay people have fears about AI technology! 

This state of affairs is a bit puzzling given the rich history of early connections between AI and psychology. 
Part of the initial reluctance to  work on these issues  had to do with the worry that focusing on AI systems working with human might somehow dilute the grand goals of the AI enterprise, and might even lead to  temptations of  ``cheating,'' with most of the intelligent work being done by the humans in the loop. After all, prestidigitation has been a concern  since the original mechanical turk.  Indeed, much of the early work on human-in-the-loop AI systems mostly focused on using humans as a crutch for making up the limitations of the AI systems \citep{allen1994mixed}. In other words, early AI had humans be ``AI-aware'' (rather than AI be ``human-aware'').

Now, as AI systems are maturing with increasing capabilities, the concerns about them depending on humans as crutches are less severe. We would also argue that  focus on humans in the loop doesn't dilute the goals of AI enterprise, but in fact broadens them in multiple ways. After all, evolutionary theories tell us that humans may have developed the brains they have,  not so much to run away from the lions of the savanna or tigers of Bengal but rather to effectively cooperate and compete with each other.
\index{Sally Anne Test}%
Psychological tests such as the Sally Anne Test
\citep{sally-anne-test} demonstrate the importance of such social cognitive abilities in the development of collaboration abilities in children. 

\index{Intelligent Tutoring Systems}%
\index{Social Robotics}%
Some branches of AI, aimed at specific human-centric applications, such as intelligent tutoring systems \citep{kurt-its1}, and social robotics \citep{cynthia-book,cynthia-ros,scaz-tom}, did focus on the challenges of human-aware AI systems for a long time. It is crucial to note however that human-aware AI systems are needed in a much larger class of quotidian applications beyond those. These include human-aware AI assistants for many applications where humans continue to be at the steering wheel, but will need naturalistic assistance from AI systems---akin to what they can expect from a smart human secretary.
\droptext{Increasingly, as AI systems become common-place, human-AI interaction will be the dominant form of human-computer interaction} \citep{weld-chi}.

For all these reasons and more, human-aware AI has started coming to the forefront of AI research of late.  Recent road maps for AI research, including the 2016 JASON report\footnote{https://fas.org/irp/agency/dod/jason/ai-dod.pdf} and the 2016 White House OSTP report\footnote{https://obamawhitehouse.archives.gov/sites/default/files/whitehouse\_files/\\
microsites/ostp/NSTC/national\_ai\_rd\_strategic\_plan.pdf} emphasize the need for research in human-aware AI systems. The 2019 White House list of strategic R\&D priorities for AI lists ``developing effective methods for human-AI collaboration'' at the top of the list of priorities\footnote{https://www.whitehouse.gov/wp-content/uploads/2019/06/National-AI-Research-and-Development-Strategic-Plan-2019-Update-June-2019.pdf}. Human-Aware AI was the special theme for the 2016 International Joint Conference on AI (with the tagline ``{\em why intentionally design a dystopian future and spend time being paranoid about it?}''); it has been a  special track at AAAI since 2018. 





\begin{figure}[t]
\centering
\includegraphics[width=5in]{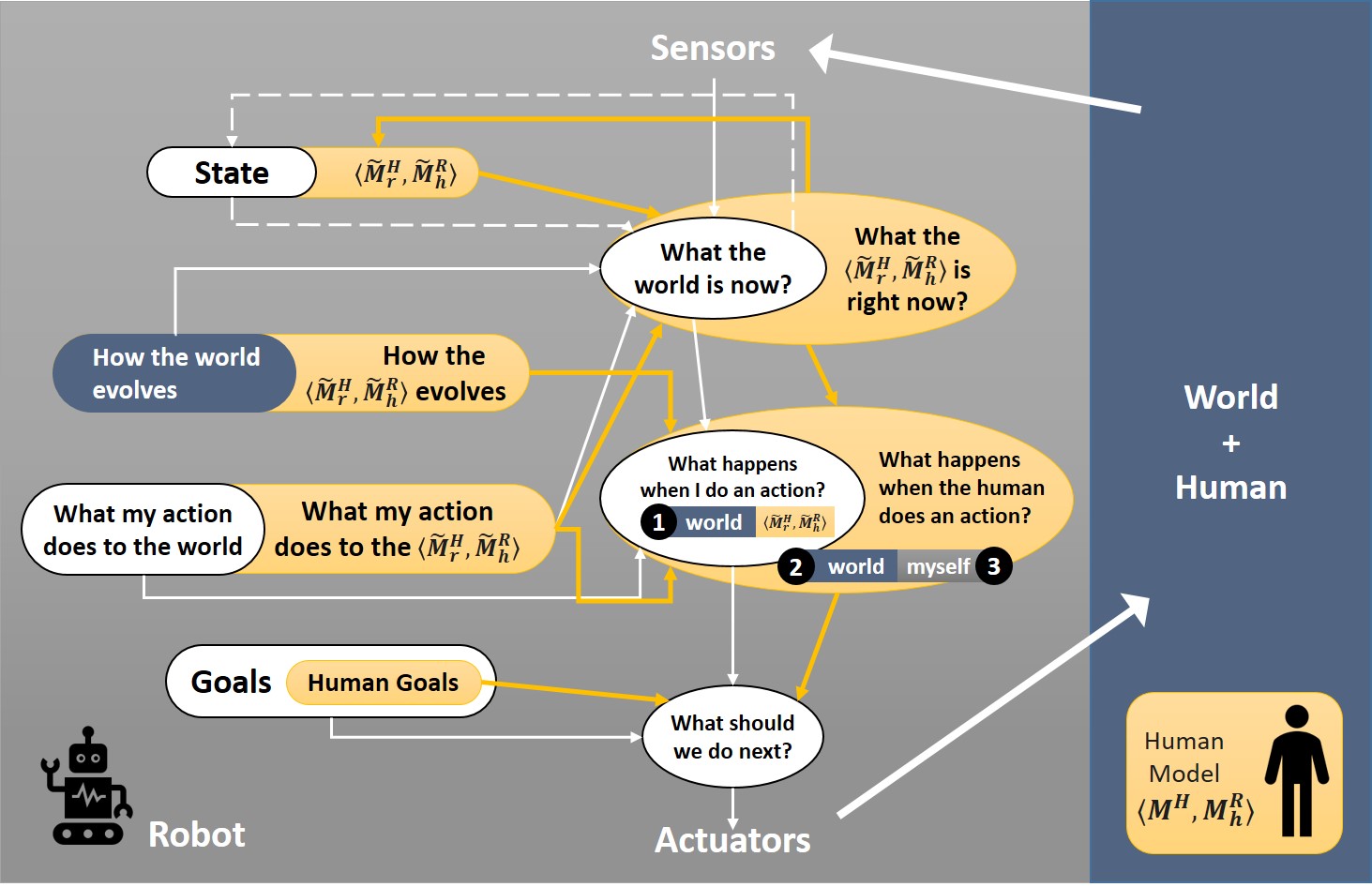}
\caption{\em Architecture of an intelligent agent that takes human mental models into account. All portions in yellow are additions to the standard agent architecture, that are a result of the agent being human-aware. $\mathcal{M}^R_h$ is the mental model the human has of the AI agent's goals and capabilities and $\mathcal{M}^H_r$ is the (mental) model the  AI agent has of the human's goal and capabilities (see the section on Mental Models in Human-Aware AI)}
\label{agent}
\label{fig:newagent}
\end{figure}

\section{Explanations in Humans}
\index{Explanation}%

Since our books is about explainable AI systems, it is useful to start with a broad overview of explainability and explanations in humans. 

\subsection{When and Why do Humans expect explanations from each other?}

To understand the different use cases for explanations offered by  AI systems, it is useful to survey the different scenarios where humans ask for explanations from each other:

\begin{description}
    \item[When they  are confused and or surprised by the behavior of the other person]
    People expect explanations when the behavior from the other person is {\em not what they  expected}--and is thus \textit{inexplicable}. It is worth noting that this confusion at the other person's behavior is not predicated on that person's behavior being \textit{incorrect} or \textit{inoptimal}. We may well be confused/surprised when a toddler, for example, makes an optimal chess move. In other words, the need for explanations arises \textit{when the other person's behavior is not consistent with the  model we have of the other person}. Explanations are thus meant to \textit{reconcile} these misaligned expectations.

    
    \item[When they  want to teach the other person] We offer explanations either to make the other person understand the real rationale behind a decision or to  convince the other person that our decision in this case is not a fluke. Explanatory dialog allows either party to correct the mental model of the other party. The explanations become useful in \textit{localizing the fault, if any,} in the other person's understanding our decision.

\end{description}

Note that the need for explanation is thus dependent on one person's model of the other person's capabilities/reasoning. Mental models thus play a crucial part in offering customized explanations. Indeed, a doctor explains her diagnostic decision  to her patient in one way and to her peers in a different (possibly more ``jargon-filled'' way), because she intuitively understands the levels of abstraction of the mental models they  have of diseases and diagnoses. 

It is also worth considering how the need for explanations \textit{reduces} over time. 
It naturally reduces as the mental models we have of the other agent gets better aligned with that other agent's capabilities, reducing any residual surprise at their behavior. This explains the ideal of ``wordless collaboration'' between long-time collaborators. 

The need for explanations is also modulated  by the trust we have in the other person. 
Trust can be viewed as our willingness to put ourselves in a position of vulnerability. When we trust the other person, even when their behavior/decision is inexplicable to  us, we might defer our demands for any explanations from that person. This explains why we ask fewer explanations from people we trust.


\subsection{How do Humans Exchange Explanations?}
\index{Tacit Explanations}%
\index{Explicit Explanations}%

We now turn to a  qualitative understanding of the ways in which humans  \textit{exchange} explanations, with a view to gleaning lessons for explanations in the human-AI interaction. While explanations might occur in multiple modalities, we differentiate two broad types: {\em Pointing explanations} and {\em Symbolic Explanations}. A given explanatory interaction might be interspersed with both types of explanations.

\begin{description}
\item[Pointing (Tacit) Explanations] These are the type of explanations where the main explanation consists of pointing to some specific features of the object that both the explainer and explainee can see. This type of explanations may well be the only feasible one to exchange when the agents share little beyond what they perceive in their immediate environment. Pointing explanations can get quite unwieldy (both in  terms of the communication bandwidth and the cognitive load for processing them)--especially when explaining sequential decisions (such as explaining why you took  an earlier flight than the one the other person expected)--as they will involve pointing to the relevant regions of the shared ``video history'' between the agents (or, more generally, \textit{space time signal tubes}).\index{Space Time Signal Tubes}%

\item[Symbolic (Explicit) Explanations]  These involve exchanging explanations in a symbolic vocabulary. Clearly, these require that the  explainer and explainee share a symbolic vocabulary to begin with. 
\end{description}

Typically, pointing explanations are used for tacit knowledge tasks, and symbolic explanations are used for explicit knowledge tasks. Interestingly, over time, people tend to develop symbolic vocabulary  even for exchanging explanations over tacit knowledge tasks. Consider, for example, terms such as \textit{pick and roll} in basket ball, that are used as a shorthand for a complex space time tube--even though the overall task is largely a tacit knowledge one.  

The preference for symbolic explanations is not merely because of their \textit{compactness}, but  also because they significantly  reduce the cognitive load on the receiver. This preference seems to exist despite the fact that the receiver may likely have to recreate in their own mind the ``video'' (space time signal tube) versions of symbolic explanations within their own minds. 


\subsection{(Why) Should AI Systems be Explainable?}
\index{Need for Explainability}%

Before we delve into explainability in AI systems, we have to address the view point that explainability from AI systems is largely unnecessary. Some have said, for example, that AI systems--such as those underlying Facebook--routinely make millions of decisions/recommendations (of the ``{\em you might be interested in seeing these pages}'' variety) and no users ask for explanations. Some even take the view that since AI systems might eventually be ``more intelligent'' than any mere mortal human, requiring them to provide explanations for their (obviously correct) decisions unnecessarily hobbles them.  Despite this, there are multiple reasons why we want AI systems to be explainable

\begin{itemize}

\item Since humans do expect explanations from each other, a naturalistic human-AI interaction requires that AI systems be explainable

\item Contestability of decisions is an important part of ensuring that the decisions are seen to be fair and transparent, thus engendering trust in humans. If AI systems make high stakes decisions, they too need to be contestable. 

\item Since there is always an insurmountable gap between the true preferences of humans and the AI system's estimate of those preferences, an explanatory dialog allows for humans to ``teach'' the AI systems their true preferences in a demand-driven fashion. 

\item Given that AI systems--especially those that are trained purely from raw data--may have unanticipated failure modes, explanations for their decisions often help the humans get a better sense of these failure modes. As a case in point, recently, it has been reported that when some Facebook users saw a video of an African male in a quotidian situation, the system solicitously asked the users \textit{Do you like to see more primate videos?}  As egregious as this ``heads-up'' of the failure mode of the underlying system sounds, it can be more insidious when the system just silently acts on the decisions arrived at through those failure modes, such as, for example, inexplicably filling the user's feeds with a slight uptick of primate videos. Explanations are thus a way for us to catch failure modes of these alien intelligences that we are increasingly surrounded by. 

\end{itemize}

\section{Dimensions of Explainable AI systems}

Now that we have reviewed how explanations and explainability play a part in human-human interactions, we turn to explainability in human-AI interactions. We will specifically  look at the use cases for explanations in human-AI interaction, some desirable requirements on explanations, and an overview of the ongoing research on explainable AI systems. 


\subsection{Use cases for explanations in Human-AI Interaction} 

When humans are in the loop with AI systems, they might be playing a variety of different roles. Being interpretable to one human in one specific role may or may not translate to interpretability for other humans in other roles. Perhaps the most popular role considered in the explainable AI literature to-date is  humans as debuggers trying to flag and correct an AI system's behavior. In fact, much of the explainable machine learning research has been focused on this type of debugging role for humans.  Given that the humans in the loop here have invested themselves into debugging the system, they  are willing to {\em go into the land of the AI agents}, rather than expect them to come into theirs. This lowers the premium on comprehensibility of explanations. 

Next we can have humans as observers of the robot's behavior -- either in a peer to peer, or student/teacher setting. The observer might be a lay person who doesn't have access to the robot's model of the task; or an expert one who does. 

Finally, the human might be a collaborator--who actively takes part in completing a joint task with the robot. 

No matter the role of the human, one important issue is whether the interaction between the human and the robot is a one-off one (i.e., they only interact once in the context of that class of tasks) or a longitudinal one (where the human interacts with the same robot over extended periods). In this latter case, the robot can engender {\em trust} in the human through its behavior, which, in turn reduces the need for interpretability. In particular, the premium on interpretability of the behavior itself is reduced when the humans develop trust over the capabilities and general beneficence of the robot. 


\subsection{Requirements on Explanations}
\index{Requirements on Explanations}%
There are a variety of requirements that can be placed on explanations that an AI agent gives the human in the loop:

\begin{description}

\item[Comprehensibility:] The explanation should be comprehensible to the human in the loop. This not only means that it should be in terms that the human can understand, but should not pose undue cognitive load (i.e., expect unreasonable inferential capabilities). 

\item[Customization:] The explanations should be in a form and at a level that is accessible to the receiving party (explainee). 

\item[Communicability:] The explanation should be easy to exchange. For example, symbolic explanations are much easier than pointing explanations (especially in sequential decision problems when they have to point to space time signal tubes). 

\item[Soundness:] This is the guarantee from the AI agent that this explanation is really  the reason behind its decision. Such a guarantee also implicitly  implies that the agent will stand behind the explanation--and that the decision will change if the conditions underlying the explanation are falsified.  For example, if the explanation for a loan denial is that the applicant has no collateral, and the applicant then produces collateral, it is fairly expected that the loan denial decision will be reversed.

\item[Satisfctoriness:] This aims to measure how \textit{satisfied} the end user is with the explanation provided. While incomprehensible, uncustomized and poorly communicated explanations will likely be unsatisfatory to the user, it is also possible that the users are satisfied with misleading explanations that align well with what they want to hear. Indeed, making explanations satisfactory to the users is a slippery slope. Imagine the end-user explanations of the kind that the EU GDPR regulations require being provided by a system with a GPT-3 back-end generating plausible explanations that are likely to satisfy the users. This is why it is important for systems not to take an ``end to end'' machine learning approach and learn what kind of explanations make the end users happy. 

\end{description}
\subsection{Explanations as Studied in the AI Literature}

Explanations have been studied in the context of AI systems long before the recent interest in the explainable AI. 
We will start by differentiating two classes of explanations: \textit{internal explanations} that the system develops to help its own reasoning, and \textit{external explanations} that the agent offers to other agents. 

\index{Internal Explanations}%
\index{Self Explanations}%
Internal explanations have been used in AI systems to guide their search (e.g. explanation-based or dependency directed backtracking), or to focus their learning. The whole area of explanation-based learning--which attempts to focus the system's learning element on the parts of the scenarios that are relevant to the decision at hand (thus providing feature relevance assessment). While much of the work in explanation-based search and learning have been done in the context of explicit symbolic models,  self explanations can also be useful for systems learning their own representations \cite{yantian-self-explanation}.

\index{External Explanations}%
External explanations may  be given by AI systems to other automated agents/AI systems (as is the case in autonomous and multi-agent systems research), or to other \textit{human} agents. There are however significant differences that arise based on whether the other agent is human or automated. In particular, issues such as cognitive load and inferential capacity in parsing the explanation play an important role when the other agent is a human, but not so much if it is automated. In particular, the work on certificates and proofs of optimality of the decision, which give gigabits of provenance information to support the decision, may work fine for automated agents but not human agents. 

Historically, (external)  explanations (to human agents) have been considered in the context of AI systems for as long as they have been deployed AI systems. There is a rich tradition of explanations in the context of expert systems. In all these cases, the form of the explanation does depend on (a) the role the human plays (debugger vs. lay observer) and (b) whether the task is an explicit knowledge one or a tacit knowledge one. 


\subsection{Explainable AI: The Landscape \& The Tribes}
\index{Explainable ML}%

The work in explainable AI systems can be classified into multiple dimensions. The first is whether what needs to be explained  is a single-decision (e.g. classification) task or a behavior resulting from a  sequence of decisions. Second dimension is whether the task is guided by explicit (verbalizable) knowledge or is a tacit task. The third is whether the interaction between the human and the AI agent is one-shot or iterative/longitudinal. Armed with these dimensions, we can discern several research ``tribes'' focusing on explainable AI systems:

\index{Shapley Values}%
\index{Saliency Maps}%
\paragraph{Explainable Classification} Most work on the so-called ``explainable ML'' focused on explainable classification. The classification tasks may be ``tacit'' in that the machine learns its own potentially inscrutable features/representations. 
A prominent subclass here is image recognition/classification tasks based on deep learning approaches. In these cases, the default communication between the AI agents and humans will be over the shared substrate of the (spatial) image itself. Explanations here thus amount to ``saliency annotations'' over the image--showing which pixels/regions in the image have played a significant part in the final classification decision.  
Of course, there are other classification tasks--such as loan approval or fraudulent transaction detection--where human-specified features, rather than pixels/signals are used as the input to the classifier. Here explanations can be in terms of the relative  importance of various features (e.g. shapley  values). 

\paragraph{Explainable Behavior} Here we are interested in sequential decision settings, such as human-robot or human-AI interactions, where humans and AI agents work in tandem to achieve certain goals. Depending on the setting, the human might be a passive observer/monitor, or an active collaborator. The objective here is for the AI agent (e.g. the robot) to exhibit behavior that is interpretable to the human in the loop. A large part of this work focuses on tacit interactions between the humans and robots (e.g. robots moving in ways that avoid colliding with the human, robots signaling which point they plan to go to etc.). The interactions are tacit in that the human has no shared vocabulary with the robot  other than the observed behavior of the robot. Here explainability or interpretability  typically  depend on the robot's ability to exhibit a behavior that helps the human understand its ``intentions''. Concepts that have been explored in this context include  {\em explicability}--the behavior being in conformance with human's expectation of the robot, {\em predictability}--the behavior being predictable over small time periods (e.g. next few actions), and {\em legibility}--the behavior signaling the goals of the robot. 

Of course, not all human-robot interactions have to be tacit; in many cases the humans and robots might share explicit knowledge and vocabulary about their collaborative task. In such cases, the robot can also ensure interpretability  through exchange of {\em explanations} in the shared symbolic vocabulary.  

Much of this monograph focuses on explainable behavior, in the context of explicit knowledge tasks. These are the scenarios where an AI agent's ability to plan its behavior up front interacts synergistically with  its desire to be interpretable to the human in the loop. We will focus first on scenarios where the human and the agent share a common vocabulary,  but  may have differing task models, and discuss how to ensure explicability or provide explanations in that scenario. Towards the end, we will also consider the more general scenario where even the vocabulary is not common--with the AI agent using its own internal representations to guide its planning and decision-making, and discuss how explanations can still be provided in human's vocabularly. 


\section{Our Perspective on Human-Aware and Explainable AI Agents} 
\index{Human-Aware AI}%


In this section, we give a brief summary  of the the broad perspective taken in this book in designing human-aware and explainable AI systems. This will be followed in the next section by  the overview of the book itself. 

\subsection{How do we make AI agents Human-Aware?}

When two humans collaborate to solve a task, both of them will develop approximate models of the goals and capabilities of each other (the so called ``theory of mind''), and use them to support fluid team performance. AI agents interacting with humans -- be they embodied or virtual -- will also need to take this implicit mental modeling into account. This certainly poses several research challenges. Indeed, it can be argued that acquiring and reasoning with such models changes almost every aspect of the architecture of an intelligent agent. As an illustration, consider the architecture of an intelligent agent that takes human mental models into account shown in Figure~\ref{fig:newagent}. Clearly most parts of the agent architecture -- including state estimation, estimation of the evolution of the world, projection of its own actions, as well as the task of using all this knowledge to decide what course of action the agent should take -- are all critically impacted by the need to take human mental models into account. This in turn gives rise to many fundamental research challenges. 
%
In this book, we will use the research in our lab to illustrate some of these challenges as well as our attempts to address them.
Our work has focused on the challenges of human-aware AI in the context of human-robot interaction scenarios \citep{ppap}, as well as human decision support scenarios \citep{sengupta2017radar}. Figure~\ref{fig:testbeds} shows some of the test beds and micro-worlds we have used in our ongoing work. 

\begin{figure*}
\centering
\includegraphics[width=\textwidth]{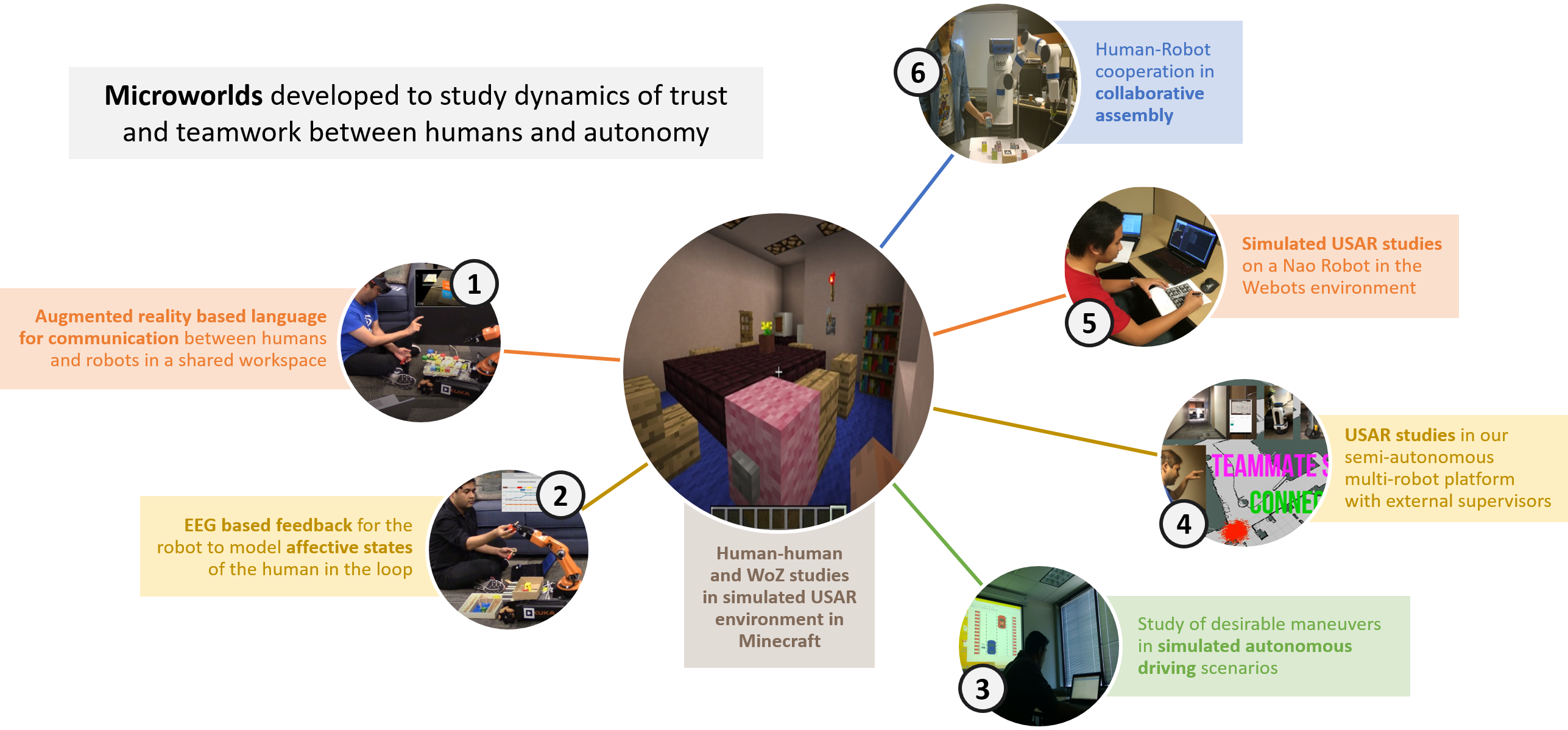}
\caption{\em Test beds developed to study the dynamics of trust and teamwork between autonomous agents and their human teammates.}
\label{fig:testbeds}
\end{figure*}

\subsection{Mental Models in Explainable AI Systems}
\index{Mental Models}%

In our research, we address the following central question in designing human-aware AI systems: {\em What does it take for an AI agent to show explainable behavior in the presence of humans?} Broadly put, our answer is this: {\em
To synthesize explainable behavior, AI
agents need to go beyond planning with their own
models of the world, and take into account the
mental model of the human in the loop. The
mental model here is not just the goals and
capabilities of the human in the loop, but
includes the human’s model of the AI agent’s
goals/capabilities.}


\begin{figure}
\begin{center}
\begin{tabular}{c|c}
\includegraphics[width=2.5in]{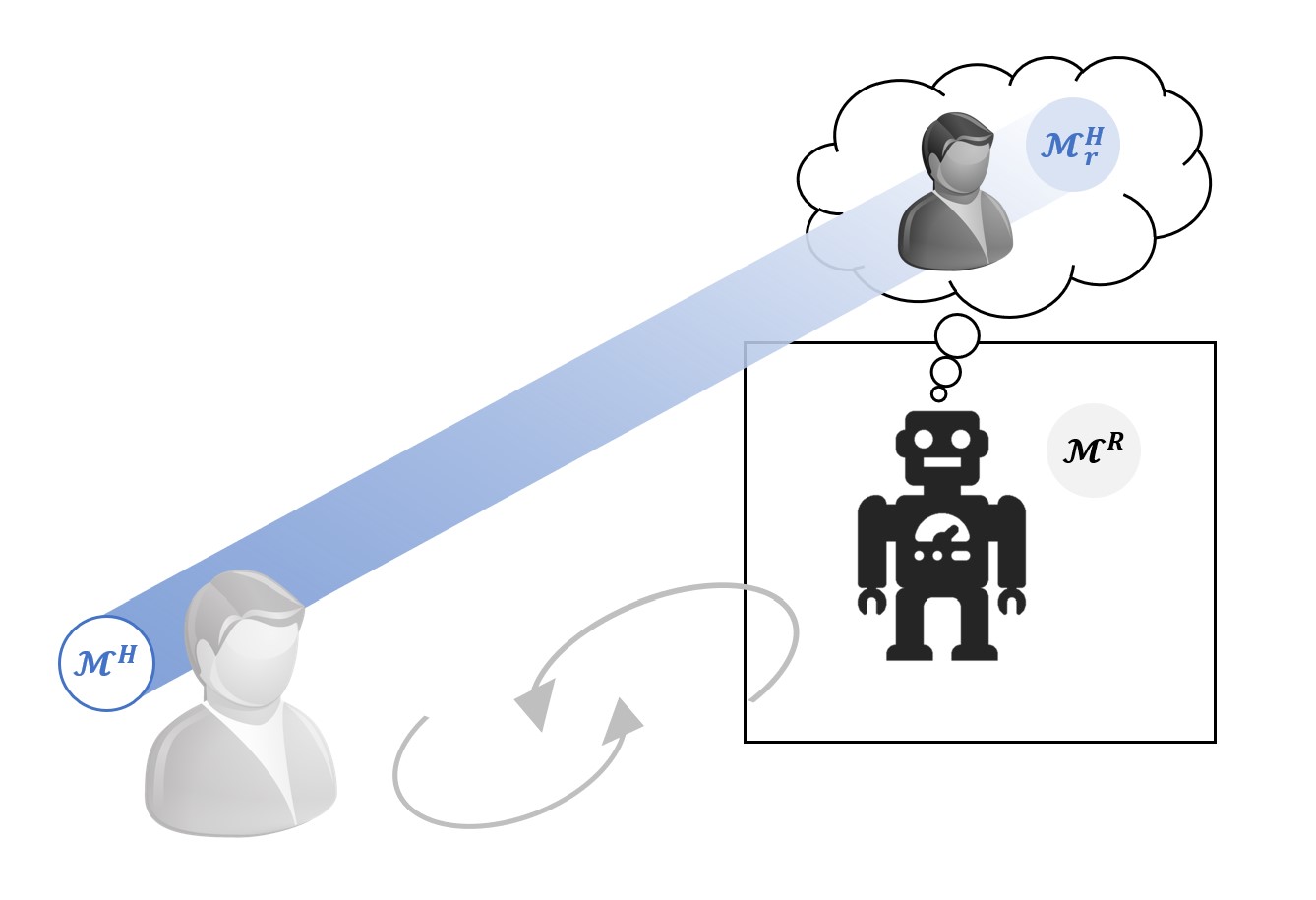} &
\includegraphics[width=2.5in]{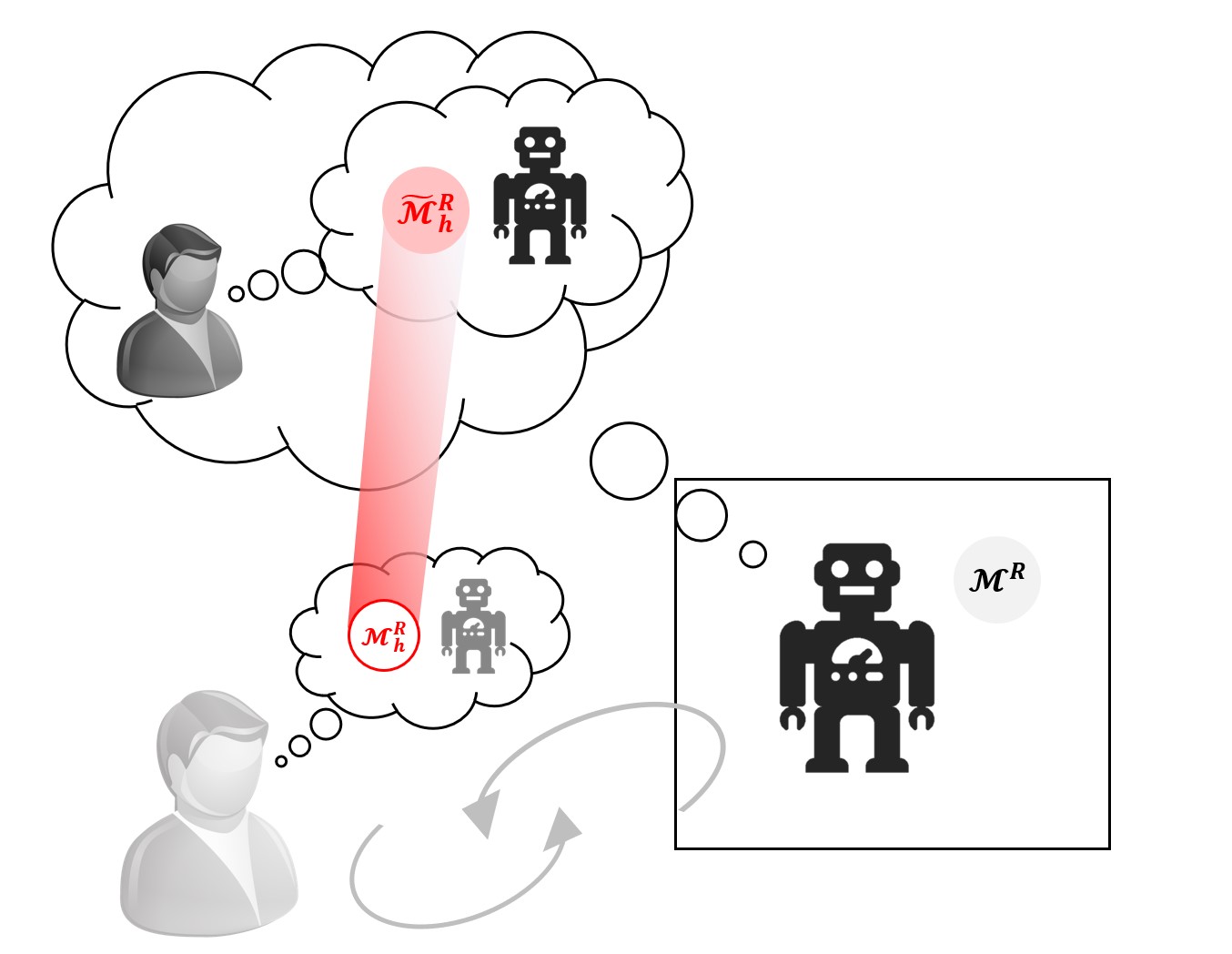}
\end{tabular}
\end{center}
\caption{\em Use of different mental models in synthesizing explainable behavior. (Left) The AI system can use its estimation of human's mental model, $\mathcal{M}^H_r$, to take into account the goals and capabilities of the human thus providing appropriate help to them. (Right) The AI system can use its estimation of human's mental model of its capabilities $\mathcal{M}^R_h$ to exhibit explicable behavior and to provide explanations when needed.}
\label{model-figure}
\end{figure}

Let $\mathcal{M}^R$ and $\mathcal{M}^H$ correspond to the actual goal/capability models of the AI agent and human. To support collaboration, the AI agent needs an approximation of $\mathcal{M}^H$, we will call it $\widetilde{\mathcal{M}}^H_r$, to take into account the goals and capabilities of the human. The AI agent also needs to recognize that the human will have a model of its goals/capabilities $\mathcal{M}^R_h$, and needs an approximation of this, denoted $\widetilde{\mathcal{M}}^R_h$. It is important to note that while $\mathcal{M}^R$ and $\mathcal{M}^H$ are intended to be ``executable models,'' in that courses of action consistent with them are in fact executable by the corresponding agent--robot or human, $\mathcal{M}^R_h$ and $\mathcal{M}^H_r$ are models of ``{\em expectation}'' by the other agent, and thus may not actually be executable by the first agent.   All phases of the ``sense--plan--act'' cycle of an intelligent agent will have to change appropriately to track the impact on these models (as shown in Figure~\ref{fig:newagent}). 

Of particular interest to us in this book is the fact that synthesizing explainable behavior  becomes a challenge of supporting planning in the context of these multiple models. In particular, we shall see that the AI agent uses $\mathcal{M}^H_r$ to anticipate the human behavior and provide appropriate assistance (or at least get out of the way), while the agent uses $\widetilde{\mathcal{M}}^R_h$, its estimate of $\mathcal{M}^R_h$, to understand human's expectation on its behavior, and use that understanding to either {\em conform} to the human expectation or actively {\em communicate} with the human to induce them to change $\mathcal{M}^R_h$. We discuss the conformance aspect in terms of generating {\em explicable behavior}. For the model communication part, either the communication can be done {\em implicitly}--which is discussed in terms of generating {\em legible behavior}, or can be done {\em explicitly}--with communication actions. This latter part is what we view as the ``explanation'' process, and address multiple challenges involved in generating such explanations.  Finally, while much of the book is focused on cooperative and non-adversarial scenarios, the mental model framework in Figure~\ref{model-figure} can also be used by the agent to obfuscate its behavior or provide deceptive communication. We also discuss how such selective obfuscation and deception is facilitated. 






\section{Overview of this Book}

Through the rest of the book, we will look at some of the central challenges related to human-aware planning that arise due to and can be addressed through working with the human's mental model. In particular, we will ground our discussions of the topics within the context of using such models to generate either interpretable or deceptive behavior. The book is structured as follows:

\paragraph*{Chapter \ref{ch02}} In this chapter, we will focus on formally defining the goal-directed deterministic planning formalisms and the associated notations that we will be using to study and ground the technical discussions throughout this book. We will also define the three main interpretability metrics; namely, {\em Explicability}, {\em Legibility}, and {\em Predictability}. We will be revisiting these three measures in the following chapters. We will see how the different methods discussed throughout the book relate to these measures, and in various cases could be understood as being designed to optimize, at the very least a variant of these measures.

\paragraph*{Chapter \ref{ch03}} We next focus on one of these measures, namely, explicability, and see how we could allow the robot to choose plans that maximize explicability scores. In particular, we will look at two main paradigms to generate such {\em explicable plans}. First, we consider a model-based method called {\em reconciliation search} that will use the given human model along with a distance function (which could potentially be learned) to generate robot plans with high explicability scores. Then we will look at a model-free paradigm, where we use feedback from users to learn a proxy for the human model in the form of a labeling model and use that simpler model to drive the explicable planning. We will also look at how one could use environment design to allow the agents to generate plans with higher explicability scores. Within the design framework, we will also look at the evolution of plan explicability within the context of longitudinal interactions.

\paragraph*{Chapter \ref{ch04}} We next turn our focus onto legibility. We look at how the robot can reduce the human observer's uncertainty over its goals and plans, particularly when the observer has imperfect observations of its activities. The robot can reduce the observer's uncertainty about its goals or plans by implicitly communicating information through its behavior i.e. by acting legibly. We formulate this problem as a controlled observability planning problem, where in the robot can choose specific actions that allow it to modulate the human's belief over the set of candidate robot goals or plans. Further we will also discuss the relationship between plan legibility as discussed in this framework with the notion of plan predictability defined in the literature.

\paragraph*{Chapter \ref{ch05}} In this chapter, we return to the problem of maximizing explicability and investigate an alternate strategy, namely explanations. Under this strategy, rather than letting the robot choose possibly suboptimal plans that may better align with human expectations, the robot can follow its optimal plans and provide explanations as to why the plans are in fact optimal in its own model. The explanation in this context tries to resolve any incorrect beliefs the human may hold about the robot and thus helps to reconcile the differences between the human's mental model and the robot's model. In the chapter, we will look at some specific types of model reconciliation explanations and a model space search algorithm to generate such explanation. We will also consider some approximations for such explanations and discuss some user studies that have been performed to validate such explanations.

\paragraph{Chapter \ref{ch06}} In this chapter, we continue our discussion on model reconciliation and focus on one specific assumption made in the earlier chapter, namely that the human's mental model may be known. We will study how one could still generate model reconciliation explanations when this assumption may not be met. In particular, we will consider three cases of incrementally decreasing access to human mental model. We will start by considering cases where an incomplete version of the model may be available, then we will look at scenarios where we can potentially learn a model proxy from human data and finally we will see the kind of explanatory queries we can field by just assuming a prototypical model for the human.

\paragraph{Chapter \ref{ch_balance}} In this chapter, we return back to the problem of explicability and look at how one could combine the two strategies discussed in earlier chapters for addressing explicability; namely, explicable planning and explanation. In fact we will see how one could let the robot trade-off the cost of explaining the plan against the overhead of choosing a potentially suboptimal but explicable plan and will refer to this process as {\em balanced planning}. In the chapter, we will look at some particular classes of balanced planning and introduce a concept of {\em self-explaining plans}. We will also introduce a compilation based method to generate such plans and show how the concept of balancing communication and behavior selection can be extended to other interpretability measures. 

\paragraph{Chapter \ref{ch07}} In this chapter, we look at yet another assumption made in the generation of explanations, namely, that there exists a shared vocabulary between the human and the robot through which it can communicate. Such assumptions may be hard to meet in cases where the agent may be using learned and/or inscrutable models. Instead we will look at how we could learn post-hoc representations of the agent's model using concepts specified by the user and use this representation for model reconciliation. We will see how such concepts could be operationalized by learning classifiers for each concepts. In this chapter, we will also look at how we could calculate confidence over such explanations and how these methods could be used when we are limited to noisy classifiers for each concept. Further, we will discuss how we could acquire new concepts, when the originally specified set of concepts may not be enough to generate an adequate representation of the model.

\paragraph{Chapter \ref{ch08}} In this chapter, we turn our attention to adversarial settings, where the robot may have to act in obfuscatory manner to minimize the leakage of sensitive information. In particular, we look at settings where the adversary has partial observability of the robot's activities. In such settings, we show how the robot can leverage the adversary's noisy sensor model to hide information about its goals and plans. We also discuss an approach which maintains goal obfuscation even when the adversary is capable of diagnosing the goal obfuscation algorithm with different inputs to glean additional information. Further, we discuss a general setting where both adversarial as well as a cooperative observers with partial observations of robot's activities may exist. We show that in such a case, the robot has to balance the amount of information hidden from the adversarial observer with the amount of information shared with the cooperative observer. We will also look at the use of deceptive communication in the form of lies. We will show how we can leverage the tools of model reconciliation to generate lies and we will also discuss how such lies could in fact be used to benefit the human-robot team.

\paragraph{Chapter \ref{ch09}} In this chapte, we provide a discussion of some of the applications based on the approaches discussed in the book. We will look at two classes of applications. In the first, we look at decision support systems and in particular the methods related to explanations could be used to help decision-makers better understand the decisions being proposed by the user. In the second case, we will look at a specific application designed to help a user with transcribing a declarative model for the task. This application helps in identifying mistakes in current version of the model by reconciling against an empty user model.

\todo{\paragraph{Chapter \ref{ch10}} Finally we will close the book with a quick discussion on some of the most challenging open problems in Human-Aware AI.}

Although this monograph is informed primarily by our group's work, in each chapter, we provide bibliographic remarks at the end connecting the material discussed to other related works.

\clearpage

    \chapter{Measures of Interpretability}
\label{ch02}
\index{measures}%

\unsure{This chapter will act as the introduction to the technical discussions in the book. We will start by establishing some of the basic notations that we will use, including the definitions of deterministic goal-directed planning problems, incomplete planning models, sensor models, etc. 
With the basic notations in place, we will then focus on establishing the three main interpretability measures in human-aware planning; namely, {\em Explicability, Legibility, and Predictability}. We will revisit two of these measures (i.e., explicability and legibility) and discuss methods to boost these measures throughout the later chapters.}

\section{Planning Models}
\index{Planning Model}%
\index{STRIPS}%
\index{Execution Semantics}%
\index{Transition Function}%
\index{Goal-Directed Planning Model}%
\index{Deterministic Planning Model}%
\index{Preconditions}%
\index{Effects}%
\index{State Fluents@Fluents}%
\index{Action Cost}%
\index{Plan Validity}%
\index{Plan Optimality}%
\index{Goals}%

We will be using goal-oriented STRIPS planning models to represent the planning problems used in our discussions. However, the ideas discussed in this book apply to any of the popular planning representations.
Under this representation scheme, a planning model (sometimes also referred to as a planning problem) can be represented by the tuple $\mathcal{M} = \langle F, A, I, G, C \rangle$, where the elements correspond to
\begin{itemize}
    \item $F$ - A set of propositional fluents that define the space of possible task states. Each state corresponds to a specific instantiation of the propositions. We will denote the set of states by $S$. When required we will uniquely identify each state by the subset of fluents which are true in the given state. \unsure{ This representation scheme implicitly encodes the fact that any proposition from F not present in the set representation of the state is false in the underlying state.}
    \item $A$ - The set of actions that are available to the agent. Under this representation scheme, each action $a_i \in A$ is described by a tuple of the form 
    \[a_i = \langle  \textrm{pre}(a_i),  \textrm{adds}(a_i),  \textrm{dels}(a_i)\rangle,\] where
    \begin{itemize}
        \item $\textrm{pre}(a_i)$ - The preconditions for executing the action. For most of the discussion, we will follow the STRIPS execution semantics, wherein an action is only allowed to execute in a state where the preconditions are `met'. 
        \unsure{In general, preconditions can be any logical formula over the propositional fluents provided in $F$. Moreover, we would say an action precondition is met in a given state if the logical formula holds in that state (remember a state here correspond to a specific instantiation of fluents) 
        } 
        We will mostly consider cases where the precondition is captured as a conjunctive formula over a subset of state fluents, which we can equivalently represent as a set over these fluents.
        \unsure{This representation allows us to test whether a precondition holds by  checking if the set of fluents that are part of the precondition is a subset of the fluents that are true in the given state.}
        Thus the action $a_i$ is executable in a state $s_k$, if $\textrm{pre}(a_i) \subseteq s_k$.
        \item $\textrm{adds}(a_i)/\textrm{dels}(a_i)$ - The add and delete effects of the action $a_i$ together captures the effects of executing the action in a state. The add effects represent the set of state fluents that will be made true by the action and the delete effects capture the state fluents that will be turned false. Thus executing an action $a_i$ in state $s_j$ results in a state $s_k = (s_j \setminus \textrm{dels}(a_i)) \cup \textrm{adds}(a_i)$
    \end{itemize}
\item $I$ - The initial state from which the agent starts. 
\item $G$ - The goal that the agent is trying to achieve. Usually the goal is considered to be a partially specified state. That is, the agent is particularly interested in achieving specific values for certain state fluents and that the values of other state fluents do not matter. Thus any state where the specified goal fluent values are met are considered to be valid goal states.
\item $C$ - The cost function ($C: A \rightarrow \mathbb{R}_{>0}$) that specifies the cost of executing a particular action.
\end{itemize}

Given such a planning model, the solution takes the form of a plan, which is a sequence of actions. A plan $\pi = \langle a_1, ..., a_k\rangle$ is said to be a valid plan for a planning model $\mathcal{M}$, if executing the sequence of the plan results in a state that satisfies the goal. Given the cost function, we can also define the cost of a plan,  
$C(\pi) = \sum_{a_i \in \pi} C(a_i)$. 
A plan, $\pi$, is said to be optimal if there exists no valid plan that costs less than $\pi$. 
\unsure{We will use the term behavior to refer to the observed state action sequence generated by the execution of a given plan.}
\todo{For models where all actions have unit cost, we will generally skip $C$ from the model definition}.
We will use the superscript `$*$' to refer to optimal plans. Further, we will use the modifier `~$\bar{}$~' to refer to prefixes of a plan and `$+$' operator to refer to concatenation of action sequences. \todo{In cases where we are comparing the cost of the same plan over different models, we will overload the cost functions $C$ to take the model as an argument, i.e., we will use $C(\pi, \mathcal{M}_1)$ to denote the cost of plan $\pi$ in the model $\mathcal{M}_1$, while $C(\pi, \mathcal{M}_2)$ denotes its cost in $\mathcal{M}_1$. Additionally, we will use the notation $C_{\mathcal{M}}^*$ to denote the cost of an optimal plan in the model $\mathcal{M}$.}

Since most of the discussion in this book will rely on reasoning about the properties of such plan in different models, we will use a transition function $\delta$ to capture the effect of executing the plan under a given model, such that the execution of an action $a_i$ in state $s_j$ for a model $\mathcal{M}$, where $\delta(s_j, a, \mathcal{M})$, gives the state that results from the execution of action $a$ in accordance with the model $\mathcal{M}$.


\paragraph{Incomplete Models} 
\index{Annotated Model}%
\index{Incomplete Model}%
In this book, we will also be dealing with scenarios where the model may not be completely known. In particular, we will consider cases where the specification may be incomplete insofar as we do not know every part of the model with absolute certainty, instead we will allow information on parts of the model that may be possible. To represent such models, we can follow the conventions of an annotated model, wherein the model definition is quite similar to STRIPS one, except that each action $a_i$ is now defined as $a_i = \langle \textrm{pre}(a_i), \textrm{poss\_prec}_{a_i},
\textrm{adds}(a_i),
\widetilde{\textrm{pre}}(a_i), \widetilde{\textrm{dels}}(a_i), \widetilde{\textrm{dels}}(a_i)\rangle$, where `$\widetilde{\textrm{pre}}$', `$\widetilde{\textrm{adds}}$' and `$\widetilde{\textrm{dels}}$' represent the possible preconditions, adds and deletes of an action. If a fluent $f$ is part of such a possible list, say a possible precondition, then we are, in effect, saying that we need to consider two possible versions of action $a_i$; one where it has a precondition $f$ and one where it does not. If the model in total contains $k$ possible model components (including preconditions and effects), then it is in fact representing $2^k$ possible models (this set of possible models are sometimes referred to as the \textit{completion set} of an annotated model).

\paragraph{Sensor Models}
\index{Sensor Model}%
\index{Observation Function}%
\index{Observation Tokens}%
\unsure{Most of the settings discussed in the book contain multiple agents and require one of the agents to observe and make sense of the actions performed by the other. We will refer to the former agent as the observer and the other as the actor (though these roles need not be fixed in a given problem). In many cases, the observer may not have perfect visibility of the actor's activities.} In particular, we will consider cases where the observer's sensor model may not be able to distinguish between multiple different activities performed by the actor. To capture cases like this, we will use the tuple, $Obs = \langle \Omega, O \rangle$ to specify the sensor model of the observer, where the elements correspond to:
\begin{itemize}
\item $\Omega$ - A set of observation tokens that are distinguishable by the observer. We will use $\omega$ to denote an observation token, and $\langle \omega \rangle$ to denote a sequence of tokens.
\item $O$ - An observation function ($O: A \times S \rightarrow \Omega$) maps an action performed and the next state reached by the actor to an observation token in $\Omega$. This function captures any limitations present in the observer's sensor model. If the agent cannot distinguish between multiple different activities, we say that the agent has partial observability. Given a sequence of tokens, $\langle \omega \rangle$, a plan $\pi$ is consistent with this sequence if and only if the observation at any step could have been generated by the corresponding plan step (denoted as $\langle \omega \rangle \models \pi$).
\end{itemize} 

\paragraph{Models in Play in Human-Aware Planning Problems}
\index{Mental Models}%
Throughout most of this book, we will use the tuple $\mathcal{M}^R = \langle F^R, A^R, I^R, G^R, C^R\rangle$ to capture the robot's planning model that it uses to plan its actions, $\mathcal{M}^H = \langle F^H, A^H, I^H, G^H, C^H\rangle$ the model human may use to capture their own capabilities and plan their behavior and $\mathcal{M}^R_h = \langle F^R_h, A^R_h, I^R_h, G^R_h, C^R_h\rangle$ the mental model the human maintains of the robot. 
In cases where we want to differentiate between the different definitions of the same action under different models, we will use the specific model name in the superscript to differentiate them. For example, we will refer to the definition of action $a_i$ in the robot's original model as $a_i^{\mathcal{M}^R}$, while the human's expectation of this model will be captured as $a_i^{\mathcal{M}^R_h}$. In cases where the human is maintaining an explicit set of possible robot models then we will use $\mathbb{M}^R_h$ to represent this set.
\unsure{In this book, we will be using the term robot to refer to the autonomous agent that will be acting in the world and interacting with the human. Though, by no means are the methods discussed in the book are limited to physically embodied agents. In fact, in Chapter \ref{ch09}, we will see many examples of software agents capable of using the same methods.
Much of the discussions in this book will be focused on cases where the human teammate is merely an observer, and thus in terms of human models we will be focusing on $\mathcal{M}^R_h$. For the sensor models, since the human will be the observer, we will denote the human sensor model by the tuple, $Obs^H_r = \langle \Omega^H_r, O^H_r \rangle$. }




\section{Modes of Interpretable Behavior}
\index{Modes of Interpretable Behavior}%
\index{Explicability}%
\index{Explicability!Score}%
\index{Legibility}%
\index{Legibility!Score}%
\index{Predictability}%
\index{Predictability!Score}%
\unsure{ The term interpretable behavior has been used to cover a wide variety of behaviors. However, one unifying thread that runs throughout the different views is the fact that they are all strategies designed to handle the asymmetry between what the human knows about the robot (in terms of its model or the plan it is following) and the robot's model or plans. A robot's actions may be uninterpretable 
when it does not conform to the expectations or predictions 
engendered by the human's model of the agent.
In this chapter, we will look at three notions of interpretability; namely {\em \textbf{explicability, legibility, and predictability}}. Each of these interpretability types captures a different aspect of the asymmetry. For each type, we will define a scoring function, that maps a given behavior and the human's beliefs about the robot to a score. Thus a robot that desires to exhibit a certain type of interpretable behavior can do so by selecting behaviors that maximize the given score. Figure \ref{fig:fig}, present some sample behavior illustrating the various interpretability measures.}
\begin{figure*}
    \centering
    \begin{subfigure}[b]{0.4\textwidth}
	    \centering
        \includegraphics[width=0.4\textwidth]{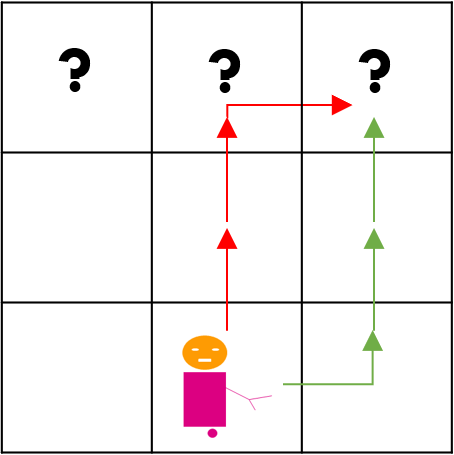}
        \caption{Plan legibility / transparency.}
        \label{fig:1}
    \end{subfigure}
    \hfill 
    \begin{subfigure}[b]{0.4\textwidth}
	    \centering
        \includegraphics[width=0.4\textwidth]{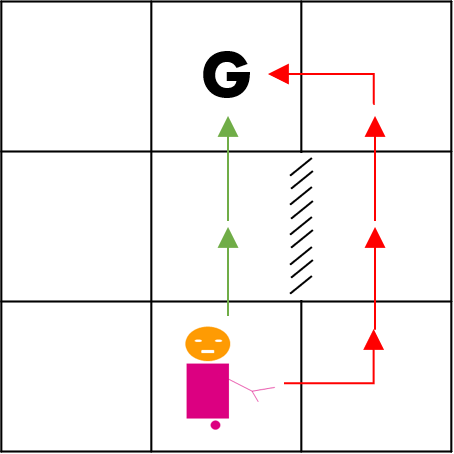}
        \caption{Plan explicability.}
        \label{fig:2}
    \end{subfigure}
    \hfill 
    \begin{subfigure}[b]{0.4\textwidth}
	    \centering
        \includegraphics[width=0.4\textwidth]{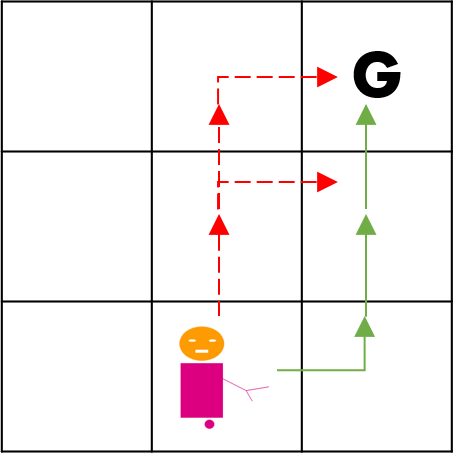}
        \caption{Plan predictability.}
        \label{fig:3}
    \end{subfigure}
\caption{
A simple illustration of the differences between plan explicability, legibility and predictability.
In this Gridworld, the robot can travel across cells, but cannot go backwards.
Figure \ref{fig:1} illustrates a legible plan (\textcolor{green}{green}) in the presence of 3 possible goals of the robot, marked with {\bf ?}s.
The \textcolor{red}{red} plan is not legible since all three goals are likely in its initial stages.
Figure \ref{fig:2} illustrates an explicable plan (\textcolor{green}{green}) which goes straight to the goal {\bf G} as we would expect. 
The \textcolor{red}{red} plan may be more favorable to the robot due to its internal constraints (the arm sticking out might hit the wall), but is inexplicable (i.e. sub-optimal) in the observer's model. 
Finally, Figure \ref{fig:3} illustrates a predictable plan (\textcolor{green}{green})
since there is only one possible plan after it performs the first action.
The \textcolor{red}{red} plans fail to disambiguate among two possible completions of the plan.
Note that all the plans shown in Figure \ref{fig:3} are explicable (optimal in the observer's model) but only one of them is predictable -- i.e. explicable plans may not be predictable.
Similarly, in Figure \ref{fig:2}, the \textcolor{red}{red} plan is predictable after the first action (even though not optimal, since there is only one likely completion) but not explicable -- i.e. predictable plans may not be explicable.
Without a prefix in Figure \ref{fig:2}, the \textcolor{green}{green} plan is the only predictable plan.
}
 	\label{fig:fig}
\end{figure*}

\subsection{Explicability}

\unsure{The notion of explicability is related to the human's understanding of the robot's model and how well it explains or aligns with the behavior generated by the robot.}

\begin{quote}\em
Explicability measures how close a plan is to the expectations of the observer. 
\end{quote}

To formally define explicability, we need to define the notion of distance between the robot's plan and that expected by the human observer. Such distance measure could be based on a number of different aspects, including the cost of the given plan or even on syntactic features like actions used. For now, we will assume, we have access to a function $\mathcal{D}$ that takes two plans and a model and returns a positive number that reflects the distance between the plans. \index{Plan Distance}%
\unsure{From the semantics of explicability, the farther the plan is from one of the `expected' plans, the more unexpected it is to the observer}.
Thus, the \textbf{explicability score} for a robot's plan can be written as,
\[
E(\pi, \mathcal{M}^R_h) \propto \textrm{max}_{\pi'  \in \Pi^{\mathcal{M}^R_h}} -1*\mathcal{D}(\pi,\pi', \mathcal{M}^R_h)
\]
where, $\Pi^{\mathcal{M}^R_h}$ are the set of plans expected by the human. \unsure{That is, the explicability score of a robot's plan is negatively related to the minimum possible plan distance from the human's set of expected plans. Throughout the book we will mostly leave the exact function mapping distances to explicability score undefined. Instead we will assume that this mapping could be any monotonic increasing function over the term $-1*\mathcal{D}(\pi,\pi', \mathcal{M}^R_h)$.}
Thus the objective of explicable planning becomes,
\\
\[\textrm{Find:}~ \pi\]
\[\max_{\pi \in \Pi^{\mathcal{M}^R}}~    E(\pi, \mathcal{M}^R_h)\]
\\
\unsure{where $\Pi^{\mathcal{M}^R}$ is the set of plans valid for the model $\mathcal{M}^R$. We will be returning to this score multiple times as it represents one of the key desirable properties of a human-aware robot -- namely the ability to generate plans that the human can ``interpret'', insofar that it aligns with her expectations about the robot.}
\subsection{Legibility}

The robot may be capable of performing multiple different tasks in a given environment. In such environments, the human observer may sometimes have ambiguity over the current task being pursued by the robot. In such cases, the robot can make its behavior interpretable to the human by implicitly communicating its task  through its behavior. This brings us to another type of interpretable behavior, namely, legible behavior.
\todo{In general, legible behavior which allows the robot to convey some information implicitly. In this setting, the inherent assumption is that the human observer maintains a set of possible hypotheses about the robot model but is unaware of the true model.} In a nutshell,

\begin{quote}\em
Legibility reduces observer's ambiguity over possible models the robot is using.
\end{quote}

\unsure{Let $\mathbb{M}^R_h$ be the set of possible models that the human thinks the robot may have. Here the robot's objective is to execute a plan that reduces the human observer's ambiguity over the possible tasks under consideration. Therefore, the \textbf{legibility score} of a plan $\pi$ can be defined as follows:}

\unsure{
\[\mathcal{L}(\pi, \mathbb{M}^R_h, \mathcal{M}^R) \propto \frac{1}{f_{amb}^{\mathcal{M}^R}(\{\mathcal{M}^j ~|~ \mathcal{M}^j \in \mathbb{M}^R_h \land \delta(I^j, \pi, \mathcal{M}^j) \models G^j \})} \]}

\todo{
Where $f_{amb}^{\mathcal{M}^R}$ is a measure of ambiguity over the set of models consistent with the behavior observed. There are multiple ways such an ambiguity function could be instantiated including, cardinality of the set, the similarity of the remaining models, and in the probabilistic case, the probability that the human would attach to the models that meet some specific criteria (for example, it shares some parameter with the true robot model). Thus the objective of legible planning thus becomes}

\[\textrm{Find:}~ \pi \]
\[\max_{\pi \in \Pi^{\mathcal{M}^R}}~ \mathcal{L}(\pi, \mathbb{M}^R_h)~\]
\\

In Chapter \ref{ch04}, we will see specific cases of legibility namely, goal legibility and plan legibility, defined in settings where the human has partial observability of the robot's activities. Due to the human's partial observability of the robot's activities, the human may suffer from uncertainty over the robot's true goal, given a set of candidate goals (goal legibility) or robot's true plan given a set of possible plans towards a goal (plan legibility). \unsure{Moreover, for scenarios with partial observability, we will be defining legibility over observation sequence generated by the plan as opposed to over the original plans.}

\subsection{Predictability}

In the case of explicability, as long as the robot's behavior conforms to one of the human's expected plans, the behavior is explicable to the human. However, predictability goes beyond explicability, in that, the behavior has to reduce the number of possible plan completions in the human's mental model given the robot's task. In a nutshell, 

\begin{quote}\em
Plan predictability reduces ambiguity over possible plan completions, given a plan prefix.
\end{quote}

That is, a predictable behavior is one that allows the human observer to guess or anticipate the robot's actions towards its goal. We can define the \textbf{predictability score}, i.e., predictability score of a plan prefix, $\bar{\pi}$, given the human's mental model, $\mathcal{M}^R_h$, as follows:

\[\mathcal{P}(\bar{\pi}, \mathcal{M}^R_h) \propto \frac{1}{|~\{ \tilde{\pi} ~|~ 
\tilde{\pi} \in \textrm{compl}_{\mathcal{M}^R_h}(\bar{\pi}) \} ~|} \]

Here, $\textrm{compl}_\mathcal{M}(\cdot)$ denotes the set of possible completions with respect to model $\mathcal{M}$, i.e., $\forall \pi \in \{\textrm{compl}_\mathcal{M}(\cdot)\}, \delta(I, \pi, \mathcal{M}) \models G$. The predictability score is higher with less number of possible completions for a given prefix in the human's mental model. Further, the objective in the problem of predictable planning is to select an action that reduces the number of possible completions given the prefix and the robot's goal. \unsure{Note that, this measure is being defined in an online setting, that is the robot has executed a partial prefix and is in the process of completing the execution in a predictable fashion.} Therefore, the problem of predictable planning with respect to a given prefix $\bar{\pi}$ is defined as follows:
\[\textrm{Find:}~ a~ \] 
\[\max_{a \in A^R}~ \mathcal{P}(\bar{\pi} + a, \mathcal{M}^R_h)~\]

\unsure{ While predictability is a popular measure for interpretability, in this particular book we will not be spending much time investigating this particular measure.}

\section{Communication to Improve Interpretability}
\index{Communication to Improve Interpretability}%

One of the core reasons for the uninterpretability of any agent behavior is the asymmetry between the agent's knowledge and what the human observer knows about the agent. Explicability and legibility stem from the human's misunderstanding or lack of knowledge about the agent's model and predictability deals with the human's ignorance about the specific plan being followed by the agent. This means that in addition to adjusting the agent behavior, another strategy the agent could adopt is to inform the human about its model or the current plan. 

\subsection{Communicating Model Information} 
\label{model-space}
\index{Parameterized Model Representation}
\unsure{ Among the communication strategies, we will focus primarily on communicating model information. Such communication requires the ability to decompose the complete model into meaningful components and be able to reason about the effect of receiving information about that component in the human model. We will do this by assuming that the planning models can be represented a set of model features or parameters that can be meaningfully communicated to the human.} STRIPS like models gives us a natural  parameterization scheme of the form
 $\Gamma : \mathcal{M} \mapsto s$ represents any planning problem $\mathcal{M} = \langle  F, A, I, G, C\rangle$ as a state $s \subseteq F$ as follows --

\vspace{-10pt}
\begin{align*}
    \tau(f) &= 
\begin{cases}
    \textit{init-has-f} & \text{ if } f \in I,\\
    \textit{goal-has-f} & \text{ if } f \in G,\\
    \textit{a-has-precondition-f} & \text{ if } f \in \textit{pre}(a),~a \in A\\
    \textit{a-has-add-effect-f} & \text{ if } f \in \textit{adds}(a),~a \in A\\
    \textit{a-has-del-effect-f} & \text{ if } f \in \textit{dels}(a),~a \in A\\
    \textit{a-has-cost-f} & \text{ if } f = C(a),~a \in A\\
\end{cases} \nonumber \\[1ex]
    \Gamma(\mathcal{M}) =~& \big\{\tau(f)~|~\forall f \in I \cup G \cup \nonumber \\ 
    & \bigcup_{a\in A}\{f'~|~\forall f' \in \{C(a)\} \cup \textrm{pre}(a) \cup \textrm{adds}(a) \cup \textrm{dels}(a)\}\big\}
\end{align*}

Now the agent can communicate to the user in terms of these model features, wherein they can assert that certain features are either part of the model or not. If $\mu$ is a set such messages, then let $\mathcal{M}^R_h + \mu$ be the updated human model that incorporates these messages. If $I$ is an interpretability score (say explicability or legibility) and $\pi$ the current plan, then the goal of the communication becomes to identify a set of messages $\mu$ such that
\[ I(\pi, \mathcal{M}^R_h + \mu) > I(\pi, \mathcal{M}^R_h)\]
\unsure{The discussion in Chapter \ref{ch05} focuses on the cases where the robot's plan is fixed and it tries to identify the set of messages that best communicate this plan.} There may be cases where the most desirable approach for the agent is to not only communicate but also adapt its behavior to reduce uninterpretability and communication overhead. We will refer to such approaches as balanced planning and will dive deeper into such methods in Chapter \ref{ch06}.

Apart from the knowledge asymmetry, another reason for uninterpretability could be the difference in the computational capabilities of the human and the agent. We could also use communication in these cases to help improve the human's ability to comprehend the plan. \unsure{ While we won't look at methods for generating such explanatory messages in detail, we will briefly discuss them in Chapter \ref{ch10}}.

\section{Other Considerations in Interpretable Planning}
In this section, we will look at some of the other factors that have been studied in the context of interpretable behavior generation and human-aware planning in general.
\paragraph{Modes of Operation}
\index{Modes Of Operation}%
\index{Online Measures}%
\index{Offline Measures}%
\index{K-Step Interpretability}%
One additional factor we can take into account while describing interpretable behavior is what part of the plan is being analyzed by the human observer. Are they watching the plan being executed on the fly or are they trying to analyze the entire plan? We will refer to the former as online setting and the latter as offline or post-hoc setting. In the previous sections, while we defined predictability in an online setting both legibility and explicability were defined for an offline setting. This is not to imply that legibility and explicability are not useful in online settings (though the usefulness of offline predictability is debatable in fully observable settings, we will touch on the topic in Chapter \ref{ch04}). The online formulations are useful when the robot's reward/cost functions are tied to the interpretability score at every execution time step. For online variations of the scores, the score for each prefix may be defined in regard to the expected completions of the prefix.
A related setting that does not fit into either paradigms, is one where the human is interested in interpretable behavior after the execution of first $k$ steps by the agent, we will refer to such settings as {\em k-step interpretability} settings. 
\unsure{These settings are not the same as the online planning setting, as they allow for cases where the human would not pose any penalty for uninterpretable behavior for the first $k-1$ steps.}

\paragraph{Motion vs Task Planning}
The notions of interpretable behaviors have been studied in both motion planning  and task planning. From the algorithmic perspective, this is simply differentiated in usual terms -- e.g. continuous versus discrete state variables. 
However, the notion of interpretability in task planning engenders additional challenges. This is because a reasonable human's mental model of the robot for motion planning can be assumed to be one that prefers straight-line plans and thus need not be modeled explicitly (or needs to be acquired or learned).
For task planning in general, this is less straightforward. \unsure{In Chapters \ref{ch03} and \ref{ch06}, we will see how we could learn such human models.}

\paragraph{Computational Capabilities of the Human}
\index{Human Inferential Capabilities}%
\unsure{One aspect of decision-making we haven't modeled explicitly here is the computational/inferential capabilities of the robot and the human. The robot's computation capabilities determine the kind of behaviors they can exhibit and the human's computational capabilities determine the kind of behavior they expect from the robot. Unfortunately, most current works in interpretable behavior generation generally tend to assume they are dealing with perfectly rational humans and agents (with few exceptions such as the user of noisy-rational models). Developing and using more sophisticated models of bounded rationality that better simulate the human and the robot's reasoning capabilities remain an exciting future direction for interpretable behavior generation.}

\paragraph{Behavior versus Plans}
\index{Behavior}%
Our discussion has mostly been confined to analysis of behaviors -- i.e. one particular observed instantiation of a plan or policy.
In particular, a plan -- which can be seen as a set of constraints on behavior -- engenders a {\em candidate set} of behaviors some of which may have certain interpretable properties while others may not.
\unsure{However, this also means that an algorithm that generates plan with a specific interpretable property, can also do so for a particular behavior 
it models, since in the worst case a behavior is also a plan that has 
a singular candidate completion.}
A general treatment of a plan can be very useful in the offline setting -- e.g. in decision-support
where human decision-makers are deliberating over possible plans with the support from an automated planner.
Unfortunately, interpretability of such plans has received 
very little attention beyond explanation generation. 

\section{Generalizing Interpretability Measures}
\index{Generalilzed Interpretability}%
\index{Model-Following Behavior}%
\index{Model-Communication Behavior}%

In this chapter and in the rest of the book, we focus one three interpretability measures that were identified by previous works as capturing some desirable behaviors in human-robot interaction settings. Note that these are specific instances of larger behavioral patterns the robot could exhibit in such scenarios. Another possible way to categorize agent behaviors would be to organize them based on how they influence or use the human mental models. In the case of settings with humans as observers, this gives rise to two broad categories, namely;

\textbf{Model-Communication Behaviors:} Model-communication involves molding the human's mental models through implicit (i.e tacit) or explicit communication, to allow the agent to achieve their objectives. Examples of model-communication behaviors include legible behavior generation where the agent is trying to implicitly communicate some model information and the model-reconciliation explanation where the agent is directly communicating some model information so that the robot behavior appears more explicable.

\textbf{Model-Following Behaviors:} This strategy involves taking the current models of the human and generating behavior that conforms to current human expectations or exploits it in the robot's favor. The examples of this strategy we have seen so far include explicability and predictability. 

One could see the agent engaging cyclically in model-communication and model-following behaviors, wherein the agent may choose to mold the user's expectations to a point where its behavior may be better received by the observer.
We could go one step further and get away from behavior categorization altogether, and simply define a single human-aware planning problem that generates these individual interpretable behaviors in different scenarios. In fact, \cite{bayesian} defines a generalized human-aware planning problem as follows

\begin{definition}
\index{Generalized Human-Aware Planning Problem@G-HAP}
A {\bf Generalized human-aware planning problem or G-HAP} is a tuple $\Pi_{\mathcal{H}} = \langle \mathcal{M}^R, \mathbb{M}^R_h, P^0_h,$ $P_{\ell}, C_{\mathcal{H}}\rangle$, where $\mathcal{M}^R$ as always is the robot model, $ \mathbb{M}^R_h$ is the set of models human maintains of the robot,  $P^0_h$ is the human's initial prior over the models in the hypothesis set $\mathbb{M}^R_h$, $P_{\ell}$ are the set of inference models the human associates with each possible model and $C_{\mathcal{H}}$ is a generalized cost function that depends on the exact objective of the agent. 
\end{definition}

Here $\mathbb{M}^R_h$ also includes special model denoted as $\mathcal{M}^0$. $\mathcal{M}^0$ corresponds to a high entropy model, that is meant to capture the fact that the human's current set of explicit models cannot account for the observed robot behavior. The formulation assumes the human is a Bayesian reasoner who reasoning about the robot behavior given their beliefs about the robot model (and previous observations about the robot behavior).

This problem formulation generates explicable behavior as discussed in this book if (a) $\mathbb{M}^R_h = \{\mathcal{M}^R_h, \mathcal{M}^0\}$,  (b) the objective of the planning problem is to select behavior that maximizes the posterior probability associated with model $\mathcal{M}^R_h$ and (c) if, in human's expectation, the likelihood of the robot selecting a plan for  $\mathcal{M}^R_h$ (i.e.  $\mathcal{M}^R_h$) is inversely proportional to a distance from a set of expected plans. \citep{bayesian} specifically shows this for cases where the distance measure corresponds to cost difference and the expected plans correspond to optimal plans.

Similarly, the formulation generates legible behaviors when the prior on $\mathcal{M}^0$ is zero and if the objective is to select behavior that maximizes the cumulative posterior over all the models that share the model parameter the robot is trying to communicate (say a goal).  Finally, predictable behaviors are generated when $\mathcal{M}^0$ has zero prior, the $\mathbb{M}^R_h = \{\mathcal{M}^R_h, \mathcal{M}^0\}$ and the objective of the planning problem is to choose behavioral prefixes, whose completions have a high posterior likelihood.

\section{Bibliographic Remarks}
We would refer the reader to look at \cite{geffner2013concise} for a detailed discussion of the planning formalism we will be focusing on. In most cases, we can leverage any appropriate planning algorithm (satisficing or optimal depending on the setting) that applies to this formalism. In cases, where we require the use of a specifically designed search or planning algorithm, we will generally include a pseudo code in the chapter. 

In terms of the interpretability measures, the original work to introduce explicability was \cite{exp-yu}, though it used a learned labeling model as a proxy to measure the explicability score. The measure was further developed in \cite{explicable-anagha} to take into account the user's model and a learned distance function. Parallelly, works \cite{explain} and \cite{balancing} have looked at applying communication based strategies to address inexplicability but primarily in the context of when the distance function is based on cost and expected plan consists of plans that are optimal in the robot model.

With respect to legibility and predictability, initial works were done in the context of motion-planning problems by \cite{dragan2013legibility} and \cite{Dragan17}. Legibility was later extended to task planning settings by \cite{macnally2018action} and further into partially observable domains by \cite{unified-anagha}. A {\em k-step} version of predictability was introduced in \cite{fisac2018generating}. The noisy-rational model as a stand-in for the human's inferential capability was used by  \cite{dragan2013legibility}, \cite{Dragan17}, and \cite{fisac2018generating}. Even outside the interpretability literature, there is evidence to suggest that such models can be an effective way to capture the inferential process of humans \cite{baker2014modeling}. \cite{landscape} provides a good overview of the entire space of works done in this area and provides a categorization of existing individual works done in this space into these three core scores.

The communication paradigm discussed in the chapter is based on work done in \cite{explain}. Such model parameterization was later extended to MDPs in \cite{modelfree}. Even outside of explanations, the use of representing models has been investigated in the context of other applications that require a search over a space of plausible models, like in the case of \cite{bryce2016maintaining}.



\clearpage

    \chapter{Explicable Behavior Generation}
\label{ch03}
\unsure{In chapter 2, among other things we defined the notion of explicability of a plan and laid out an informal description of explicable planning. In this chapter, we will take a closer look at explicability and discuss some practical methods to facilitate explicable planning. This would include discussion on both planning algorithms specifically designed for generating explicable behavior and how one could design/update the task to make the generation of explicable plans easier.}

\unsure{In the earlier chapter, we discussed how the generation of explicable plans requires the robot to simultaneously reason with both models $\mathcal{M}^R$ and $\mathcal{M}^R_h$. This is because the robot needs to select feasible and possibly low-cost plans from $\mathcal{M}^R$, that align with or are close to plans expected by the human (as computed from $\mathcal{M}^R_h$). This implies the robot needs to have access to $\mathcal{M}^R_h$ (at least an approximation of it in some form). An immediate question the reader could ask is if $\mathcal{M}^R_h$ is available to the robot, why is $\mathcal{M}^R$ required in the plan generation process at all? This is necessary since the human's mental model might entail plans that are not even feasible for the robot or are prohibitively expensive, and can thus at best serve as a guide, and not as an oracle, to the explicable plan generation process.}

\unsure{In fact, the critical consideration in the computation of explicable behavior would be the form in which the human's mental model, $\mathcal{M}^R_h$, is accessible to the robot. We present two settings in this chapter, one where the robot directly has access to $\mathcal{M}^R_h$, and another where the robot learns an approximation of it, $\widetilde{\mathcal{M}}^R_h$.}

In the first setting, we hypothesize that a prespecified plan distance measure 
can quantify the distance between the robot plan $\pi_{\mathcal{M}^R}$ and the expected plans $\pi_{\mathcal{M}^R_h}$ from $\mathcal{M}^R_h$. 
\unsure{There are many scenarios in which the system could have direct access to the human's mental model}. 
In domains such as household or factory floor scenarios, there is generally a clear expectation of how a task should be performed. In such cases, $\mathcal{M}^R_h$ can be constructed following the norms or protocols that are relevant to that domain. Most deployed products make use of inbuilt models of user expectations in some form or the other. 
\unsure{In domains like urban search and rescue, the human and the robot may start with the same model and they may diverge over time as the robot acquires more information owing to the use of more advanced sensors.
With $\mathcal{M}^R_h$ in place, we can then use it along with $\mathcal{M}^R$ to generate plans that are closer to the plans expected by the human. Section \ref{section:plan-dist} presents a specific heuristic search method for generating such plans and the basic approach is illustrated in Figure \ref{fig:architecture1}.
}

\begin{figure*}
\centering 
\includegraphics[width=\columnwidth]{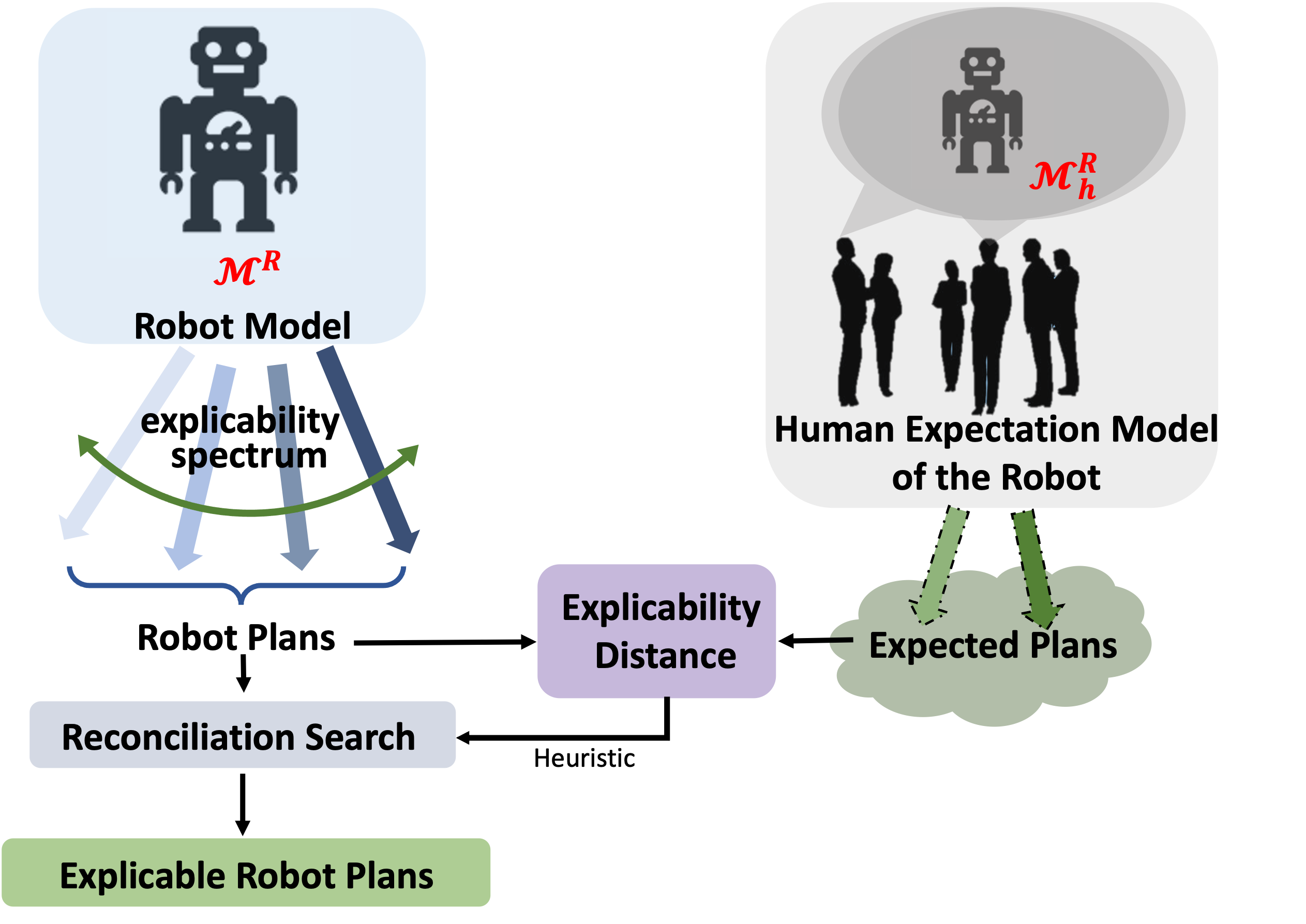}
\caption{Schematic diagram of the model based explicable planning.}
\label{fig:architecture1}
\end{figure*}

In the second setting, we learn an approximation of the human's mental model represented as $\widetilde{\mathcal{M}}^R_h$. This model is learned based on an underlying assumption that humans tend to associate tasks/sub-goals with actions in a given plan. 
Thus the approximate model, $\widetilde{\mathcal{M}^R_h}$, takes the form of a labeling function that maps plan steps to specific labels. This learned model is then used as a heuristic to generate explicable plans. This approach is illustrated in Figure \ref{fig:architecture2} and is discussed in Section \ref{section:model-free}.

\begin{figure*}
\centering 
\includegraphics[width=\columnwidth]{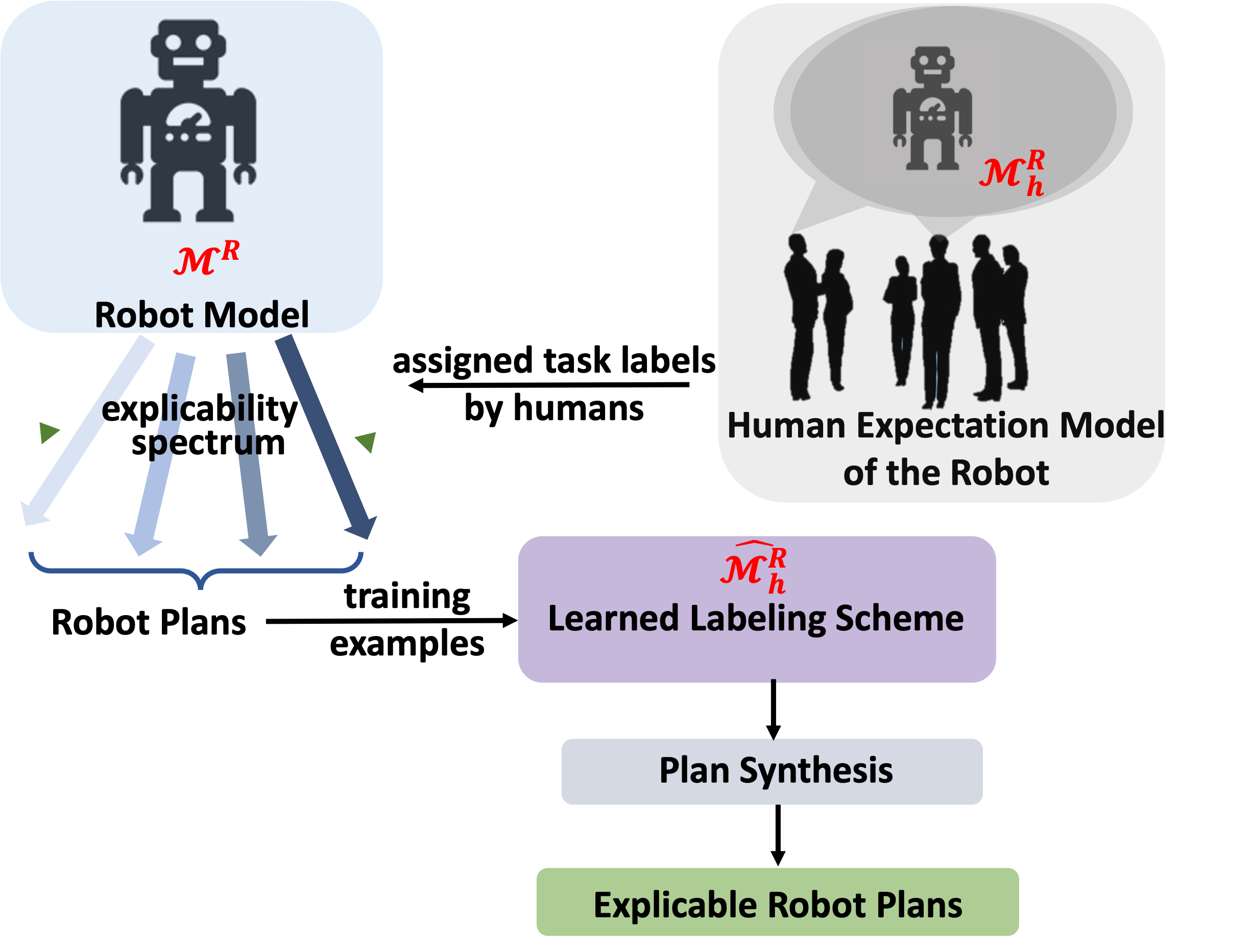}
\caption{\unsure{Schematic diagram of model-free explicable planning: A learned labeling model is used as a proxy for the actual human model. This model is used to compute the heuristic for explicable plan generation.}}
\label{fig:architecture2}
\end{figure*}

Finally, we will conclude this chapter by looking at methods that can help improve the feasibility of these behaviors. The plan expected by the human observer may be prohibitively expensive and/or infeasible for the robot. In such cases, it may be wise to redesign the environment itself in which the human and the robot are operating, especially when tasks need to be performed repeatedly in the environment. In Section \ref{section:design}, we discuss how the environment design techniques can be leveraged to improve the explicability of the robot's behaviors, and also explore the longitudinal impact on explicable behaviors in a repeated interaction setting. 

\section{Explicable Planning Problem}
\index{Explicable Planning}%
As a reminder from Chapter \ref{ch02}, the problem of explicable planning arises when the robot model differs from the human's expectation of it. Let $\pi_{\mathcal{M}^R}$ be the robot's plan solution to the planning problem,  $\mathcal{M}^R= \langle F, A^R, I^R, G^R, C^R \rangle $; whereas, $\Pi_{\mathcal{M}^R_h}$ be the set of plans that the human expects given her mental model of robot's model, $\mathcal{M}^R_h= \langle F, A^R_h, I^R_h, G^R_h, C^R_h \rangle $. The differences in the human's mental model can lead to different plan solutions. 

Thus explicable planning consists of the robot coming up with plans that are close to what the human expects (measured using a distance function $\mathcal{D}$). 

\begin{definition}
\index{Explicable Planning!Problem}%
The \textbf{explicable planning problem} is defined as a tuple $\mathcal{P}_{Exp} = \langle \mathcal{M}^R, \mathcal{M}^R_h,  \mathcal{D}\rangle $, where, $\mathcal{D}$ is the distance function that the human uses to compute the distance between her expected plan and the robot's plan.
\end{definition}


In general, explicable planning is a multi-objective planning problem where the robot is trying to choose a behavior that tries to both maximize its explicability score (i.e. minimize the distance to the expected plan) and minimize its cost. To keep the planning formulation simple, we will assume that we can reduce the planning objective to a linear combination of the cost of the plan in the robot model and the distance of the plan from the expected human plan. 
Thus the solution to an explicable planning problem is an explicable plan that achieves the goal and minimizes the plan distances while also minimizing the cost of the plan.

\begin{definition}
\index{Maximally Explicable Plan}%
A \textbf{maximally explicable plan} is a plan, $\pi^{*}_{\mathcal{M}^R}$, starting at $I^R$ that achieves the goal $G^R$, such that, $
\displaystyle 
\argmin_{\pi_{\mathcal{M}^R}} C(\pi_{\mathcal{M}^R}) + \min_{\pi \in \Pi_{\mathcal{M}^R_h}} f_{\mathcal{IE}}(\mathcal{D}(\pi_{\mathcal{M}^R}, \pi, \mathcal{M}^R)) $. 
\label{exp-def}
\end{definition}

\unsure{Where $f_{\mathcal{IE}}$ is any increasing monotonic function over the distance $\mathcal{D}$ (the actual choice of which may depend on the specific optimization algorithm).
We will refer to the term $\min_{\pi \in \Pi_{\mathcal{M}^R_h}} f_{\mathcal{IE}}(\mathcal{D}(\pi_{\mathcal{M}^R}, \pi, \mathcal{M}^R))$ as the inexplicability score or simply $\mathcal{IE}$.
Readers would note that $\mathcal{IE}$ is the opposite of the explicability score defined in the previous chapter. 
In this chapter, we will mostly focus on planning algorithms that are trying to minimize the cost of plans generated, and as such focusing on inexplicability score, makes it easier to integrate the concept of explicabillity into the algorithms as we can treat $\mathcal{IE}$ as yet another cost term. Though other planning formulations say utility maximization formulation, may benefit from the use of explicability score.}

\section{Model-Based Explicable Planning}
\index{Explicable Planning!Model-Based}%

\label{section:plan-dist}

We will start by looking at explicable behavior generation in cases where the human's mental model, $\mathcal{M}^R_h$, and the distance function, $\mathcal{D}$ are known beforehand. 
Here we assume that the set of plans expected by the human correspond to the optimal plans in $\mathcal{M}^R_h$, i.e.,  $\Pi_{\mathcal{M}^R_h} = \{ \pi | C^R_h(\pi) = C^{*}_{\mathcal{M}^R_h}\}$. For each robot plan, the minimum distance (per the given distance function, $\mathcal{D}$) is computed with respect to this set of expected plans.  

\subsection{Plan Generation through Reconciliation Search}
\index{Explicable Planning!Reconciliation Search}%

\begin{algorithm}
\caption{Reconciliation Search}
\label{search1}
\algorithmicrequire{~$\mathcal{P}_{Exp} = \langle \mathcal{M}^R, \mathcal{M}^R_h, \mathcal{D}\rangle$ and $\textrm{max\_cost}$} \\
\algorithmicensure{~$\Pi_{Exp}$}
\begin{algorithmic}[1]
\STATE $\Pi_{Exp} \gets \emptyset $ \qquad \COMMENT{\textcolor{blue}{Explicable plan set}} 
\STATE $\textit{open} \gets \emptyset$ \qquad \COMMENT{\textcolor{blue}{Open list} }
\STATE $\textit{closed} \gets \emptyset$ \qquad \COMMENT{\textcolor{blue}{Closed list} }
\STATE $\textit{open}$.insert$(I^R, 0, \inf)$ 
\WHILE {$\textit{open} \neq \emptyset$} 
\STATE  $n \gets \textit{open}$.remove() \qquad \COMMENT{\textcolor{blue}{Node with highest $h(\cdot)$} }
\IF {$n \models G^R$} 
\STATE $\Pi_{Exp}$.insert($\pi \text{ s.t. } \delta_{\mathcal{M}^R}(I^R, \pi) \models n $) 
\ENDIF 
\STATE $\textit{closed}$.insert($n$) 
\FOR {each $v \in  \textit{successors}(n)$}
\IF {$v \notin \textit{closed}$}
\IF {$g(n) + \textit{cost}(n, v) \leq \textrm{max\_cost}$}
\STATE $\textit{open}$.insert($v, g(n) + \textit{cost}(n, v), h(v)$)  \qquad 
\ENDIF
\ELSE
\IF {$h(n) < h(v)$}
\STATE $\textit{closed}$.remove($v$)
\STATE $\textit{open}$.insert($v, g(n) + \textit{cost}(n, v), h(v)$)
\ENDIF
\ENDIF
\ENDFOR
\ENDWHILE
\STATE \RETURN $\Pi_{Exp}$ 
\end{algorithmic}
\end{algorithm}

\unsure{Our primary objective here is to use the distance to the expected plans in $\mathcal{M}^R_h$ as a heuristic to come up with plans that are both valid for the model $\mathcal{M}^R$ and explicable. One obvious idea we could try is to first use $\mathcal{M}^R$ to come up with the expected plan set $\Pi^R_h$ and then during planning using the model  $\mathcal{M}^R$ (via a heuristic progression search), expand only the plan prefixes which have the minimal distance to one of the expected plans in the set $\Pi^R_h$ (we will assume that the inexplicability score here is equal to the distance and use the terms inexplicability score and distance interchangeably).}

\unsure{Unfortunately, this won't work in practice due to the non-monotonicity of the inexplicability score. As a partial plan grows, each new action may contribute either positively or negatively to the distance, thus making the final inexplicability score non-monotonic. Consider that the goal of an autonomous car is to park itself in a parking spot on its left side. The car takes the left turn, parks, and turns on its left indicator. Here the turning on of the left tail light after having parked is an inexplicable action. The first two actions are explicable to the human drivers and contribute positively to the overall explicability of the plan but the last action has a negative impact.} 

\unsure{Due to the non-monotonic nature of distance, we cannot stop the search process after finding the first solution. Consider the following case, if $\textit{d}_1$ is the distance of the current prefix, then a node may exist in the open list (set of unexpanded nodes) whose distance is greater than $\textit{d}_1$, which when expanded may result in a solution plan with distance less than $\textit{d}_1$. A greedy method that expands a node with the lowest distance score of the corresponding prefix at each step is not guaranteed to find an optimal explicable plan (one of the plans with the lowest inexplicability score) as its first solution.
One possible solution that has been studied in the literature has been to use a cost-bounded anytime greedy search algorithm called \emph{reconciliation search} that generates all the valid candidate plans up to a given cost bound, and then progressively searches for plans with lower inexplicability scores. 
The value of the heuristic $\textit{h(v)}$ in a particular state $\textit{v}$ encountered during the search is based entirely on the distance of the agent plan prefix $\pi_{\mathcal{M}^R}$ up to that state,\begin{math} 
~\textrm{h(v)}\  = \mathcal{D}(\pi_{\mathcal{M}^R},\pi^{\prime}_{\mathcal{M}^R_h}, \mathcal{M}^R_h) 
 \text{ s.t. } \delta_{\mathcal{M}^R}(I^R, \pi_{\mathcal{M}^R}) \models v 
\text{ and } \delta_{\mathcal{M}^R_h}(I^R, \pi^{\prime}_{\mathcal{M}^R_h}) \models v
\end{math}.}

\unsure{The approach is described in detail in Algorithm \ref{search1}. At each iteration of the algorithm, the plan prefix of the agent model is compared with the explicable trace $\pi^{\prime}_{\mathcal{M}^R_h}$ (these are the plans generated using $\mathcal{M}^R_h$ up to the current state in the search process) for the given problem. There are few choices one could consider in creating such prefixes including, considering prefixes from among the expected plans, optimal or even valid plans to the state in the human model.}
\unsure{The search algorithm then makes a locally optimal choice of states and continues till a solution is found. The search is not stopped after generating the first solution, but instead, it restarts from the initial state. The search continues to find all the valid candidate solutions within the given cost bound or until the state space is completely explored. }

\subsection{Possible Distance Functions}
\index{Plan Distance Functions}%
\unsure{In the above discussion, we mostly assumed we are given some specific distance function $\mathcal{D}$. A simple candidate for distance function could be the cost difference, but that presupposes that we know exactly the cost function of the human and that the explicability consideration of the human observer is strictly limited to cost difference. In general, it may be possible that the human is using more complex distance functions and we may want to learn the distance function directly from the human. In this case, we could consider learning a distance function that is a combination of various individual distance functions that can be directly computed from the human's model. Some of the candidate distance functions we could consider include measures like action-distance, causal-link distance, and state distance.}

\paragraph{Action distance}
\index{Plan Distance Functions!Action Distance}%

We denote the set of unique actions in a plan $\pi$ as $A(\pi) = \{a~|~a\in\pi\}$.
Given the action sets $A(\pi_{\mathcal{M}^R})$ and $A(\pi^*_{\mathcal{M}^R_h})$ of two plans $\pi_{\mathcal{M}^R}$ and $\pi^*_{\mathcal{M}^R_h}$ respectively, the action distance is, 
\begin{math} \label{eqn:eqn1}
\mathcal{D}_A(\pi_{\mathcal{M}^R}, \pi^*_{\mathcal{M}^R_h}) = 1 - \frac{\lvert A(\pi_{\mathcal{M}^R}) \cap A(\pi^*_{\mathcal{M}^R_h}) \rvert}{\lvert A(\pi_{\mathcal{M}^R}) \cup A(\pi^*_{\mathcal{M}^R_h}) \rvert}
\end{math}. 
Here, two plans are similar (and hence their distance measure is smaller) if they contain same actions. Note that it does not consider the ordering of actions. 

\paragraph{Causal link distance}
\index{Plan Distance Functions!Causal Link Distance}%

A causal link represents a tuple of the form $\langle a_i, p_i, a_{i+1} \rangle$, where $p_{i}$ is a predicate variable that is produced as an effect of action $a_i$ and used as a precondition for the next action $a_{i+1}$. The causal link distance measure is represented using the causal link sets $Cl(\pi_{\mathcal{M}^R})$ and $Cl(\pi^*_{\mathcal{M}^R_h})$,
\begin{math} \label{eqn:eqn2}
\mathcal{D}_C(\pi_{\mathcal{M}^R}, \pi^*_{\mathcal{M}^R_h}) = 1 - \frac{\lvert Cl(\pi_{\mathcal{M}^R}) \cap Cl(\pi^*_{\mathcal{M}^R_h}) \rvert}{\lvert Cl(\pi_{\mathcal{M}^R}) \cup Cl(\pi^*_{\mathcal{M}^R_h}) \rvert}
\end{math}.

\paragraph{State sequence distance}
\index{Plan Distance Functions!State Sequence Distance}%

This distance measure finds the difference between sequences of the states. Given two state sequences $(s^R_0, \ldots, s^R_n)$ and $(s^H_0, \ldots, s^H_{n^{\prime}})$ for $\pi_{\mathcal{M}^R}$ and $\pi^*_{\mathcal{M}^R_h}$ respectively, where $n \geq n^{\prime}$ are the lengths of the plans, the state sequence distance is, \begin{math} \label{eqn:eqn3}
\mathcal{D}_S(\pi_{\mathcal{M}^R}, \pi^*_{\mathcal{M}^R_h}) = \frac{1}{n}  \big[ \  \sum_{k=1}^{n^{\prime}} d(s_k^R, s_k^H) + n - n^{\prime} \big] \ \
\end{math}, where \begin{math} d(s_k^R, s_k^H) = 1 - \frac{\lvert s_k^{R} \cap s_k^{H} \rvert}{\lvert s_k^{R} \cup s_k^{H} \rvert} \end{math} represents the distance between two states (where $s_k^{R}$ is overloaded to denote the set of predicate variables in state $s_k^{R}$). The first term measures the normalized difference between states up to the end of the shortest plan, while the second term, in the absence of a state to compare to, assigns maximum difference possible.

As mentioned earlier, the actual distance used by the human observer could be some combination of such constituent distance function, so we will need to consider composite distance functions.

\begin{definition}
\index{Plan Distance Functions!Composite Distance}%
A \textbf{composite distance}, $\mathcal{D}_{comp}$ is a distance between pair of two plans $\langle \pi_{\mathcal{M}^R}, \pi_{\mathcal{M}^R_h} \rangle$, such that, $\mathcal{D}_{comp}(\pi_{\mathcal{M}^R}, \pi_{\mathcal{M}^R_h},\mathcal{M}^R_h) = || \mathcal{D}_A(\pi_{\mathcal{M}^R}, \pi_{\mathcal{M}^R_h})  + \mathcal{D}_C(\pi_{\mathcal{M}^R}, \pi_{\mathcal{M}^R_h}) + \mathcal{D}_S(\pi_{\mathcal{M}^R}, \pi_{\mathcal{M}^R_h}) || $.
\end{definition}

But for each robot plan we want to find the minimum distance with respect to the set of human's expected plans. We say a distance minimizing plan in the set of the expected plans is defined as follows:

\begin{definition}
\index{Distance Minimizing Plan}%
A \textbf{distance minimizing plan}, ${\pi}^{*}_{\mathcal{M}^R_h}$, is a plan in $\Pi_{\mathcal{M}^R_h}$, such that for a robot plan, $\pi_{\mathcal{M}^R}$, the composite distance is minimized, i.e., $\forall \pi \in \Pi_{\mathcal{M}^R_h}~\mathcal{D}_{comp}(\pi_{\mathcal{M}^R, \pi, \mathcal{M}^R_h}) \geq \mathcal{D}_{comp}(\pi_{\mathcal{M}^R, \pi^{*}_{\mathcal{M}^R_h}, \mathcal{M}^R_h})$.
\end{definition}

Our overall objective is to learn an inexplicability score from the individual distances. One way to do it would be to first collect scores human participants and then try to learn a mapping to human specified scores from the individual plan distances between a robot plan and corresponding distance minimizing plan in the set of expected plans.
To that end, we define a explicability feature vector as follows:

\begin{definition}
\label{def:4}
\index{Explicability Feature Vector}%
An \textbf{explicability feature vector}, $\mathbb{F}$, is a three-dimensional vector, which is given with respect to a distance minimizing plan pair, $\langle \pi_{\mathcal{M}^R}, \pi^{*}_{\mathcal{M}^R_h} \rangle$, such that, \begin{math}
\mathbb{F}= \langle \mathcal{D}(\pi_{\mathcal{M}^R}, \pi^{*}_{\mathcal{M}^R_h}), \mathcal{D}_C(\pi_{\mathcal{M}^R}, \pi^{*}_{\mathcal{M}^R_h}), \mathcal{D}_S(\pi_{\mathcal{M}^R}, \pi^{*}_{\mathcal{M}^R_h}) \rangle^T 
\end{math}.
\end{definition}

This allows us to learn an explicability distance function, $\hat{\mathcal{D}}_\textit{Exp}(\pi_{\mathcal{M}^R}~/~\pi^{*}_{\mathcal{M}^R_h})$, which is essentially a regression function, \textit{f}, that fits the three plan distances to the total plan scores, with $b$ as the parameter vector, and $\mathbb{F}$ as the explicability feature vector, such that, 
\unsure{\begin{math}
\hat{\mathcal{D}}_{\textit{Exp}}(\pi_{\mathcal{M}^R}~/~\pi^{*}_{\mathcal{M}^R_h}) \approx \textit{f} (\mathbb{F}, b)
\end{math}}. Therefore, a regression model can be trained to learn the explicability assessment (total plan scores) of the users by mapping this assessment to the explicability feature vector which consists of plan distances for corresponding plans. 
\unsure{Since we are unaware of the distance measure, we can never guess the right plan against which the human may be comparing the current behavior. The distance minimizing plan is a stand-in for this unknown plan. As such any distance function learned from features derived from this plan will be an approximation of the actual distance. Though in many cases such approximation acts as a reasonable guide for behavior generation.}

\section{Model-Free Explicable Planning}
\label{section:model-free}
\index{Explicable Planning!Model-Free}%

\unsure{If the robot does not have access to the human's mental model, then an approximation of it can be learned. This approach shows that it is not necessary to build a full-fledged planning model of the human's mental model, rather it is enough to learn a simple labeling function. Here the underlying hypothesis is that humans tend to associate abstract tasks or sub-goals to actions in a plan. If the human-in-the-loop can associate any domain-specific label to an action in the plan then the action is assumed to be explicable, otherwise, the action is considered inexplicable. Such a labeling scheme can be learned from training examples annotated by humans. This learned model can then be used as a heuristic function in the planning process. This approach is illustrated in  Figure \ref{fig:architecture2}.}

\subsection{Problem Formulation}

\unsure{In this case, since the $\mathcal{M}^R_h$ is not known beforehand, the distance function $\mathcal{D}(\cdot, \cdot, \cdot)$ in Definition 2 is approximated using a learning method. Here the terms $\mathcal{D}(\pi_{\mathcal{M}^R}, \pi_{\mathcal{M}^R_h}, \mathcal{M}^R_h)$ and by extension the inexplicability scores are measured in terms of the labels the human would associate with the plan.
Thus the method expects a set of task labels $T = \{T_1, T_2, \ldots, T_M \}$  be provided for each domain. Depending on how the human view the role of the action in the plan, each action may be associated with multiple labels. The set of action labels for explicability is the power set of task labels: $\mathbb{L} = 2^T$. When an action label includes multiple task labels, the action is interpreted as contributing to multiple tasks; when an action label is an empty set, the action is interpreted as inexplicable.}

\unsure{For a given plan $\pi_{\mathcal{M}^R}$, we can define the inexplicability score of the plan as follows
$\mathcal{IE}(\pi_{\mathcal{M}^R}, \mathcal{M}^R_h)= f \circ \mathcal{L}^*(\pi_{\mathcal{M}^R})$, where $f$ is a domain-independent function that takes plan labels as input, and $\mathcal{L}^*$ is the labeling scheme of the human for agent plans based on $\mathcal{M}^R_h$. A possible candidate for $f$ could be to compute the ratio between the number of actions with empty action labels and the number of all actions.}  

\subsection{Learning Approach}
\index{Labeling Models for Explicability}%
\unsure{Now the question is, for a new plan how do we predict the labels human would associate with it? As mentioned earlier, one could use a training set consisting of plans annotated with labels and train a sequence to sequence model.
Here, possible models, one could use include sequential neural networks (including Recurrent Neural Networks \citep{goodfellow2016deep} variants and transformers) and probabilistic models like Hidden Markov Models \citep{russell2002artificial} or Conditional Random Fields \citep{lafferty2001conditional}. Here each step in the sequence corresponds to a step in the plan execution and the possible features that could be associated with each step include features like the action description, the state fluents obtained after executing each action in the sequence $\langle a_0, ..., a_n \rangle$ from the initial state, etc.  One could also potentially include other features like motion level information, like gripper position, orientation, etc.
We will use the notation $\mathcal{L}'$ to denote the learned labeling model and to differentiate it from the true labeling model $\mathcal{L}^*$.
}

\subsection{Plan Generation}
\unsure{Once we have learned a labeling model, we can then use it to drive the behavior generation.}

\subsubsection{Plan Selection}
\unsure{The most straightforward method is to perform plan selection on a set of candidate plans which can simply be a set of plans that are within a certain cost bound of the optimal plan. Candidate plans can also be generated to be diverse with respect to various plan distances. For each plan, the features of the actions are extracted as we discussed earlier. Then the trained model $\mathcal{L}'$ can be used to produce the labels for the actions in the plan and the inexplicability score $\mathcal{IE}$ can then be computed as given by the mappings above. The inexplicability score can then be used to choose a plan that is least inexplicable.}

\begin{algorithm}
\caption{Synthesis of Explicable Plan}
\label{model_free_exp_search}
\algorithmicrequire{ $\mathcal{M}^R,\mathcal{L}'$ }\newline
\algorithmicensure{ $\pi^*_{\mathcal{M}^R}$}
\begin{algorithmic}[1]
\STATE $\textit{open} \leftarrow \emptyset$
\STATE $\textit{open}$.push$(I^R)$
\WHILE {$\textit{open}\neq\emptyset$}
\STATE  $s\leftarrow\textit{open}$.remove()
\IF {$G^R$ is reached}
\STATE \textbf{return} $s$.plan \COMMENT{\textcolor{blue}{the plan that leads to $s$ from $I^R$)}}
\ENDIF
\STATE Compute all possible next states $N$ from $s$
\FOR {$n \in N$}
\STATE Compute the relaxed plan $\pi_{RELAX}$ for $n$
\STATE Concatenate $s$.plan with $\pi_{RELAX}$ as $\pi^{prime}$ \COMMENT{\textcolor{blue}{$s$.plan contains plan features and $\pi_{RELAX}$  only contains action descriptions}}
\STATE Compute and add other relevant features
\STATE Compute $L_{\pi^R} = \mathcal{L}'(\pi^{prime})$
\STATE Compute $h = f(IE, h_{cost})$ \COMMENT{\textcolor{blue}{$f$ is a combination function; $h_{cost}$ is the relaxed planning heuristic}}
\ENDFOR
\IF {$h(n^*) < h^*$ \COMMENT{\textcolor{blue}{$n^* \in N$ with minimum $h$}}}
\STATE $\textit{open}$.clear()
\STATE $\textit{open}.$push($n^*$); $h^* = h(n^*)$ \COMMENT{\textcolor{blue}{$h^*$ is initially MAX}}
\ELSE
\STATE Push all $n \in N$ into $\textit{open}$
\ENDIF
\ENDWHILE
\end{algorithmic}
\end{algorithm}

\subsubsection{Plan Synthesis}
A more efficient way is to incorporate these measures as heuristics into the planning process. Algorithm \ref{model_free_exp_search} presents a simple modified version of Enforced Hill Climbing \citep{hoffmann2005ignoring}, that can make use of such learned heuristics. To compute the heuristic value given a planning state, one can use the relaxed planning graph to construct the remaining planning steps. However, since relaxed planning does not ensure a valid plan, one can only use action descriptions as plan features for actions that are beyond the current planning state when estimating the inexplicability measures. These estimates are then combined with the relaxed planning heuristic (which only considers plan cost) to guide the search.

\section{Environment Design for Explicability} 
\label{section:design}
\index{Design for Explicability}%
The environment in which the robot is operating may not always be conducive to explicable behavior. As a result, making its behavior explicable may be prohibitively expensive for the robot.  Additionally, certain behaviors that are explicable with respect to the human's mental model may not be feasible for the robot. Fortunately, in highly structured settings, where the robot is expected to solve repetitive tasks (like in warehouses, factories, restaurants, etc.), it might be feasible to \emph{redesign the environment} in a way that improves explicability of the robot's behavior, given a set of tasks. This brings us to the notion of environment design which involves redesigning the environment to maximize (or minimize) some objective for the robot. Thus, environment design can be used to boost the explicability of the robot's behavior, especially in settings that require solving repetitive tasks and a one-time environment design cost to boost explicable behavior might be preferable over the repetitive cost overhead of explicable behavior borne by the robot. 

However, environment design alone may not be a panacea for explicability. For one, the design could be quite expensive, not only in terms of making the required environment changes but also in terms of limiting the capabilities of the robot. Moreover, in many cases, there may not be a single set of design modifications that will work for a given set of tasks. For instance, designing a robot with wheels for efficient navigation on the floor will not optimize the robot's motion up a stairwell. 
\unsure{This means, to achieve truly effective synergy with autonomous robots in a shared space, we need a synthesis of environment design and human-aware behavior generation. In this section, we will talk about how we could trade off the one-time (but potentially expensive) design changes, against repetitive costs borne by the robot to exhibit explicable behavior.} 

\begin{figure*}
\centering
\begin{subfigure}{\columnwidth} 
\centering
\includegraphics[width=\columnwidth]{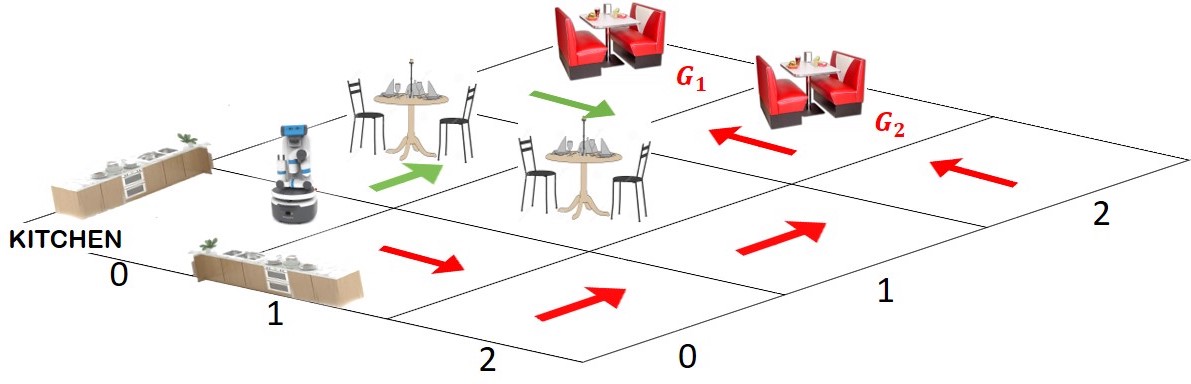}
\caption{Explicable behavior is costlier without design.}
\label{fig:3-ex1}
\end{subfigure} 
\hfill
\begin{subfigure}{\columnwidth}
\centering
\includegraphics[width=\columnwidth]{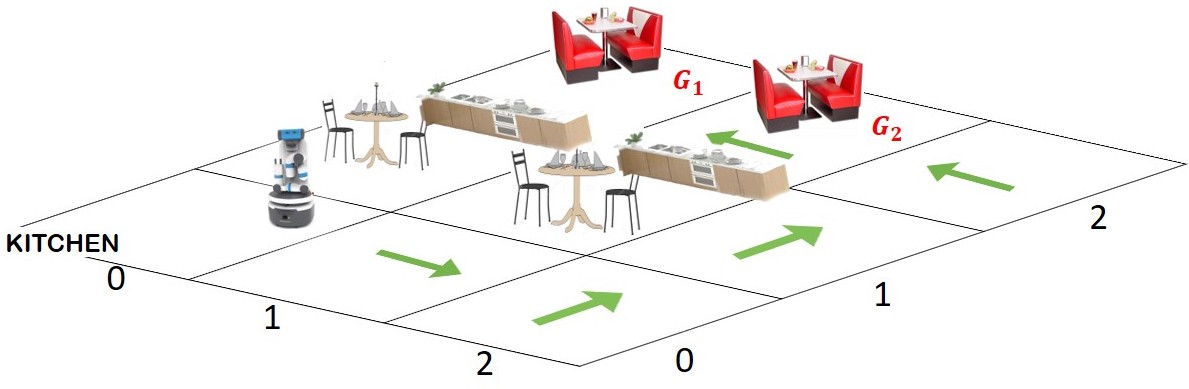}
\caption{Optimal behavior is explicable with design.}
\label{3-fig:ex2}
\end{subfigure}
\caption{
Use of environment design to improve the explicability of a robot's behavior in a shared environment.
}
\label{3-fig:example2}
\end{figure*}

\subsubsection{Motivating Example}

Consider a restaurant with a robot server (Figure \ref{fig:ex1}). 
Let $G_1$ and $G_2$ represent the robot's possible goals of serving the two booths: it travels between the kitchen and the two booths. 
The observers consist of customers at the restaurant. 
Given the position of the kitchen, the observers may have expectations on the route taken by the robot. However, unbeknownst to the observers, the robot can not traverse between the two tables and can only take the route around the tables. Therefore, the path marked in red is the cheapest path for the robot but the observers expect the robot to take the path marked in green in Figure \ref{fig:ex1}. 

In this environment, there is no way for the robot to behave as per the human's expectations. Applying environment design provides us with alternatives. For example, the designer could choose to build two barriers as shown in Figure \ref{3-fig:ex2}. With these barriers in place, the humans would expect the robot to follow the path highlighted in green. However, whether it is preferable to perform environment modifications or to bear the impact of inexplicable behavior depends on the cost of changing the environment versus the cost of inexplicability caused by the behavior.

\subsection{Problem Setting}

\label{sec:background}

We again consider an explicability problem of the  $\mathcal{P}_{Exp}= \langle \mathcal{M}^R, \mathcal{M}^R_h, \mathcal{D}_{\mathcal{M}^R_h}\rangle$. In this subsection we will again consider cases where the human mental model is given upfront. 
Let the expected set of plans in the human model. $\Pi_{\mathcal{M}^R_h}$, be the set of plans optimal in $\mathcal{M}^R_h$. 
The robot plans to minimize the inexplicability score in the human's mental model.
We will use the notation $\Pi^*_{\mathcal{IE}(\cdot, \mathcal{M}^R_h, \delta_{\mathcal{M}^R_h})}$ (in the absence of the parameter $\pi^R$) to refer to the set of plans in the robot's model with the lowest inexplicability score, and $\mathcal{IE}_{min}(\mathcal{P}_{Exp})$ to represent the lowest inexplicability score associated with the set. Further, let $f_{Exp}$ be the decision function used by the explicable robot: $f_{Exp}(\mathcal{P}_{Exp})$ represents the cheapest plan that minimizes the inexplicability score, i.e. $f_{Exp}(\mathcal{P}_{Exp}) \in \Pi^*_{\mathcal{IE}(\cdot, \mathcal{M}^R_h, \delta_{\mathcal{M}^R_h})}$ and $\lnot\exists \pi': \pi' \in \Pi^*_{\mathcal{IE}(\cdot, \mathcal{M}^R_h, \delta_{\mathcal{M}^R_h})}$ such that $c^R(\pi') < c^R(f_{Exp}(\mathcal{P}_{Exp}))$.

\subsubsection{Environment Design}
\index{Environment Design}
\unsure{Before we delve more into the problem of environment design to boost explicability, lets take a general look at {\em environment design problems}.}
An environment design problem takes as input the initial environment configuration along with a set of available modifications and computes a subset of modifications that can be applied to the initial environment to derive a new environment in which a desired objective is optimized. 

We consider ${\mathcal{M}^R}^0 = \langle F^0, {A^R}^0, {I^R}^0, {G^R}^0, {C^R}^0 \rangle$ as the initial environment
and $\rho^R$ as the set of valid configurations of that environment: ${\mathcal{M}^R}^0 \in \rho^R$. Let $\mathcal{O}$ be some metric that needs to be optimized with environment design, i.e a planning model with lower value for $\mathcal{O}$ is preferred. 
A design problem is a tuple $\langle {\mathcal{M}^R}^0, \Delta,  \Lambda^R, C, \mathcal{O} \rangle$ where, $\Delta$ is the set of all modifications, $\Lambda^R: \rho^R \times 2^\Delta \rightarrow \rho^R$ is the model transition function that specifies the resulting model after applying a subset of modifications to the existing model, $C: \Delta \rightarrow \mathbb{R}$ is the cost function that maps each design choice to its cost. The modifications are independent of each other and their costs are additive. We will overload the notation and use $C$ as the cost function for a subset of modifications as well, i.e. $C(\xi) = \sum_{\xi_i \in \xi} C(\xi)$. 

The set of possible modifications could include modifications to the state space, action preconditions, action effects, action costs, initial state and goal. In general, the space of design modifications, which are an input to our system, may also involve modifications to the robot itself (since the robot is part of the environment that is being modified).
An optimal solution to a design problem identifies the subset of design modifications, $\xi$, that minimizes the following objective  
consisting of the cost of modifications and the metric $\mathcal{O}$: ~$\min \mathcal{O}(\Lambda^R({\mathcal{M}^R}^0, \xi)),~ C(\xi)$.

\subsection{Framework for Design for Explicability}
\label{Ch03:Design}
\index{Design for Explicability!Framework}%

In this framework, we not only discuss the problem of environment design with respect to explicability but also in the context of \textbf{(1)} a set of tasks that the robot has to perform in the environment, and \textbf{(2)} over the lifetime of the tasks i.e. the time horizon over which the robot is expected to repeat the execution of the given set of tasks. 
These considerations add an additional dimension to the environment design problem since the design will have lasting effects
on the robot's behavior. In the following, we will first introduce the design problem for a single explicable planning problem, then extend it to a set of explicable planning problems and lastly extend it over a time horizon.

\subsubsection{Design for a Single Explicable Problem}

In the design problem for explicability, the inexplicability score becomes the metric that we want to optimize for. That is we want to find an environment design such that the inexplicability score is reduced in the new environment.  This problem can be defined as follows:

\begin{definition}
\index{Design for Explicability!Problem}%
The \textbf{design problem for explicability} is a tuple, 
\[\mathcal{DP}_{Exp} = \langle \mathcal{P}^0_{Exp}, \Delta, \Lambda_{Exp}, C, \mathcal{IE}_{min} \rangle,\] 
where:
\begin{itemize}
\item $\mathcal{P}^0_{Exp} \in \rho_{Exp}$ is the initial configuration of the explicable planning problem, where $\rho_{Exp}$ represents the set of valid configurations for $\mathcal{P}_{Exp}$. 
\item $\Delta$ is the set of available design modifications. The space of all possible modifications is the power-set $2^\Delta$.
\item $\Lambda_{Exp}: \rho_{Exp} \times 2^\Delta \rightarrow \rho_{Exp}$ is the transition function over the explicable planning problem, which gives an updated problem after applying the modifications.  
\item $C$ is the additive cost associated with each design in $\Delta$.
\item $\mathcal{IE}_{min}:\rho_{Exp} \rightarrow \mathbb{R}$ is the minimum possible inexplicability score in a configuration, i.e. the inexplicability score associated with the most explicable plan.
\end{itemize}
\end{definition}

With respect to our motivating example in Figure \ref{fig:3-ex1}, $\mathcal{DP}_{Exp}$ is the problem of designing the environment to improve the robot's explicability given its task of serving every new customer at a booth (say $G_1$) only once. The optimal solution to $\mathcal{DP}_{Exp}$ involves finding a configuration which minimizes the minimum inexplicability score.
We also need to take into account an additional optimization metric which is the effect of design modifications on the robot's plan cost. That is, we need to examine to what extent the decrease in inexplicability is coming at the robot's expense. For instance,
if you confine the robot to a cage so that it cannot move, its behavior becomes completely and trivially explicable, but the cost of achieving its goals goes to infinity. 

\begin{definition}
\index{Design for Explicability!Optimal Solution}
An \textbf{optimal solution} to $\mathcal{DP}_{Exp}$, is a subset of modifications $\xi^*$ that minimizes the following: 
\begin{align}
\min~ \mathcal{IE}_{min}(~\mathcal{P}^*_{Exp}),~ C(\xi^*), c^R(f_{Exp}(\mathcal{P}^*_{Exp})) 
\label{3-eq:sol}
\end{align}
where $\mathcal{P}^*_{Exp} = \Lambda_{Exp}(\mathcal{P}^0_{Exp}, \xi^*)$ is the final modified explicable planning problem, 
$\mathcal{IE}_{min}(\cdot)$ represents the minimum possible inexplicability score for a given configuration, 
$C(\xi^*)$ denotes the cost of the design modifications and $c^R(f_{Exp}(\mathcal{P}^*_{Exp}))$ is the cost of the cheapest most explicable plan in a configuration.
\end{definition}

\subsubsection{Design for Multiple Explicable Problems}
\index{Design for Explicability!Multi-task Design}
We will now show how $\mathcal{DP}_{Exp}$ evolves when there are multiple explicable planning problems in the environment that the robot needs to solve. When there are multiple tasks there may not exist a single set of design modifications that may benefit all the tasks. In such cases, a solution might involve performing design modifications that benefit some subset of the tasks while allowing the robot to act explicably with respect to the remaining set of tasks. Let there be $k$ explicable planning problems, given by the set $\mathbf{P}_{Exp} = \{ \langle \mathcal{M}^R(0), \mathcal{M}^R_h(0), \delta_{\mathcal{M}^R_h(0)}  \rangle, \ldots, \langle \mathcal{M}^R(k), \mathcal{M}^R_h(k), \delta_{\mathcal{M}^R_h(k)} \rangle\}$, with a categorical probability distribution $\mathbb{P}$ over the problems. We use $\mathcal{P}_{Exp}(i) \in \mathbf{P}_{Exp}$ to denote the $i^{th}$ explicable planning problem. These $k$ explicable problems may differ in terms of their initial state and goal conditions.
Now the design problem can be defined as:
\begin{align}
\mathcal{DP}_{Exp,\mathbb{P}} = \langle \mathbf{P}^0_{Exp}, \mathbb{P}, \Delta, \Lambda_{Exp}, C, \mathcal{IE}_{min,\mathbb{P}}  \rangle,
\end{align}
where $\mathbf{P}^0_{Exp}$, is the set of planning tasks in the initial environment configuration, $\mathcal{IE}_{min,\mathbb{P}}$ is a function that computes the minimum possible inexplicability score in a given environment configuration by taking the expectation over the minimum inexplicability score for each explicable planning problem, 
i.e., $\mathcal{IE}_{min,\mathbb{P}}(\mathbf{P}_{Exp}) = \mathbb{E}[\mathcal{IE}_{min}(\mathcal{P}_{Exp})]$, where $\mathcal{P}_{Exp} \sim \mathbb{P}$. With respect to our running example, $\mathcal{DP}_{Exp,\mathbb{P}}$ is the problem of designing the environment 
given the robot's task of serving every new customer only once at either of the booths ($G_1$, $G_2$) with probability given by  $\mathbb{P}$. 

The solution to $\mathcal{DP}_{Exp,\mathbb{P}}$ has to take into account the distribution over the set of explicable planning problems. Therefore the optimal solution is given by: 
\begin{align}
\min~ ~\mathcal{IE}_{min, \mathcal{D}}(~\mathbf{P}^*_{Exp}),~ C(\xi^*),~
\mathbb{E}[c^R(f_{Exp}(\mathcal{P}^*_{Exp}) )]
\end{align}

\noindent where $\mathcal{P}^*_{Exp} \sim \mathbb{P}$. 
A valid configuration minimizes the minimum possible inexplicability score, which involves 1) expectation over minimum inexplicability scores for each explicable planning problem; 2) the cost of the design modifications (these modifications are applied to each explicable planning problem); and 3) the expectation over the cheapest most explicable plan for each explicable planning problem.

\subsubsection{Longitudinal Impact on Explicable Behavior}
\index{Design for Explicability!Longitudinal Setting}%
\index{Explicability!Longitudinal Setting}%
\index{Explicability!Discounting}%

The process of applying design modifications to an environment makes more sense if the tasks are going to be performed repeatedly in the presence of a human (i.e. the robot does not have to bear the cost of being explicable repeatedly).
This has quite a different temporal characteristic in comparison to that of execution of one-time explicable behavior. For instance, design changes are associated with a one-time cost (i.e. the cost of applying those changes in the environment). On the other hand, if we are relying on the robot to execute explicable plans at the cost of foregoing optimal plans, then it needs to bear this cost multiple times in the presence of a human over the time horizon. 

\begin{figure*}
\centering
\includegraphics[scale=0.5]{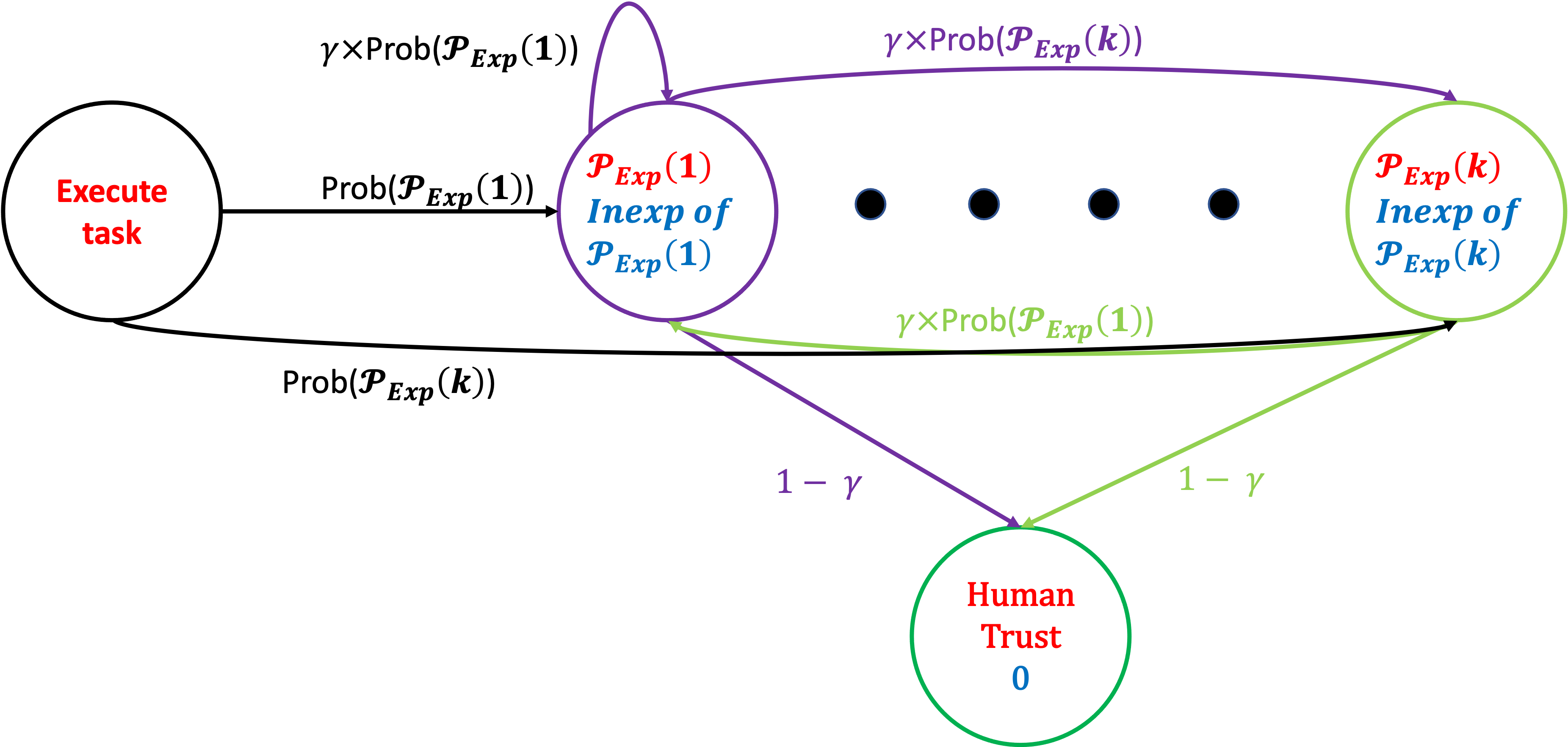}
\caption{Illustration of longitudinal impact on explicability. \emph{Prob} determines the probability associated with executing each task in $\mathbf{P}_{Exp}$. For each task, the reward is determined by the inexplicability score of that task. The probability of achieving this reward is determined by $\gamma$ $\times$  probability of executing that task. Additionally, with a probability $(1 - \gamma)$ the human
ignores the inexplicability of a task and the associated reward is given by an inexplicability score of 0.}
\label{3-fig:ex3}
\end{figure*} 

We will use a discrete time formulation where the design problem is associated with a time horizon $\mathcal{T}$. At each time step, one of the $k$ explicable planning problems is chosen. Now the design problem can be defined as:
\begin{align}
\mathcal{DP}_{Exp,\mathbb{P}, \mathcal{T}} = \langle \mathbf{P}^0_{Exp}, \mathbb{P}, \Delta, \Lambda_{Exp}, C, \mathcal{IE}_{min,\mathbb{P}}, \mathcal{T} \rangle
\end{align}

In our running example, $\mathcal{DP}_{Exp,\mathbb{P}, \mathcal{T}}$ is the problem of designing the environment given the robot's task of serving the same customer at either of the booths with a distribution $\mathbb{P}$ over a horizon $\mathcal{T}$.

\unsure{Note than earlier discussions on explicable behavior generation focused on cases where there was only a single interaction between the human and the robot. 
However, in this section we consider a time horizon, $\mathcal{T} > 1$, over which the robot's interaction with the human may be repeated multiple times for the same task.}
This means the human's expectations about the task can evolve over time. This may not be a problem if the robot's behavior aligns perfectly with the human's expectations. Although, if the robot's plan for a given task is associated with a non-zero inexplicability score, then the human is likely to be more surprised the very first time she notices the inexplicable behavior than she would be if she noticed the inexplicable behavior subsequent times.
As the task is performed over and over, the amount of surprise associated with the inexplicable behavior starts decreasing. In fact, there is a probability that the human may ignore the inexplicability of the robot's behavior after sufficient repetitions of the task. We incorporate this intuition by using discounting. 

Figure \ref{3-fig:ex3} illustrates the Markov reward process to represent the dynamics of this system. Let $(1 - \gamma)$ denote the probability that the human will ignore the inexplicability of the robot's plan, i.e, the reward will have inexplicability score 0. $\gamma$ times the probability of executing a task represents the probability that the reward will have the minimum inexplicability score associated with that task. Assuming $\gamma < 1$, the minimum possible inexplicability score for a set of explicable planning problems is:
\begin{align}
f_\mathcal{T}(\mathcal{IE}_{min,\mathbb{P}}(\mathbf{P}_{Exp})) 
&= \mathcal{IE}_{min,\mathbb{P}}(\mathbf{P}_{Exp}) \nonumber \\
& \ + \gamma * \mathcal{IE}_{min,\mathbb{P}}(\mathbf{P}_{Exp}) + \ldots + \nonumber \\
& \ \ \gamma^{T-1} * \mathcal{IE}_{min,\mathbb{P}}(\mathbf{P}_{Exp}) \nonumber \\
f_\mathcal{T}(\mathcal{IE}_{min,\mathbb{P}}(\mathbf{P}_{Exp})) 
&= \frac{1-\gamma^\mathcal{T}}{1-\gamma} * \mathcal{IE}_{min,\mathbb{P}}(\mathbf{P}_{Exp}) 
\end{align}

Thus the optimal solution to $\mathcal{DP}_{Exp,\mathbb{P}, \mathcal{T}}$ is given by: 
\begin{align}
\min~ ~f_\mathcal{T}(\mathcal{IE}_{min, \mathbb{P}}(\mathbf{P}^*_{Exp})),~ C(\xi^*),
\nonumber \\ 
~ \mathbb{E}[c^R(f_{Exp}(\mathcal{P}^*_{Exp}))] * \mathcal{T}
\end{align}
\noindent where, $\mathcal{P}^*_{Exp} \sim \mathbb{P}$. The optimal solution is a valid configuration that minimizes 1) the minimum possible inexplicability over the set of explicable planning problems given the human's tolerance to inexplicable behavior; 2) one-time cost of the design modifications; and 
3) the expectation over the cheapest most explicable plan for each explicable planning problem given a time horizon.
Note that, since the design cost is not discounted and we always make the design changes before the task is solved, there is never a reason to delay the design execution to future steps in the horizon. Instead it can be executed before the first time step.

\subsection{Search for Optimal Design}
\unsure{We can find the optimal solution for $\mathcal{DP}_{Exp,\mathbb{P}, \mathcal{T}}$, by performing a breadth-first search over the space of environment configurations that are achievable from the initial configuration through the application of the given set of modifications. The performance of the search depends on the number of designs available. By choosing appropriate design strategies, significant scale up can be attained.
Each search node is a valid environment configuration and the possible actions are the applicable designs. For simplicity, we convert the multi-objective optimization in Equation \ref{3-eq:sol} into a single objective as a linear combination of each term associated with a coefficients $\alpha$, $\beta$, and $\kappa$, respectively. The value of each node is decided by the aforementioned objective function. For each node, it is straightforward to calculate the design modification cost. However, in order to calculate the minimum inexplicability score and the robot's plan cost, we have to generate a plan that minimizes the inexplicability score for each explicable planning problem in that environment configuration. 
In this setting one could use the method discussed in Section \ref{section:plan-dist}.
Since the distances here are specifically limited to cost differences, we can also use a compilation based method, that compiles the problem of generating the explicable plan to a classical planning problem. We will discuss this compilation in detail in Chapter \ref{ch06}.
Essentially, the search has two loops: the outer loop which explores all valid environment configurations, and the inner loop which performs search in a valid environment configuration to find a plan that minimizes the inexplicability score. At the end of the search, the node with best value is chosen, and the corresponding set of design modifications, $\xi^*$, is output.} 

\unsure{One way to optimize the search over the space of environment configurations is to only consider the designs that are relevant to the actions in the optimal robot plans ($\Pi^*_{\mathbf{M}^R}$) and those in the human's expected plans ($\Pi^*_{\mathbf{M}^R_h}$) given the set of tasks. This can be implemented as a pruning strategy that prunes out designs that are not relevant to the actions.}

\subsection{Demonstration of Environment Design for Explicability} 
\index{Design for Explicability!Demonstration}%

We use our running example from Figure \ref{fig:ex1} to demonstrate how the design problem evolves. We constructed a domain where the robot had 3 actions: \textit{pick-up} and \textit{put-down} to serve the items on a tray and \textit{move} to navigate between the kitchen and the booths. Some grid cells are blocked due to the tables and the robot cannot pass through these: cell(0, 1) and cell(1, 1). Therefore, the following passages are blocked: cell(0, 0)-cell(0, 1), cell(0, 1)-cell(0, 2), cell(0, 1)-cell(1, 1), cell(1, 0)-cell(1, 1), cell(1, 1)-cell(1, 2), cell(1, 1)-cell(2, 1). We considered 6 designs, each consisting of putting a barrier at one of the 6 passages to indicate the inaccessibility to the human (i.e. the design space has $2^6$ possibilities). 

For the following parameters: $\alpha = 1$, $\beta = 30$, $\kappa = 0.25$ and $\gamma = 0.9$, 
\unsure{let us consider the problem of identifying design for three settings}: (a) single explicable problem for $\mathcal{T} = 1$, (b) multiple explicable problems for $\mathcal{T} = 1$, and (c) multiple explicable problems for $\mathcal{T} = 10$. As mentioned earlier, we will use the set of plans optimal in the human model as the expected plan set and the difference between cost of the plans as the distance function. \unsure{Let us concretize the three settings as follows}, (a) involves serving a new customer at a booth (say $G_1$) only once, (b) involves serving a new customer only once at either of the booths with equal probability and (c) involves serving each customer at most 10 times at either of the booths with equal probability. We found that for settings (a) and (b) no design was chosen. This is because these settings are over a single time step and the cost of installing design modifications in the environment is higher than the amount of inexplicability caused by the robot ($\beta > \alpha$). On the other hand, for setting (c), the algorithm generated the design in Figure \ref{3-fig:ex2}, which makes the robot's roundabout path completely explicable to the customers.  

\section{Bibliographic Remarks}
The first paper to introduce explicability was \cite{exp-yu} that introduced the model-free explicability. The paper used CRF \cite{lafferty2001conditional} to learn the labeling model and a modified version of the FF planner \citep{hoffmann2001ff} to generate the plan. The paper also performed some limited user studies to test whether such models can capture people's measure of plan explicability. The model-based explanation was introduced by \cite{explicable-anagha} and used a modified version of \cite{helmert2006fast} for the reconciliation search and used a learned distance function which used features described in Definition \ref{def:4}. The specific distance functions considered was originally discussed and used in the context of diverse planning (cf. \citep{srivastava2007domain,nguyen-partialp-2012}) where these distance functions are used to generate plans of differing characteristics. The paper also used data collected from participants to learn the distance function. The design for explicability was proposed by \cite{kulkarni2020designing} and followed the other design works that have been considered within the context of planning (cf. \citep{keren2014goal,keren2016privacy,keren2018strong}). In general, most of these works can be seen as special cases of the more general environment design work studied in papers like \cite{zhang2009general}. The specific paper \cite{kulkarni2020designing} used a compilation based method to generate the explicable plan as the paper looked at a cost difference based distance. The compilation follows the methods discussed in \cite{exact}, but removes the use of explanatory actions. We will discuss the base compilation in more detail in Chapter \ref{ch_balance}.
\clearpage

    \chapter{Legible Behavior}
\label{ch04}

In this chapter, the discussion will focus on another type of interpretable behavior, namely legibility. 
The notion of legibility allows the robot to implicitly communicate information about its goals, plans (or model, in general) to a human observer. 
For instance, consider a human robot cohabitation scenario consisting of a multi-tasking robot with varied capabilities that is capable of performing a multitude of tasks in an environment. 
In such a scenario, it is crucial for the robot to aid the human's goal or plan recognition process, as the human observer may not always know the robot's intentions or objectives beforehand. 
Hence, in such cases, it may be useful for the robot to communicate information that the human is unaware of. As the better off the human is at identifying the robot's goals or plans accurately, the better off is the overall team performance. However, explicit communication of objectives might not always be suitable. For instance, the \textit{what, when} and \textit{how} of explicit communication may require additional thought. Further, several other aspects like cost of communication (in terms of resources or time), delay in communication (communications signals may take time to reach the human), feasibility of communication (broken or unavailable sensors), etc., may also need to be considered. On the other hand, the robot can simply synthesize a behavior that implicitly communicates the necessary information to the human observer. 

We will discuss the notion of legibility from the perspective of an offline setting where the observer has partial observability of the robot's actions. That is, the robot has to synthesize legible behavior in a setting where the human observer has access to the observations emitted from the entire execution trace of the robot and these observations do not always reveal the robot's exact action or state to the human. As the human is trying to infer the robot's goals or plans, she operates in a belief space due to the partial observability of the robot's activities. The robot has to modulate the human's observability and choose actions such that the ambiguity over specific goals or plans is reduced. 
We refer to this as the controlled observability planning problem (COPP). 
In the upcoming sections, we will see how the COPP formulation can be used to synthesize goal legible behavior as well as plan legible behavior in an offline setting with partial observability of the robot's activities. 

\section{Controlled Observability Planning Problem}
\label{4-sec:COPP}

\unsure{In this framework, we consider two agents: a robot and the human observer. The robot has full observability of its activities. However, the observer only has partial observability of the robot's activities. The observer is aware of the robot's planning model and receives observations emitted as a side effect of the robot's execution. This framework supports an offline setting, and therefore the observer only receives the observations after the robot has finished executing the entire plan.}

In this setting, the robot has a set of candidate goals, inclusive of its true goal. The candidate goals are the set of possible goals that the robot may achieve in the given environment. The observer is aware of the robot's candidate goal set but is unaware of the robot's true goal. We now introduce a general COPP framework that will be used to define the goal legibility and plan legibility problems in the upcoming sections. 

\unsure{
\begin{definition}
\index{COPP}%
A \textbf{controlled observability planning problem} is a tuple, $\mathcal{P}_{CO} = \langle \mathcal{M}^R, \mathcal{G}, \Omega, \mathcal{O} \rangle$, where,
\begin{itemize}
\item $\mathcal{M}^R$ is the planning model of the robot. 
\item $\mathcal{G} = \{G_1 \cup G_2 \ldots \cup G_{n-1} \cup G^R\}$ is a set of candidate goal conditions, each defined by subsets of fluent instantiations, where $G^R$ is the true goal of the robot.
\item $\Omega = \{o_i | i = 1, \ldots, m \}$ is a set of $m$ observations that can be emitted as a result of the action taken and the state transition. 
\item $\mathcal{O} : (A \times \mathcal{S}) \rightarrow \Omega $ is a many-to-one observation function associated with the human which maps the action taken and the next state reached to an observation in $\Omega$. That is to say, the observations are deterministic, each $\langle a, s' \rangle$ pair is associated with a single observation but multiple pairs can be mapped to the same observation. 
\end{itemize}
\end{definition}}

\unsure{Note that here the human's expectation of the robot (i.e. $\mathcal{M}^R_h$) is defined in a distributed fashion. In particular, this setting assumes that the expectation matches the real model in all aspects but the goals. That is the human is unaware of the true goal and only has access to the set of potential goals.
Moreover the human is a noisy observer (as defined by the observation model $\mathcal{O}$ and the set of observations $\Omega$). 
To simplify the setting we will focus on the cases where the human is aware of the observation model, i.e., from the observation they could potentially guess the possible state action pairs that could have generated it.
Therefore, the human observer operates in the belief space. The robot takes the belief space of the observer into account during its planning process, so as to modulate what the human believes the current possible states could be.} 

\subsection{Human's Belief Space}
\index{Human Belief Space}%

The human may use its observations of the robot's activity to maintain a \emph{belief state}. A belief state is simply a set of possible states consistent with the observations. We use $\hat{s}$ as a notational aid to denote a state that is a member of the belief state.

\begin{definition}
\label{4-dfn:belief}
The \textbf{initial belief}, $b_0$, induced by observation, $o_0$ is defined as, $b_0 = \{ \hat{s}_0 ~|~  \mathcal{O}(\emptyset, s_0) = o_0 \land \mathcal{O}(\emptyset, \hat{s}_0) = o_0 \}$.
\end{definition}

Whenever a new action is taken by the robot, and the state transition occurs, the human's belief updates as follows:

\begin{definition} A \textbf{belief update}, $b_{i+1}$ for belief $b_{i}$ is defined as, $b_{i+1} = update(b_i,  o_{i+1}) = \{ \hat{s}_{i+1}~|~\exists \hat{a},\  \delta(\hat{s}_{i}, \hat{a}) \models \hat{s}_{i+1} \land \hat{s}_{i} \in b_i \land \mathcal{O}(\hat{a}, \hat{s}_{i+1}) = o_{i+1}  \}$.
\end{definition}

A sequence of belief updates gives us the observer's belief sequence that is consistent with the observation sequence emitted by the robot.

\begin{definition} A \textbf{belief sequence} induced by a plan p starting at state $s_0$, BS(p, $s_0$), is defined as a sequence of beliefs $\langle b_o, b_1, \ldots, b_n \rangle$ such that there exist $o_0, o_1, o_2, \ldots, o_n \in \Omega$ where,
\begin{itemize}
\item $o_i = \mathcal{O}(a_i, s_i)$
\item $b_{i+1} = update(b_i, o_{i+1})$
\end{itemize}
\end{definition}

The set of plans that are consistent with the belief sequence of a given plan are called as belief plan set. 

\begin{definition} 
\label{4-def:bps}
\index{Human Belief Space!Belief Plan Set}%
A \textbf{belief plan set}, BPS(p, $s_0$) = $\{ p_1, \ldots, p_n \}$, induced by a plan $p$ starting at $s_0$, is a set of plans that are formed by causally consistent chaining of state sequences in $BS(p, s_0)$, i.e., BPS(p, $s_0$) = $\{ \langle \hat{s}_0, \hat{a}_1, \hat{s}_1, \ldots, \hat{s}_n \rangle ~|~ \forall~ \hat{a}_j, ~\hat{s}_{j-1} \models pre(\hat{a}_j) ~\wedge~ \hat{s}_{j-1} \in b_{j-1} ~\wedge~ \hat{s}_{j} \models \hat{s}_{j-1} \cup adds(\hat{a}_j) \setminus dels(\hat{a}_j) ~\wedge~ \hat{s}_{j} \in b_{j} \}$.
\end{definition}

The robot's objective is to generate a belief sequence in human's belief space, such that, the last belief in the sequence satisfies certain desired conditions. 

\subsection{Computing Solutions to COPP variants}
\index{COPP!Variants}%
Now we present a common algorithm template that can be used to compute plans for the COPP problem variants.

\subsubsection{Algorithm for Plan Computation}
\label{4-subsec:algo}
\index{COPP!Algorithm}%

\unsure{In this algorithm, each search node is represented by a belief estimate, which is a $\Delta$-sized subset of the belief state. A search node, $b_{\Delta}$, consists of state $s$ and a set of possible states, $\tilde{b}$ (cardinality of $\tilde{b}$ is given by $\Delta -1$, where $\Delta$ is a counter). The successor node uses the $s$ in $b_{\Delta}$ to generate the successor state, and $b_{\Delta}$ to create the next belief state. There are two loops in the algorithm: the outer and the inner loop. The outer loop maintains the cardinality of $b_{\Delta}$ by incrementing the value of $\Delta$ in each iteration, such that value of $\Delta$ ranges from $1, 2, \ldots, |\mathcal{S}|$.  In the inner loop, a heuristic-guided forward search (for instance, GBFS\citep{russell2002artificial}) can be used to search over space of belief states of cardinality less than or equal to $\Delta$. These loops ensure the complete exploration of the belief space. The algorithm terminates either when a plan is found or after running the outer loop for $|\mathcal{S}|$ iterations. The outer loop ensures that all the paths in the search space are explored.} 

\begin{proposition}
The algorithm necessarily terminates in finite number of $|\mathcal{S}|$ iterations, such that, the following conditions hold: 

\noindent (\textbf{Completeness}) The algorithm explores the complete solution space of $\mathcal{P_{CO}}$, that is, if there exists a $\pi_{\mathcal{P_{CO}}}$ that correctly solves $\mathcal{P_{CO}}$, it will be found. 

\noindent (\textbf{Soundness}) The plan, $\pi_{\mathcal{P_{CO}}}$, found by the algorithm correctly solves $\mathcal{P_{CO}}$ as ensured by the corresponding goal-test.
\end{proposition}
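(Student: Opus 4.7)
The plan is to prove the three claims (termination, completeness, soundness) separately, with the structural argument resting on a careful bookkeeping of the two-loop design of the algorithm. Throughout, I would rely on the fact that every belief state is a subset of $\mathcal{S}$, so the space of belief estimates of cardinality $\leq \Delta$ is finite with size $\sum_{k=1}^{\Delta} \binom{|\mathcal{S}|}{k}$.

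For termination, I would argue directly from the loop structure. The outer loop is bounded by $|\mathcal{S}|$ iterations by construction, since $\Delta$ ranges over $1, 2, \ldots, |\mathcal{S}|$. For each fixed $\Delta$, the inner heuristic-guided forward search operates over the finite set of belief estimates of cardinality at most $\Delta$; provided the search maintains a closed list (standard for GBFS in this setting), no node is expanded twice, so the inner loop exhausts its search space in finitely many expansions. Concatenating these bounds shows the algorithm halts after at most $|\mathcal{S}|$ outer iterations, each of which terminates internally.

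For completeness I would argue by contrapositive. Suppose the algorithm returns no plan after all $|\mathcal{S}|$ outer iterations; I claim no valid $\pi_{\mathcal{P_{CO}}}$ exists. For any candidate solution plan $\pi$, the induced belief sequence $BS(\pi, s_0)$ consists of beliefs $b_i \subseteq \mathcal{S}$, each of cardinality bounded by some $\Delta^* \leq |\mathcal{S}|$. On the outer iteration with $\Delta = \Delta^*$, every belief estimate that can occur along $\pi$ lies inside the search space the inner loop traverses, starting from $b_0$ given by Definition~\ref{4-dfn:belief} and closed under the $update(\cdot, \cdot)$ transition. Since the inner loop systematically expands this space with a closed list, the terminal belief reached by $\pi$ is enumerated and submitted to the goal-test; had $\pi$ satisfied the COPP requirements, the algorithm would have returned it, contradicting the assumption. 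Soundness follows immediately: the algorithm returns a plan only when the goal-test for the specific COPP variant succeeds on its terminal belief, and the goal-test is by definition the correct characterization of a COPP solution.

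The main obstacle I anticipate is the completeness step, specifically justifying that the inner ``heuristic-guided'' search really does explore the bounded belief space exhaustively rather than greedily committing to a frontier. I would handle this by stipulating that the inner search is a systematic best-first procedure with a closed list (as standard for GBFS in planning), so that for each fixed $\Delta$ every reachable belief estimate of cardinality $\leq \Delta$ is eventually expanded in the absence of a solution. A secondary subtlety is explaining why the outer loop truly needs to iterate $\Delta$ upward rather than starting at $|\mathcal{S}|$: correctness only demands that some iteration reach the required cardinality $\Delta^*$, and the incremental scheme is an efficiency choice that does not affect the completeness argument above.
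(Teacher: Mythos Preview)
The paper does not actually prove this proposition; it is stated without argument, with the surrounding text only informally remarking that ``the outer loop ensures that all the paths in the search space are explored.'' Your proposal therefore cannot be compared against a paper proof, but it does supply the structural justification the paper leaves implicit, and your decomposition into termination, completeness, and soundness is the natural one.

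Your termination and soundness arguments are clean. For completeness, one refinement is needed: taking $\Delta^*$ to be the maximum cardinality of the beliefs along the particular solution plan $\pi$ is not quite sufficient. At that $\Delta^*$, a \emph{different} path $\pi'$ may reach the same $\Delta^*$-sized belief estimate $b_\Delta$ first even though its full belief $b'$ differs from the full belief $b$ along $\pi$; the closed list (which is keyed on $b_\Delta$, not on $b$) would then prune $\pi$'s node, and since the goal-test is evaluated on the full belief, the solution could be missed at that iteration. The fix is simply to run your argument at $\Delta = |\mathcal{S}|$, where $b_\Delta$ necessarily coincides with the full belief $b$; then two paths collide in the closed list only when they share both the true state and the full belief, from which point any suffix produces identical future beliefs, so the pruning is safe for the goal-tests considered. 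This is exactly what the paper's remark that increasing $\Delta$ ``allows the search process to explore multiple paths leading to a particular search node'' is gesturing at, with complete disambiguation guaranteed only at $\Delta = |\mathcal{S}|$. With that adjustment your argument goes through.
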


\unsure{We have not yet defined the goal-test and the notion of validity of a solution for a COPP problem. We will be encoding the notion of legibility we wish to pursue as part of the goal-test. We will do so by ensuring that the legibility score for the corresponding plan is above certain threshold. We will look at some specific legibility scores we can use as we ground this specific algorithmic framework for various use cases.}

\begin{algorithm}[tbp!]
\raggedright
\caption{COOP Solution Plan Algorithm}
\label{4-procedure:COPP-algo}
\algorithmicrequire{ \ $\mathcal{P_{CO}} = \langle D, \mathcal{G}, \Omega, \mathcal{O} \rangle$} \\
\algorithmicensure{\ Solution $\pi_{\mathcal{P_{CO}}}$, observation sequence, $O_{\mathcal{P_{CO}}}$}
\begin{algorithmic}[1] 
\STATE Initialize $\textit{open}$, $\textit{closed}$, $\textit{unopened}$ lists and the counter $\Delta \gets 1 $
\STATE $b_{\Delta} \gets \{ I \}$ ; $b_0 \gets \{ \mathcal{O}(\emptyset, I) \}$ \COMMENT{\textcolor{blue}{Initialize initial search node, initial belief}} 
\STATE $\textit{open}.push(\langle b_{\Delta}, b_0 \rangle, priority = 0)$
\WHILE{$\Delta \leqslant |\mathcal{S}|$} \label{4-line:outer}
\WHILE{$\textit{open} \neq \emptyset$ } \label{4-line:inner}
\STATE $b_{\Delta}, b, h(b_{\Delta}) \gets \textit{open}$.pop() 
\IF{$ |b_{\Delta}| > \Delta $}
\STATE $\textit{unopened}$.push$(\langle b_{\Delta}, b \rangle, h(b_{\Delta}))$; \textbf{continue}
\ENDIF
\STATE $\textit{closed} \gets \textit{closed} \cup b_{\Delta} $ 
\IF{ $\langle b_{\Delta}, b \rangle \models$ \textsc{GOAL-TEST}($\mathcal{G}$)} \label{4-line:goal} 
\STATE $\textbf{return}~\pi_{\mathcal{P_{CO}}}, O_{\mathcal{P_{CO}}}$
\ENDIF
\FOR{$s^{\prime} \in \textit{successors}(s)$}
\STATE $o \gets \mathcal{O}(a, s^{\prime})$ 
\STATE $b^{\prime} \gets$ Belief-Generation($b, o$)
\STATE $b_{\Delta}^{\prime} = \langle s^{\prime}, \tilde{b}^{\prime} \rangle$ \COMMENT{\textcolor{blue}{$\tilde{b}^{\prime}$ of size $\Delta$-1}} 
\STATE $h(b_{\Delta}^{\prime}) \gets$ \textsc{HEURISTIC-FUNCTION}$(b_{\Delta}^{\prime}, b^{\prime})$ \label{4-line:heuristic} 
\STATE add $b_{\Delta}^{\prime}$ to $\textit{open}$ if not in $\textit{closed}$
\ENDFOR
\ENDWHILE
\STATE $\Delta \gets \Delta + 1$ 
\STATE copy items of $\textit{unopened}$ to $\textit{open}$, empty $\textit{unopened}$
\ENDWHILE 
\STATE
\STATE \textbf{procedure} Belief-Generation($b$, $o$)
\STATE $b^{\prime} \gets \{ \}$ 
\FOR{$\hat{s} \in b$}
\FOR{$\hat{a} \in A$}
\IF{$\mathcal{O}(\hat{a}, \delta(\hat{s}, \hat{a})) = o $}
\STATE $b^{\prime} \gets  b^{\prime} \cup \delta(\hat{s}, \hat{a}) $ 
\ENDIF
\ENDFOR
\ENDFOR
\STATE \textbf{return} $b^{\prime}$
\end{algorithmic}
\end{algorithm}

\subsubsection{Optimization} 
\index{COPP!ALgorithm!Optimization}%
In order to speed up the search process, we perform an optimization on the aforementioned algorithm. For each search node, $b_{\Delta}$, apart from the approximate belief estimate, we maintain the full belief update $b$ consistent with a path to $s$. The approximate belief update $b_{\Delta}$ can be generated by choosing $\Delta$-sized combinations of states from the complete belief. For example, when $\Delta = 1$, $b_{\Delta}$ only consists of the state $s$ but still maintains full belief update $b$, when $\Delta = 2$, $b_{\Delta}$ consists of a new combination of approximate belief of size 2 derived from the maintained full belief. When $\Delta = 1$, because of the check for duplicate states in the closed list, only one path to the search node is explored. Therefore, the use of $\Delta$ allows the search process to explore multiple paths leading to a particular search node. The complete $b$ helps in finding the problem variant solutions faster at lower $\Delta$ values. We present the details of the optimization in Algorithm \ref{4-procedure:COPP-algo}. In the following sections, we show how we customize the goal-test (line \ref{4-line:goal}) and the heuristic function (line \ref{4-line:heuristic}) to suit the needs of each of the COPP problem variants. 

\subsection{Variants of COPP}

\unsure{Within this framework, we will discuss the problem of goal legibility and plan legibility. With goal legibility, the objective of the robot is to convey \emph{at most} $j$ goals to the observer. With plan legibility, we look at a specific instance of COPP problem where $|\mathcal{G}| = 1$. Since we know that COPP problems require that $G^R$ be part of  $|\mathcal{G}|$ this means that in these cases the human knows the true goal of the robot. But even then the human observer may not know the exact plan being executed by the robot given their limited sensors. Thus the objective of the robot within plan legibility becomes to constrain the uncertainty of the observer to \textit{at most} $m$ similar plans to the goal. We will see both of these problems in detail in the following sections. The COPP framework can be also used in adversarial environments. These variants will be covered in Chapter \ref{ch09}.}
\section{Goal Legibility}
\index{COPP!Goal Legibility}%

In this setting, the robot's objective is to convey some information about its candidate goal set to the human observer. This may involve communicating its true goal to human, by ensuring at most one goal is consistent with the observation sequence produced. Or in general, the robot may want to communicate a set of at most $j$ candidate goals inclusive of its true goal, to the human observer. Essentially, the point with goal legibility is to reduce the observer's ambiguity over the robot's possible goal set by narrowing it down to at most $j$ goals. \unsure{ Which mean in this scenario, for a given plan $\pi$ the corresponding legibility score is given as  $\mathcal{L}(\pi, \mathcal{P}_{CO}) \propto \frac{1}{|~\{G ~|~ G \in \mathcal{G} \land G \in BS(\pi, I)~|}$}. 

\begin{figure}[tbp!]\centering
    \begin{subfigure}[t]{\columnwidth}        \includegraphics[width=0.83\columnwidth]{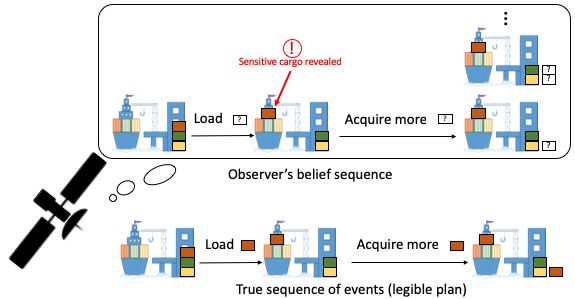}
        \caption{}
        \label{4-fig:ex1}
    \end{subfigure}
     ~ 
     \begin{subfigure}[t]{\columnwidth}
        \includegraphics[width=0.9\columnwidth]{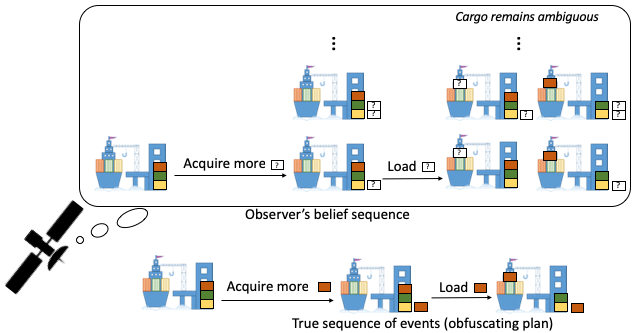}
        \caption{}
        \label{4-fig:ex2}
    \end{subfigure}
    \caption{\small{The differences in belief sequences induced by different plans for an observer with noisy sensors.}}
\label{4-fig:example2}
\end{figure}

\subsubsection{Example}

\unsure{Let's understand this problem with an example. Consider a situation where the robot is a port management agent and a human is the supervisor that has sensors or subordinates at the port who provide partial information about the nature of activity being carried out at the port (refer Figure \ref{4-fig:example2}). For instance, when a specific crate is loaded onto the ship, the observer finds out that something was loaded, but not the identity of the loaded crate. The observer knows the initial inventory at the port, but when new cargo is acquired by the port, the observer’s sensors reveal only that more cargo was received; they do not specify the numbers or identities of the received crates.
A legible plan for loading sensitive cargo (the red crate) and acquiring more cargo may first load the crate and then acquire more crates. This plan reveals the identity of the crate that was loaded based on the observers’ information about the remaining cargo in the port: the final belief state has a unique crate loaded on the ship even though it retains uncertainty about the new cargo in the port.
However, if the plan were to first acquire more cargo, the observer’s sensors are insufficient to determine which crate was loaded: the plan maintains ambiguity in the observer’s belief. This is reflected in the observer’s belief state sequence, where the last belief state includes states with all different types of crates in the ship. Although both plans have the same cost and effects for the dock, one conveys the activity being carried out while the other adds more uncertainty. The COPP framework allows the robot to select plans that may be legible in this manner. }

\subsubsection{Goal Legibility Problem}
\index{COPP!Goal Legibility!Problem}%

\unsure{As mentioned earlier, in goal legibility problem, the robot's aim is to take actions exclusive to the goal so as to help the observer in goal deduction.}  

\begin{definition}
\index{Goal Legibility Planning Problem}%
A \textbf{goal legibility planning problem} is a $\mathcal{P}_{CO}$, where, $\mathcal{G} = \{ G^R \cup G_1 \cup \ldots \cup G_{n-1} \}$ is the set of $n$ goals where $G^R$ is the true goal of the robot, and $G_1, \ldots, G_{n-1}$ are confounding goals. 
\end{definition}

\unsure{Rather than optimizing directly for the legibility score, we will instead limit ourselves to solutions whose legibility score is above a certain threshold. In particular, we can generate a plan that conveys at most $j$ candidate goals inclusive of its true goal. Thus robot has to ensure that the observation sequence of a legible plan is consistent with \emph{at most} $j$ goals so as to limit the number of goals in the observer's final belief state.}

\begin{definition} 
\label{4-def:leg}
\index{J-Legible Plan}%
A plan, $\pi_j$, is a \textbf{j-legible plan}, if $\delta(I, \pi_j) \models G^R$ and the last belief, $b_n \in BS(\pi_j, I)$, satisfies the following, $|{G \in \mathcal{G}: \exists s \in b_n, s \models G}| \leqslant j$, where $1 \leqslant j \leqslant n$.
\end{definition}


\subsection{Computing Goal Legible Plans}

In the case of goal legibility, we run the algorithm with the robot's true goal.

\subsubsection{Goal test} 
In order to ensure that the computed plan is consistent with \textit{at most} $j$ true goals, we change our goal condition to additionally check whether at most $j-1$ confounding goals have been achieved in the last belief in $BS$, or it can be interpreted as at least $n-j$ goals are absent in the belief.

\subsubsection{Heuristic function} 
In this case, our objective is to avoid at least $n-j$ goals, that is be consistent with at most $j$ goals. We achieve this by minimizing the heuristic cost to the true goal from the robot's actual state and to the $j-1$ confounding goals from the robot's belief state. However, we maximize the heuristic cost to other $n-j$ goals in order to achieve at most $j$ goals in the last belief state. This is written as,
\begin{align}
h(s) = h_{G_A}(s) + h_{\mathcal{G}_{j-1}}(b) - h_{\mathcal{G}_{n-j}}(b) 
\end{align}

\section{Plan Legibility}
\index{COPP!Plan Legibility}%

\begin{figure}[t!]
\centering 
\includegraphics[width=\columnwidth]{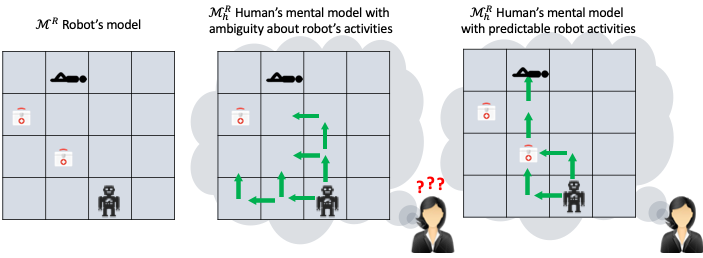}
\caption{Illustration of the impact of plan legibility on the observer's plan inference process.}
\label{4-fig:plan-leg}
\end{figure}

In this problem setting, the observer is aware of the robot's goal, since the candidate goal set consists of a single goal. Although, the observer is unaware of the robot's choice of plan to achieve that goal. In general, there can be multiple different plans that allow the robot to achieve a goal. This is further complicated by the fact that the observer has partial observability of the robot's activities. Therefore, in this problem setting the goal of the robot is to reduce the ambiguity over the possible plans to its goal. The robot can achieve this, by computing a plan whose observation sequence conforms to a set of plans that are similar in terms of their actions, thereby making it easy for the observer to guess the actions executed by the agent. 

For instance, in Figure \ref{4-fig:plan-leg}, the goal of the agent is to pick up a medkit and treat the victim on the disaster site. The observer has a noisy sensor model and it is not accurate enough to give the exact cell location of the robot. Therefore, the observer relies on how many steps the robot has moved to guess the robot's path of execution. In the initial state shown in the leftmost subfigure in Figure \ref{4-fig:plan-leg}, it can be seen that the robot has two choices of medkits to pick from. If it chooses the medkit as shown in the rightmost sub-figure, there are only two paths of length 2 towards the medkit and one path of length 2 towards the victim. In comparison, the middle sub-figure, shows the various different paths of length 4 that can lead to the other medkit, making it harder for the human to guess the robot's actions. Therefore, to reduce the ambiguity over its path, the robot chooses the medkit shown in the rightmost figure. This involves an observation sequence with two similar paths leading to the goal, making it easy for the human to guess some of the actions of the robot. Thus, the plan legibility problem can be solved by finding an observation sequence that is consistent with plans that are similar to each other.

\begin{definition}
\index{Plan Legibility Planning Problem}%
A \textbf{plan legibility planning problem} is a tuple, $\mathcal{P}_{PL} = \langle \mathcal{D}, \mathcal{G}_{PL}, \Omega, \mathcal{O} \rangle$, where, $|\mathcal{G}_{PL}| = 1$. 
\end{definition}

\subsection{Computing Plan Legible Plans}

The solution to a plan legibility problem is an \textit{$m$-similar} plan. An \textit{$m$-similar} plan is a plan whose observation sequence is consistent with \textit{at least} $m$ similar plans to the goal, such that, these plans are \textit{at most} $d$ distance away from each other. In order to compute an \emph{$m$-similar} plan, we need to keep track of the plans that are consistent with the observation sequence and reach the goal. To compute the diversity between all the pairs of plans consistent with the observation sequence, a plan distance measure like action distance, causal link distance, state sequence distance (discussed in Chapter \ref{ch03}) can be used. This approach can use any valid plan distance. We now define an $m$\emph{-similar} plan.  

\begin{definition} 
\index{d-distant plan pair}%
Two plans, $p_1, p_2$, are a \textbf{d-distant pair} with respect to distance function $\mathcal{D}$ if, $\mathcal{D}(p_1, p_2) = d$, where $\mathcal{D}$ is a diversity measure. 
\end{definition}

\unsure{We will be using this distance function as the basis of legibility score, but again we will focus on generating plans whose legibility score is above some prespecified threshold, by requiring that the possible plans are no farther than \emph{d-distance} away.}

\begin{definition}
\index{Maximally d-distant plans}%
A BPS induced by plan p starting at $s_0$ is \textbf{maximally d-distant}, 
\[d_{max}(BPS(p, s_0))\textrm{ if }d = \displaystyle \max_{p1, p2 \in BPS(p, s_0)} \mathcal{D}(p1, p2)\].
\end{definition}

\begin{definition} 
\index{m-similar plans}%
A plan, $\pi_m$, is an \textbf{m-similar plan}, if for a given value of d and distance function $\mathcal{D}$, $d_{max}(BPS(\pi_m, I)) \leq d$, $|BPS(\pi_m, I)| \geq m$, where $m \geq 2$ and every plan in $BPS(\pi_m, I)$ achieves the goal in $\mathcal{G}_{PL}$.
\end{definition}

In order to generate a solution to the plan legibility problem, we use the algorithm presented in Algorithm \ref{4-procedure:COPP-algo}. Again, the goal test and heuristic function are customized to ensure that there are at least $m$ similar plans to the true goal that are consistent with the observation sequence and the maximum distance between these plans is \textit{at most} $d$. 

\subsubsection{Goal test} 
To ensure the plans in $BPS$, induced by an $m$\textit{-similar} plan, can achieve the goal in $\mathcal{G}_{PL}$, we check whether at least $m$ plans are reaching the goal or not and whether the maximum distance between plans in $BPS$ is at most $d$. Also in order to ensure termination of the algorithm, there is a cost-bound given as input to the algorithm.

\subsubsection{Heuristic function}
Apart from minimizing the heuristic cost to the goal, the customized heuristic given below also minimizes the $d$ of $d_{max}(BPS(p, s_0))$ induced by plan $p$ starting at $s_0$. This decreases the maximum distance between the plan pairs in the BPS. This distance can be computed using a plan distance measure.
\begin{align}
h(s) = h_{G_A}(s) + d_{max}(BPS(p, s_0)) 
\end{align}

\subsubsection{Plan Legibility as Offline Predictability}
\index{Offline Predictability}%

\unsure{Plan legibility can be likened to the notion of offline predictability insofar that plan legible behaviors allows the robot to reduce the observer's ambiguity over the possible plans executed given a goal, which is also a property exhibited by predictable behaviors.} However, predictable behaviors are also inherently easy to anticipate, either globally or locally. Globally predictable behavior is a behavior that an observer would anticipate the robot to perform for a certain goal, versus locally predictable behavior is a behavior where given a plan prefix, the rest of the suffix towards a certain goal can be easily anticipated by the observer. This notion of predictability has been mostly explored in the motion planning community. However, plan legibility does not always lead to predictable behaviors. This is because in plan legibility, the emphasis is on making the robot's actions easy to guess given a corresponding observation sequence. However, the observation sequence in itself might not be globally or even locally predictable to the observer.

\section{Bibliographic Remarks}
In the motion planning and robotics community, legibility has been a well-studied topic  \citep{dragan2013legibility,Dragan-2013-7732,dragan2015effects,knepper2017implicit}. The original work on legibility used Bayesian formulation to generate legible robot motions in an online setting. A legible motion is one that ensures that the actual robot goal has the highest likelihood amongst the candidate goals.
The legible planning discussed here is formulated within the controlled observability planning framework (introduced in \cite{unified-anagha}), which generalizes the notion of legibility in terms of human's observability as well as in terms of the amount of information divulged (with at most $j$ goals, or plans that are at most $d$ distance away). This framework uses the notion of sensor models to capture the human's imperfect observations and to model the human's belief update \citep{geffner2013concise,bonet2014belief,keren2016privacy}. The goal and plan legible behaviors have been categorized into legible planning and predictable planning in a recent survey on interpretable behaviors \citep{landscape}.
\clearpage



\chapter{Explanation as Model Reconciliation}
\label{ch05}
\unsure{In this chapter, we revisit the explicability score and investigate an alternate strategy to improve the explicability of the robot behavior, namely explanations. Rather than force the robot to choose behaviors that are inherently explicable in the human model, here we will let the robot choose a behavior optimal in its model and use communication to address the central reason why the human is confused about the behavior in the first place, i.e., the model difference. That is, the robot will help the human understand why the behavior was performed, by choosing to reveal parts of its model that were previously unknown to the human. 
This would allow us to overcome one of the main shortcomings of the plan generation methods discussed in Chapter \ref{ch02}, namely that there might not exist a plan in the robot model that has a high explicability score. In this scenario, the explicability score of the plan is only limited by the agent's ability to effectively explain it. In this chapter, in addition to introducing the basic framework of explanation as model reconciliation under a certain set of assumptions, we will also look at several types of model reconciliation explanations, study some of their properties and consider some simple approximations. In the coming chapters, we will further extend the idea of explanations and look at ways of relaxing some of the assumptions made in this chapter.}
\section{Model-Reconciliation as Explanation}
\label{ch05:model-rec-sec}
\index{Model-Reconciliation@Explanation!Model-Reconciliation}%
As mentioned in chapter \ref{ch02}, the explicability score of a plan $\pi$ generated using the robot's model $\mathcal{M}^R$ is given by the expression
\[
E(\pi, \mathcal{M}^R_h) \propto \textrm{max}_{\pi'  \in \Pi^{\mathcal{M}^R_h}} -1* \mathcal{D}(\pi,\pi', \mathcal{M}^R_h)
\]
\unsure{Where $\mathcal{D}$ is the distance function, $\mathcal{M}^R_h$ the human's estimate of what the robot's model may be, and $\pi'$ the plan closest to $\pi$ in $\mathcal{M}^R_h$ (per the distance function $\mathcal{D}$).} 
This means, even if the robot chooses a plan in its own model ($\mathcal{M}^R$), a human observer may find it inexplicable if their estimate of the robot model differs from $\mathcal{M}^R$. 
Thus a way to address the inexplicability of the plan would be to explain the difference between the robot's own model and human's estimation. This explanatory process is referred to as \textbf{{\em model reconciliation}}.
 We will specifically focus on cases, where $\mathcal{D}$ is given by the cost difference and the expected set corresponds to plans optimal in the human's model. 
\unsure{While this method assumes the human is a perfectly rational decision-maker (which isn't necessarily true), studies have shown this method to still result in useful explanations in many scenarios.
Of course we could replace the distance function used and expected plan set with more realistic choices and the basic algorithms presented here should stay mostly same.}

\noindent The explanation process captured by Model Reconciliation begins with the following question:

\vspace{5pt}
\noindent {\em $Q_1$: Why plan $\pi$?}

\vspace{5pt}
\unsure{\noindent An explanation here needs to ensure that both the explainer and the explainee agree that $\pi$ is the best decision that could have been made in the given problem. 
In the setting we are considering here that would mean providing
model artifacts to the explainee so that $\pi$ is now also optimal in the updated mental model and thus have high explicability score under our current set of assumptions (we will refer to this as the completeness property later).}


\begin{definition}
\index{Model Reconciliation Problem@MRP}%
The {\bf model reconciliation problem (MRP)} 
is represented as a tuple $\langle \pi^*, \langle \mathcal{M}^R, \mathcal{M}^R_h\rangle \rangle$ where $C(\pi^*, \mathcal{M}^R) = C_{\mathcal{M}^R}^*$, i.e. the plan $\pi^*$ is the optimal plan in the robot model $\mathcal{M}^R$ but may not be so in the human mental model $\mathcal{M}^R_h$.
\end{definition}

Now we can define an explanation as
\begin{definition}
\label{expl_defn}
\index{Model-Reconciliation!Explanation}%
An {\bf explanation} is a solution to the model reconciliation problem in the form of (1) a model update $\mathcal{E}$ such that the (2) robot optimal plan is (3) also optimal in the updated mental model.

\begin{itemize}
\item $\mathcal{\widehat{M}}^R_h \longleftarrow \mathcal{M}^R_h + \mathcal{E}$; and
\item $C(\pi, \mathcal{M}^R) = C^*_{\mathcal{M}^R}$;
\item $C(\pi, \mathcal{\widehat{M}}^R_h) = C^*_{\mathcal{\widehat{M}}^R_h}$. 
\end{itemize}
\end{definition}

Now the problem of finding the explanation becomes a search over the set of model information that can be provided to the user to get an updated user-model of desired property. This in term can be visualized as the search over the space of possible human models that can result from providing information consistent with the robot model. We can facilitate such a search over the model space, by leveraging the model parameterization scheme specified in Chapter \ref{ch02}. Specifically such a scheme results in a state space of the form
\begin{align*}
\mathcal{F} =~& \{\textit{init-has-f}~|~\forall f \in F^R_h \cup F^R\} \cup\{\textit{goal-has-f}~|~ \forall f \in F^R_h \cup F^R\}\\[1ex]
& \bigcup_{a \in A^R_h \cup A^R}\{\textit{a-has-precondition-f}, \textit{a-has-add-effect-f}, \\[-2ex]
& ~~~~~~~~~~~~~~~~~~~~~~~\textit{a-has-del-effect-f}~|~ \forall f \in F^R_h \cup F^R\}\\[1ex]
& \cup \{\textit{a-has-cost-}C(a)~|~a \in A^R_h\} \cup \{\textit{a-has-cost-}C(a)~|~a \in A^R\}.
\end{align*}

Again the mapping function $\Gamma : \mathcal{M} \mapsto s$ can convert the given planning problem $\mathcal{M} = \langle  F, A, I, G, C\rangle$ as a state $s \subseteq \mathcal{F}$, as follows --

\vspace{-10pt}
\begin{align*}
    \tau(f) &= 
\begin{cases}
    \textit{init-has-f} & \text{ if } f \in I,\\
    \textit{goal-has-f} & \text{ if } f \in G,\\
    \textit{a-has-precondition-f} & \text{ if } f \in \textit{pre}(a),~a \in A\\
    \textit{a-has-add-effect-f} & \text{ if } f \in \textit{adds}(a),~a \in A\\
    \textit{a-has-del-effect-f} & \text{ if } f \in \textit{dels}^-(a),~a \in A\\
    \textit{a-has-cost-f} & \text{ if } f = C(a),~a \in A\\
\end{cases} \nonumber \\[1ex]
    \Gamma(\mathcal{M}) =~& \big\{\tau(f)~|~\forall f \in I \cup G \cup \nonumber \\ 
    & \bigcup_{a\in A}\{f'~|~\forall f' \in \{C(a)\} \cup \textrm{pre}(a) \cup \textrm{adds}(a) \cup \textrm{dels}(a)\}\big\}
\end{align*}

\begin{definition}
\index{Model-Reconciliation!Model-Space Search}%
The \textbf{model-space search problem} is specified as $\langle \mathcal{F}, \Lambda , \Gamma(\mathcal{M}_1), \Gamma(\mathcal{M}_2)\rangle$ 
with a new action set $\Lambda$ containing unit model change actions $\lambda\in\Lambda,\lambda : \mathcal{F} \rightarrow \mathcal{F}$ such that $| (s_1 \setminus s_2) \cup (s_2 \setminus s_1) | = 1$. The new transition or edit function is given by $\delta_{\mathcal{M}_1,\mathcal{M}_2}(s_1, \lambda) = s_2$ such that,\texttt{condition 1}: $s_2 \setminus s_1 \subseteq \Gamma(\mathcal{M}_2)$ and \texttt{condition 2}:  $s_1 \setminus s_2 \not\subseteq \Gamma(\mathcal{M}_2)$ are satisfied.
\end{definition}

This means that model change actions can only make a single change to a domain at a time, and {\em all these changes are consistent
with the model of the planner.}
The solution to a model-space search problem is given by a {\em set} of edit functions $\{\lambda_i\}$ that
transforms the model $\mathcal{M}_1$ to $\mathcal{M}_2$, i.e. $\delta_{\mathcal{M}_1,\mathcal{M}_2}(\Gamma(\mathcal{M}_1), \{\lambda_i\}) = \Gamma(\mathcal{M}_2)$.
An explanation can thus be cast as 
a solution to the model-space search problem $\langle  \mathcal{F}, \Lambda , \Gamma(\mathcal{M}^R_h), \Gamma(\widehat{\mathcal{M}})\rangle$
with the transition function $\delta_{\mathcal{M}^R_h, \mathcal{M}^R}$
such that Condition (3) mentioned in Definition \ref{expl_defn} is preserved.

\subsection{The Fetch Domain}
\label{subsec:fetch}
Let us look at an example domain to see how model reconciliation explanation could be used.
Consider the Fetch robot \footnote{\url{https://fetchrobotics.com/fetch-mobile-manipulator/}} whose design requires it to \texttt{tuck} 
its arms and lower its torso or \texttt{crouch} before moving.
This is not obvious to a human navigating it
and it may lead to an unbalanced base and toppling of the robot
if the human deems such actions as unnecessary. 
The move action for the robot is described in PDDL in the
following model snippet --
 
\begin{verbatim}
(:action move
:parameters     (?from ?to - location)
:precondition   (and (robot-at ?from) (hand-tucked) (crouched))
:effect         (and (robot-at ?to) (not (robot-at ?from))))
\end{verbatim}

\vspace{-5pt}
\begin{verbatim}
(:action tuck
:parameters     ()
:precondition   ()
:effect         (and (hand-tucked) (crouched)))
\end{verbatim}

\vspace{-5pt}
\begin{verbatim}
(:action crouch
:parameters     ()
:precondition   ()
:effect         (and (crouched)))
\end{verbatim}
 
\begin{figure*}
\centering
\includegraphics[width=\textwidth]{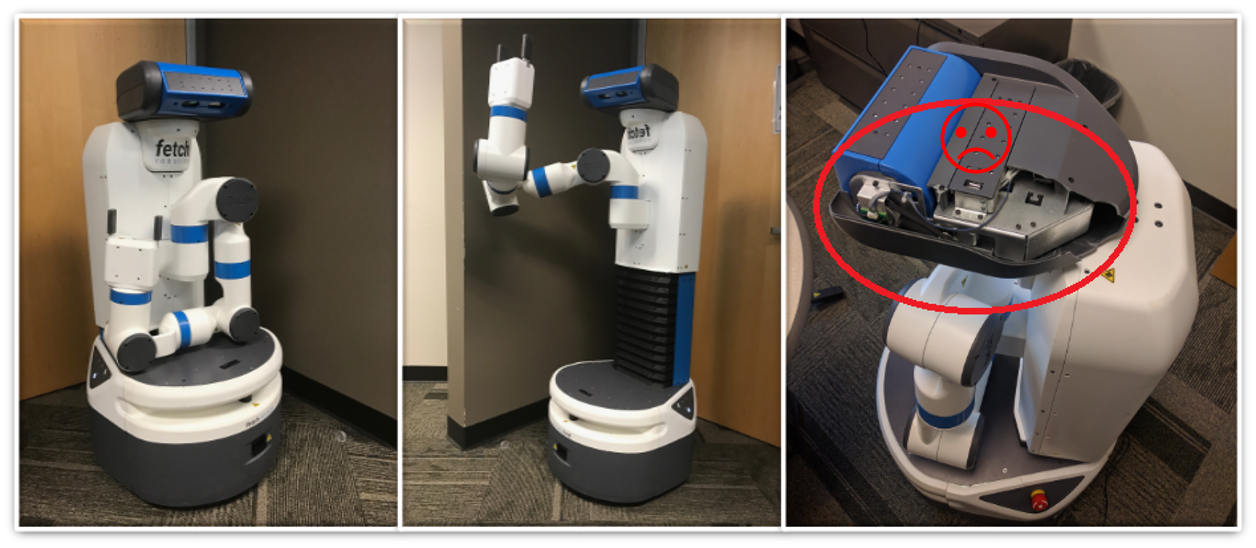}
\caption{The Fetch in the crouched position with arm tucked (left), torso raised and arm outstretched (middle) and the rather tragic consequences of a mistaken action model (right showing a fractured head from an accident).}
\label{fig:fetch}
\end{figure*}

Notice that the \texttt{tuck} action
also involves a lowering of torso so that the arm can rest on the base
once it is tucked in.\footnote{Fetch User Manual: \url{https://docs.fetchrobotics.com/}}
Now, consider a planning problem where the the robot needs to transport
a block from one location to another, with the following initial and
goal states -- 
 
\begin{verbatim}
(:init (block-at b1 loc1) (robot-at loc1) (hand-empty))  	          
(:goal (and (block-at b1 loc2)))
\end{verbatim}
 
An optimal plan for the robot involves a \texttt{tuck} action followed by a \texttt{move}:
 
\begin{verbatim}
pick-up b1 -> tuck -> move loc1 loc2 -> put-down b1
\end{verbatim}
 
The human, on the other hand, expects a much simpler model, 
as shown below.
The \texttt{move} action does not have the preconditions for tucking the arm and lowering the torso, while \texttt{tuck} does not automatically lower the torso either.
 
\begin{verbatim}
(:action move
:parameters     (?from ?to - location)
:precondition   (and (robot-at ?from)
:effect         (and (robot-at ?to) (not (robot-at ?from))))
\end{verbatim}

\vspace{-5pt}
\begin{verbatim}
(:action tuck
:parameters     ()
:precondition   ()
:effect         (and (hand-tucked))
\end{verbatim}

\vspace{-5pt}
\begin{verbatim}
(:action crouch
:parameters     ()
:precondition   ()
:effect         (and (crouched)))
\end{verbatim}
 
The original plan is no longer optimal to the human
who can envisage better alternatives (a shorter plan without the 
extra {\tt tuck} action) in their mental model.
An explanation here is a model update that can mitigate this
disagreement --
 
\begin{verbatim}
Explanation >> MOVE_LOC1_LOC2-has-precondition-HAND-TUCKED
\end{verbatim}
 
This correction brings the mental model closer to the 
robot's ground truth and is necessary and sufficient to make the robot's plan optimal in the resultant domain so that the human cannot
envisage any better alternatives. 
This is the essence of the model reconciliation process.

\section{Explanation Generation}
Before we go on to specific methods we could use to identify the required explanation for a model reconciliation problem, let us consider the various desirable properties that characterize explanations in such settings.
 
\begin{itemize}
\item[P1.] \textbf{Completeness -} 
\index{Model-Reconciliation!Explanation!Completeness}%
Explanations of a plan should be able to be compared and contrasted against other alternatives, so that no better solution exists. We can enforce this property by requiring that in the updated human mental model the plan being explained is now optimal.
\begin{itemize}
\item {\em An explanation is complete iff $C(\pi, \mathcal{\widehat{M}}^R_h) = C^*_{\mathcal{\widehat{M}}^R_h}$.}
\end{itemize}
\item[P2.] \textbf{Conciseness -} 
\unsure{Explanation should be concise so that it is easily understandable to the explainee. 
Larger an explanation is, the harder it is for the human to process that information.
Thus we can use the length of the explanation as a useful proxy or first 
approximation for the complexity of an explanation.}
\item[P3.] \textbf{Monotonicity -} 
\index{Model-Reconciliation!Explanation!Monotonicity}%
This property ensures that remaining model differences cannot change the completeness of an explanation, i.e. all aspects of the model that were relevant to the plan have been reconciled. 
Thus, monotonicity of an explanation subsumes completeness and requires more detail.
\begin{itemize}
\item {\em An explanation is monotonic iff\\$C(\pi^*, \hat{\mathcal{M}}) = C^{*}_{\hat{\mathcal{M}}}$ $\forall\hat{\mathcal{M}} : ((\Gamma(\hat{\mathcal{M}}) \setminus \Gamma(\mathcal{M}^R_h)) \cup
(\Gamma(\mathcal{M}^R_h) \setminus
\Gamma(\hat{\mathcal{M}})
)
) 
\subset 
(\Gamma(\hat{\mathcal{M}}) \setminus \Gamma(\mathcal{M}^R_h)) \cup
(\Gamma(\mathcal{M}^R_h) \setminus \Gamma(\hat{\mathcal{M}}))
$.}
\end{itemize}
\unsure{Thus an updated model would satisfy monontonicity property if no additional information about the robot model would render the plan being explained suboptimal or invalid.}
\item[P4.] \textbf{Computability -} While conciseness deals with how easy it is for the explainee to understand an explanation, computability measures the ease of computing the explanation from the point of view of the planner.
\end{itemize}
 
We will now introduce different kinds of multi-model explanations that can participate in the model reconciliation process, provide examples, propose algorithms to compute them, and compare and contrast their respective properties. 
We note that the requirements outlined above are in fact often at odds with each other - an explanation that is very easy to compute may be very hard to comprehend.

\subsection{Explanation types}
\label{ch05:explanation_types}
\index{Model-Reconciliation!Explanation Types}%

A simple way to explain would be to provide the model differences pertaining to only the actions that are present in the plan being explained -- 

\begin{definition}
\index{Model-Reconciliation!PPE}%

 A {\bf plan patch explanation (PPE)} is given by --

\vspace{-10pt}
\begin{equation*}
\mathcal{E}^{PPE} = (\mathcal{F}^{\mathcal{M}^R}(\pi) \setminus \mathcal{F}^{\mathcal{M}^R_h}(\pi)) \cup  (\mathcal{F}^{\mathcal{M}^R_h}(\pi) \setminus \mathcal{F}^{\mathcal{M}^R}(\pi))
\end{equation*}
\todo{where $\mathcal{F}^{\mathcal{M}}(\pi)$ gives the model parameters of $\mathcal{M}$ corresponding to all the actions in the plan $\pi$ (i.e., $\mathcal{F}^{\mathcal{M}}(\pi) = \bigcup_{ \{C(a)\} \cup \textrm{pre}(a) \cup \textrm{adds}(a) \cup \textrm{dels}(a) : a \in \pi} \tau(f)$).}
\end{definition}

Clearly, such an explanation is easy to compute and concise by focusing only on plan being explained. However, it may also contain information that need not have been revealed, while at the same time ignoring model differences elsewhere in $\mathcal{M}^R_h$ that could have contributed to the plan being suboptimal in it. Thus, it is not {\em complete}. 
On the other hand, an easy way to compute a complete explanation would be to provide the entire model difference to the human --

\begin{definition}
\index{Model-Reconciliation!MPE}%

 A {\bf model patch explanation (MPE)} is given by --

\begin{equation*}
\mathcal{E}^{MPE} =  (\Gamma(\mathcal{M}^R)\setminus\Gamma(\mathcal{M}^R_h)) \cup ( \Gamma(\mathcal{M}^R_h) \setminus \Gamma(\mathcal{M}^R))
\end{equation*}
\end{definition}

\unsure{This is also easy to compute but can be quite large and is hence far from being concise. 
Instead, we can try to minimize the size (and hence increase the comprehensibility) of explanations by searching in the space of models and thereby not exposing information that is not relevant to the plan being explained while still trying to satisfy as many requirements as we can. }

\begin{definition}
\index{Model-Reconciliation!MCE@Minimally Complete Explanation}%

A {\bf minimally complete explanation (MCE)} is the shortest possible explanation that is complete --

\begin{equation*}
\mathcal{E}^{MCE} = \argmin_{\mathcal{E}}| (\Gamma(\widehat{\mathcal{M}}) \setminus \Gamma(\mathcal{M}^R_h)) \cup (\Gamma(\mathcal{M}^R_h) \setminus \Gamma(\widehat{\mathcal{M}})) | \text{ with } C(\pi, \mathcal{\widehat{M}}^R_h) = C^*_{\mathcal{\widehat{M}}^R_h}
\end{equation*}
\end{definition}

The explanation provided before in the Fetch domain is indeed the smallest set of domain changes that may be made to make the given plan optimal in the updated action model, and is thus an example of a minimally complete explanation.

The optimality criterion happens to be relevant to both the cases where the human expectation is better, or worse, than the plan computed by the planner. This might be counter to intuition, since in the latter case one might expect that just establishing feasibility of a better plan would be enough. Unfortunately, this is not the case, as can be easily seen by creating counter-examples where other faulty parts of the human model might disprove the optimality of the plan. Which brings us to the proposition,
 
\vspace{10pt}
\begin{proposition}
If $C(\pi^*, \mathcal{M}^R_h) < \min_{\pi} C(\pi, \mathcal{M}^R_h)$, then ensuring feasibility of the plan in the modified planning problem, i.e. $\delta_{\mathcal{\widehat{M}}}(\widehat{I}, \pi^*) \models \widehat{G}$, is a necessary but \emph{not} a sufficient condition for $\widehat{\mathcal{M}} = \langle \widehat{F}, \widehat{A}, \widehat{I}, \widehat{G} \rangle$ to yield a valid explanation.
\end{proposition}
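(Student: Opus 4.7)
The plan is to prove the two halves of the claim separately, since the proposition asserts a necessary-but-not-sufficient relationship. The overall strategy is to lean directly on Definition 5 (explanation) which requires that the plan being explained be \emph{optimal} in the reconciled model, not merely feasible.

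For necessity, I would argue contrapositively. Suppose $\widehat{\mathcal{M}}$ yields a valid explanation. By Definition 5, the reconciled model must satisfy $C(\pi^*, \widehat{\mathcal{M}}) = C^*_{\widehat{\mathcal{M}}}$, a finite quantity. Under the STRIPS execution semantics used throughout Chapter 2, the cost $C(\pi^*, \widehat{\mathcal{M}})$ is well-defined only when every action in $\pi^*$ has its preconditions met at the state reached by the preceding actions, which is exactly the feasibility condition $\delta_{\widehat{\mathcal{M}}}(\widehat{I}, \pi^*) \models \widehat{G}$. Hence feasibility must hold, so it is necessary. I would state this explicitly, emphasizing that optimality presupposes feasibility because an infeasible plan has no defined cost in the model (or, equivalently, cost $\infty$), and $\infty$ cannot equal the optimal finite cost $C^*_{\widehat{\mathcal{M}}}$.

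For non-sufficiency, I would construct a small counter-example by building on the Fetch domain of Section~5.1.1 or an analogous toy domain. The idea is to exhibit a modification $\widehat{\mathcal{M}}$ under which $\pi^*$ is feasible but some other plan $\pi'$ is strictly cheaper in $\widehat{\mathcal{M}}$, violating the optimality condition of Definition 5. Concretely, take the robot model, include in $\widehat{\mathcal{M}}$ only those model-change actions strictly required to make $\pi^*$ executable (for example, correcting a missing precondition that rendered $\pi^*$ illegal in $\mathcal{M}^R_h$), but leave uncorrected another action in $\mathcal{M}^R_h$ that the human wrongly believes to be cheap or to have fewer preconditions. Then in $\widehat{\mathcal{M}}$ the human can still construct an unrealistically cheap alternative plan $\pi'$ with $C(\pi', \widehat{\mathcal{M}}) < C(\pi^*, \widehat{\mathcal{M}})$, so $\pi^*$ fails to be optimal in $\widehat{\mathcal{M}}$ even though it is feasible. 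This shows feasibility alone does not suffice.

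The main obstacle will be keeping the counter-example crisp: I want the uncorrected discrepancy in $\mathcal{M}^R_h$ to clearly yield a strictly better plan under $\widehat{\mathcal{M}}$ without accidentally rendering $\pi^*$ infeasible or changing the goal. A clean choice is to posit an extra ``shortcut'' action in $\mathcal{M}^R_h$ that the human incorrectly believes exists (or an action whose cost the human underestimates); then any $\widehat{\mathcal{M}}$ that patches only the preconditions along $\pi^*$ but does not touch this shortcut will be feasible for $\pi^*$ yet admit a cheaper $\pi'$. I would conclude by noting this is precisely why the model-reconciliation search in Section~5.2 must verify the optimality condition $C(\pi^*, \widehat{\mathcal{M}}) = C^*_{\widehat{\mathcal{M}}}$ globally, not merely check feasibility of $\pi^*$.
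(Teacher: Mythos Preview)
Your overall two-part strategy---necessity via ``optimality presupposes feasibility,'' non-sufficiency via counter-example---is exactly what the paper intends; the paper itself offers no formal proof, only the remark preceding the proposition that ``counter-examples where other faulty parts of the human model might disprove the optimality of the plan'' can be constructed. So your plan is more explicit than the paper's, and the necessity half is fine.

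The non-sufficiency counter-example, however, has a gap as stated. You propose leaving uncorrected a ``shortcut'' action that the human wrongly believes to be cheap, so that a plan $\pi'$ using it beats $\pi^*$ in $\widehat{\mathcal{M}}$. But the hypothesis of the proposition is $C(\pi^*,\mathcal{M}^R_h) < \min_\pi C(\pi,\mathcal{M}^R_h)$: the robot's plan is already cheaper than \emph{every} valid plan in the unmodified human model. If your shortcut is already usable in $\mathcal{M}^R_h$ to form a cheaper $\pi'$, the hypothesis is violated before you start. What you actually need is that the very model edits you apply to make $\pi^*$ feasible (e.g., adding a fact to $\widehat{I}$ or removing a precondition) \emph{also} unlock the shortcut as a side effect. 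For instance: $\pi^*$ needs $p\in I$ to execute its first action; the human's faulty cheap action $a'$ also has precondition $p$; in $\mathcal{M}^R_h$ neither is executable since $p\notin I^R_h$; the single edit ``add $p$ to $I$'' makes $\pi^*$ feasible \emph{and} makes $\langle a'\rangle$ a strictly cheaper valid plan in $\widehat{\mathcal{M}}$. Tightening your example to make this dependency explicit closes the gap and matches the paper's informal hint about ``other faulty parts'' interacting with the feasibility patch.
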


Note that a minimally complete explanation can be rendered invalid given further updates to the model. This can be easily demonstrated in our running example in the Fetch domain. Imagine that if, at some point, the human were to find out that the action \texttt{(move)} also has a precondition \texttt{(crouched)}, then the previous robot plan will no longer make sense to the human since now according to the human's faulty model (being unaware that the tucking action also lowers the robot's torso) the robot would need to do \emph{both} \texttt{tuck} and \texttt{crouch} actions before moving. Consider the following explanation in the Fetch domain instead --
 
\begin{verbatim}
Explanation >> TUCK-has-add-effect-CROUCHED
Explanation >> MOVE_LOC2_LOC1-has-precondition-CROUCHED
\end{verbatim}
 
This explanation does not reveal all model differences but at the same time ensures that the
plan remains optimal for this problem, irrespective of any other changes to the model, by accounting for all the relevant parts of the model that engendered the plan. It is also the smallest possible among all such explanations. 

\begin{definition}
\index{Model-Reconciliation!MME@Minimally Monotonic Explanation}%
A {\bf minimally monotonic explanation (MME)} is the shortest  explanation that preserves both completeness and monotonicity --

\begin{equation*}
\mathcal{E}^{MME} = \argmin_{\mathcal{E}}|(\Gamma(\widehat{\mathcal{M}}) \setminus \Gamma(\mathcal{M}^R_h)) \cup (\Gamma(\mathcal{M}^R_h) \setminus \Gamma(\widehat{\mathcal{M}}))| \text{ with P1 \& P3 }
\end{equation*}
\end{definition}

An MCE or MME solution may not be unique to an MRP problem. This can happen when there are multiple model differences supporting the same causal links in the plan - a minimal explanation can get by (i.e. guarantee optimality in the modified model) by only exposing one of them to the human.


Also it is easy to see that an MCE may not necessarily be part of an actual MME. This is illustrated in Figure~\ref{fetch-explanations}.

 
\begin{proposition}
An MCE may not be a subset of an MME, but it is always smaller or equal in size, i.e. $|MCE|\leq|MME|$.
\end{proposition}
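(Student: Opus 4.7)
The plan is to handle the two parts of the proposition separately: first a size comparison via a set-inclusion argument over feasible explanations, then a counterexample (leveraging the Fetch scenario already developed in the excerpt) to show the subset-relation need not hold.

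For the inequality $|MCE| \leq |MME|$, I would argue as follows. Let $\mathcal{C}$ denote the set of explanations $\mathcal{E}$ satisfying completeness (property P1), and let $\mathcal{M}$ denote the set satisfying both completeness and monotonicity (P1 and P3). By definition $\mathcal{M} \subseteq \mathcal{C}$, since every monotonic-complete explanation is in particular complete. The MCE is obtained by minimizing $|\mathcal{E}|$ over $\mathcal{C}$, and the MME by minimizing the same quantity over the smaller set $\mathcal{M}$. Minimizing any functional over a superset can only decrease (or leave unchanged) the minimum, so $|MCE| = \min_{\mathcal{E} \in \mathcal{C}} |\mathcal{E}| \leq \min_{\mathcal{E} \in \mathcal{M}} |\mathcal{E}| = |MME|$. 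I would also remark that the set $\mathcal{M}$ is nonempty whenever $\mathcal{C}$ is (the full model patch $\mathcal{E}^{MPE}$ belongs to $\mathcal{M}$, since after revealing the entire difference no remaining difference can falsify optimality), so the minimum on both sides exists and the comparison is well-defined.

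For the non-subset claim, I would exhibit a concrete instance rather than attempt a general structural argument, since the Fetch domain in Section~\ref{subsec:fetch} already provides a clean witness. There the MCE consists of the single model-change token \texttt{MOVE\_LOC1\_LOC2-has-precondition-HAND-TUCKED}, while the MME consists of \texttt{TUCK-has-add-effect-CROUCHED} together with \texttt{MOVE\_LOC2\_LOC1-has-precondition-CROUCHED}. These two sets are disjoint, so the MCE is manifestly not contained in the MME. I would verify this explicitly: check that the MCE restores optimality of the robot plan in $\mathcal{M}^R_h$ (so it is indeed complete), check that the MME additionally blocks every further mental-model update from reintroducing a cheaper plan (so it is monotonic), and note that neither token of the MME is equal to the MCE token.

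The main obstacle, I expect, is not the inequality (which is essentially a one-line observation about minimization over nested feasible sets) but rather making the counterexample airtight under the formal Definition~\ref{expl_defn} and the monotonicity definition in property P3. Specifically, I need to argue that no shorter complete explanation than the stated MCE exists, and no shorter monotonic-complete explanation than the stated MME exists, for the given Fetch MRP; otherwise the ``MCE'' and ``MME'' I am comparing might not actually be minimizers. I would discharge this by enumerating the (small) set of relevant model-difference tokens between $\mathcal{M}^R$ and $\mathcal{M}^R_h$ in the Fetch example and checking by inspection that (i) a single-token complete explanation must be the one cited, and (ii) any monotonic explanation must reveal both the \texttt{TUCK} add-effect and the \texttt{MOVE} precondition on \texttt{CROUCHED}, since omitting either leaves open a mental-model extension in which a shorter plan becomes optimal. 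Together, these two verifications yield the strict non-subset relation while being consistent with $|MCE|=1 \leq 2 = |MME|$.
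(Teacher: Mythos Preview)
Your proposal is correct and matches the paper's treatment: the paper does not give a formal proof but simply points to the Fetch-domain illustration (Figure~\ref{fetch-explanations}) for the non-subset claim, and leaves the size inequality implicit. Your nested-feasible-set argument for $|MCE|\leq|MME|$ is exactly the natural justification (the paper notes that monotonicity subsumes completeness, which is precisely your $\mathcal{M}\subseteq\mathcal{C}$), and your Fetch counterexample reproduces the paper's own MCE and MME verbatim.
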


\begin{figure}
\includegraphics[width=\textwidth]{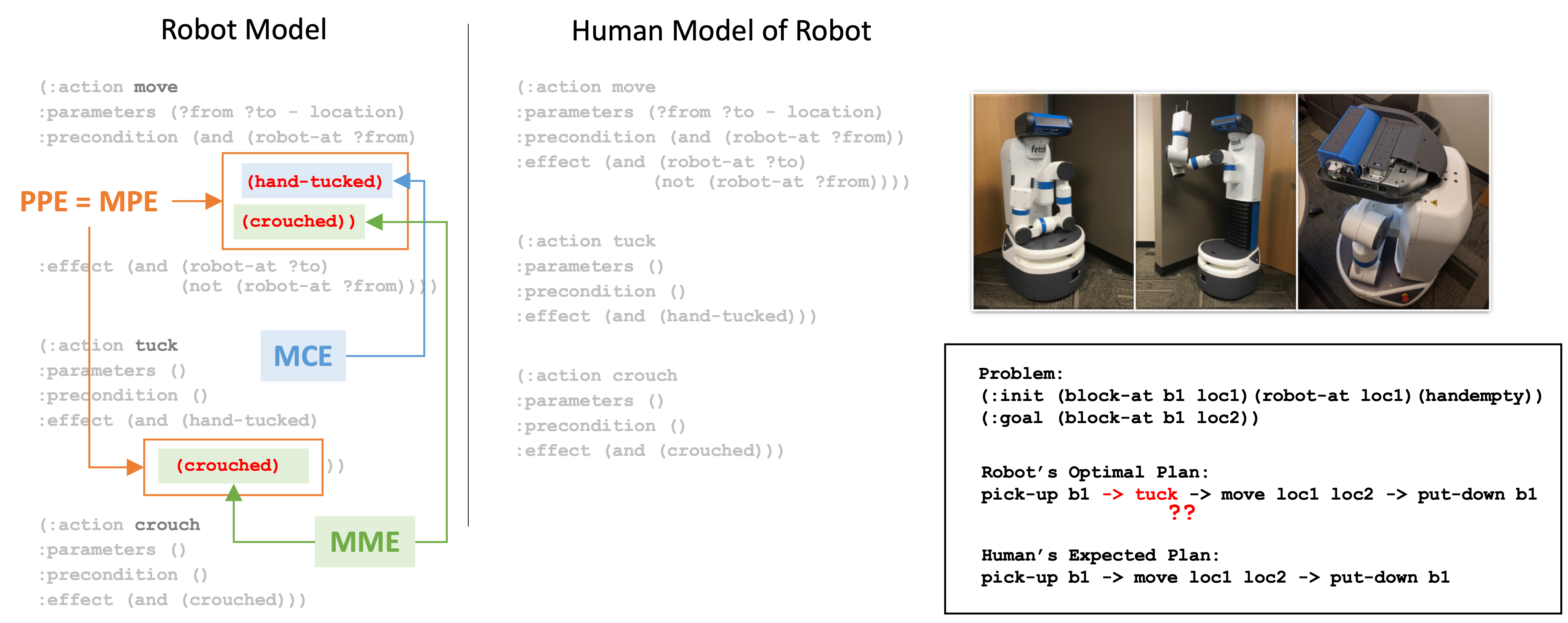}
\caption{Illustration of the different kinds of explanations in the Fetch domain. 
Here the PPE and MPE are equivalent (which is the worst case for the former) 
and both longer than the MCE or the MME. 
Also, the MCE is shorter than, and not a subset of the MME.}
\label{fetch-explanations}
\end{figure}

\subsection{Desiderata for Explanations as  Discussed in Social Sciences}
\index{Explanation!Contrastive Explanations}%
\index{Explanation!Social Explanations}%
\index{Explanation!Selective Explanations}%
\unsure{Before, we delve into the actual algorithms for generating such explanations, lets take a quick look at how model reconciliation explanation connects to the wider literature on explanations. One of the key takeaways from many of the works dealing with this topic from social sciences has been the identification of three crucial properties for effective explanations, namely, explanations need to be {\em contrastive, social and selective}.}

\index{Contrastive Explanations}%
\index{Contrastive Explanations!Fact}%
\index{Contrastive Explanations!Foil}%
\index{Contrastive Explanations!Contrastive Query}%
\unsure{Contrastive explanations are explanations that take the form of answers to questions of the type ``Why P and not Q?'' Where P is the fact being explained and Q the foil or the alternate option raised by the explainee (i.e. the one who is asking for the explanation). Thus explanations for such question need to contrast the choice P over the raised foil. In the case of planning problems, a very common form of contrastive questions involve cases where P , or {\em the fact}, is the plan itself being proposed by the system and {\em the foil} is some alternate plan expected by the user. In model reconciliation, the explanations being provided as part of model reconciliation can be viewed as answering a contrastive query where the foil is implicitly specified, i.e., the foil could be any plan in the human model. Given the fact that these explanations establish optimality, the current plan must be better than or as good as any alternate plan the user could have come up with.}

\unsure{Social explanations are those that explicitly take into account the beliefs of the user. Model-reconciliation framework discussed here is clearly social in the sense that explanations are tailored to what the human believes to be the robot model. Selectivity of explanations deals with the fact that explanations need to only include relevant details (and in some sense minimal). As we will see below, the actual approaches to explanation generation focus on generating the minimal information needed to meet the required properties.}

\subsection{Model Space Search for Minimal Explanations}
\index{Model-Reconciliation!Explanation Generation}%

In the following, we will see how the state space provided by $\Gamma$ can be used in model-space search for computing MCEs and MMEs (computation of PPE and MPE follows directly from $\mathcal{M}^R$, $\mathcal{M}^R_h$ and $\pi^*$).

\subsubsection{Model Space Search for MCEs}
\index{Model-Reconciliation!Explanation Generation!MEGA}%

To compute MCEs, we employ A$^*$ search in the space of models, as shown in Algorithm \ref{algo1}. 
The algorithm is referred to as MEGA~-- {\bf M}ulti-model {\bf E}xplanation {\bf G}eneration {\bf A}lgorithm.
Given an MRP, we start off with the initial state $\Gamma(\mathcal{M}^R_h)$ derived from the human's expectation of a given planning problem $\mathcal{M}^R$, and modify it incrementally until we arrive at a planning problem $\widehat{\mathcal{M}}$ with $C(\pi^*, \widehat{\mathcal{M}}) = C^{*}_{\widehat{\mathcal{M}}}$, i.e. the given plan is explained. 
Note that the model changes are represented as a set, i.e. there is no sequentiality in the search problem. Also, we assign equal importance to all model corrections. 
We can easily capture differential importance of model updates by attaching costs to the edit actions $\lambda$ - the algorithm remains unchanged.
One could also employ a selection strategy for successor nodes to speed up search (by overloading the way the priority queue is popped) by first processing model changes that are relevant to actions in $\pi^*_R$ and $\pi_H$ before the rest. 

\begin{proposition}
\unsure{The successor selection strategy outlined in Algorithm \ref{algo1_selection} will never remove a valid solution from the search space.}
\end{proposition}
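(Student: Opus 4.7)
My plan is to show that the selection heuristic merely reorders how successors are popped from the A$^*$ priority queue without ever discarding any successor node, and then appeal to completeness of A$^*$ on the (finite) model-space graph. I would interpret the text's phrase \emph{``first processing model changes that are relevant to actions in $\pi^*_R$ and $\pi_H$ before the rest''} as a secondary priority key on $\textit{open}$: among successors of equal primary $f$-cost, the ones generated by edits $\lambda$ that touch a parameter of some action in $\pi^*_R \cup \pi_H$ are dequeued first, while the others remain in $\textit{open}$ and are dequeued later. Crucially, no call to the edit function $\delta_{\mathcal{M}_1,\mathcal{M}_2}$ is skipped and no successor is deleted from $\textit{open}$; the set of reachable nodes is identical to that explored by the unmodified search.

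Given this characterization, I would then invoke finiteness of $\mathcal{F}$, and hence of the set $2^{\mathcal{F}}$ of reachable model states. The model-space graph is therefore finite, and any state $\widehat{\mathcal{M}}$ reachable from $\Gamma(\mathcal{M}^R_h)$ via finitely many unit edits in $\Lambda$ remains reachable under any pop ordering. Standard completeness of A$^*$ on a finite graph then implies that any goal state, i.e., any $\widehat{\mathcal{M}}$ satisfying the MCE goal test $C(\pi^*,\widehat{\mathcal{M}}) = C^*_{\widehat{\mathcal{M}}}$, either is itself eventually dequeued or is present in $\textit{open}$ at termination. Either way, it has not been ``removed'' from the search space, which is exactly what the proposition asserts.

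The main obstacle I expect is the ambiguity left by the informal description of the selection strategy. If ``before the rest'' were ever implemented as a hard pruning rule (for instance, capping the number of irrelevant successors considered at each node expansion, or skipping them when a relevant one is admissible), then a valid solution could in principle be lost. To forestall this, I would prove an explicit soundness lemma: every successor that the unmodified MEGA would insert into $\textit{open}$ is also inserted under the strategy, with only its expansion priority altered. With this lemma, together with the finiteness of the search graph and the fact that A$^*$ terminates only upon popping a goal-satisfying state (itself a valid solution), the proposition follows directly: every valid explanation present in the unmodified model-space search remains present in the modified one, so no valid solution is removed.
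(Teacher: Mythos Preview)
Your argument is correct but follows a genuinely different route from the paper's. You treat Algorithm~\ref{algo1_selection} purely syntactically: since the overloaded \texttt{pop} only changes tie-breaking among equal-cost fringe nodes and never deletes anything from \textit{open}, completeness of A$^*$ on the finite model-space graph immediately yields the proposition. This is sound and elementary, and it works for \emph{any} tie-breaking rule, not just the relevance-based one.

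The paper instead gives a semantic argument tied to the structure of MCEs. It fixes an MCE $\mathcal{E}$ and shows that for every proper subset $\mathcal{E}' \subset \mathcal{E}$ reached during search, the set $\mathcal{E}\setminus\mathcal{E}'$ must contain at least one edit relevant to $\pi^*_R$ or to the optimal plan $\pi'$ in the current intermediate model $\hat{\mathcal{M}}$; otherwise that edit could not change the relative cost of $\pi^*_R$ versus $\pi'$, contradicting minimality of $\mathcal{E}$. Hence there is always a relevant successor on the path to the MCE, so preferring relevant edits never blocks that path. This buys more than your argument does: it shows the irrelevant edits are not merely deferred but in fact dispensable along the way to an MCE, which is why the paper remarks that the same observation yields an admissible heuristic restricted to relevant changes. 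Your reordering argument, by contrast, would not justify hard pruning or such a heuristic, only the weaker statement that the proposition literally asks for.
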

\begin{proof}[Proof Sketch]
Let $\mathcal{E}$ be the MCE for an MRP problem and let $\mathcal{E}'$ be any intermediate explanation found by our search such that $\mathcal{E}' \subset \mathcal{E}$, then the set $\mathcal{E} \setminus \mathcal{E}'$ must contain at least one $\lambda$ related to actions in the set $ \{a~|~a \in \pi^*_{R} \vee a \in \pi'\}$ (where $\pi'$ is the optimal plan for the model $\hat{\mathcal{M}}$ where $\delta_{\mathcal{M}^R_h,\mathcal{M}^R}(\Gamma(\mathcal{M}^R_h),\mathcal{E}') = \Gamma(\hat{\mathcal{M}}$). 
To see why this is true, consider an $\mathcal{E}'$ where $|\mathcal{E}'| = |\mathcal{E}| - 1$. If the action in  $\mathcal{E} \setminus \mathcal{E}'$ does not belong to either $\pi^*_R$ or $\pi'$ then it cannot improve the cost of $\pi^*_R$ in comparison to $\pi'$ and hence $\mathcal{E}$ cannot be the MCE. 
Similarly we can show that this relation will hold for any size of $\mathcal{E}'$. We can leverage this knowledge about  $\mathcal{E} \setminus \mathcal{E}'$ to create an admissible heuristic that considers only relevant changes.
\end{proof}

\subsubsection{Model Space Search for MMEs}

As per definition, beyond the model obtained from the minimally monotonic explanation, 
there do not exist any models which are not explanations of the same MRP, 
while at the same time making as few changes to the original problem as possible. 
It follows that this is the largest set of changes that can be done on $\mathcal{M}^R$ and still find a model $\widehat{\mathcal{M}}$ where $C(\pi^*, \widehat{\mathcal{M}}) = C^{*}_{\widehat{\mathcal{M}}}$ - this property can be used in the search for MMEs.

\begin{proposition}
$\mathcal{E}^{MME} = \argmax_{\mathcal{E}}|(\Gamma(\widehat{\mathcal{M}}) \setminus \Gamma(\mathcal{M}^R)) \cup
(\Gamma(\mathcal{M}^R) \setminus \Gamma(\widehat{\mathcal{M}}))
|$ such that 
$\forall\hat{\mathcal{M}}~((\Gamma(\hat{\mathcal{M}}) \setminus \Gamma(\mathcal{M}^R)) \cup (\Gamma(\mathcal{M}^R) \setminus \Gamma(\hat{\mathcal{M}})))
\subseteq ((\Gamma(\widehat{\mathcal{M}}) \setminus \Gamma(\mathcal{M}^R))
\cup
(\Gamma(\mathcal{M}^R) \setminus \Gamma(\widehat{\mathcal{M}}))
)$ it is guaranteed to have $C(\pi^*, \hat{\mathcal{M}}) = C^{*}_{\hat{\mathcal{M}}}$.
\end{proposition}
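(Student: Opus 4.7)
The plan is to establish that this argmax formulation is equivalent to the original MME definition (the minimum-size monotonic explanation). I would carry this out in three steps: (1) derive a triangle-like identity between symmetric differences, (2) translate the argmax on $|\Gamma(\widehat{\mathcal{M}}) \triangle \Gamma(\mathcal{M}^R)|$ into the original argmin on $|\mathcal{E}|$, and (3) translate the universally quantified side condition into the monotonicity property (P3).

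First, I would exploit the two conditions imposed on model-space edit actions, namely that $s_2 \setminus s_1 \subseteq \Gamma(\mathcal{M}^R)$ and $s_1 \setminus s_2 \not\subseteq \Gamma(\mathcal{M}^R)$. Applied inductively along the sequence of edits leading from $\Gamma(\mathcal{M}^R_h)$ to $\Gamma(\widehat{\mathcal{M}})$, this shows that every feature added along the way belongs to $\Gamma(\mathcal{M}^R)$ and every feature removed does not belong to $\Gamma(\mathcal{M}^R)$. Hence $\widehat{\mathcal{M}}$ lies ``between'' $\mathcal{M}^R_h$ and $\mathcal{M}^R$ in the feature-symmetric-difference order, and the two disjoint pieces of $\Delta := \Gamma(\mathcal{M}^R) \triangle \Gamma(\mathcal{M}^R_h)$ are precisely $\Delta_H := \Gamma(\widehat{\mathcal{M}}) \triangle \Gamma(\mathcal{M}^R_h)$ (the piece already corrected by $\mathcal{E}$) and $\Delta_R := \Gamma(\widehat{\mathcal{M}}) \triangle \Gamma(\mathcal{M}^R)$ (the piece still to be corrected), yielding the identity $|\Delta_H| + |\Delta_R| = |\Delta|$.

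Second, since $|\Delta|$ is a fixed constant depending only on the MRP, maximizing $|\Delta_R|$ is equivalent to minimizing $|\Delta_H|$, which is exactly $|\mathcal{E}|$, the quantity the MME definition minimizes. Third, using the same triangle identity, a model $\hat{\mathcal{M}}$ with $\Gamma(\hat{\mathcal{M}}) \triangle \Gamma(\mathcal{M}^R) \subseteq \Delta_R$ is precisely a model obtained from $\widehat{\mathcal{M}}$ by applying further valid edit actions toward $\mathcal{M}^R$, equivalently one with $\Gamma(\hat{\mathcal{M}}) \triangle \Gamma(\mathcal{M}^R_h) \supseteq \Delta_H$. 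Demanding $C(\pi^*, \hat{\mathcal{M}}) = C^{*}_{\hat{\mathcal{M}}}$ for every such $\hat{\mathcal{M}}$ is therefore literally the monotonicity property (P3) applied to the explanation $\mathcal{E}$.

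The main obstacle will be rigorously justifying the disjointness and coverage in step one when a single feature could in principle be toggled by multiple edit actions in sequence; this is ruled out by conditions 1 and 2 of the edit-action definition (once a feature is brought into agreement with $\Gamma(\mathcal{M}^R)$ it can never be moved back out of agreement), but the argument must be written out carefully to avoid double counting. Once that identity is nailed down, the equivalence of the argmax formulation and the original MME definition follows by direct substitution of the two translations in steps two and three.
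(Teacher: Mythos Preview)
Your proposal is correct and, in fact, considerably more detailed than what the paper offers. The paper does not give a formal proof of this proposition at all; it simply precedes the statement with one sentence of intuition (``It follows that this is the largest set of changes that can be done on $\mathcal{M}^R$ and still find a model $\widehat{\mathcal{M}}$ where $C(\pi^*, \widehat{\mathcal{M}}) = C^{*}_{\widehat{\mathcal{M}}}$'') and then immediately moves on to describing Algorithm~\ref{algo2}.

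What you have written is therefore not an alternative route but the route the paper leaves implicit. Your triangle identity $|\Gamma(\widehat{\mathcal{M}}) \triangle \Gamma(\mathcal{M}^R_h)| + |\Gamma(\widehat{\mathcal{M}}) \triangle \Gamma(\mathcal{M}^R)| = |\Gamma(\mathcal{M}^R) \triangle \Gamma(\mathcal{M}^R_h)|$ is exactly the right device, and your observation that Conditions~1 and~2 on edit actions force every reachable $\widehat{\mathcal{M}}$ to sit between $\mathcal{M}^R_h$ and $\mathcal{M}^R$ in the symmetric-difference order is the key fact that makes the identity hold. The paper relies on this same fact tacitly (it is what justifies the phrase ``as per definition''), but never states or proves it. Your concern about features being toggled multiple times is well-placed; the monotone-toward-$\mathcal{M}^R$ nature of the edits indeed rules this out, and spelling that out is the only nontrivial part of the argument.

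One small point: the paper's own statement of property P3 appears to contain a typo (both sides of the $\subset$ use the same symbol $\hat{\mathcal{M}}$), so when you write up step three you should make explicit which model is the ``base'' $\widehat{\mathcal{M}} = \mathcal{M}^R_h + \mathcal{E}$ and which is the further-updated $\hat{\mathcal{M}}$, rather than relying on the paper's formulation of P3 verbatim.
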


This is similar to the model-space search for MCEs, 
but this time starting from the robot's model $\mathcal{M}^R$ instead. 
The goal here is to find the largest set of model changes for which the explicability criterion becomes invalid for the first time (due to either suboptimality or inexecutability). This 
requires a search over the entire model space (Algorithm \ref{algo2}). 
We can leverage Proposition 3 to reduce our search space. 
Starting from $\mathcal{M}^R$, given a set of model changes $\mathcal{E}$ where $\delta_{\mathcal{M}_R,\mathcal{M}_H}(\Gamma(\mathcal{M}^R), \mathcal{E}) = \Gamma(\widehat{\mathcal{M}})$ and $C(\pi^*,\widehat{\mathcal{M}}) > C^{*}_{\widehat{\mathcal{M}}}$, no superset of $\mathcal{E}$ can lead to an MME solution. 
In Algorithm \ref{algo2}, we keep track of such unhelpful model changes in the list h\_list. 
The variable $\mathcal{E}^{MME}$ keeps track of the current best list of model changes. Whenever the search finds a new set of model changes where $\pi^*$ is optimal and is larger than $\mathcal{E}^{MME}$, $\mathcal{E}^{MME}$ is updated with $\mathcal{E}$. The resulting MME is all the  possible model changes that did not appear in $\mathcal{E}^{MME}$.

Figure~\ref{ch05:picpic} contrasts MCE search with MME search. MCE search starts from $\mathcal{M}^R_h$, updates $\widehat{\mathcal{M}}$ towards $\mathcal{M}^{R}$ and returns the first node (indicated in orange) where $C(\pi^*, \widehat{\mathcal{M}}) = C^{*}_{\widehat{\mathcal{M}}}$.
MME search starts from $\mathcal{M}^{R}$ and moves towards $\mathcal{M}^R_h$. 
It finds the longest path (indicated in blue) where $C(\pi^*, \widehat{\mathcal{M}}) = C^{*}_{\widehat{\mathcal{M}}}$ for all $\widehat{\mathcal{M}}$ in the path. The MME (green) is the rest of the path towards $\mathcal{M}^R_h$.

\begin{figure}
\centering
\includegraphics[width=\textwidth]{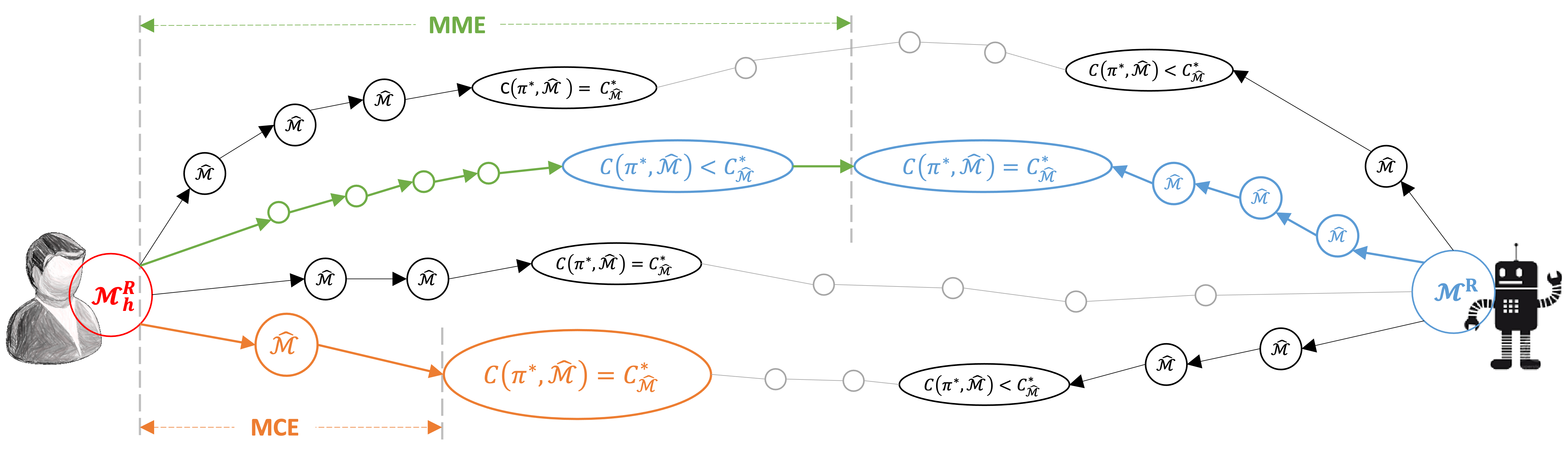}
\caption{Illustration contrasting MCE search with MME search. }
\label{ch05:picpic}
\end{figure}

\begin{algorithm}
\caption{Search for Minimally Complete Explanations}
\label{algo1}
\begin{algorithmic}[1]
\STATE{MCE-Search}{}
\STATE \emph{Input}: MRP $\langle \pi^*, \langle \mathcal{M}^R, \mathcal{M}^R_h\rangle\rangle$
\STATE \emph{Output}: Explanation $\mathcal{E}^{MCE}$
\vspace{2pt} 
\STATE \emph{Procedure}:  
\vspace{2pt} 
\STATE fringe $\leftarrow$ \texttt{Priority\_Queue()}
\STATE c\_list $\leftarrow$ \{\}  \COMMENT{\textcolor{blue}{Closed list}}
\STATE $\pi^*_R \leftarrow \pi^*$ \COMMENT{\textcolor{blue}{Optimal plan being explained}}
\STATE $\pi_H \leftarrow \pi$ such that $C(\pi, \mathcal{M}^R_h) = C^{*}_{\mathcal{M}^R_h}$ \COMMENT{\textcolor{blue}{Plan expected by human}}
\STATE $\text{fringe.push}(\langle \mathcal{M}^R_h, \{\}\rangle,~\text{priority} = 0)$
\WHILE{True}
\STATE $\langle \widehat{\mathcal{M}}, \mathcal{E} \rangle, c \leftarrow \text{fringe.pop}(\widehat{\mathcal{M}})$
\IF{$C(\pi^*_R, \widehat{\mathcal{M}}) = C^{*}_{\widehat{\mathcal{M}}}$} 
\STATE return $\mathcal{E}$   \COMMENT{\textcolor{blue}{Returns $\mathcal{E}$ if $\pi^*_R$ optimal in   $\widehat{\mathcal{M}}$}}
\ELSE
\STATE c\_list $\leftarrow$ c\_list $\cup~\widehat{\mathcal{M}}$
\FOR {$f \in \Gamma(\widehat{\mathcal{M}})~\setminus~\Gamma(\mathcal{M}^R)$  \COMMENT{\textcolor{blue}{Models that satisfy Condition 1}}}
\STATE $\lambda \leftarrow \langle 1, \{\widehat{\mathcal{M}}\}, \{\}, \{f\} \rangle$ \COMMENT{\textcolor{blue}{Removes f from $\widehat{\mathcal{M}}$}}
\IF{$\delta_{\mathcal{M}^R_h,\mathcal{M}^R}(\Gamma(\widehat{\mathcal{M}}), \lambda) \not\in \text{c\_list}$}
\STATE $\text{fringe.push}(\langle \delta_{\mathcal{M}^R_h,\mathcal{M}^R}(\Gamma(\widehat{\mathcal{M}}), \lambda),~\mathcal{E}~\cup~\lambda \rangle,~c + 1)$
\ENDIF
\ENDFOR

\FOR{$f \in \Gamma(\mathcal{M}^R)~\setminus~\Gamma(\widehat{\mathcal{M}})$ \COMMENT{\textcolor{blue}{Models that satisfy Condition 2}}}
\STATE $\lambda \leftarrow \langle 1, \{\widehat{\mathcal{M}}\}, \{f\}, \{\} \rangle$ \COMMENT{\textcolor{blue}{Adds f to $\widehat{\mathcal{M}}$}}
\IF{$\delta_{\mathcal{M}^R_h,\mathcal{M}^R}(\Gamma(\widehat{\mathcal{M}}), \lambda) \not\in \text{c\_list}$}
\STATE $\text{fringe.push}(\langle \delta_{\mathcal{M}^R_h,\mathcal{M}^R}(\Gamma(\widehat{\mathcal{M}}), \lambda),~\mathcal{E}~\cup~\lambda \rangle,~c + 1)$

\ENDIF
\ENDFOR
\ENDIF
\ENDWHILE
\end{algorithmic}
\end{algorithm}
\begin{algorithm}
\caption{Selection strategy for identifying model updates relevant to the current plan}
\label{algo1_selection}
\begin{algorithmic}
\STATE {\em Procedure} Priority\_Queue.pop  $\hat{\mathcal{M}}$
\STATE $\text{candidates} \leftarrow \{\langle \langle \widehat{\mathcal{M}}, \mathcal{E} \rangle, c^*\rangle ~|~ c^* = \argmin_{c}\langle \langle \widehat{\mathcal{M}}, \mathcal{E} \rangle, c\rangle\}$
\STATE $\text{pruned\_list} \leftarrow \{\}$
\STATE $\pi_H \leftarrow \pi$ such that $C(\pi, \hat{\mathcal{M}}) = C^{*}_{\hat{\mathcal{M}}}$
\vspace{2pt} 
\FOR{$\langle \langle \widehat{\mathcal{M}}, \mathcal{E} \rangle, c\rangle \in$ candidates} 
\vspace{2pt} 
\IF{$\exists a \in \pi^*_R \cup \pi_H \text{ such that } \tau^{-1}((\Gamma(\widehat{\mathcal{M}})~\setminus~\Gamma(\hat{\mathcal{M}}))\cup (\Gamma(\hat{\mathcal{M}}) \setminus \Gamma(\widehat{\mathcal{M}}))) \in \{C(a)\} \cup \textrm{pre}(a) \cup \textrm{adds}(a) \cup \textrm{dels}(a)$ }
\STATE 
$\text{pruned\_list} \leftarrow \text{pruned\_list}~\cup~\langle \langle \widehat{\mathcal{M}}, \mathcal{E} \rangle, c\rangle$
\ENDIF
\ENDFOR
\vspace{2pt} 
\IF{$\text{pruned\_list} = \phi$}
\STATE $\langle \widehat{\mathcal{M}}, \mathcal{E} \rangle, c \sim Unif(\text{candidate\_list})$
\ELSE
\STATE $\langle \widehat{\mathcal{M}}, \mathcal{E} \rangle, c \sim Unif(\text{pruned\_list})$
\ENDIF 

\end{algorithmic}
\end{algorithm}

\begin{algorithm}[tbp!]
\caption{Search for Minimally Monotonic Explanations}
\label{algo2}
\begin{algorithmic}[1]
  \STATE{MME-Search}{}
\STATE \emph{Input}: MRP $\langle \pi^*, \langle \mathcal{M}^R, \mathcal{M}^R_h\rangle\rangle$
\STATE \emph{Output}: Explanation $\mathcal{E}^{MME}$
\STATE \emph{Procedure}:  
\vspace{2pt} 
\STATE $\mathcal{E}^{MME} \leftarrow$ \{\}
\STATE fringe $\leftarrow$ \texttt{Priority\_Queue()}
\STATE c\_list $\leftarrow$ \{\} \COMMENT{\textcolor{blue}{Closed list}}
\STATE h\_list $\leftarrow$ \{\} \COMMENT{\textcolor{blue}{List of incorrect model changes}}
\STATE $\text{fringe.push}(\langle \mathcal{M}^R, \{\}\rangle,~\text{priority} = 0)$
\WHILE{fringe is not empty}
\STATE $\langle \widehat{\mathcal{M}}, \mathcal{E} \rangle, c \leftarrow \text{fringe.pop}(\widehat{\mathcal{M}})$
\IF{$C(\pi^*, \widehat{\mathcal{M}}) > C^{*}_{\widehat{\mathcal{M}}}$}
\STATE $\text{h\_list} \leftarrow \text{h\_list}~\cup~((\Gamma(\widehat{\mathcal{M}})~\setminus~\Gamma(\mathcal{M}^R)) \cup (\Gamma(\mathcal{M}^R)~\setminus~\Gamma(\widehat{\mathcal{M}})))$ \COMMENT{\textcolor{blue}{Updating h\_list }}
\ELSE
\STATE c\_list $\leftarrow$ c\_list $\cup~\widehat{\mathcal{M}}$
\FOR{$f \in \Gamma(\widehat{\mathcal{M}})~\setminus~\Gamma(\mathcal{M}^R_h)$  \COMMENT{\textcolor{blue}{Models that satisfy Condition 1}}}
\STATE $\lambda \leftarrow \langle 1, \{\widehat{\mathcal{M}}\}, \{\}, \{f\} \rangle$ \COMMENT{\textcolor{blue}{Removes $f$ from $\widehat{\mathcal{M}}$}}
\IF{$\delta_{\mathcal{M}^R,\mathcal{M}^R_h}(\Gamma(\widehat{\mathcal{M}}), \lambda) \not\in \text{c\_list} \newline \indent \indent \indent \textbf{~~~~and } \nexists S \text{ s.t. }((\Gamma(\widehat{\mathcal{M}})\setminus\Gamma(\mathcal{M}^R)) \cup
(\Gamma(\mathcal{M}^R) \setminus \Gamma(\widehat{\mathcal{M}}))
) \supseteq S \in \text{h\_list}$} 
\STATE $\text{fringe.push}(\langle \delta_{\mathcal{M}^R,\mathcal{M}^R_h}(\Gamma(\widehat{\mathcal{M}}), \lambda),~\mathcal{E}~\cup~\lambda \rangle,~c + 1)$
\STATE $\mathcal{E}^{MME} \leftarrow \max_{|\cdot|}\{\mathcal{E}^{MME}, \mathcal{E}\}$
\ENDIF
\ENDFOR
\vspace{2pt} 
\FOR{$f \in \Gamma(\mathcal{M}^R_h)~\setminus~\Gamma(\widehat{\mathcal{M}})$  \COMMENT{\textcolor{blue}{Models that satisfy Condition 2}}}
\STATE $\lambda \leftarrow \langle 1, \{\widehat{\mathcal{M}}\}, \{f\}, \{\} \rangle$ \COMMENT{\textcolor{blue}{Adds $f$ from $\widehat{\mathcal{M}}$}}
\IF{$\delta_{\mathcal{M}^R,\mathcal{M}^R_h}(\Gamma(\widehat{\mathcal{M}}), \lambda) \not\in \text{c\_list} \newline \indent \indent \indent \textbf{~~~~and } \nexists S \text{ s.t. }((\Gamma(\widehat{\mathcal{M}})\setminus\Gamma(\mathcal{M}^R)) \cup (\Gamma(\mathcal{M}^R) \setminus \Gamma(\widehat{\mathcal{M}}))) \supseteq S \in \text{h\_list}$ 
}
\STATE $\text{fringe.push}(\langle \delta_{\mathcal{M}^R,\mathcal{M}^R_h}(\Gamma(\widehat{\mathcal{M}}), \lambda),~\mathcal{E}~\cup~\lambda \rangle,~c + 1)$
\STATE $\mathcal{E}^{MME} \leftarrow \max_{|\cdot|}\{\mathcal{E}^{MME}, \mathcal{E}\}$
\ENDIF
\ENDFOR
\ENDIF
\ENDWHILE
\STATE $\mathcal{E}^{MME} \leftarrow  ((\Gamma(\widehat{\mathcal{M}})~\setminus~\Gamma(\mathcal{M}^R))\cup (\Gamma(\mathcal{M}^R) \setminus \Gamma(\widehat{\mathcal{M}}))) \setminus \mathcal{E}^{MME} $
\STATE return $\mathcal{E}^{MME}$
\end{algorithmic}
\end{algorithm}
\section{Approximate Explanations}
\index{Model-Reconciliation!Explanation!Approximations}%
In this section, we will discuss how some of the methods discussed in this section can be simplified to either 1) generate simpler explanation (in terms of the explanation size or more directed to a specific user query) and 2) reduce the computational overhead of the search.
\subsection{Explicit Contrastive Explanations}
\label{contr}
\index{Model-Reconciliation!Explanation!Contrastive}%
\index{Contrastive Explanations}%
As mentioned earlier, the explanations generated by model reconciliation can be viewed as an answer to an implicit contrastive query. Although this may lead to user being provided more information than required. This may be particularly unnecessary if the user's original expected set of plans is much smaller than the set of all optimal plans. In these cases, we could let the user directly specify their expected set of plans in the form of explicit foils. Thus explanation can focus on establishing how the current plan compares against their original expected set of plans. So this allows the system to not only provide less information, the system can now potentially also provide additional information that can allow the user to understand why the current plan is better than the plans they were expecting.
However in this case, the explanation could be a multi-step process where they raise additional queries (in the form of more foils) after each explanation.

Now if $\hat{\Pi}$ is the set of alternate foils raised by the human, the objective of a minimal explanation generation method would be

\begin{equation*}
\mathcal{E}^{contr} = \argmin_{\mathcal{E}}|(\Gamma(\widehat{\mathcal{M}}) \setminus \Gamma(\mathcal{M}^R_h)) \cup (\Gamma(\mathcal{M}^R_h) \setminus \Gamma(\widehat{\mathcal{M}}))| \text{ with } C(\pi^{*}, \widehat{\mathcal{M}}^R_h) \leq C(\pi', \widehat{\mathcal{M}}^R_h) ~\forall \pi' \in  \hat{\Pi}
\end{equation*}
Now we can modify the MCE search to use this new objective to identify the required explanation. Now the foil set $\hat{\Pi}$ may be either explicitly specified or may be implicitly specified in terms of constraints satisfied by plans in the set.
\subsection{Approximate MCE}
Both MCEs and MMEs may be hard to compute - in the worst case it involves a search over the entire space of model differences. 
Thus the biggest bottleneck here is the check for optimality of a plan given a new model. 
A check for necessary or sufficient conditions for optimality, without actually computing optimal plans can be used as a way to further prune the search tree. 

In the following section, we investigate an approximation to an MCE by employing a few simple proxies to the optimality test.
By doing this we lose the completeness guarantee but improve computability.
Specifically, we replace the equality test in line 12 of Algorithm \ref{algo1} by the following rules --

\begin{itemize}
\item[C(1)] $\delta_{\widehat{\mathcal{M}}}(\widehat{I}, \pi^*_R) \models \widehat{G}$; \textbf{and}
\item[C(2)] $C(\pi^*_R, \widehat{\mathcal{M}}) < C(\pi^*_R, \mathcal{M}^R_h)$ \textbf{or} $\delta_{\widehat{\mathcal{M}}}(\widehat{I}, \pi^*_H) \not\models \widehat{G}$; \textbf{and}
\item[C(3)] Each action contributes at least one causal link to $\pi^*_R$.
\end{itemize}
 
C(1) ensures that the plan $\pi^*_R$ originally computed is actually valid in the new model.
C(2) requires that this plan has either become better in the new model or at least that the human's expected plan $\pi^*_H$ has been disproved. Finally, C(3), ensures that for each action $a_i \in \pi^*_R$ there exists an effect $p$ that satisfies the precondition of at least one action $a_k$ (where $a_{i} \prec a_{k}$) and there exists no action $a_j$ (where $a_i \prec a_j \prec a_k)$ such that $ p \in \textrm{dels}(a_j)$.
Such explanations are only able to preserve local properties of a plan
and hence incomplete.
 
\begin{proposition}
C(3) is a necessary condition for optimality of $\pi^*$ in $\widehat{\mathcal{M}}$.
\end{proposition}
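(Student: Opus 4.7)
The plan is to establish the contrapositive: if some action $a_i \in \pi^*_R$ fails to contribute a causal link (to any subsequent action or to the goal), then $\pi^*_R$ cannot be optimal in $\widehat{\mathcal{M}}$. As a preliminary, I would extend the notion of causal link to include the goal by conceptually augmenting $\pi^*_R$ with a trailing virtual ``goal-achievement'' action whose preconditions are exactly the goal fluents and which has no effects; this is the standard device in the planning literature and it makes ``supporting the goal'' uniform with ``supporting a later action.''

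Assuming some $a_i$ violates C(3), I would construct the candidate shorter plan $\pi' = \langle a_1, \ldots, a_{i-1}, a_{i+1}, \ldots, a_n \rangle$ by excising $a_i$ while keeping the remaining actions in order. The central lemma to prove is that $\pi'$ is a valid plan for $\widehat{\mathcal{M}}$, i.e., executable from $I^R$ and achieving $G^R$. The argument proceeds by induction on action position: for each $a_k$ with $k > i$ and each $p \in \textrm{pre}(a_k)$, the fact that $\pi^*_R$ was valid means $p$ has a causal supporter in $\pi^*_R$ -- either $I^R$ itself or some action $a_m$ ($m < k$) that adds $p$ with no intervening deleter. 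By the hypothesis that $a_i$ supplies no causal link, this supporter is not $a_i$; moreover, since removing $a_i$ only shrinks the set of actions between $a_m$ and $a_k$, the ``no intervening deletion'' condition is preserved, so $p$ still holds when $a_k$ executes in $\pi'$. Applying the same argument to the virtual goal action yields $\delta_{\widehat{\mathcal{M}}}(I^R, \pi') \models G^R$.

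The main obstacle is the careful bookkeeping around $a_i$'s delete effects: removing $a_i$ means some fluents that $a_i$ would have made false now remain true further into the execution, and one must convince oneself that these ``extra'' true fluents cannot invalidate anything downstream. The crucial point is that under STRIPS semantics, both action preconditions and the goal are positive conjunctions tested by subset inclusion, so adding extra true fluents can never break a precondition test or a goal test. The only way that skipping $a_i$ could break $\pi'$ is by removing one of $a_i$'s \emph{add} effects that some later action (or the goal) genuinely needed, and that case is exactly what the causal-link hypothesis rules out.

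Finally, since $C^R : A \rightarrow \mathbb{R}_{>0}$ by the problem definition, $C(a_i) > 0$, and therefore $C(\pi', \widehat{\mathcal{M}}) = C(\pi^*_R, \widehat{\mathcal{M}}) - C(a_i) < C(\pi^*_R, \widehat{\mathcal{M}})$. Combined with the validity of $\pi'$, this contradicts the optimality of $\pi^*_R$ in $\widehat{\mathcal{M}}$, closing the contrapositive and establishing that C(3) is necessary for optimality.
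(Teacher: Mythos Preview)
Your proposal is correct and follows essentially the same approach as the paper's proof sketch: both argue the contrapositive by introducing a dummy goal action (the paper also introduces a dummy initial action $a_0$), excising the non-contributing $a_i$, and noting that the shorter plan remains valid yet is strictly cheaper. Your version is more careful than the paper's in spelling out why validity is preserved---in particular your observation that STRIPS preconditions are positive, so the extra fluents left true by skipping $a_i$'s deletes cannot harm any downstream test---whereas the paper simply asserts that the truncated plan still reaches $G$.
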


\vspace{5pt}
\begin{proof}[Proof Sketch]
Assume that for an optimal plan $\pi^*_R$, there exists an action $a_i$ where criterion (3) is not met. 
Now we can rewrite $\pi^*_R$ as $\pi'_R= \langle a_0, a_1, \ldots,a_{i-1},a_{i},a_{i+1},\ldots, a_n, a_{n+1} \rangle$, where $\textrm{pre}(a_0) = \phi$ and $\textrm{adds}(a_0) = \{I\}$ and $\textrm{pre}(a_{n+1}) = \{G\}$ and $\textrm{adds}(a_{n+1}) = \textrm{dels}(a_{n+1}) = \phi$. It is easy to see that $\delta_{\widehat{\mathcal{M}}}(\phi, \pi'_R)\models G$. 
Now let us consider a cheaper plan $\hat{\pi'_R}= \langle a_0, a_1, \ldots,a_{i-1},a_{i+1},\ldots, a_n, a_{n+1} \rangle$. Since $a_i$ does not contribute any causal links to the original plan $\pi^*_R$, we will also have $\delta_{\widehat{\mathcal{M}}}(\phi, \hat{\pi}'_R) \models G $. This contradicts our original assumption of $\pi^*_R$ being optimal, hence proved.
\end{proof}
\section{User Studies}
\index{Model-Reconciliation!User Studies}
\begin{figure}
\centering
\includegraphics[width=\columnwidth]{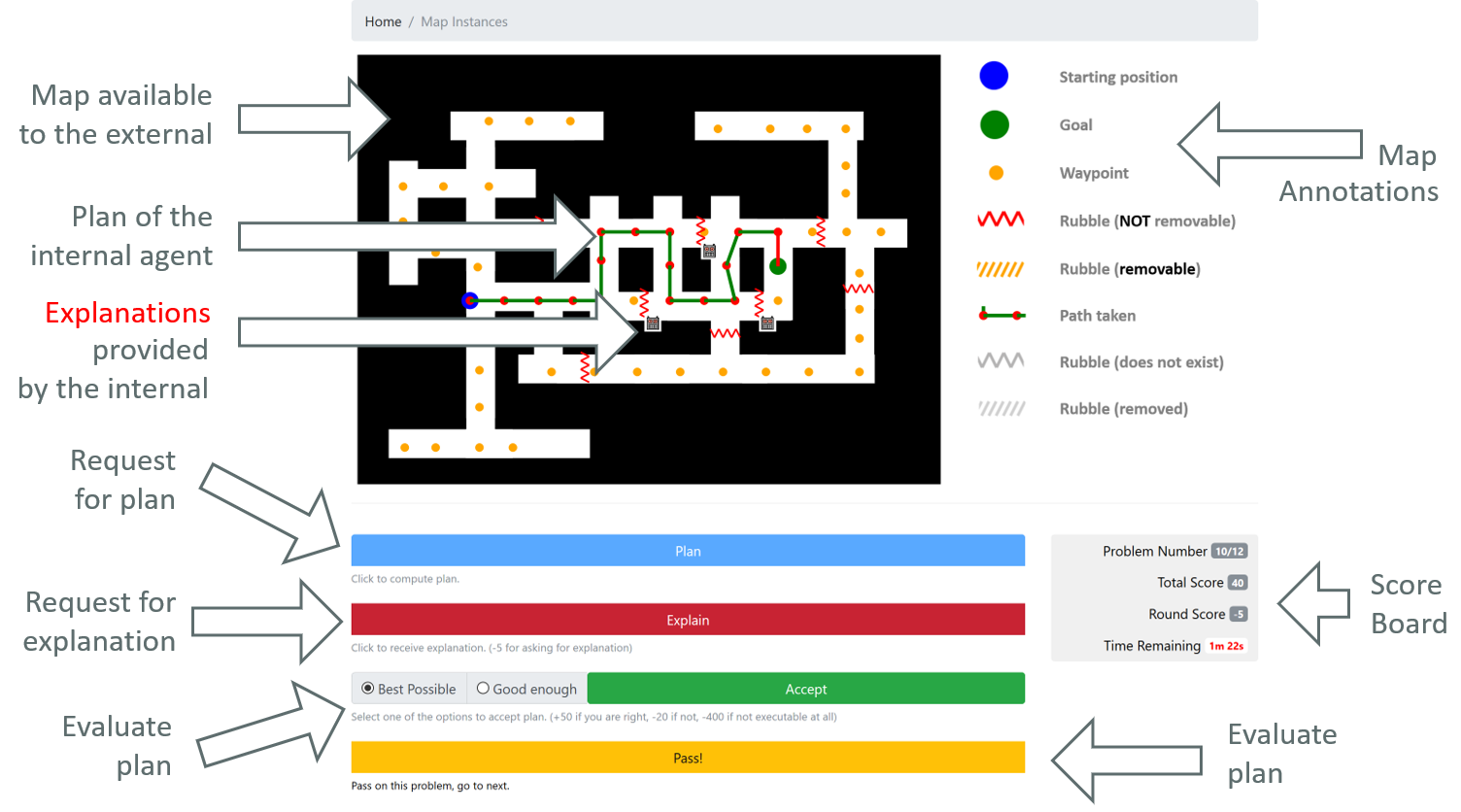}
\caption{Interface for a user study where participants assumed the role of the external commander and evaluated plans provided by the internal robot. They could request for plans and explanations to those plans (if not satisfied) and rate them
as optimal or suboptimal or (if unsatisfied) can chose to pass.}
\label{user-study}
\end{figure}
\begin{figure*}
\centering
\includegraphics[width=\textwidth]{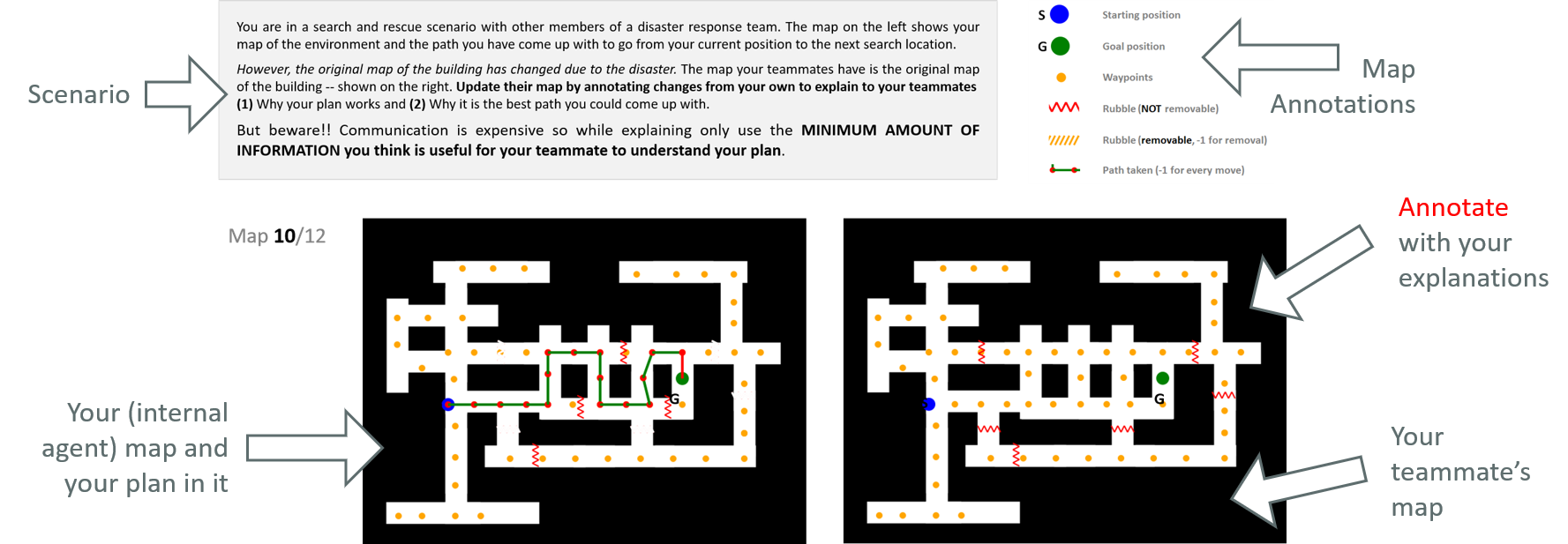}
\caption{Interface for Study-1: Participants assumed the role of the internal agent and
explained their plans to a teammate with a possibly different map of the world.
}
\label{interface1}
\end{figure*}
\unsure{Apart from the underlying formal theory and intuition for the methods discussed here, user studies have also been performed to explicitly test the effectiveness of these type of explanations. In one study, the participants were shown a simulated robot navigating  a floor in a building, and the floor map was made available to the user. They were then asked to evaluate whether the robot plans were valid and optimal. The participants had the option to query for explanations that revealed details about the floor map that were previously unknown to the user. Figure \ref{user-study} presents a screenshot of this interface along with a sample plan. The user study also had a reward structure that disincentivized participants from providing incorrect answers (in regards to whether the plans were valid and/or optimal). The study showed the majority of participants were correctly able to identify the validity and optimality of the plan based on the explanations. A subjective survey at the end of the study showed that the majority of participants found the explanations were easy to understand and useful.}

Additional studies were also performed that placed participants in the role of the explainer. In this study, they were asked to explain why a given plan was optimal. They were shown the original map and the map the explainee would see. After analyzing the explanations given by the participants, it was found that most of them perfectly matched up with many of the explanation types studied in this chapter.
Particularly, when the participants were asked to minimize the information to be provided the majority of explanation provided were MCEs. This points to the fact that in many cases, people do naturally calculate explanations of the type discussed in this chapter.

\section{Other Explanatory Methods}
In terms of explanations, this chapter focused on one specific type of explanation, namely explanation as model reconciliation. But this is not the only type of explanation that has been studied in the literature. In particular, we will look at two other types of explanations that are popular within the explainable planning community (generally referred to as XAIP), and in this section, we will look at how these explanation types could be viewed in the context of human-aware planning.

\paragraph{Explanations to Address Inferential Asymmetry.} 
\index{Explanation!Explanations to Address Inferential Asymmetry}%
The first type of explanation is one designed to address explanatory queries rising from differences in inferential capabilities of the human and the robot. Going back to the setting where the human is a pure observer, even if the human had access to the exact model, there is no guarantee that they would be able to correctly evaluate the validity or optimality of the plan being followed by the robot given potential inferential limitations of the human. So in addition to reconciling any model differences, the system may need to help address any confusion on the human's end stemming from such asymmetry in inferential capabilities. Some general strategies for addressing such considerations include,
\begin{enumerate}
    \item Allowing users to ask questions about plans: In particularly allowing for contrastive queries, in which the human raises specific alternatives they were expecting \cite{sreedharan2018hierarchical}. As mentioned earlier, such questions help expose the human's expectations about the robot and thus allowing the robot to provide a more directed explanation. Having access to the human's expected behavior, also means the robot could provide additional information that helps contrast the given decision against the alternatives the human had in mind (say by tracing the execution of the current plan and the alternative).
    \item Simplifying the problem: The next strategy one could employ may be to simplify the problem being explained. 
\todo{One of the obvious strategy we can employ here is to use state abstractions, we will look at this strategy in more detail in Chapter \ref{ch06} though in a slightly different context.}
But this is one of many problem simplification strategies we could employ. Other strategies include the use of local approximation (also discussed in Chapter \ref{ch07} (Section \ref{ch07:local_approx})), decomposing the problem into smaller subproblems (for example, focusing on the achievement of a subgoal instead of the true goal as done in the case of \cite{sreedharan2019can}) and even simplifying the class of the reasoning/planning problem. For example, in many cases it may suffice to focus on a determinization or a qualitative non-deterministic representation of a problem to address many of the explanatory challenges related to stochastic planning problems.
\item Providing explanatory witness: In many cases  while the explanation may be in regards to a space of possible solutions, it may be helpful to provide the human with specific example to how the current solution may satisfy a property that is in question. For example, if the human were to raise a partial specification of an alternative, say she asks why the robot didn't choose a particular action at a state, then the robot could point to a specific example plan that includes an action and point out that this alternate plan is costlier than the current plan (this is the strategy followed by \cite{krarup2019model}). This doesn't necessarily explain why all plans that include that action may be worse off than the current plan, but providing an example could help the user better understand the plan in question.
\end{enumerate}
Note that all the above strategies are generally based  on computational intuition on how to reduce computational overhead of a problem (many of these methods have their roots on heuristics for planning). On the other hand, a systematic framework to address such asymmetry would require us to correctly model and capture the human inferential capabilities, so that the agent can predict the inferential overhead placed on the human by a specific explanation approach. Unfortunately, we are still far from achieving such accurate modeling of the computational capabilities of a bounded rational agent let alone a human.

\paragraph{Plan and Policy Summary}
\index{Explanation!Policy Summary}%
Another category of work that we haven't had a chance to discuss in the book are the ones related to plan and policy summaries. Particularly in cases, where the plan may need to be approved by the human before it is executed. In such cases, the robot would need to communicate its plans to the human before it can be executed. Several works have looked at the problem of effectively summarizing a generated course of action. Many of these works focus on stochastic planning settings, where the solution concept takes the form of policies which could in general be harder to understand than traditional sequential plans. Some general strategies used in such policy summarization method include, the use of abstractions (for example \cite{topin2019generation} use state abstraction, while \cite{sreedharan2020tldr} use temporal abstraction) and selecting some representative state-action pairs, so the human can reconstruct the rest of the policy (for example \cite{amir2018highlights}, \cite{offra_summ}). Generally most works in this latter direction assume that the human correctly understands the task at hand and thus the reconstruction can be done correctly by the human without the aid of any model reconciliation. One may also need to address the problem of vocabulary mismatch in the context of policy communication, wherein the policy may need to be translated first into terms the user can understand (as done by the work \cite{hayes}).

\paragraph{Other Popular Forms of Explanation Categorization.} As final note on other forms of explanation studied in XAIP, a popular way to categorize explanatory information with in the literature is to organize them based on the type of explanatory questions they may be addressing. \cite{danmaga} provides a very comprehensive list of explanatory questions that could be relevant to planning. This work not only includes question related to specific plans that are generated by the agent, but also questions related to the planning process as a whole. For example, the paper considers questions like why the agent decided to replan at a particular point during the plan execution.

\section{Bibliographic Remarks}
The idea of explanation as model reconciliation was first introduced in the paper \cite{explain}. 
The paper specifically looked at the application of the methods for STRIPS style models.
The user studies for evaluating the explanations were presented in the paper \cite{chakraborti2019plan}.
The paper looked at three types of model-reconciliation explanations, namely MCE, MPE and PPE. In the case of PPE, a slightly modified version of PPE was considered that also takes into account one of the possible optimal plans in the human model (thereby ensuring the explanation is still contrastive). 
\todo{\cite{sreedharan2021foundations} also presents a more unified view of the various earlier works done in the area of model-reconciliation explanation}.
While most works in model reconciliation explanations have focused on deterministic STRIPS-style planning models, researchers have also started looking at other planning formalisms. For example, \cite{modelfree} looked at model reconciliation in the context of MDPs and  \cite{logical} looked at model reconciliation in the context of logical programs. 

\index{LIME}%
There are even parallel works on explanation from fields like machine learning, that could be understood as a special case of model reconciliation. A particularly significant example being LIME \citep{ribeiro2016should}. In this work, the authors propose a model-agnostic method that take a model, and the specific instance to be explained and generates an interpretable linear model that locally approximates the dynamics of the current instance. In this case the final explanation takes the form of details about the linear model and the human's model is taken to be empty. One important factor to note here is the model being transferred to the user is not the original model but rather a local approximation of it. Moreover, the model is post-hoc expressed in terms of human understandable features. These strategies are also applicable in the context of model reconciliation explanation in the sequential decision making settings, and we will discuss methods that employ such methods in chapter \ref{ch07}.
\todo{Some examples of explanation works that use abstraction of the model and/or plan include \cite{zahavy2016graying,topin2019generation}, and \cite{sreedharan2020tldr}. Another method popular in explaining the utility of a specific action in plan involve providing causal chain contributed by that action to the overall plan \citep{veloso1992learning,seegebarth2012making,bercher2014plan}. Another technique popular in explaining multi-objective planning problem is to provide conflicts between the different objectives \cite{eiflernew}. For more works done in this direction, readers can refer to the survey in \cite{xaip-landscape}. \cite{miller2019explanation} presents a concise introduction into various works from disciplines like social sciences, psychology and philosophy that have looked at explanations.
}
\clearpage


\chapter{Acquiring Mental Models for Explanations}
\label{ch06}
The previous chapter sketches out some of the central ideas behind generating explanation as model reconciliation, but it does so while making some strong assumptions. Particularly, the setting assumes that the human's model of the robot is known exactly upfront. In this chapter, we will look at how we can relax this assumption and see how we can perform model reconciliation in scenarios where the robot has progressively less information about the human mental model.
We will start by investigating how the robot can perform model reconciliation with incomplete model information. Next we will look at cases where the robot doesn't have a human mental model but can collect feedback from users.\footnote{Note that the model we need for explicability and explanation, $\mathcal{M}^R_h$ can't just be learned in general from past behavior traces of the humans themselves, since those give us information only about $\mathcal{M}^H$ and not $\mathcal{M}^R_h$. While there may be self-centered humans who consider the AI robot's model to be same as theirs, more generally, these models will be different.} We will see how we can use such feedback to learn simple labeling models that suffice to generate explanations. 
We will also look at generating model reconciliation explanations by assuming the human has a simpler mental model, specifically one that is an abstraction of the original model and also see how this method can help reduce inferential burden on the human.
Throughout this chapter, we will focus on generating MCE though most of the methods discussed here could also be extended to MME and contrastive versions of the explanations.

\section{The Urban Search and Reconnaissance Domain}
\label{subsubsec:usar}
\index{USAR@Urban Search and Rescue}%
We will concretize the discussion in this section in a typical Urban Search and Reconnaissance (USAR) 
domain
where a remote robot is put into disaster response operation often controlled partly or fully by an external human commander, as shown in Figure \ref{fig:testbeds:usar}.
The robot's job is to scout areas that may be otherwise harmful to humans and report on its surroundings as instructed by the external supervisor. 
The scenario can also have other internal agents (humans or robots) with whom the robot needs to coordinate. 
\begin{figure}
\centering
\includegraphics[width=0.6\textwidth]{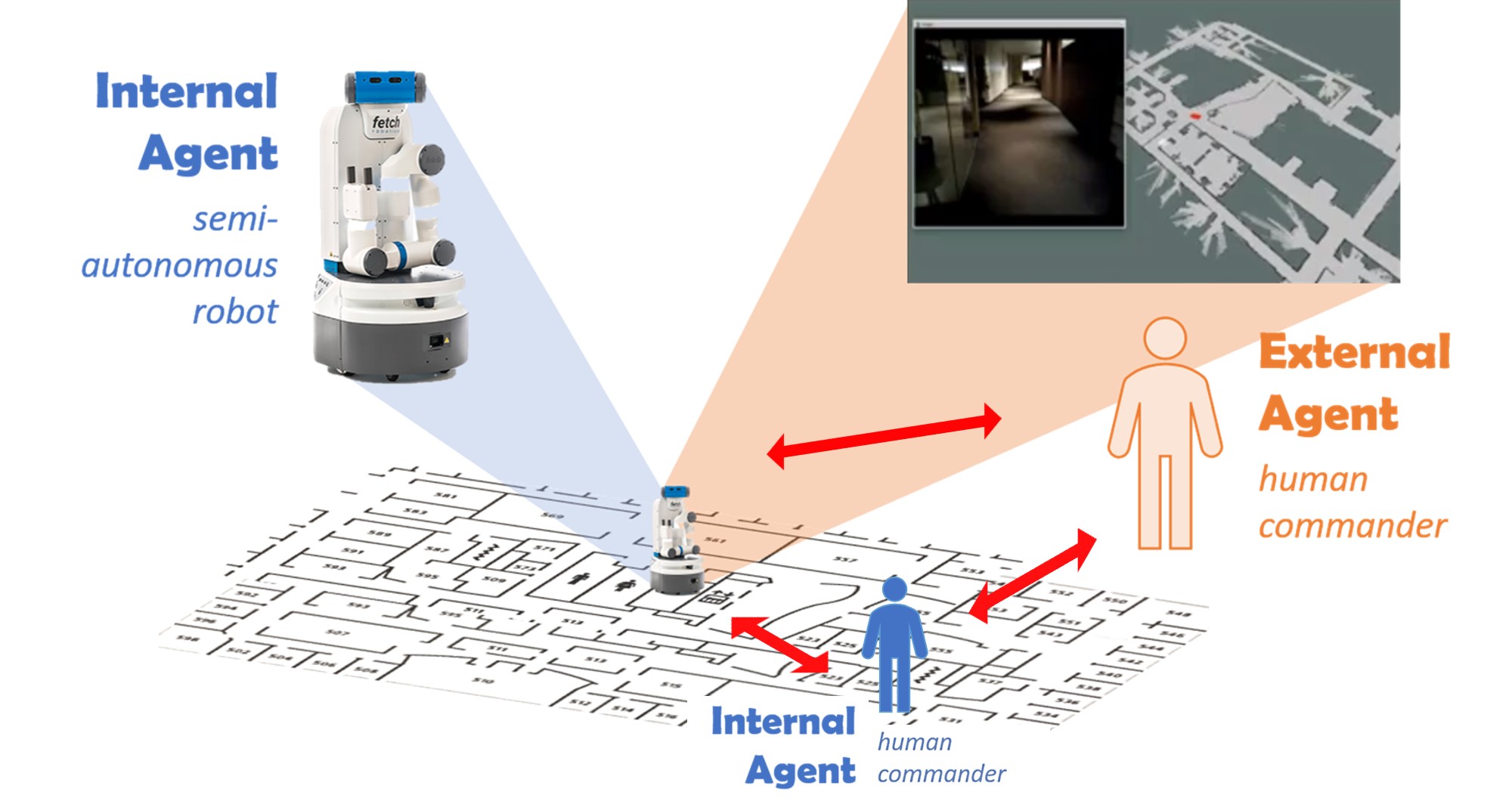}
\caption{A typical USAR domain with an internal robot and an external commander.}
\label{fig:testbeds:usar}
\end{figure}

Here, even though all agents start off with the same model --
i.e. the blueprint of the building -- their models diverge
as the internal agent interacts with the scene. 
Due to the disaster, new paths may have opened up due to 
collapsed walls or old paths may no longer be available due 
to rubble. This means that plans that are valid and optimal
in the robot's (current) model may not make sense to the external
commander. 
In the scenario in Figure \ref{fig:mega},
the robot is tasked to go from its current
location marked blue to conduct reconnaissance in the location
marked orange. The green path is most optimal in its current
model but this is blocked in the external's mental model
while the expected plan in the mental model is no longer
possible due to rubble. Without removing rubble in the blocked 
paths, the robot can instead communicate that the path at 
the bottom is no longer blocked. 
This explanation preserves the validity and optimality of its plan 
in the updated model (even though further differences exist).

\begin{figure}
\centering
\includegraphics[width=\textwidth]{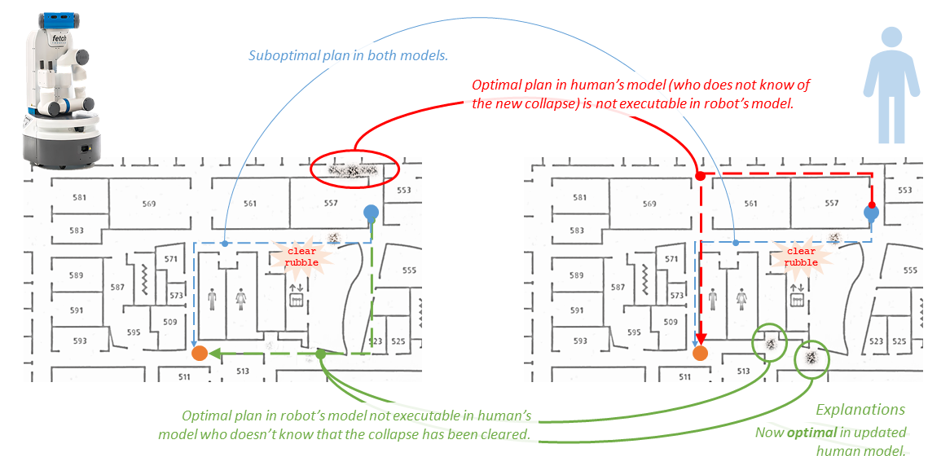}
\caption{Model differences in the USAR domain.}
\label{fig:mega}
\end{figure}

\section{Model Uncertainty}
\label{ch06:model_unc}
\index{Model-Reconciliation!Model Uncertainty}%
We will start by considering cases where the human's mental model may not be known exactly upfront. There may be parts of the human model the robot may know exactly, while it may be uncertain about others. We will leverage the annotated model representation to capture such incomplete models. Under this representation, in addition to the standard preconditions and effects associated with actions, 
the model includes {\em possible} preconditions and effects which may 
or may not be realized in practice.

\begin{definition}
\index{Annotated Model}%
An {\bf annotated model} 
is the tuple $\mathbb{M} = \langle  F,\mathbb{A}, \mathbb{I}, \mathbb{G} , C\rangle$ where $F$ is a finite set of fluents that define a state $s \subseteq F$, and $\mathbb{A}$ is a finite set of annotated actions, $\mathbb{I} = \langle I^0, I^+ \rangle,~\mathbb{G} = \langle G^0, G^+\rangle;$ the annotated initial and goal states ( such that $~I^0, G^0, I^+, G^+ \subseteq F$) and $C$ the cost function. 
Action $a \in~\mathbb{A}$ is a tuple $\langle \textrm{pre}(a), \widetilde{\textrm{pre}}(a), \textrm{adds}(a), \textrm{dels}(a), \widetilde{\textrm{adds}}(a), \widetilde{\textrm{dels}}(a)\rangle$ where in addition to $\textrm{pre}(a), \textrm{adds}(a), \textrm{dels}(a) \subseteq F$, i.e., the  known preconditions and add/delete effects each action also contains 
{\em possible preconditions} $\widetilde{\textrm{pre}}(a) \subseteq F$ containing propositions that it \emph{might} need as preconditions, and {\em possible add (delete) effects} $~\widetilde{\textrm{adds}}(a),\widetilde{\textrm{dels}}(a),\subseteq F$) containing propositions that it \emph{might} add (delete, respectively) after execution.
$I^0, G^0$ (and $I^+, G^+$) are the known (and possible) parts of the initial and goal states.
\end{definition}

Each possible condition $f \in \widetilde{\textrm{pre}}(a) \cup \widetilde{\textrm{adds}} (a) \cup \widetilde{\textrm{dels}}(a)$ has an associated probability $p(f)$ denoting how likely it is to be a known condition in the ground truth model -- i.e. $p(f)$ measures the confidence with which that condition has been learned. The sets of known and possible conditions of a model $\mathcal{M}$ are denoted by $\mathbb{S}_k({\mathcal{M}})$ and $\mathbb{S}_p({\mathcal{M}})$ respectively.
An {\em instantiation} of an annotated model $\mathbb{M}$ is a classical planning model where a subset of the possible conditions have been realized, and is thus given by the tuple $inst(\mathbb{M}) = \langle F, A, I, G \rangle$, initial and goal states $\mathcal{I} = \mathcal{I}^0 \cup \chi;~\chi \subseteq \mathcal{I}^+$ and $\mathcal{G} = \mathcal{G}^0 \cup \chi;~\chi \subseteq \mathcal{G}^+$ respectively, and action $A \ni a = \langle \textrm{pre}(a) \leftarrow \textrm{pre}(a) \cup \chi;~\chi \subseteq \widetilde{\textrm{pre}}(a), \textrm{adds}(a) \leftarrow \textrm{adds}(a) \cup \chi;~\chi \subseteq \widetilde{\textrm{adds}}(a), \textrm{dels}(a) \leftarrow \textrm{dels}(a) \cup \chi;~\chi \subseteq \widetilde{\textrm{dels}}(a)\rangle$. 
Clearly, given an annotated model with $k$ possible conditions, there may be $2^k$ such instantiations, which forms its {\em completion set}. \index{Annotated Model!Completion Set}%
\index{Annotated Model!Instantiation}%

\begin{definition}
\index{Annotated Model!Instantiation!Likelihood}%
{\bf Likelihood $P_{\ell}$} of instantiation $inst(\mathbb{M})$ of an annotated model $\mathbb{M}$ is:

\vspace{-10pt}
\begin{equation*}
P_{\ell}(inst(\mathbb{M})) = \prod_{f \in \mathbb{S}_p(\mathbb{M}) \wedge \mathbb{S}_k(inst(\mathbb{M}))} p(f) ~~~\times~~~ \prod_{f \in \mathbb{S}_p(\mathbb{M}) \setminus \mathbb{S}_k(inst(\mathbb{M}))} (1-p(f))
\end{equation*}
\end{definition} 

As discussed before, such models turn out to be especially useful for the representation 
of human (mental) models learned from observations, where uncertainty after the learning process can be represented in terms of model annotations. 
Let $\mathbb{M}^R_h$ be the culmination of a model learning process and $\{\mathcal{M}^R_{h_i}\}_{i}$ be the completion set of $\mathbb{M}^R_h$.
One of these models is the actual ground truth (i.e. the human's real mental model). We refer to this as $g(\mathbb{M}^R_h)$.
We will explore now how this representation will allow us to compute
{\em conformant explanations} that can explain with respect to all possible mental models
and {\em conditional explanations} that engage the explainee in dialogue to 
minimize the size of the completion set to compute shorter 
explanations.
\subsubsection{Conformant Explanations}
\index{Conformant Explanations}%

\unsure{In this situation, the robot can try to compute MCEs for each possible configuration. 
However, this can result in situations where the explanations computed for individual models independently are not consistent across all possible target domains. 
Thus, in the case of model uncertainty, such an approach cannot guarantee that the resulting explanation will be acceptable.} 

\unsure{Instead, the objective is to find an explanation such that $\forall i~ \pi^*_{\widehat{\mathcal{M}}^R_{h_i}} \equiv \pi^*_{\mathcal{M}^R}$ 
(as shown in Figure~\ref{multi-mmp}). 
This is a single set of model updates that makes the given plan optimal (and hence explained) in all the updated models.
At first glance, it appears that such an approach, even though desirable, might turn out to be prohibitively expensive especially since solving for a {\em single} MCE involves search in the model space where each search node is an optimal planning problem. 
However, it turns out that the same search strategy can be employed here as well by representing the human mental model as an {\em annotated} model. The optimality condition for MCE now becomes --}

\[C(\pi, g(\mathbb{M}^R_h)) = C^*_{g(\mathbb{M}^R_h)}\]

\begin{figure}[tb!]
\centering
\includegraphics[width=0.65\textwidth]{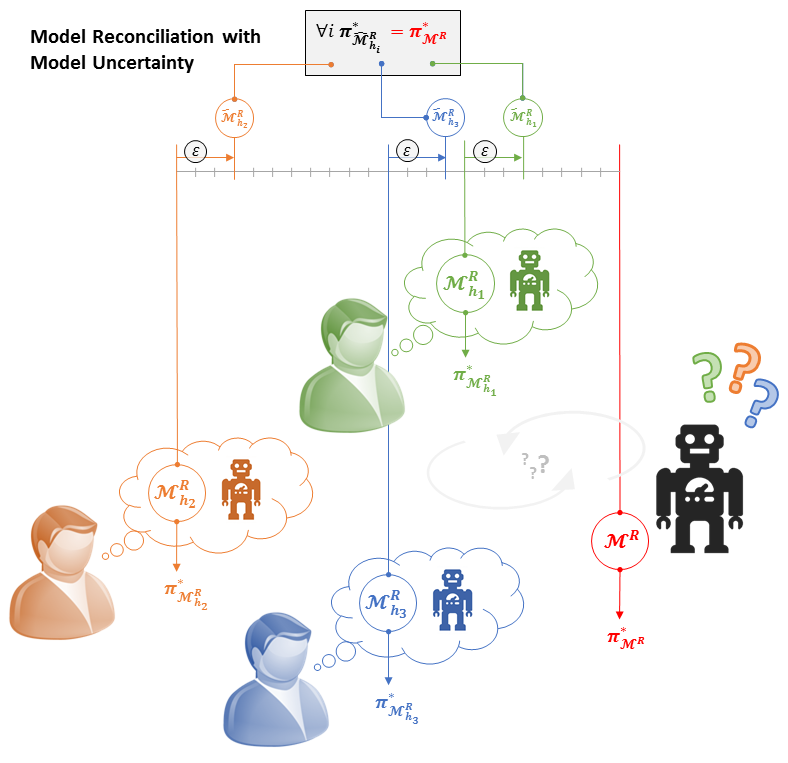}
\caption{Model reconciliation in the presence of model uncertainty or multiple explainees.}
\label{multi-mmp}
\end{figure}

\unsure{We define {\em robustness} of an explanation for an incomplete mental model as the probability mass of models where it is a valid explanation.}

\vspace{10pt}
\begin{definition}
\index{Explanation!Robustness}%
{\bf Robustness} of an explanation $\mathcal{E}$ is given by --

\vspace{-10pt}
\begin{equation*}
R(\mathcal{E}) = \sum_{inst(\mathcal{\widehat{M}}^R_h) \text{ s.t. } C(\pi, inst(\mathcal{\widehat{M}}^R_h)) = C^*_{inst(\mathcal{\widehat{M}}^R_h)}} P_{\ell}(inst(\mathcal{\widehat{M}}^R_h))
\end{equation*} 
\end{definition}


\vspace{10pt}
\begin{definition}
 A {\bf conformant explanation} is such that $R(\mathcal{E}) = 1$.
\end{definition}
\unsure{Which is equivalent to saying that conformant explanation ensures that the given plan is explained in all the models in the completion set of the human model. 
Let's look at an example.} 
Consider again the USAR domain (Figure \ref{usar-uncertain}), the robot is now at P1 (blue) and needs to collect data from P5. While the commander understands the goal, she is under the false impression that the paths from P1 to P9 and P4 to P5 are unusable (red question marks). She is also unaware of the robot's inability to use its hands. 
On the other hand, while the robot does not have a complete picture of her mental model, 
it understands that any differences between the models are related to (1) the path from P1 to P9; (2) the path from P4 to P5; (3) its ability to use its hands; and (4) whether it needs its arm to clear rubble.
Thus, from the robot's perspective, the mental model can be one of sixteen possible models
(one of which is the actual one). 
Here, a conformant explanation for the optimal robot plan (blue) is as follows --


\begin{verbatim}
Explanation >> remove-known-INIT-has-add-effect-hand_capable
Explanation >> add-annot-clear_passage-has-precondition-hand_capable
Explanation >> remove-annot-INIT-has-add-effect-clear_path P1 P9
\end{verbatim}

\begin{figure}[!t]
\centering
\includegraphics[width=\textwidth]{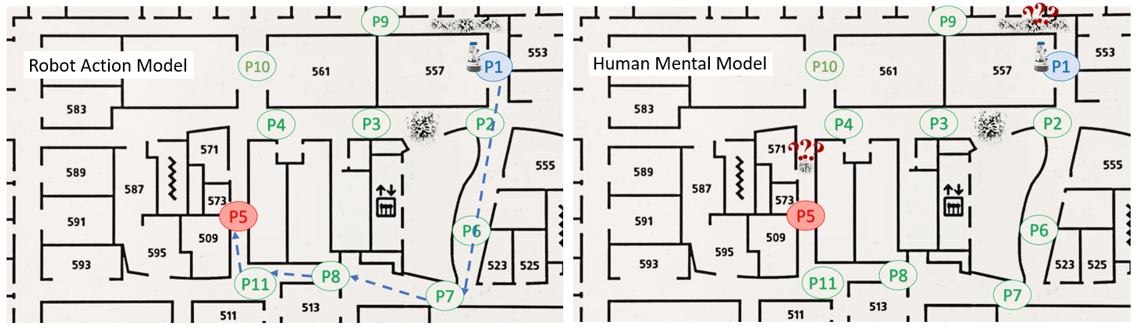}
\caption{Back to our USAR scenario: the robot plan is marked in blue and uncertain parts of the human model is marked with red question marks. 
}
\label{usar-uncertain}
\end{figure}

\subsubsection{Model-Space Search for Conformant Explanations}
\index{Conformant Explanation!Algorithm}%
\index{Annotated Model!Min Model}%
\index{Annotated Model!Max Model}%

As we discussed before, we cannot launch an MCE-search for each possible mental model
separately, both for issues of complexity and consistency of the solutions.
However, in the following discussion, we will see how we can reuse the model space
search from the previous section with a compilation trick. 

We begin by defining two models -- the most relaxed model possible $\mathcal{M}_{max}$  and the least relaxed one $\mathcal{M}_{min}$.
The former is the model where all the possible add effects and none of the possible preconditions and deletes hold, the state has all the possible conditions set to true, and the goal is the smallest one possible; while in the latter all the possible preconditions and deletes and none of the possible adds are realized and with the minimal start state and the maximal goal.
This means that, if a plan is executable in $\mathcal{M}_{min}$ it will be executable in all the possible models. Also, if this plan is optimal in $\mathcal{M}_{max}$, then it must be optimal throughout the set. Of course, such a plan may not exist, and we are not trying to find one either. Instead, we are trying to find a set of model updates which when applied to the annotated model, produce a new set of models where a given plan is optimal. 
In providing these model updates, we are in effect reducing the set of possible models to a smaller set. The new set need not be a subset of the original set of models but will be equal or smaller in size to the original set. For any given annotated model, such an explanation always exists (entire model difference in the worst case), and the goal here becomes to find the smallest one. 
$\mathbb{M}^R_h$ thus affords the following two models --

\vspace{10pt}
$\mathcal{M}_{max}  = \langle  F, A, I, G \rangle$  

{\small
\begin{itemize}
\item[-] initial state $I \leftarrow I^0 \cup I^+$; given $\mathbb{I}$
\item[-] goal state $G \leftarrow G^0$; given $\mathbb{G}$
\item[-] $\forall a \in A$ 
\begin{itemize}
\item[-] $\textrm{pre}(a) \leftarrow \textrm{pre}(a); ~a \in~\mathbb{A}$
\item[-] $\textrm{adds}(a) \leftarrow \textrm{adds}(a) \cup \widetilde{\textrm{adds}}(a); ~a \in~\mathbb{A}$
\item[-] $\textrm{dels}(a) \leftarrow \textrm{dels}(a); ~a \in~\mathbb{A}$
\end{itemize}
\end{itemize}
}

\vspace{5pt}
$\mathcal{M}_{min} = \langle  F, A, I, G \rangle$ 

{\small
\begin{itemize}
\item[-] initial state $I \leftarrow I^0$; given $\mathbb{I}$
\item[-] goal state $G \leftarrow G^0 \cup G^+$; given $\mathbb{G}$
\item[-] $\forall a \in A$ 
\begin{itemize}
\item[-] $\textrm{pre}(a) \leftarrow \textrm{pre}(a) \cup \widetilde{\textrm{pre}}(a); ~a \in~\mathbb{A}$
\item[-] $\textrm{adds}(a) \leftarrow \textrm{adds}(a); ~a \in~\mathbb{A}$
\item[-] $\textrm{dels} \leftarrow \textrm{dels} \cup \widetilde{\textrm{dels}}; ~a \in~\mathbb{A}$
\end{itemize}
\end{itemize}
}

\vspace{5pt}
As explained before, $\mathcal{M}_{max}$ is a model where all the add effects hold and it is easiest to achieve the goal, and similarly $\mathcal{M}_{min}$ is the model where it is the hardest to achieve the goal.
These definitions might end up creating inconsistencies (e.g. in an annotated \texttt{BlocksWorld} domain, the \texttt{unstack} action may have add effects to make the block both \texttt{holding} and \texttt{ontable} at the same time), but the model reconciliation process will take care of these. 

\begin{proposition}
\label{prop-both}
For a given MRP $\Psi = \langle \pi, \langle \mathcal{M}^R,  \mathbb{M}^R_h \rangle \rangle$, if the plan $\pi$ is optimal in $\mathcal{M}_{max}$ and executable in $\mathcal{M}_{min}$, then the plan is optimal for all $i$.
\end{proposition}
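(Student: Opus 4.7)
The strategy is to establish two monotonicity lemmas that sandwich every instantiation $\mathcal{M}_i$ between $\mathcal{M}_{min}$ and $\mathcal{M}_{max}$, and then chain them: executability in $\mathcal{M}_{min}$ propagates upward to every $\mathcal{M}_i$, while optimality in $\mathcal{M}_{max}$ propagates downward to every $\mathcal{M}_i$. The one potentially subtle step is being careful with how add/delete effects interact with the initial state under set-inclusion monotonicity, since both sides move in opposite directions.

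\textbf{Step 1: State-sequence monotonicity.} Let $s_t^{\mathcal{M}}(\pi)$ denote the state reached after executing the first $t$ actions of $\pi$ under model $\mathcal{M}$. I will show by induction on $t$ that
\[
s_t^{\mathcal{M}_{min}}(\pi) \;\subseteq\; s_t^{\mathcal{M}_i}(\pi) \;\subseteq\; s_t^{\mathcal{M}_{max}}(\pi)
\]
for every $t$ and every instantiation $\mathcal{M}_i$. The base case follows immediately from $I^0 \subseteq I_i \subseteq I^0 \cup I^+$. For the inductive step, note that $\textrm{dels}_{max}(a) \subseteq \textrm{dels}_i(a) \subseteq \textrm{dels}_{min}(a)$ and $\textrm{adds}_{min}(a) \subseteq \textrm{adds}_i(a) \subseteq \textrm{adds}_{max}(a)$, so combining the induction hypothesis with a short set-algebra argument (``removing a smaller set from a larger set and unioning with a larger set preserves containment'') gives the inclusion at step $t{+}1$.

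\textbf{Step 2: Executability lifts from $\mathcal{M}_{min}$ to $\mathcal{M}_i$.} Assume $\pi$ is executable in $\mathcal{M}_{min}$ and reaches $G^0 \cup G^+$. At each step $t$, $\textrm{pre}_i(a_t) \subseteq \textrm{pre}_{min}(a_t) \subseteq s_t^{\mathcal{M}_{min}}(\pi) \subseteq s_t^{\mathcal{M}_i}(\pi)$, so preconditions hold in $\mathcal{M}_i$. Similarly, $G_i \subseteq G^0 \cup G^+ \subseteq s_n^{\mathcal{M}_{min}}(\pi) \subseteq s_n^{\mathcal{M}_i}(\pi)$, so the goal of $\mathcal{M}_i$ is achieved. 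Hence $\pi$ is a valid solution plan in every $\mathcal{M}_i$, giving $C(\pi) \geq C^*_{\mathcal{M}_i}$.

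\textbf{Step 3: Optimality descends from $\mathcal{M}_{max}$ to $\mathcal{M}_i$.} I claim that every valid plan $\pi'$ in $\mathcal{M}_i$ is also a valid plan in $\mathcal{M}_{max}$. The same monotonicity argument (applied to $\pi'$) shows $s_t^{\mathcal{M}_i}(\pi') \subseteq s_t^{\mathcal{M}_{max}}(\pi')$, and the inclusions $\textrm{pre}_{max}(a_t) \subseteq \textrm{pre}_i(a_t)$ and $G^0 \subseteq G_i$ then give executability and goal-satisfaction of $\pi'$ in $\mathcal{M}_{max}$. Consequently, the set of valid plans in $\mathcal{M}_{max}$ is a superset of the valid plans in $\mathcal{M}_i$, so $C^*_{\mathcal{M}_{max}} \leq C^*_{\mathcal{M}_i}$. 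Combining with the optimality hypothesis $C(\pi) = C^*_{\mathcal{M}_{max}}$ and the bound from Step 2 yields $C^*_{\mathcal{M}_i} \leq C(\pi) = C^*_{\mathcal{M}_{max}} \leq C^*_{\mathcal{M}_i}$, forcing equality. Thus $\pi$ is optimal in every $\mathcal{M}_i$, which is the claim.

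\textbf{Main obstacle.} Steps 2 and 3 are essentially bookkeeping once Step 1 is in hand; the real work is Step 1, and the only genuinely delicate point there is verifying that the containment of states is preserved when simultaneously subtracting smaller delete sets and unioning larger add sets. Care is also needed to ensure that the cost function $C$ is the same across all completions (which holds by assumption, since $C$ is attached to the annotated model $\mathbb{M}$ rather than to its instantiations), so that comparing $C(\pi)$ across $\mathcal{M}_{max}$ and $\mathcal{M}_i$ is meaningful.
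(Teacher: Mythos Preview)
Your proof is correct and matches the paper's approach. The paper does not give a formal proof of this proposition; it only states the guiding intuition in the surrounding text, namely that ``if a plan is executable in $\mathcal{M}_{min}$ it will be executable in all the possible models'' and ``if this plan is optimal in $\mathcal{M}_{max}$, then it must be optimal throughout the set.'' Your three-step argument is precisely a formalization of that intuition, so there is no meaningful difference in route.

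One small presentational remark: as written, Step~1 presupposes that the state sequence $s_t^{\mathcal{M}_i}(\pi)$ is well-defined, which under the paper's STRIPS semantics requires executability of $\pi$ in $\mathcal{M}_i$---but that is exactly what Step~2 establishes using Step~1. This is not a genuine circularity, since both claims are proved by a single simultaneous induction on $t$ (at each step, the state-inclusion hypothesis gives precondition satisfaction in $\mathcal{M}_i$, which in turn licenses the next state update), but you may want to merge Steps~1 and~2 into one induction to make this explicit. The analogous comment applies to the monotonicity invoked in Step~3 for $\pi'$.
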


\vspace{5pt}
This now becomes the new criterion to satisfy in the course of search for an MCE for a set of models.
We again reuse the state representation in Chapter \ref{ch05} (generated by $\Gamma$ as described in Section \ref{ch05:model-rec-sec}).
We start the conformant search by first creating the corresponding $\mathcal{M}_{max}$ and $\mathcal{M}_{min}$ model for the given annotated model $\mathbb{M}^{R}_h$. 
While the goal test for the original MCE only included an optimality test, here we need to both check the optimality of the plan in  $\mathcal{M}_{max}$ and verify the correctness of the plan in $\mathcal{M}_{min}$. As stated in Proposition \ref{prop-both}, the plan is only optimal in the entire set of possible models if it satisfies both tests. Since the correctness of a given plan can be verified in polynomial time with respect to the plan size, this is a relatively easy test to perform.

The other important point of difference between the algorithm mentioned above and the original MCE is how the applicable model updates are calculated. Here we consider the superset of model differences between the robot model and $\mathcal{M}_{min}$ and the differences between the robot model and $\mathcal{M}_{max}$. This could potentially mean that the search might end up applying a model update that is already satisfied in one of the models but not in the other. Since all the model update actions are formulated as set operations, the original MRP formulation can handle this without any further changes. The models obtained by applying the model update to $\mathcal{M}_{min}$ and $\mathcal{M}_{max}$ are then pushed to the open queue.

\begin{proposition}
$\mathcal{M}_{max}$ and $\mathcal{M}_{min}$ only need to be computed once
-- i.e. with a model update $\mathcal{E}$ to $\mathbb{M}$:
$\mathcal{M}_{max} \leftarrow \mathcal{M}_{max} + \mathcal{E}$ and $\mathcal{M}_{min} \leftarrow \mathcal{M}_{min} + \mathcal{E}$. 
\end{proposition}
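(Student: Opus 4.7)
The plan is to prove this by induction on $|\mathcal{E}|$, reducing the claim to the case of a single unit model edit $\lambda \in \Lambda$, and then establishing that the derivations of $\mathcal{M}_{max}$ and $\mathcal{M}_{min}$ from an annotated model commute with applying $\lambda$ to the already-derived planning models. The base case $\mathcal{E} = \emptyset$ is immediate from the definitions of $\mathcal{M}_{max}$ and $\mathcal{M}_{min}$. For the inductive step, it suffices to show that if $\mathbb{M}' = \mathbb{M} + \lambda$ denotes the annotated model after applying $\lambda$, then $\mathcal{M}_{max}(\mathbb{M}') = \mathcal{M}_{max}(\mathbb{M}) + \lambda$ and likewise for $\mathcal{M}_{min}$, where $+\lambda$ on the right applies the unit edit to the derived planning model in the sense of the transition function $\delta_{\mathcal{M}^R, \mathcal{M}^R_h}$.

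Next I would do a case analysis over the six categories of features touched by $\lambda$ (\textit{init-has-f}, \textit{goal-has-f}, \textit{a-has-precondition-f}, \textit{a-has-add-effect-f}, \textit{a-has-del-effect-f}, \textit{a-has-cost-f}) crossed with the two directions of edit (add vs.\ remove a feature). For each case, I would unfold the definitions of $\mathcal{M}_{max}$ and $\mathcal{M}_{min}$ as set unions (e.g. $\textrm{pre}_{min}(a) = \textrm{pre}(a) \cup \widetilde{\textrm{pre}}(a)$ and $\textrm{pre}_{max}(a) = \textrm{pre}(a)$) and verify that moving a feature into or out of either the known set or the possible set produces the same result on both sides. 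The guiding invariant is that the max/min derivations are just distributed unions over the known/possible decomposition, and the unit edits are pointwise set operations that distribute over those unions.

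The main obstacle lies in the asymmetric treatment of possible features, particularly for preconditions and delete effects (which enter $\mathcal{M}_{min}$ but not $\mathcal{M}_{max}$) versus add effects and initial-state fluents (which enter $\mathcal{M}_{max}$ but not $\mathcal{M}_{min}$). For a ``remove precondition $f$'' edit, for instance, one must confirm that regardless of whether $f \in \textrm{pre}(a)$, $f \in \widetilde{\textrm{pre}}(a)$, or both, the downstream effect on $\mathcal{M}_{max}$ and $\mathcal{M}_{min}$ matches what the unit edit does directly; the key observation is that the edit functions satisfy conditions 1 and 2 in the model-space search definition, so $\lambda$ is consistent with $\mathcal{M}^R$ and hence reduces the annotated model to a more committed one without introducing spurious interactions between the known and possible components. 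Once the six-way case analysis is laid out, each case reduces to a short set-algebraic check, after which the inductive step closes and the proposition follows.
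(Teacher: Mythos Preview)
The paper states this proposition without proof; its only justification is the surrounding remark that ``all the model update actions are formulated as set operations,'' so applying an edit already satisfied in one of $\mathcal{M}_{max}$ or $\mathcal{M}_{min}$ is harmless. Your inductive reduction to a single unit edit followed by a case analysis over the six feature categories is a sound and considerably more explicit argument than anything the paper offers. The commutativity you are checking is exactly the right invariant: the $\mathcal{M}_{max}/\mathcal{M}_{min}$ derivations are componentwise unions over the known/possible split, and the unit edits are pointwise insert/delete operations on those same components, so the two commute. Your proposal is correct; the paper simply takes the result as self-evident.
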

 
These models form the new $\mathcal{M}_{min}$ and $\mathcal{M}_{max}$ models for the set of models obtained by applying the current set of model updates to the original annotated model. This proposition ensures that we no longer have to keep track of the current list of models or recalculate $\mathcal{M}_{min}$ and $\mathcal{M}_{max}$ for the new set.

\subsubsection{Conditional Explanations}
\index{Conditional Explanations}%
Conformant explanations can contain superfluous information -- i.e. asking the human to remove non-existent conditions or add existing ones.
In the previous example, the second explanation (regarding the need of the hand to clear rubble) was already known to the human and was thus superfluous information.
Such redundant information can be annoying and may end up reducing the human's trust in the robot. This can be avoided by --

\begin{itemize}
\item[-] Increasing the cost of model updates involving uncertain conditions relative to those involving known preconditions or effects.
This ensures that the search prefers explanations that contain known conditions. 
By definition, such explanations will not have superfluous information. 
\item[-] However, sometimes such explanations may not exist. 
Instead, we can convert conformant explanations into {\em conditional} ones. 
This can be achieved by turning each model update for an annotated condition into a question and only provide an explanation if the human's response warrants it -- e.g. instead of asking the human to update the precondition of $\texttt{clear\_passage}$, 
the robot can first ask if the human thinks that action has a precondition $\texttt{hand\_usable}$. 
Thus, one way of removing superfluous explanations is to reduce the size of the completion set by gathering information from the human.
\end{itemize}

\unsure{By using information gathering actions, we can do even better than just remove redundant information from an already generated conformant explanation.
For example, consider the following exchange in the USAR scenario} -- 

\begin{verbatim}
R : Are you aware that the path from P1 to P4 has collapsed?
H : Yes.
> R realizes the plan is optimal in all possible models. 
> It does not need to explain further. 
\end{verbatim}

If the robot knew that the human thought that the path from P1 to P4 was collapsed, 
it would know that the robot's plan is already optimal in the human mental model 
and hence be required to provide no further explanation.
This form of explanations can thus clearly be used to cut down on the size of 
conformant explanations by reducing the size of the completion set.


\begin{definition}
A {\bf conditional explanation}
is represented by a policy that maps the annotated model (represented by a $\mathcal{M}_{min}$ and $\mathcal{M}_{max}$ model pair) to either a question regarding the existence of a condition in the human ground model or a model update request. The resultant annotated model is produced, by either applying the model update directly into the current model or by updating the model to conform to human's answer regarding the existence of the condition.

\end{definition}

In asking questions such as these, the robot is trying to exploit the human's (lack of) knowledge of the problem in order to provide more concise explanations. 
This can be construed as a case of lying by omission and can raise interesting ethical considerations (we will look at the role of lies in team in more detail in Chapter \ref{ch08}, Section \ref{lies}). Humans, during an explanation process, tend to undergo this same ``selection'' process as well in determining which of the many reasons that
could explain an event is worth highlighting. 

\paragraph{Modified $AO^{*}$-search to find Conditional Explanations}
\index{Conditional Explanations!Algorithm}%
We can generate conditional explanations by either performing post-processing on conformant explanations or by performing an AND-OR graph search with $AO^{*}$  \cite{nilsson2014principles}. 
Here each model update related to a known condition forms an OR successor node while each {\em possible} condition can be applied on the current state to produce a pair of AND successors, where the first node reflects a node where the annotated condition holds while the second one represents the state where it does not. So the number of possible conditions reduces by one in each one of these AND successor nodes. This AND successor relates to the answers the human could potentially provide when asked about the existence of that particular possible condition. Note that this AND-OR graph will not contain any cycles as we only provide model updates that are consistent with the robot model and hence we can directly use the $AO^{*}$ search here.

Unfortunately, if we used the standard $AO^{*}$ search, it will not produce a conditional explanation that contains this ``less robust'' explanation as one of the potential branches in the conditional explanation. 
This is because, in the above example, if the human had said that the path was free, the robot would need to revert to the original conformant explanation. Thus the cost of the subtree containing this solution will be higher than the one that only includes the original conformant explanation.

To overcome this shortcoming, we can use a discounted version of the $AO^{*}$ search where the cost contributed by a pair of AND successors is calculated as --

\vspace{-15pt}
\begin{equation*}
min(\textit{node1.h\_val, node2.h\_val}) + \gamma * max(\textit{node1.h\_val, node2.h\_val})
\end{equation*}

\noindent where node1 and node2 are the successor nodes and node1.\textit{h\_val}, node2.\textit{h\_val} are their respective $h$-values. Here $\gamma$ represents the discount fact and controls how much the search values short paths in its solution subtree. When $\gamma = 1$, the search becomes standard $AO^{*}$ search and when  $\gamma = 0$, the search myopically optimizes for short branches (at the cost of the depth of the solution subtree). The rest of the algorithm stays the same as the standard $AO^{*}$ search. 

\subsubsection{Anytime Explanations}

Both the algorithms discussed above can be computationally expensive, in spite of the compilation trick to reduce the set of 
possible models to two representative models. 
However, as we did previously with MCEs, we can also aim for an approximate 
solution by relaxing the minimality requirement of explanation to achieve much shorter explanation generation time when required. 
For this we use an anytime depth first explanation generation algorithm. Here, for each state, the successor states include all the nodes that can be generated by applying the model edit actions on all the known predicates and two possible successors for each possible condition -- one where the condition holds and one where it does not. 
Once the search reaches a goal state (a new model where the target plan is optimal throughout its completion set), it queries the human to see if the assumptions it has made regarding possible conditions hold in the human mental model (the list of model updates made related to possible conditions). 
If all the assumptions hold in the human model, then we return the current solution as the final explanation (or use the answers to look for smaller explanations), else continue the search after pruning the search space using the answers provided by the human. 
Such approaches may also be able to facilitate iterative presentation of
model reconciliation explanations to the human.
\section{Model-Free Explanations}
\index{Model-Reconciliation!Model-Free Explanations}%

\unsure{The previous section focused on cases where there still exists an estimate (incomplete though) of the human's knowledge about the robot. The conciseness of the explanation generated using this method would still rely on how accurate and complete that estimate is. In many cases the robot may start with no such estimate. While technically one could start with an incomplete model that will cover the space of all possible models representable using the current set of action and fluent set, it is unlikely that such a general human mental model would lead to concise explanations}. One possible alternative is to consider a plausible assumption about the human model (such as assuming its an abstraction of the robot model or even that the human is completely ignorant of the robot model). Such assumptions are more effective when the user is ready to interact with the system while receiving the explanation. A slightly different possibility might be to learn the human's model. This approach may be particularly well suited when it is possible to assume that it is possible for the system to collect data from a set of people about their ability to make sense of system decisions and the final end user would have similar mental model to the set of users. Note that this assumption is similar to the one made in model-free explicable plan generation in Chapter \ref{ch03} (Section \ref{section:model-free}) and we will make use of a similar approach to learn simple labeling models that are able to predict whether a human would find a given explanation satisfactory.

Unlike the setting in learning for explicability, we will assume the agent now also has access to a set of explanatory messages $\mu$. Note that such messages only contain information about the original robot model and are independent of what the human may or may not know. So assuming a STRIPS model of representation for explanation, this could contain messages about what propositions are or aren't part of the robot model. As such $\mu$ could be derived from $\Gamma$ (i.e the mapping from the model to a set of propositions) discussed in earlier chapter. 

Once we have access to such a set of messages our goal is now quite similar to the one described in chapter \ref{ch03}, as we would want to learn a labelling model over the steps in the plan. To keep the discussion simple, we will assume the labeling model here merely captures the fact whether the user find the plan step explicable or not (and will not worry about the exact task label they may associate with the step). So the labeling function we use in this context will look something like

\[ \mathcal{L}_{\pi^R}: \mathbb{F} \times 2^{\mu} \rightarrow \{0, 1\}\]

\unsure{Where $\mathbb{F}$ is the space of features for the plan steps. Note that unlike previous learned labeling model, here the function actually captures the fact that the human's expectation could shift once she is given addition information about the robot model. Now we can again take individual labels of the steps and calculate an explicability score for the entire plan using a function $f_{exp}$, where $f_{exp}$ returns 1 if all the steps are labeled as explicable, which means, according to the labeling model, the plan is perfectly explicable.
Thus the learned model labeling model allows us to evaluate the explicability of the plan at each modified model. This means that we can now run the model space search even without having access to the original human mental model $\mathcal{M}^R_h$. Here the possible model space is represented by the space of possible message sets the robot can provide. Each possible modified model in the original scenario should map to at least one message set in this new setting. Instead of running an optimality check, we can check the labeling model to see if for a given set of messages every step in the plan would be labeled as being explicable. In this new setting, one could calculate an approximation of MCE for a plan $\pi_R$ as follows}

\[ \argmin_{m \subseteq \mu} C(m) ~\textrm{such that} ~f_{exp}(\pi_R, m) = 1\]

\unsure{Where $C$ gives the cost associated with the set of messages $m$. The formulation looks for the cheapest set of messages that would render the entire plan explicable. One could similarly use such labeling models to also try extracting explanations that meet the requirements of other forms of explanations. Of course, the effectiveness of this method relies on the ability to learn good labeling models.  Instead of making the explicability score a hard constraint, it can also be made part of the objective function.} 

\section{Assuming Prototypical Models}
\label{helm}
\index{Model-Reconciliation!Abstraction}%
\index{Model-Reconciliation!Assuming Prototypical Models}%
\index{Explanation!Abstraction}%
All the previous discussion focused on cases where we handled lack of information about human model, by either trying to use uncertain model or learning simple model alternatives. But in many cases such uncertain models may not be available or the system may not have previous data or be able to interact with the human for a long enough time to learn useful models. An alternative possibility might be to assume a simple prototypical model for the human and then use it to generate the explanation. In this section, we will consider such a possibility, specifically we will look at cases where we can perform model reconciliation by assuming human model is some abstraction of the robot model (which includes the possibility that its just an empty model). We will look at the kind of explanatory queries such assumptions can support and how we could extend them to cover all the cases.

In particular, we will consider cases where the human model may be considered as a state abstraction of the original models. We will describe abstraction operations in terms of transition systems of models. Formally a transition system $\mathcal{T}$ corresponding to a model $\mathcal{M}$ can be represented by a tuple of the form $\mathcal{T} = \langle S, L, T, s_o,S_g\rangle$, where $S$ is the set of possible states in $\mathcal{M}$, $L$ is the set of transition labels (corresponding to the action that induce that transition), $T$ is the set of possible labeled transitions, $s_0$ is the initial state and $S_g$ is the set of states that satisfies the goal specified by $\mathcal{M}$.

\begin{definition}
\index{Propositional Abstraction Function}%
A \textbf{propositional abstraction function} $f_{\Lambda}$ for a set of propositions $\Lambda$ and state space $S$, defines a surjective mapping of the form $f_{\Lambda}: S \rightarrow X$, where $X$ is a projection of $S$, such that for every state $s \in S$, there exists a state $f_{\Lambda}(s) \in X$ where $ f_{\Lambda}(s) = s \setminus \Lambda$.
\end{definition}

\begin{definition}
\label{model_abs}
\index{State Abstraction}%
For a planning model $\mathcal{M} = \langle F, A, I, G\rangle$ with a corresponding transition system $\mathcal{T}$, a model $\mathcal{M}' = \langle F', A', I', G'\rangle$ with a transition system $\mathcal{T}'$ is considered an \textbf{abstraction of $\pmb{\mathcal{M}}$} for a set of propositions $\Lambda$, if for every transition $s_1 \xrightarrow{a} s_2$ in $\mathcal{T}$ corresponding to an action $a$, there exists an equivalent transition  $f_{\Lambda}(s_1) \xrightarrow{a'} f_{\Lambda}(s_2)$ in $\mathcal{T}'$, where $a'$ is part of the new action set $A'$. 
\end{definition}

We will slightly abuse notation and extend the abstraction functions to models and actions, i.e in the above case, we will have $\mathcal{M'} \in f_{\Lambda}(\mathcal{M})$ (where $f_{\Lambda}(\mathcal{M})$ is the set of all models that satisfy the above definition for the set of fluents $\Lambda$) and similarly we will have $a' \in f_{\Lambda}(a)$.
As per Definition \ref{model_abs}, the abstract model is {\em complete} in the sense that all plans that were valid in the original model will have an equivalent plan in this new model. We will use the operator $\sqsubset$ to capture the fact that a model $\mathcal{M}$ is less abstract than the model $\mathcal{M}'$, i.e if $\mathcal{M} \sqsubset \mathcal{M}'$ then there exist a set of propositions $\Lambda$ such that $\mathcal{M}' \in f_{\Lambda}(\mathcal{M})$.

In particular, we will focus on abstractions formed by projecting out a subset of propositional fluents. Where for a given model $\mathcal{M} = \langle F, A, I, G\rangle$ and a set of propositions $\Lambda$,  we define the abstract model to be  $\mathcal{M}' = \langle F',  A', I', G'\rangle$, where  $F' = F - \Lambda$, $I' = f_{\Lambda}(I)$, $G' = f_{\Lambda}(G)$ and for every $a \in A$ (where $a = \langle\textrm{pre}(a),\textrm{adds}(a),\textrm{dels}(a)\rangle$) there exists $a' \in A'$, such that $a' = \langle \textrm{pre}(a)\setminus\Lambda, \textrm{pre}(a)\setminus\Lambda,\textrm{adds}(a)\setminus\Lambda,\textrm{dels}(a)\setminus\Lambda\rangle$.

Thus if the system has information on some subset of fluents that the user may be unaware of, then we can approximate the user model by using the abstraction of the model by setting $\Lambda$ to be these fluents. As we will see even if the approximation is more pessimistic than the true model, we will still be able to generate the required explanation. Notice also that here if we set $\Lambda$ to $P$, we essentially get an empty model that can be thought of as representing the model of a user who isn't aware of any of the details about the task, which effectively means the user thinks all possible plans are feasible.

Now going back to the question of explanation, let us consider cases where the user is raising explicit contrastive queries of the form ``\textit{Why P and not Q?}", where `P' is the current plan being proposed by the robot and `Q' is the alternate plan (or set of plans) being proposed by the human. If the assumption holds that the human's model is a complete abstraction then we don't need to provide any information to justify P since it should be valid in the human model. So our focus would be to address the foil plans raised by the user. Now consider the cases where the foils raised by the human are in fact infeasible in the robot model. Then as part of the explanation should reveal parts of the robot model that will help show the infeasibility of the human plan. Thus explanation here can be characterized by the set of fluents that was previously missing from the human model and whose introduction into the human model will help her understand the planning problem better, and thus model reconciliation would leave the human model less abstract than it was before.

Now what happens if the assumption about the human's model being an abstraction of the robot model is not met? Let us assume the system is aware of the set of fluents that the human may not be completely aware of, but now instead of them simply missing the human may hold incorrect beliefs about them. Interestingly, we can still use the same approach discussed above to refute the foils. That is you can still identify the set of fluents to refute the foils as if the user's model was a complete abstraction. But now we need additional mechanisms to provide explanations about the validity of the plan. One simple approach would be to allow users to ask specific questions about the steps in the plan they assume are invalid and provide explanation for those queries.

\unsure{In addition to allowing for a reasonable assumption about the user's model, the use of abstractions can also provide us with the tools to address the human's limited inferential capabilities. Note that here the explanations can be provided at the minimal level of abstraction needed to explain the foils, i.e., only introduce the minimal set of fluents needed to address the user queries. Thus the inferential overhead on the human's end to make sense of the explanation is minimized. 
Furthermore, if the human's confusion about the plan is coming from limited inferential capabilities as opposed to a difference in knowledge regarding the task, we can still provide explanations by following the steps described in this section. In this case, the user is exposed to an abstract version of the model and asked to re-evaluate the plan and foils with respect to this simplified model.}
\section{Bibliographic Remarks}
The first work to consider model reconciliation in the presence of uncertain models was \cite{sreedharan2018handling}, and the form of the uncertain models used by the work was adapted from earlier work in robust planning \cite{nguyen2017robust}. These annotated models can also be used to provide explanations for multiple users, by combining the mental model of each users into a single uncertain model. 
The $AO^*$ search described in the chapter is adapted from \cite{nilsson2014principles} introduced to solve planning problems where the solution takes the conditional plans that can be represented as acyclic graphs.
The method for model-free model reconciliation was introduced in \cite{modelfree} for MDP settings, for the discussion in this chapter we adapted it to classical planning problems. The abstraction based explanation framework called HELM was first proposed in \cite{sreedharan2018hierarchical}, and then later adapted to partial foil specifications and explaining unsolvability in \cite{sreedharan2019can}. \todo{\cite{sreedharan2021using} builds on these earlier works and provides a clearer discussion on connections between model reconciliation and HELM style explanation. 
This paper also presents a user study testing the effectiveness of abstraction to reduce cognitive load on the explainee}. Instead of assuming a set of fluents to build abstractions on, these works generally assume access to a lattice of abstract models, which might also contain multiple maximally abstract models. This means in these setting they also have to perform an additional localization operation to identify possible models that could correspond to the human. These lattices could also be used to encode information about the complexity of concepts there by also allowing them to provide simplest explanations for a given query. 
\clearpage

\chapter{Balancing Communication and Behavior}
\label{ch_balance}
In the previous sections, we considered how in human robot teaming scenarios, the robot behavior influences and is influenced by the human's mental model of the robot. We have been quantifying some of the interaction between the behavior and human's model in terms of three interpretability scores, each of which corresponds to some desirable property one would expect the robot behavior to satisfy under cooperative scenarios. With these measures defined one of the strategies the robot can adopt is to specifically generate behavior that optimizes these measures. Though for one of those measures, we also investigated an alternate strategy, namely to use communication to alter the human's mental model to improve explicability of the current plan. 
Interestingly these strategies are not mutually exclusive, they can in fact be combined to capitalize on the individual strengths of each to create behaviors that are unique. In this chapter, we will start by focusing on how one can combine explicable and explanation generation and will look at a compilation based method to generate such plans. 
\unsure{In general, communication is a strategy we can use for the other two measures as well. As such, we will end this chapter with a brief discussion on how we could use the idea of combining communication and selective behavior generation to improve legibility and predictability scores.}

\section{Modified USAR Domain}
\label{usar}
\index{USAR}%
\index{Modified USAR}
\begin{figure*}
\centering
\includegraphics[width=0.95\textwidth]{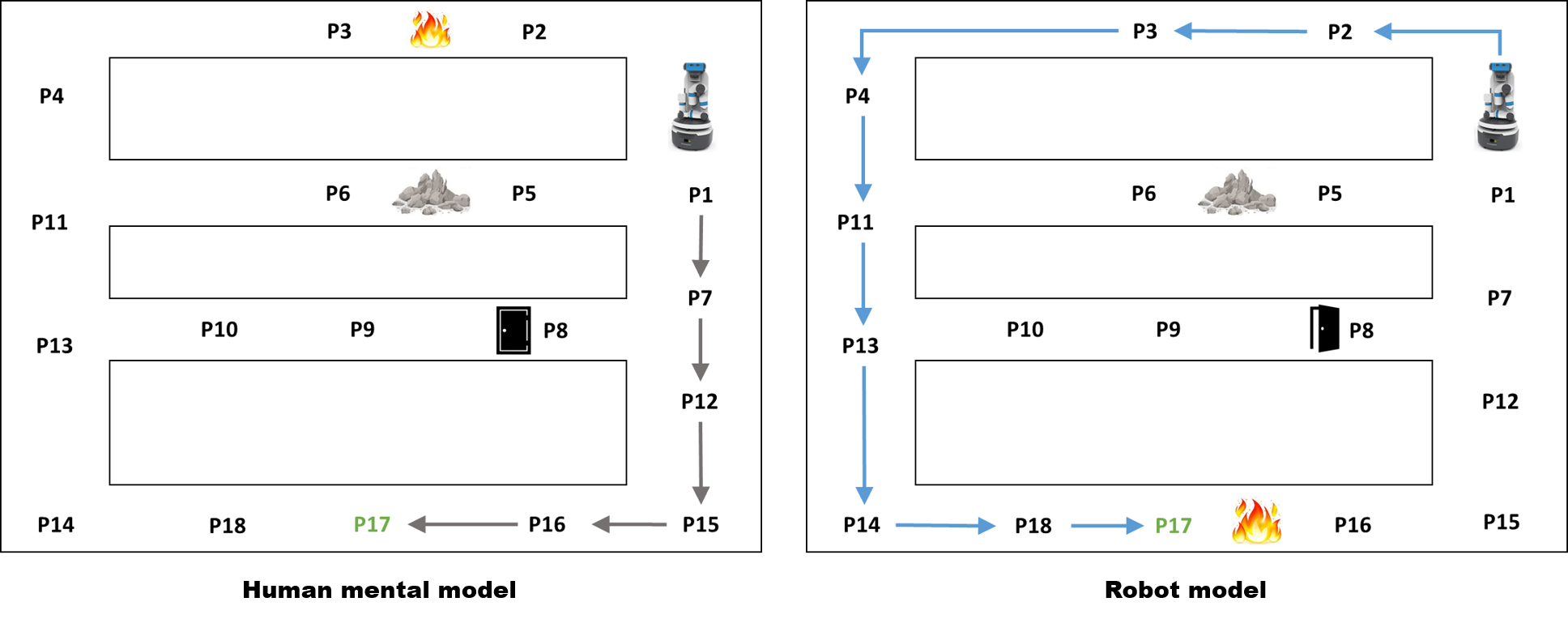}
\caption{
Illustration of the robot model and the corresponding mental model of the human. 
The robot starts at P1 and needs to go to P17. The human incorrectly believes that the path from P16 to P17 is clear and the one from P2 to P3 is blocked due to fire. Both agents know that there is movable rubble between P5 and P6 which can be cleared with a costly \texttt{clear\_passage} action. Finally, in the mental model, the door at P8 is locked while it is unlocked in the model for the robot which cannot open unlocked doors.}
\label{ch_bal:fig:1}
\end{figure*}

A typical Urban Search and Rescue (USAR) 
scenario consists of an autonomous robot deployed to a disaster scene with an external commander who is monitoring its activities. 
Both agents start with the same model of the world (i.e the map of the building before the disaster) but the models diverge over time since the robot, being internal to the scene, has access to updated information about the building.
This model divergence could lead to the commander incorrectly evaluating valid plans from the robot as sub-optimal or even unsafe. One way to satisfy the commander would be to communicate or explain changes to the model that led the robot to come up with those plans in the first place.


Figure \ref{ch_bal:fig:1} illustrates a scenario where the robot needs to travel from P1 to its goal at P17. 
The optimal plan expected by the commander is highlighted in grey in their map
and involves the robot moving through waypoint P7 and follow that corridor to go to P15 and then finally to P16. The robot knows that it should in fact be moving to P2 -- its optimal plan is highlighted in blue. 
This disagreement rises from the fact that the human incorrectly believes that the path from P16 to P17 is clear while that from P2 to P3 is blocked.

If the robot were to follow the explanation scheme established in Chapter \ref{ch05}, it would stick to its own plan and provide the following explanation:

\begin{lstlisting}[mathescape]
> remove$\textrm{-}$(clear p16 p17)$\textrm{-}$from$\textrm{-}$I
    (i.e. Path from P16 to P17 is blocked)
> add$\textrm{-}$(clear p2 p3)$\textrm{-}$to$\textrm{-}$I 
    (i.e. Path from P2 to P3 is clear)
\end{lstlisting}

If the robot were to stick to a purely explicable plan then it can choose to use the passage through P5 and P6 after performing a costly \texttt{clear\_passage} action (this plan is not optimal in either of the models).
\section{Balancing Explanation and Explicable Behavior Generation}
\index{Balanced Planning}%
\index{Balanced Plan}%
Throughout this chapter, our focus would be mainly on how one could {\em balance} selective behavior generation with communication in the context of explicability. In the case of explicable plan generation, we focus on generating behavior that aligns with observer expectations. 
\unsure{This process is inherently limited by what the robot can perform and one can not always guarantee that the robot can generate plans with high explicability score,  let alone perfectly explicable ones.} 
Moreover under this scheme, the robot is forced to perform suboptimal plans that may be quite far from the best course of action prescribed by the robot's model for the given task.

On the other hand, under explanation, the agent tries to identify the appropriate model information that would help convince the observer of the optimality of given robot plan, and thus help improve the explicability of the plan in the updated human model. Under this scheme, the robot is free to choose a plan that is optimal under its model but now needs to meet the overhead of communicating the explanation, which depending on the scenario could be quite substantial.

The goal of this section is to discuss how one could effectively combine the strengths of these two disparate methods. We will achieve this by folding in the how difficult it is to explain a plan into the plan generation process. Thereby allowing the robot to explicitly reason about the gains made by the plan in terms of explicability with the cost of the plan and the cost of communicating the explanations. We will call such plans as {\em Balanced Plans}.

In the case of explicability, balanced planning consists of optimizing over the following objectives
\begin{enumerate}
    \item \textbf{Plan Cost $C(\pi)$}: The first objective is the cost of the plan that the robot is going to follow.
    \item \textbf{Communication Cost $\mathcal{C}(\mathcal{E})$}: The next objective is the cost of communicating the explanation. Ideally this cost should reflect both the cost accrued at the robot's end (corresponding to the cost of actions that need to be performed to communicate the information) and a cost relating to the difficulty faced by the user to understand the explanation.
    \item \textbf{Penalty of Inexplicability ($C_{\mathcal{IE}}(\pi, \mathcal{M}^R_h + \mathcal{E})$)}: This corresponds to the cost related to the inexplicability experienced by the human for the current plan in their updated mental model. \unsure{We will assume this to be proportional to the inexplicability score of the plan after receiving the explanations $\mathcal{E}$}.\index{Penalty of Inexplicability}%
\end{enumerate}
While in the most general case generating a balanced plan would be a multi-objective optimization, for simplicity, we will assume that we have weight parameters that allow us to combine the individual costs into a single cost. Thus the cost of a balanced plan (which includes both an explanation and a plan), would be given as
\[C((\mathcal{E}, \pi)) = \alpha * C(\pi) + \beta * \mathcal{C}(\mathcal{E}) + \gamma * C_{\mathcal{IE}}(\pi, \mathcal{M}^R_h + \mathcal{E})\]

Thus balanced planning would ideally consist of selecting the plan that minimizes this cost.
In the USAR scenario considered in this chapter, assuming all the component weights are one and all action except \texttt{clear\_passage} is twice the cost, and approximating the inexplicability score $\mathcal{IE}$ to be just exponential of the cost difference. If we set the cost of communicating a single piece of model information to be high (say 50), then the optimal balanced plan would be the most explicable plan, namely the one involving the agent moving through the rubble filled path. Depending on the weights and the exact cost involved the same formulation could give rise to wildly different behaviors.
\unsure{We will refer to the plan that is optimal with respect to the above objective function as \textbf{optimal balanced plans}.\index{Balanced Planning!Optimal Balanced Plans}%
}
Though in some scenarios, we may want to focus on more constrained versions of the problem. In particular, two versions that would be interesting to consider would be

\begin{enumerate}
    \item \textbf{Perfectly Explicable Plans:} 
    \index{Balanced Planning!Perfectly Explicable Plans}%
The first special case is one where we restrict ourselves to cases where the plan is perfectly explicable, i.e., optimal, in the resultant model (i.e. after receiving the explanation). Therefore in our objective we can ignore the inexplicability penalty term and the optimization objective becomes.
    \[\textrm{min}~ \alpha * C(\pi) + \beta * \mathcal{C}(\mathcal{E})\]
    \[\textrm{subject to}~ C_{\mathcal{IE}}(\pi, \mathcal{M}^R_h + \mathcal{E}) = 0\]
    Note that while the plan may be optimal in the human's updated model, the plan need not be optimal for the robot. 
In the example considered, an example for perfectly explicable plan would be again the plan through the rubble, but now includes an explanation that the path from P16 to P17 is blocked due to fire.
    \item \textbf{Perfectly Explicable Optimal Plans:} 
    \index{Balanced Planning!Perfectly Explicable Optimal Plans}%
\unsure{In this case, we constrain ourselves to identifying not only plans and explanations that will ensure perfect explicability, but we also try to ensure that the plans are optimal in the robot model.
   Such methods can be particularly effective when the cost of communicating the explanation is much smaller than the cost of the plan itself. 
   } 
    Thus the objective in this case becomes just 
    \[\textrm{min}~ \mathcal{C}(\mathcal{E})\]
    \[\textrm{subject to}~ C_{\mathcal{IE}}(\pi, \mathcal{M}^R_h + \mathcal{E}) = 0\]
    \[\textrm{and}~ C(\pi, \mathcal{M}^R) \in C^*_{\mathcal{M}^R}\]
    Interestingly, this involves identifying the optimal plan in the robot's model with the cheapest MCE (in terms of the explanation cost). In the USAR scenario, an example for perfectly explicable optimal plan would be the plan through P4, but now you need to explain that the paths from P16 to P17 and from P2 to P3 are actually clear.
\end{enumerate}
In the following sections, we will see how one could generate these balanced plans.
\subsection{Generating Balanced Plans}
    \index{Balanced Planning!Model Space Search}%
    \index{Balanced Planning!Explanatory Actions}%
    \index{Balanced Planning!Epistemic Effects}%
    \index{Self Explaining Plans}%
Now to actually generate such plans, one could use modifed versions of the model space search discussed in chapter \ref{ch06}. Specifically if we are limiting ourselves to plans that are optimal in the human mental model, then for each possible model in the search space we could consider the space of possible optimal plans and then select the one that best meets our requirement (which would be executability in the robot model). 
Since this is an optimization problem one would have to keep repeating this step till the stopping criteria is met.
In this case, we need to keep repeating this till we find a model where there is at least one plan that is both optimal for the updated model and the original robot model. 
\unsure{Alternatively, one could also consider converting balancing into a single a planning problem.
One way to go about this would be by assuming that the robot has access to a set of communicative actions called {\em explanatory actions} through which it can deliver the required explanations to the human.} 
These actions can, in fact, be seen as actions with epistemic effects in as much as they are aimed towards modifying the human mental model (knowledge state). 
This means that a solution to a balanced planning problem can be seen as {\em self-explaining plans}, in the sense that some of the actions in the plan are aimed at helping people better understand the rest of it.

Inclusion of explanatory actions puts balanced planning squarely in the purview of epistemic planning, but the additional constraints enforced by the setting allow us to leverage relatively efficient methods to solve the problem at hand. These constraints include facts such as: all epistemic actions are public, modal depth is restricted to one, modal operators only applied to literals, for any literal the observer believes it to be true or false and the robot is fully aware of all of the observer beliefs. 

Model updates in the form of epistemic effects of communication actions also open up the possibility of other actions having epistemic {\em side effects} as well. The definition of balanced planning makes no claims as to how the model update information is delivered. It is quite possible that actions that the agent is performing to achieve the goal (henceforth referred to as task-level actions to differentiate it from primarily communication actions) itself could have epistemic side-effects. This is something people leverage to simplify communication in day to day lives -- e.g. one might avoid providing prior description of some skill they are about to use when they can simply demonstrate it. The compilation of balanced planning into a single planning problem allows us to capture such epistemic side effects. 
By the same token the compilation can also capture task level constraints that may be imposed on the communication actions.

\subsubsection{Compilation to Classical Planning}
\label{compilation}
    \index{Balanced Planning!Compilation}%

To support such self-explaining plans, we can adopt a formulation that is similar to ones popular in epistemic planning literature to compile it into classical planning formalisms. In our setting, each explanatory action can be viewed as an action with epistemic effects. One interesting distinction to make here is that the mental model now not only includes the human's belief about the task state but also their belief about the robot's model. This means that the planning model will need to separately keep track of (1) the current robot state, (2) the human's belief regarding the current state, (3) how actions would affect each of these (as humans may have differing expectations about the effects of each action) and (4) how those expectations change with explanations.

Given the human and robot model, $\mathcal{M}^R$ and $\mathcal{M}^R_h$, we will generate a new planning model $\mathcal{M}_{\Psi} = \langle F^{\Psi},A^{\Psi},I^{\Psi},G^{\Psi}, C_{\Psi} \rangle$ as follows $F^{\Psi} = F \cup F_{\mathcal{B}} \cup F_{\mu} \cup \{ \mathcal{G}, \mathcal{I}\} $, where $F_\mathcal{B}$ is a set of new fluents that will be used to capture the human's belief about the task state and $F_{\mu}$ is a set of meta fluents that we will use to capture the effects of explanatory actions and $\mathcal{G}$ and $\mathcal{I}$ are special goal and initial state propositions. 
We will use the notation $\mathcal{B}(p)$ to capture the human's belief about the fluent $p$. We are able to use a single fluent to capture the human belief for each (as opposed to introducing two new fluents $\mathcal{B}(p)$ and $\mathcal{B}(\neg p)$) as we are specifically dealing with a scenario where the human's belief about the robot model is fully known and human either believes each of the fluent to be true or false. 


$F_{\mu}$ will contain an element for every part of the human model that can be changed by the robot through explanations. A meta fluent corresponding to a literal $\phi$ from the precondition of an action $a$ takes the form of $\mu^{+}({\phi}^{\textrm{pre}(a)})$, where the superscript $+$ refers to the fact that the clause $\phi$ is part the precondition of the action $a$ in the robot model (for cases where the fluent represents an incorrect human belief we will be using the superscript ``$-$'').

For every action $a^R = \langle \textrm{pre}(a^R), \textrm{adds}
(a^R), \textrm{dels}(a^R)\rangle \in A^R$ and its human counterpart $a^R_h = \langle \textrm{pre}(a^R_h), \textrm{adds}(a^R_h), \textrm{dels}(a^R_h) \rangle \in A^R_h$, we define a new action 
\[a^{\Psi} = \langle \textrm{pre}(a^{\Psi}), \textrm{adds}(a^{\Psi}), \textrm{dels}(a^{\Psi})\rangle \in A^{\Psi} \in \mathcal{M}_{\Psi}\] 
whose precondition is given as:
{
\small
\begin{multline*}
\textrm{pre}(a^{\Psi}) = \textrm{pre}(a^R_h) \cup \{\mu^{+}({\phi}^{\textrm{pre}(a^R)}) \rightarrow \mathcal{B}(\phi) | \phi \in \textrm{pre}(a^R)\setminus \textrm{pre}(a^R_h)\} \\ \cup 
\{\mu^{-}({\phi}^{\textrm{pre}(a^R)}) \rightarrow \mathcal{B}(\phi) | \phi \in \textrm{pre}(a^R_h)\setminus \textrm{pre}(a^R)\}
\cup \{\mathcal{B}(\phi) | \phi \in \textrm{pre}(a^R_h)\cap \textrm{pre}(a^R)\}
\end{multline*}
}%

The important point to note here is that at any given state, an action in the augmented model is only applicable if the action is executable in robot model and the human believes the action to be executable. Unlike the executability of the action in the robot model (captured through unconditional preconditions) the human's beliefs about the action executability can be manipulated by turning the meta fluents on and off.
The effects of these actions can also be defined similarly by conditioning them on the relevant meta fluent.
In addition to these task level actions (represented by the set $A_{\tau}$), we can also define explanatory actions ($A_{\mu}$) that either add $\mu^+(*)$ fluents or delete $\mu^-(*)$. 
Special actions $a_{0}$ and $a_{\infty}$ that are responsible for setting all the initial state conditions true and checking the goal conditions are also added into the domain model. $a_{0}$ has a single precondition that checks for $\mathcal{I}$ and has the following effects:
\[
\textrm{adds}(a_0) = \{\top\rightarrow p\mid p \in I^R\} \cup \{\top\rightarrow \mathcal{B}(p)\mid p \in I^R_h\} \cup \{\top\rightarrow p \mid p \in F_{\mu^{-}} \}
\]
\[
\textrm{dels}(a_0) = \{\mathcal{I}\}
\]

\noindent where $F_{\mu^{-}}$ is the subset of $F_{\mu}$ that consists of all the fluents of the form $\mu^{-}(*)$. Similarly, the precondition of action $a_{\infty}$ is set using the original goal and adds the special goal proposition $\mathcal{G}$.
\begin{multline*}
\textrm{pre}^(a_{\infty}) = G^R \cup \{\mu^{+}({p}^{G}) \rightarrow \mathcal{B}(p) \mid p \in G^R\setminus G^R_h\} \cup \\
\{ \mu^{-}(p^G) \rightarrow \mathcal{B}(p) \mid p \in G^R_h\setminus G^R\} \cup
 \{\mathcal{B}(p) \mid G^R_h\cap G^R\}
\end{multline*}

Finally the new initial state and the goal specification becomes $I_{\mathcal{E}} = \{\mathcal{I}\}$ and $G_{\mathcal{E}} = \{\mathcal{G}\}$ respectively. To see how such a compilation would look in practice, consider an action $\texttt{(move\_from p1 p2)}$ that allows the robot to move from $\texttt{p1}$ to $\texttt{p2}$ only if the path is clear. The action is defined as follows in the robot model:
\begin{lstlisting}[mathescape]
(:action move_from_p1_p2
    :precondition (and (at_p1) (clear_p1_p2))
    :effect (and (not (at_p1)) (at_p2) ))
\end{lstlisting}

Let us assume the human is aware of this action but does not care about the status of the path (as they assume the robot can move through any debris filled path). In this case, the corresponding action in the augmented model and the relevant explanatory action will be: 
\begin{lstlisting}[mathescape]
(:action move_from_p1_p2
  :precondition (and (at_p1) ($\mathcal{B}$((at_p1))) (clear_p1_p2)
                     (implies ($\mu^{+}_{pre}$(move_from_p1_p2,
                                   (clear_p1_p2))) 
                              ($\mathcal{B}$((clear_p1_p2)))))
  :effect (and (not (at_p1)) (at_p2) 
               (not $\mathcal{B}$(at_p1)) $\mathcal{B}$(at_p2))))
       
(:action explain_$\mu^{+}_{pre}$_move_from_clear 
  :precondition (and)
  :effect (and $\mu^{+}_{pre}$(move_from_p1_p2, (clear_p1_p2))))
\end{lstlisting}

Finally $C_{\Psi}$ captures the cost of all explanatory and task level actions. For now, we will assume that the cost of task-level actions are set to the original action cost in either robot or human model and the explanatory action costs are set according to $C_E$. Later, we will discuss how we can adjust the explanatory action costs to generate desired behavior.

We will refer to an augmented model that contains an explanatory action for each possible model updates and has no actions with effects on both the human's mental model and the task level states as the {\em canonical augmented model}. 
Given an augmented model, let $\pi_{\mathcal{E}}$ be a plan that is valid for this model ($\pi_{\mathcal{E}}(I^{\Psi}) \subseteq G^{\Psi}$). 
From $\pi_{\mathcal{E}}$, we extract two types of information -- 
the model updates induced by the actions in the plan (represented as $\mathcal{E}(\pi_{\mathcal{E}})$) and the sequence of actions that have some effect on the task state (represented as $\mathbb{T}(\pi_{\mathcal{E}})$).
We refer to the output of $\mathbb{T}$ as the task level fragment of the original plan $\pi_{\mathcal{E}}$. 
$\mathcal{E}(\pi_{\mathcal{E}})$ may also contain effects from action in $\mathbb{T}(\pi_{\mathcal{E}})$.
\subsection{Stage of Interaction and Epistemic Side Effects}
    \index{Balanced Planning!Side Effects}%
One of the important parameters of the problem setting that we have yet to discuss is whether the explanation is meant for a plan that is proposed by the system (i.e the system presents a sequence of actions to the human) or are we explaining some plan that is being executed either in the real world or some simulation that the human observer has access to. Even though the above formulation can be directly used for both scenarios, we can use the fact that the human is observing the execution of the plans to simplify the explanatory behavior by leveraging the fact that many of these actions may have epistemic side effects. This allows us to not explain any of the effects of the actions that the human can observe (for those effects we can directly update the believed value of the corresponding state fluent and even the meta-fluent provided the human doesn't hypothesize a conditional effect).\footnote{This means that when the plan is being executed, the problem definition should include the observation model of the human (which we assume to be deterministic). To keep the formulation simple, we ignore this for now. Including this additional consideration is straightforward for deterministic sensor models.}
This is beyond the capability of any of the existing algorithms in this
space of the explicability-explanation dichotomy.

This consideration also allows for the incorporation of more complicated epistemic side-effects wherein the human may infer facts about the task that may not be directly tied to the effects of actions. 
Such effects may be specified by domain experts or generated using heuristics.
Once identified, adding them to the model is relatively straightforward as we can directly add the corresponding meta fluent into the effects of the relevant action. An example for a simple heuristic would be to assume that the firing of a conditional effect results in the human believing the condition to be true, provided the observer can't directly observe the fluents it was conditioned on. For example, if we assume that the robot had an action  $\texttt{(open\_door\_d1\_p3)}$ that had a conditional effect: 

 \begin{lstlisting}[mathescape]
(when (and (unlocked_d1)) (open_d1))
\end{lstlisting}
which says the door will open if it was unlocked.
Then in the compiled model, we can add a new effect to this conditional effect:

 \begin{lstlisting}[mathescape]
(when (and (unlocked_d1)) 
    (and $\mathcal{B}$(open_d1) $\mathcal{B}$(unlocked_d1)))
\end{lstlisting}
which basically says, that if the conditional effect executes the human will believe that the door was unlocked.
Even in this simple case, it may be useful to restrict the rule to cases where the effect is conditioned on previously unused fluents so the robot does not expect the observer to be capable of regressing over the entire plan.

\subsection{Optimizing for Explicability of the Plan}
    \index{Balanced Planning!Optimizing for Explicability}%
    \index{Balanced Planning!Optimality Delta}%
The balancing formulations discussed in this chapter require more than just generating plans that are valid in both robot and updated human model. They require us to either optimize for or ensure complete explicability of plans. For the {\em Perfectly Explicable Plans} and {\em Perfectly Explicable Optimal Plans} formulation, where plans need to be optimal in the human model we will need to enforce this as an additional goal test for the planner while for the most general formulation the inexplicability score could be added as additional penalty to the last action of the plan.

Though to allow for {\em Perfectly Explicable Optimal Plans}, we need to restrict the plans to only ones that are optimal in the robot model. We can generate such robot optimal plans by setting lower explanatory action costs. Before we formally state the bounds for explanatory costs, 
let us define the concept of {\em optimality delta} 
(denoted as $\Delta\pi_{\mathcal{M}}$) for a planning model, which captures the cost difference between the optimal plan and the second most optimal plan. More formally $\Delta\pi_{\mathcal{M}}$ can be specified as:
\begin{multline*}
\Delta\pi_{\mathcal{M}} = \textrm{min}\{ v \mid v \in \mathbb{R}~\wedge\\ \not \exists \pi_1,\pi_2((0 < (C(\pi_1) - C(\pi_2)) < v)\\ \wedge \pi_1(I_{\mathcal{M}}) \models_{\mathcal{M}} G_{\mathcal{M}}  \wedge \pi_2(I_{\mathcal{M}}) \in \Pi^{*}_{\mathcal{M}}\}
\end{multline*}
\begin{proposition}
\label{MCE_THEOR}
In a canonical augmented model $\mathcal{M}_{\Psi}$ for the human and robot model tuple $\Psi = \langle \mathcal{M}^R,  \mathcal{M}^R_h\rangle$, if the sum of costs of all explanatory actions is $\leq \Delta\pi_{\mathcal{M}^R}$ and if $\pi$ is the cheapest valid plan for $\mathcal{M}_{\Psi}$ such that $\mathbb{T}(\pi) \in \Pi^{*}_{\mathcal{M}_{\Psi} + \mathcal{E}(\pi)}$, then:

\begin{itemize}
\item[(1)]
$\mathbb{T}(\pi)$ is optimal for $\mathcal{M}^R$
\item[(2)]
$\mathcal{E}(\pi)$ is the MCE for $\mathbb{T}(\pi)$
\item[(3)]
There exists no plan $\hat{\pi} \in \Pi^*_R$ such that MCE for $\mathbb{T}(\hat{\pi})$ is cheaper than $\mathcal{E}(\pi)$, 
i.e. the search will find an the plan with the smallest MCE.  
\end{itemize}
\end{proposition}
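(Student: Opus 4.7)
The three claims are tightly coupled, and all hinge on a single budgetary observation: since the augmented model is canonical (task-level actions and explanatory actions have disjoint effects on the task and meta fluents), for any valid plan $\pi'$ in $\mathcal{M}_\Psi$ the cost decomposes cleanly as $C(\pi') = C(\mathbb{T}(\pi')) + C(\mathcal{E}(\pi'))$, and by hypothesis $C(\mathcal{E}(\pi')) \leq \sum_{a \in A_\mu} C(a) \leq \Delta\pi_{\mathcal{M}^R}$. I would first prove this decomposition lemma, then lift it to the three claims via a standard ``exchange argument'': given any competitor task-level plan $\hat{\pi}$ in $\mathcal{M}^R$, one can always splice it together with its MCE $\mathcal{E}^{MCE}(\hat{\pi})$ (or any smaller explanation witnessing optimality) to form a legal plan $\widetilde{\pi}$ of $\mathcal{M}_\Psi$ that satisfies the side constraint $\mathbb{T}(\widetilde{\pi}) \in \Pi^*_{\mathcal{M}^R_h + \mathcal{E}(\widetilde{\pi})}$. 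This will be the workhorse construction for all three parts.

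For part (1), I would argue by contradiction: suppose $\mathbb{T}(\pi) \notin \Pi^*_{\mathcal{M}^R}$. Then by definition of the optimality delta, $C(\mathbb{T}(\pi)) - C^*_{\mathcal{M}^R} \geq \Delta\pi_{\mathcal{M}^R}$. Take any $\pi^* \in \Pi^*_{\mathcal{M}^R}$ and its MCE $\mathcal{E}^*$; by the exchange construction the augmented plan $\widetilde{\pi}^*$ is legal and has cost $C(\pi^*) + C(\mathcal{E}^*) \leq C^*_{\mathcal{M}^R} + \Delta\pi_{\mathcal{M}^R} \leq C(\mathbb{T}(\pi)) \leq C(\pi)$, contradicting that $\pi$ is the cheapest valid plan under the side constraint (up to tie-breaking, which I would handle by noting that the chosen $\pi$ satisfies $\mathbb{T}(\pi) \in \Pi^*$ of the updated model and thus any tie must be resolved in favor of $\pi^*$ as well). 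For part (2), once we know $\mathbb{T}(\pi)$ is optimal in $\mathcal{M}^R$, the side constraint requires $\mathcal{E}(\pi)$ to be an explanation for $\mathbb{T}(\pi)$; if it were not minimal, replacing it by $\mathcal{E}^{MCE}(\mathbb{T}(\pi))$ would yield a strictly cheaper augmented plan satisfying all constraints, contradicting cheapness. For part (3), the same splicing argument: if some $\hat{\pi} \in \Pi^*_R$ admitted an MCE of strictly smaller cost than $\mathcal{E}(\pi)$, the spliced plan $\langle \hat{\pi}, \mathcal{E}^{MCE}(\hat{\pi})\rangle$ would have cost $C^*_{\mathcal{M}^R} + C(\mathcal{E}^{MCE}(\hat{\pi})) < C^*_{\mathcal{M}^R} + C(\mathcal{E}(\pi)) = C(\pi)$ (using (1) to equate $C(\mathbb{T}(\pi))$ with $C^*_{\mathcal{M}^R}$), again contradicting cheapness.

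The main obstacle I anticipate is the inequality bookkeeping in part (1): the budget bound is non-strict ($\leq \Delta\pi_{\mathcal{M}^R}$) while the desired conclusion needs a strict contradiction. I would address this in two ways. First, I would invoke the \emph{strict} gap built into $\Delta\pi_{\mathcal{M}^R}$ (it is the infimum of cost gaps strictly larger than zero), which gives strictness on the $C(\mathbb{T}(\pi)) - C^*_{\mathcal{M}^R}$ side whenever $\mathbb{T}(\pi)$ is genuinely suboptimal. Second, should ties remain, I would formalize the claim as ``there exists a minimizer whose task-level fragment is optimal in $\mathcal{M}^R$,'' consistent with the statement's intent of characterizing an optimal solution rather than uniquely identifying one. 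A secondary subtlety is showing that $\mathcal{E}^{MCE}(\hat{\pi})$ is always representable as a subset of the explanatory actions of the canonical augmented model; this follows because the canonical model, by construction, includes one explanatory action per model-parameter flip, so every MCE corresponds to an executable subplan in $A_\mu$ whose total cost is dominated by the full explanatory budget.
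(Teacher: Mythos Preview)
Your proposal is correct and follows essentially the same route as the paper: a contradiction argument for part (1) that splices an optimal robot plan $\hat{\pi}\in\Pi^*_{\mathcal{M}^R}$ with its MCE to produce a cheaper augmented plan satisfying the side constraint, and then reuses the same exchange construction for parts (2) and (3). Your treatment is in fact more careful than the paper's sketch, which simply asserts that the spliced plan ``must be lower'' in cost without spelling out the $\Delta\pi_{\mathcal{M}^R}$ bookkeeping or the tie-breaking issue you flag.
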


First off, we can see that there exists no valid plan $\pi'$ for the augmented model ($\mathcal{M}_{\Psi}$) with a cost lower than that of $\pi$ and where the task level fragment ($\mathbb{T}(\pi')$) is optimal for the human model. 
Let's assume $\mathbb{T}(\pi) \not\in \Pi^*_{\mathcal{R}}$ (i.e current plan's task-level fragment is not optimal in robot model) and let $\hat{\pi} \in \Pi^*_{\mathcal{R}}$. Now let's consider a plan $\hat{\pi}_{\mathcal{E}}$ for augmented model that corresponds to the plan $\hat{\pi}$, i.e,  $\mathcal{E}(\hat{\pi}_{\mathcal{E}})$ is the MCE for the plan $\hat{\pi}$ and $\mathbb{T}(\hat{\pi}_{\mathcal{E}}) = \hat{\pi}$. 
Then the given augmented plan $\hat{\pi}_{\mathcal{E}}$ is a valid solution for our augmented planning problem $\mathcal{M}_{\Psi}$ (since the $\hat{\pi}_{\mathcal{E}}$ consists of the MCE for $\hat{\pi}$, the plan must be valid and optimal in the human model), moreover the cost of $\hat{\pi}_{\mathcal{E}}$ must be lower than $\pi$. This contradicts our earlier assumption hence we can show that $\mathbb{T}(\pi)$ is in fact optimal for the robot model.

Using a similar approach we can also show that no cheaper explanation exists for $\pi_{\mathcal{E}}$ and there exists no other plan with a cheaper explanation.

Note that while it is hard to find the exact value of the optimality $\Delta\pi_{\mathcal{M}}$, it is guaranteed to be $\geq 1$ for domains with only unit cost actions or $\geq (C_2 - C_1)$, where $C_1$ is the cost of the cheapest action and $C_2$ is the cost of the second cheapest action, i.e. $\forall a (C_{\mathcal{M}}(a) < C_2 \rightarrow  C_{\mathcal{M}}(a) = C_1)$. Thus allowing us to easily scale the cost of the explanatory actions to meet this criteria.

\section{Balancing Communication and Behavior For other Measures}
\index{Communication for Legibility}%
\index{Communication for Predictability}%
\todo{
While the majority of the chapter is focused on balancing communication and behavior selection to improve explicability, it should be easy to see that such schemes should be extendable to other behavioral metrics. }

\todo{In the case of legibility, since the original framework itself is focused on communication, it is easy to replace specialized behavior with communication. In particular, for a formulation that dealt exclusively with goal uncertainty, you are effectively trading off a single message (namely the goal of the robot) with the additional effort that needs to be taken by the robot to communicate its eventual goal implicitly. 
Though one could easily consider the more general version of legibility where the uncertainty is over entire models or different model components. In such cases, one could foresee a combination of communication and selective behavior to effectively communicate the actual parameters. In this case, the communication takes a form quite similar to the explanation since we will be communicating model information. The planning objective for a setting that tries to maximize legibility within $k$ steps, would be}

\[C(\langle\mathcal{E}, \pi\rangle) = \alpha * C(\pi) + \beta * \mathcal{C}(\mathcal{E}) + \gamma * P(\theta| \mathbb{M}^R_h + \mathcal{E}, \hat{\pi}^k)\]

Where $\theta$ is the target model parameter being communicated (which can be the full model), $\mathbb{M}^R_h$ the hypothesis set maintained by the human observer, $\hat{\pi}^k$ is the $k$ step prefix of the plan $\pi$. Thus the formulation tries to optimize for the probability that the human would associate with the target model parameter. Similar to the explicability case, one could also formulate the more constrained cases.

Now moving onto predictability, one main difference would be the communication strategy to be used. Predictability, as originally conceived, was meant for cases where the human and robot shares the same mental model, but the human observer is unsure about the future behavior the robot will exhibit. So even if the human has incorrect beliefs or uncertainty about the robot model, the communication strategy here needs to include more than just model information. A useful strategy here could be for the robot to communicate potential constraints on the behavior that the robot could actually follow. A possibility here could be to communicate a partially ordered set of action or state facts, which basically conveys a commitment on the part of the robot that no matter the exact behavior the robot ends up choosing it would be one that involves either performing the actions or achieving facts in the order specified, thereby constraining the possible completions it can follow and thus allowing the rest of the behavior. 
\todo{Now using $\mathcal{E}$ again as the stand-in for the information being conveyed, $\mathcal{M}^R_h +\mathcal{E}$ as the updated human model and applying the constraints specified in the communication, then the objective for planning for balanced plans that maximize predictability in $k$ steps is given as}
\[C(\langle\mathcal{E}, \pi\rangle) = \alpha * C(\pi) + \beta * \mathcal{C}(\mathcal{E}) + \gamma * P(\pi| \mathbb{M}^R_h + \mathcal{E}, \hat{\pi}^k)\]
Where $P(\pi| \mathbb{M}^R_h + \mathcal{E}, \hat{\pi}^k)$ is the probability the human observer associates with the actual plan after observing the prefix $ P(\pi| \mathbb{M}^R_h + \mathcal{E}, \hat{\pi}^k)$. Similar to the earlier formulations, we can also look for constrained versions of the objective, that requires perfect predictability and optimality in robot model.

\todo{Moreover, in both these cases, we can adapt the compilation method discussed earlier to fit these new objectives. In particular, we just need to change the goal-test in the case of constrained version and or the additional penalty calculated from $\mathcal{IE}$ to the respective interpretability measures.}
\section{Bibliographic Remarks}
\index{Epistemic Planning}%
The first work to consider balanced planning was \cite{balancing}, that considers the generation of the {\em Perfectly Explicable Plans}. In particular, they consider a model space search based solution to identify the plan. Unfortunately, in the case of the model space search, the only way to guarantee the explanation generated as part of {\em Perfectly Explicable Plans} is minimal would be to iterate over all the optimal plans in the given model. Thus the method they ended up operationalizing was an approximate version, where they guaranteed that while the plan generated is perfectly explicable, the explanation may be larger than required. The paper also presents some initial user studies to validate the usefulness of such balanced plans.
The classical planning compilation for balanced planning was \cite{exact}. This was also the first work to connect epistemic planning to model reconciliation and the central compilation method was derived from an earlier work to compiling restricted forms of epistemic planning to classical planning \cite{muise2015planning}.
The paper also discusses how all three types of balanced plans (i.e  {\em OPtimal Balanced Plans},  {\em Perfectly Explicable Plans} and,  {\em Perfectly Explicable Optimal Plans}) can be generated.
In terms of balancing for other interpretability measures, while we are unaware of works that consider them in the general form, works like \cite{ppap} have looked at applying these ideas in more specific context. In particular, \cite{ppap} looks at using mixed reality based visualization to improve, explicability, predictability and legibility. The kind of scenarios they considered included a block stacking scenario where they considered use of mixed reality cues to highlight potential blocks that might be used in the plan and thereby improving the identification of the plan and eventual goal of the agent.

\chapter{Explaining in the Presence of Vocabulary Mismatch}
\label{ch07}
\unsure{All previous discussions on model-reconciliation explanations implicitly assume that the robot can  communicate information about the model to the user. This suggests that the human and the robot share a common vocabulary that can be used to describe the model. However, this cannot be guaranteed unless the robots are using models that are specified by an expert. Since many of the modern AI systems rely on learned models, they may use representational schemes that would be inscrutable to most users. So in this chapter, we will focus on the question of generating model reconciliation explanations in the absence of shared vocabulary. Specifically, we will see how one could identify relevant model information in terms of concepts specified by a user. We will describe how the agent could use classifiers learned over such concepts to identify fragments of symbolic models specified in terms of such concepts that are sufficient to provide explanations. We will also discuss how one could measure the probability of the generated model fragments being true (especially when the learned classifiers may be noisy) and also how to identify cases where the user specified concepts may be insufficient.}
\section{Representation of Robot Model}
\index{Vocabulary Asymmetry}%
\unsure{Note that the assumption that the agent can communicate model information in itself entails two separate parts, (a) the model follows a representation scheme that is intuitive or understandable to the end users and  (b) the model is actually represented using factors that makes sense to its end user. 
Model-reconciliation, does not technically require explanation to be carried out in the same terms as the representation scheme used by the agent. 
Instead we could always map the given robot model information into representation schemes that are easier for people to understand. 
In the previous chapters, we assumed that the robot model is already represented using STRIPS like description languages and in this chapter we will look at mapping the original model of the robot (regardless of its current representation scheme) into a STRIPS model.
Such representation schemes are not only expressive (any deterministic goal directed planning problem with finite states and actions can be captured as STRIPS planning problem), but they also have their roots in folk psychology \citep{miller2019explanation}. Thus model information represented in such representation schemes are relatively easier for people to make sense of. 
}  

A more pressing question is what should be the state factors or action labels that we should use to represent the model. The fact that the model is represented using STRIPS in and of itself wouldn't facilitate effective explanation, if the information about an action's precondition is represented using concepts alien to the end user. We will use the term {\em vocabulary used by the model} to refer to the state fluents  and the action names used by it. Thus access to a shared vocabulary is a prerequisite to facilitating model reconciliation explanations. In this chapter, we will look at cases where this is not a given. We will look at one possible way of collecting vocabulary concepts from the end users and discuss how we could use the collected vocabulary to generate the model or at least identify parts of the model relevant to the explanation. We will look at (a) how to handle cases where the vocabulary items we gathered are insufficient (b) how to ensure that models learned this way actually reflect true model and (c) how to handle cases where the mappings we learn from human concepts to task states may be noisy. \unsure{We will also see how this process of mapping the model into a secondary representation in user defined terms could also give an opportunity to approximate and simplify the model and by extension the explanation itself.}

In this chapter we will focus on acquiring human understandable state variables/factors and assume the action labels are given beforehand. We do this because if the human and the robot have a one to one correspondence between what constitutes an atomic action, it is usually easier to establish their labels/names. Acquiring action labels could become challenging if the agent and the human are viewing the action space at varying levels of abstraction, but we will ignore this possibility for now. 

\section{Setting}
\index{Blackbox Simulators@Blackbox Models}%
\index{Vocabulary Asymmetry! Human Specified Concepts}%
\index{Symbolic Approximations}%

\unsure{For the purposes of discussion, we will look at a slightly different scenario where the robot has access to a deterministic simulator of the task of the form $\mathcal{M}_{\textrm{sim}} = \langle S, A, T, \mathcal{C}\rangle$ (where this simulator effectively acts as the robot model), where $S$ represents the set of possible world states, $A$ the set of actions and $T$ the transition function that specifies the problem dynamics.} The state space $S$ here need not be defined over any state factors and may just be atomic states. The transition function is defined as $T: S \times A \rightarrow S \cup \{\bot\}$, where $\bot$ corresponds to an invalid absorber-state generated by the execution of an infeasible action.
Invalid state could be used to capture failure states that could occur when the agent violates hard constraints like safety constraints. 
Finally, $\mathcal{C}: A \rightarrow \mathbb{R}$ captures the cost function of executing an action in any state where the action is feasible.
We will overload the transition function $T$ to also work on action sequence, i.e., $T(s, \langle a_1,...,a_k\rangle) = T(...T(T(s,a_1),a_2),...,a_k)$.
We will assume the problem is still goal-directed in the sense that the robot needs to come up with the plan $\pi$,
that will drive the state of the world to a goal state. 
In general we will use the tuple $\Pi_{\textrm{sim}} = \langle I, \mathbb{G}, \mathcal{M}_{\textrm{sim}}\rangle$ to represent the decision making problem, where $I$ is the initial state and $\mathbb{G}$ the set of goal states. 
Similar to the earlier settings, a plan is optimal if it achieves the goal and there exists no cheaper plan that can achieve the goal (where $\mathcal{C}(I, \pi)$ is the total cost of executing the plan $\pi$).

\unsure{Now the challenge before us is to map this blackbox simulator model into a STRIPS model defined using vocabulary that makes sense to the human. 
In particular we want to first collect the set of propositional state fluents that the human uses to define the model $\mathcal{M}^R_h$. 
Before we discuss how to collect such state fluents information, let us take a quick look at the state fluents themselves. For one, how does one connect a model defined over atomic space to a set of propositional fluents? 
The basic convention we will follow here is that each proposition state fluent correspond to a fact about the task, that is either true or false in all states of the given simulator $\mathcal{M}_{sim}$.
For a given set of state fluents, each state in $\mathcal{M}_{sim}$ is mapped to the set of propositional fluents that are true in that state and each propositional fluent can be for our purposes by the subset of states $S$ where that fluent is true.
}

Now lets assume the human observer is ready to list all these fluents (captured by the set $\mathbb{C}$) they believe are relevant to the problem at hand. We can hardly expect the human to be able to list or point out all possible states corresponding to each factor. Instead we will employ a different method to learn the mapping between atomic states and fluents. In this chapter, we will assume that each factor can be approximated by a classifier. Now we can learn such a classifier for a fluent (or concept) in the set $\mathbb{C}$ by asking the user for a set of positive and negative examples for the concept. This means that the explanatory system should have some method of exposing the simulator states to the user. 
A common way to satisfy this requirement would be by having access to visual representations for the states. 
The simulator state itself doesn't need to be an image as long as we have a way to visualize it. The concept list can also be mined from qualitative descriptions of the domain and we can crowdsource the collection of example states for each concept.
Once we have learned such a set of classifiers, we can then use these factors to learn a model of the task and start using it for the explanations.

\subsection{Local Approximation of Planning Model}
\label{ch07:local_approx}
\index{Local Model Approximation}%

In most of the previous discussions in the book the implicit assumption was that the explanation was given with respect to a representation of the entire robot model. This could mean the model being used for explanation is quite complex and large even after the application of simple abstraction methods. \unsure{A strategy quite popular in explaining classification decisions is to explain a given decision with respect to an approximation of the original model that only captures the behavior of the model for a subset of the input space. We can rely on a similar strategy for explanation in sequential decision-making scenarios. Namely, we can focus on a symbolic representation of the task that aims to capture the dynamics of the underlying domain model only for a subset of states.}

More formally, consider a STRIPS model defined over a set of concepts $\mathbb{C}$, $\mathcal{M}^{\mathbb{C}} = \langle \mathbb{C}, A^{\mathbb{C}}, \mathbb{C}(I), \mathbb{C}(\mathbb{G}), \mathcal{C}^{\mathbb{C}}_{\mathcal{S}}\rangle$, where $\mathbb{C}(\mathbb{G}) = \bigcap_{s_g \in \mathbb{G}} \mathbb{C}(s_g)$. We will call this model to be a local approximation of the simulation $\mathcal{M}_{\textrm{sim}} = \langle S, T,A, \mathcal{C}\rangle$ for regions of interest $\hat{S} \subseteq S$, if $\forall s \in \hat{S}$ and $\forall a \in A$, we have an equivalent action $a^\mathbb{C} \in A^\mathbb{C}_{\mathcal{S}}$, such that $a^\mathbb{C}(\mathbb{C}(s)) = \mathbb{C}(T(s,a))$ (assuming $\mathbb{C}(\bot) = \bot$) and  $\mathcal{C}^{\mathbb{C}}_{\mathcal{S}}(a) = \mathcal{C}(a)$.  

Now one of the important aspects of this formulation is the subset of states over which we will define the approximation. A popular strategy used in machine learning approaches is to select a set of input points close to the data point being explained, where closeness of two data points is defined via some pre-specified distance function. This strategy can also be adapted to sequential decision-making settings. In this case, the data point would correspond to the plan (and in some cases also the foils) and the state space of interest can be limited to states close to the ones that appear on the plan or foil execution trace. The distance function can be defined over some representation of the underlying state or use reachability measures to define closeness (i.e. whether the states can be reached within some predefined number of steps). 
\subsection{Montezuma's Revenge}
\index{Montezuma's Revenge}%
\index{Atari}%
\unsure{For the purposes of the discussion} 
we will consider the game Montezuma's revenge as the example scenario. The game was first launched in 1984 for Atari gaming systems, and featured a character that needs to navigate various levels filled with traps and enemies while collecting treasures. The game has recently received a lot of attention as a benchmark for RL algorithms and is a domain that requires long term planning to solve. 
\unsure{The simulator here consists of the game engine, which stores the current state and predicts what happens when a specific action is executed}.
Two popular state representation schemes for the game that is used by RL algorithms either involves the use of images of the game state or the RAM state of the game controller corresponding to the game state. Neither of which is particularly easy for a naive user to understand.
\unsure{The human observer could be a user of this game playing system, and as such we will sometimes refer to the human as the user in the text.}
\begin{figure}
\centering
\includegraphics[scale=0.5]{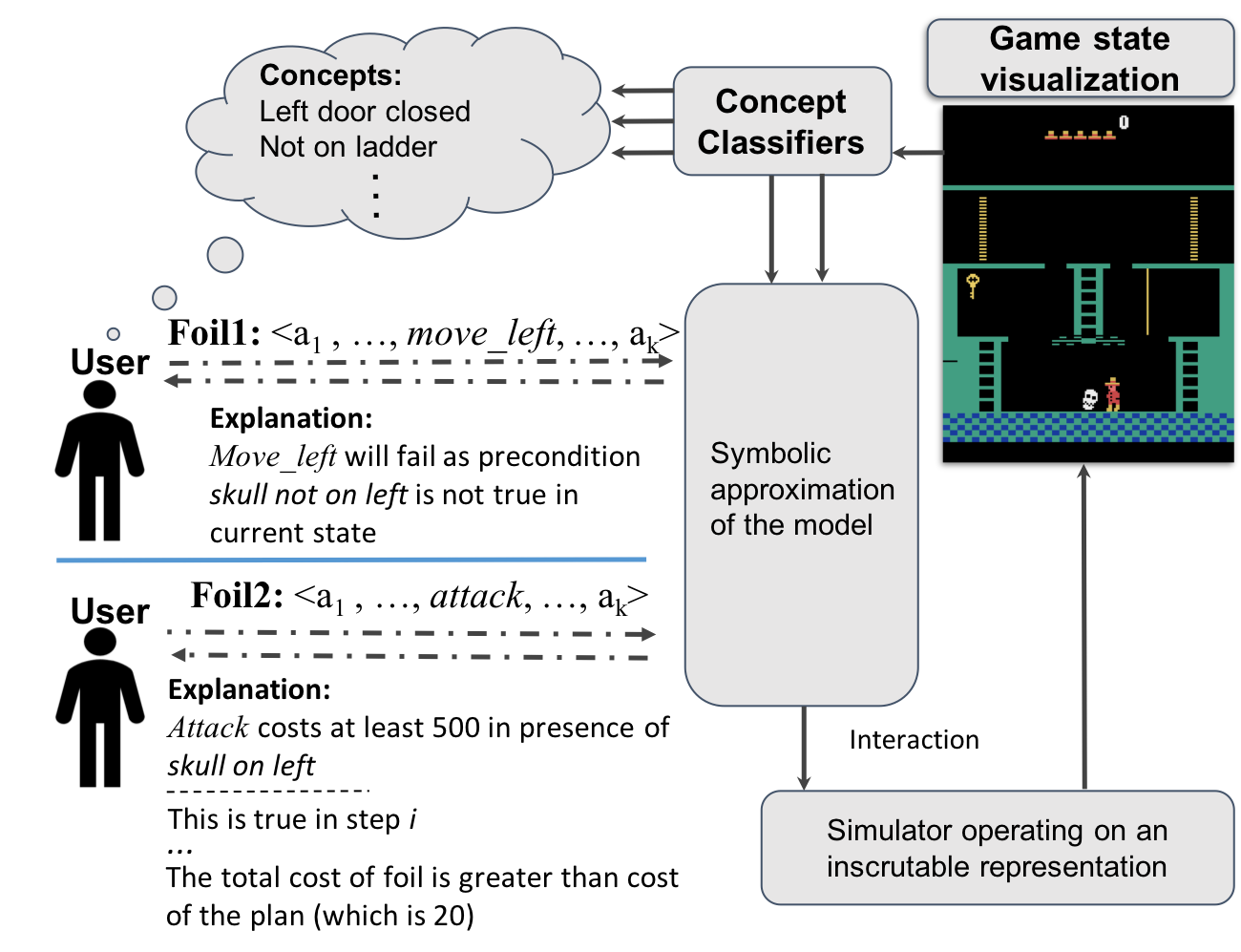}
\caption{An overview of the overall system described in the chapter. The screenshot is from a  game called Montezuma's revenge. In this particular instance, the agent is tasked with making its way to the key on the left side of the screen while avoiding the skull on the ground.}
\label{ch07:picpic}
\end{figure}
In this chapter we will mostly focus on the first level of the game that is pictured in \ref{ch07:picpic}. 
In particular, we will focus on the task of getting the agent from the point visualized in the image to the key it needs to pick up. A successful plan for this task would involve the agent making its way to the lowest platform and them moving towards the skull, jumping over it once it close enough and then climbing up the ladder before going for the key. Now if we were to come up with a set of concepts to describe the agent, many of them would involve the actual position of the agent (like if the agent is on the leftmost ladder or if the agent is on the topmost platform etc.), whether the agent has possession of the key, where the agent is with respect to the skull (especially if the agent is right next to it) etc. One can train a classifier that takes either the image or the RAM state of a specific game state and determines whether the concept is absent or present in that state. Now the goal of many of the methods discussed in this chapter would be to build models that can approximate the dynamics of the game in this level using such concepts. 

\section{Acquiring Interpretable Models}
\index{Learning Symbolic Approximations}%
Now to provide explanations in this setting, the first approach would be to try constructing the model $\mathcal{M}^{\mathbb{C}}$, particularly the action definition as the initial states and goal specification are easier to construct. For now we will assume that we have a perfect classifier for each of the concept listed in the list $\mathbb{C}$. This means that for any state sampled from the set we know the exact list of concepts present in that state. Now we will use the simulator to observe how each action transforms the state and convert it into symbolic representation through the classifiers.

More specifically, we will sample a set of states from the simulator state space (where the samples are limited to ones that meet the desired distance requirement discussed in previous section). For each state sampled, we test whether the actions are executable (i.e it doesn't lead to an invalid state), and for each action we can update the definition of the action $a$ as follows (assuming the sample state is $s_i$ and the state resulting from executing $a$ is denoted as $a(s_i)$)
\begin{equation*}
\begin{split}
    \textrm{pre}(a^c) = \textrm{pre}(a^c) \cap \mathbb{C}(s_i)\\
    \textrm{adds}(a^c) =  \mathbb{C}(a(s_i)) \setminus \mathbb{C}(s_i)\\
    \textrm{dels}(a^c) =  \mathbb{C}(s_i) \setminus \mathbb{C}(a(s_i))\\    
\end{split}
\end{equation*}
Where the original estimate of the action preconditions ($\textrm{pre}(a^c)$) is set to be equal to $\mathbb{C}$. Note that learning using this method makes the specific assumption that each action in the exact local symbolic approximation only contains preconditions that are conjunctions of positive concepts and are free of conditional effects. If the model in fact does not meet these assumptions, we may need to rely on more complex model learning techniques. \footnote{Though if the only assumption not met is the form of the precondition then it can be mapped to a case of missing concepts that could be handled by assuming the exact approximation model meets the same representational assumptions but is defined over a larger set of concepts.}

\unsure{Once such a model is learned we could then leverage many of the explanatory methods discussed in the book to generate explanations (especially the ones making simplifying assumptions about the human model), though one obvious issue that could occur is that the original concept list provided to the explanatory system may be incomplete. That is the fluent list of the exact local approximation (we will represent this as $\mathcal{M}^*$) may be a superset of $\mathbb{C}$. Let the learned model be $\hat{\mathcal{M}}^{\mathbb{C}}$, and for now let us assume that this model is the fix-point of the learning process described above. Then it is easy to see that if the assumptions about $\mathcal{M}^*$ hold (i.e., it is conditional effect free and has only conjunction of positive literals as precondition) then $\hat{\mathcal{M}}^{\mathbb{C}}$, must be a complete abstraction of $\mathcal{M}^*$, in so far as any plan viable in $\mathcal{M}^*$ must also be valid in $\hat{\mathcal{M}}^{\mathbb{C}}$. This is because the above learning method will ensure that the learned preconditions and effects are a subset of the original action precondition and effects.} \unsure{As we have already seen in Chapter \ref{ch06} (Section \ref{model_abs}), this means that even though this is not the exact model, this model suffices to address many specific user queries}. Whenever the user raises a foil that cannot be refuted through $\hat{\mathcal{M}}^{\mathbb{C}}$, then the system can use that as a signal that the current vocabulary is incomplete. 

\section{Query Specific Model Acquisition}
\index{Query Specific Model Acquisition}%
\index{Contrastive Query}%
\index{Foils}%
The previous section talked about how we can acquire a representation of the entire model that can then be used in the standard model reconciliation process. However, this could be a pretty expensive process, especially if the action space is quite large. In cases where the agents are raising specific explanatory queries we may not require the entire model, but rather only learn the parts of the model necessary to answer the specific user query. To illustrate this process, let us take a look at the simplest contrastive explanation setting. Here the human explanatory query consists of asking why the system chose to follow the current plan instead of a set of alternative plans. Specifically in this case, for the  decision-making problem specified by the tuple $\Pi_{sim} = \langle I, \mathbb{G}, \mathcal{M}_{\textrm{sim}}\rangle$ the system identifies a plan $\pi$. 
When presented with the plan, the user of the system responds by raising an alternative plan $\pi_f$ ({\em the foil}) that they believe should be followed instead. 
Now the system would need to explain why the plan $\pi$ is preferred over the foil $\pi_{f}$ in question. The only two possibilities here are that either the foil is inexecutable and hence cannot be followed or it is costlier than the plan in question.

\begin{definition}
The plan $\pi$ is said to be preferred over a foil $\pi_f$ for a problem $\Pi_{sim} = \langle I, \mathbb{G}, \mathcal{M}_{\textrm{sim}}\rangle$, if either of the following conditions are met, i.e.,
\begin{enumerate}
    \item $\pi_f$ {\em is inexecutable}, which means, either (a) $T(I, \pi_f) \not \in \mathbb{G}$, i.e the action sequence doesn't lead to a possible goal state, or (b) the execution of the plan leads to an invalid state, i.e., $T(I, \pi_f) = \bot$.
    \item Or $\pi_f$ is {\em costlier} than $\pi$, i.e., $\mathcal{C}(I, \pi) < \mathcal{C}(I, \pi_f)$
\end{enumerate}
\end{definition}

So going back to our montezuma example. Let's assume the plan involves the agent starting from the highest platform, and the goal is to get to the key. The specified plan $\pi$ may require the agent to make its way to the lowest level, jump over the skull, and then go to the key with a total cost of 20. Let us consider a case where the user raises two possible foils that are quite similar to $\pi$, but, (a) in the first foil, instead of jumping the agent just moves left (as in it tries to move through the skull) and (b) in the second, instead of jumping over the skull, the agent performs the \texttt{attack} action (not part of the original game, but added here for illustrative purposes) and then moves on to the key. Now using the simulator, the system could tell that in the first case, moving left would lead to an invalid state and in the second case, the foil is more expensive. \unsure{It may however struggle to explain to the user what particular aspects of the state or state sequence lead to the invalidity or suboptimality as it cannot directly expose relevant model information. Instead, as in earlier parts it would need to map it to a specific symbolic model and expose information about that model. However in this case, rather than first learning a full symbolic model and then identifying relevant model components to provide, we can directly try to learn the model parts.}
To capture scenarios like the one mentioned above, we will allow for a class of richer symbolic, namely one that allows for cost functions that are conditioned on the state in addition to the action. This means the cost function will take the form $\mathcal{C}_{\mathcal{S}}: 2^{F} \times A_{\mathcal{S}} \rightarrow \mathbb{R}$ to capture the cost of valid action executions in a specific state. Internally, such state models may be represented using conditional cost models. In this discussion, we won't try to reconstruct the exact cost function but will rather try to estimate an abstract version of the cost function.\index{State-dependent Cost Functions}%

\unsure{As a quick aside, most of the discussion in this section focuses on generating explanation to refute the alternate plan and not really on explaining why the current plan works or has the cost assigned to it. Since the human can observe the robot and we had previously assumed that there exists mechanism to visualize the simulator states, in theory the robot could just demonstrate the outcome of executing the plan. One could also adapt the methods we discuss for refuting the foil to answer more specific question the human user might have about the robot, for example by learning an abstract cost function for the actions in the plan or by identifying whether a concept the human had in mind is in fact a precondition or not.}

\subsection{Explanation Generation:}

For establishing the invalidity of $\pi_f$, we just need to focus on explaining the failure of the first failing action $a_i$, i.e., the last action in the shortest prefix that would lead to an invalid state (which in our running example is the move-left action in the state presented in Figure \ref{ch07:picpic} for the first foil). We can do so by informing the user that the failing action has an unmet precondition, as per the symbolic model, in the state it was executed in.
Formally
\begin{definition}
\label{exp-inv}
For a failing action $a_i$ for the foil $\pi_f = \langle a_1, ..,a_i,.., a_n\rangle$, $c_i \in \mathbb{C}$ is considered an explanation for failure if $c_i \in pre(a_i)\setminus \mathbb{C}(s_i)$, where $s_i$ is the state where $a_i$ is meant to be executed (i.e $s_i = T(I, \langle a_1,..,a_{i-1}\rangle)$).
\end{definition}

In our example for the invalid foil, a possible explanation 
would be to inform the user that move-left can only be executed in states for which the concept  \texttt{skull-not-on-left} is true; and the concept is false in the given state. 
This formulation is enough to capture both conditions for foil inexecutability by appending an additional goal action at the end of each sequence. The goal action causes the state to transition to an end state and it fails for all states except the ones in $\mathbb{G}$. Our approach to identifying the minimal information needed to explain specific query follows from studies in social sciences that have shown that selectivity or minimality is an essential property of effective explanations.

For explaining the suboptimality of the foil, we have to inform the user about $\mathcal{C}_{\mathcal{S}}^{\mathbb{C}}$.
To ensure minimality of explanations, rather than generating the entire cost function or even trying to figure out individual conditional components of the function, we will instead try to learn an abstraction of the cost function $\mathcal{C}_s^{abs}$, defined as follows

\begin{definition}
For the symbolic model $\mathcal{M}^{\mathbb{C}}_{\mathcal{S}} = \langle \mathbb{C}, A^\mathbb{C}_{\mathcal{S}}, \mathbb{C}(I), \mathbb{C}(\mathbb{G}), \mathcal{C}^{\mathbb{C}}_{\mathcal{S}}\rangle$, an abstract cost function $\mathcal{C}_{\mathcal{S}}^{abs}: 2^{\mathbb{C}} \times A^\mathbb{C}_{\mathcal{S}} \rightarrow \mathbb{R}$ is specified as follows
{\small $\mathcal{C}_{\mathcal{S}}^{abs}(\{c_1,..,c_k\}, a) = min\{ \mathcal{C}^{\mathbb{C}}_{\mathcal{S}}(s, a)| s \in S_{\mathcal{M}^{\mathbb{C}}_{\mathcal{S}}} \wedge \{c_1,..,c_k\} \subseteq s\}$]}.
\end{definition}

Intuitively,  {\small $\mathcal{C}_{\mathcal{S}}^{abs}(\{c_1,..,c_k\}, a) = k$} can be understood as stating that {\em executing the action $a$, in the presence of concepts $\{c_1,..,c_k\}$ costs at least k}. 
We can use $\mathcal{C}_{\mathcal{S}}^{abs}$ in an explanation of the form

\begin{definition}
For a valid foil $\pi_f = \langle a_1,..,a_k\rangle$, a plan $\pi$ and a problem $\Pi_{sim} = \langle I, \mathbb{G}, \mathcal{M}_{sim} \rangle$, the sequence of concept sets of the form $\mathbb{C}_{\pi_f} = \langle \hat{\mathbb{C}}_1, ...,\hat{\mathbb{C}}_k\rangle$ along with $\mathcal{C}_s^{abs}$ is considered a valid explanation for relative suboptimality of the foil (denoted as $\mathcal{C}_{\mathcal{S}}^{abs}(\mathbb{C}_{\pi_f}, \pi_f) > \mathbb{C}(I, \pi)$), if $\forall \hat{\mathbb{C}}_i \in \mathbb{C}_{\pi_f}$, $ \hat{\mathbb{C}}_i$ is a subset of concepts presents in the corresponding state (where state is $I$ for $i=1$ and $T(I, \langle a_1,...,a_{i-1}\rangle$) for $i > 1$).
and {\small $\Sigma_{i=\{1..k\}} \mathcal{C}_{\mathcal{S}}^{abs}(\hat{\mathbb{C}}_i, a_i) > \mathcal{C}(I, \pi)$}
\end{definition}

In the earlier example, the explanation would include the fact that executing the action \texttt{attack} in the presence of the concept \texttt{skull-on-left}, will cost at least 500 (as opposed to original plan cost of 20).
\subsection{Identifying Explanations through Sample-Based Trials}
\label{ident}
For identifying the model parts for explanatory query, we can rely on the agent's ability to interact with the simulator to build estimates. Given the fact that we can separate the two cases at the simulator level, we will keep the discussion of identifying each explanation type separate and only focus on identifying the model parts once we know the failure type.

\subsubsection{Identifying failing precondition:}
To identify the missing preconditions, we can rely on the simple intuition that while successful execution of an action $a$ in the state $s_j$ with a concept $C_i$ doesn't necessarily establish that $C_i$ is a precondition, we can guarantee that any concept false in that state cannot be a precondition of that action. 
This is a common line of reasoning exploited by many of the model learning methods.
So we start with the set of concepts that are absent in the the state ($s_{\textrm{fail}}$) where the failing action ($a_{\textrm{fail}}$) was executed, i.e., poss\_pre\_set = $\mathbb{C} \setminus \mathbb{C}(s_{\textrm{fail}})$. We then randomly sample for states where $a_{\textrm{fail}}$ is executable. Each new sampled state $s_i$ where the action is executable can then be used to update the possible precondition set as $\textrm{poss\_pre\_set = poss\_pre\_set}~\cap~ \mathbb{C}(s_i)$.
That is, if a state is identified where the action is executable but a concept is absent then it can't be part of the precondition. We will keep repeating this sampling step until the sampling budget is exhausted or if one of the following exit conditions is met.
(a) In cases where we are guaranteed that the concept list is exhaustive, we can quit as soon as the set of possibilities reduce to one (since there has to be a missing precondition at the failure state). (b) The search results in an empty list. The list of concepts left at the end of exhausting the sampling budget represents the most likely candidates for preconditions. {\em An empty list here signifies the fact that whatever concept is required to differentiate the failure state from the executable one is not present in the initial concept list} $\mathbb{C}$. This can be taken as evidence to query the user for more task-related concepts.

\paragraph{Identifying cost function:} 
\index{Learning Abstract Cost Function}%
Now we will employ a similar sampling based method to identify the right cost function abstraction. Unlike the precondition failure case, there is no single action we can choose but rather we need to choose a level of abstraction for each action in the foil (though it may be possible in many cases to explain the suboptimality of foil by only referrring to a subset of actions in the foil). Our approach here would be to find the most abstract representation of the cost function at each step such that of the total cost of the foil becomes greater than that of the specified plan. Thus for a foil $\pi_f = \langle a_1,...,a_k\rangle$ our objective become
{\small \begin{equation*}
   \begin{split}
       \textrm{min}_{\hat{\mathbb{C}}_1, ...,\hat{\mathbb{C}}_k} \Sigma_{i=1..k} \|\hat{\mathbb{C}}_i\|~ 
       \textrm{subject to}
       ~\mathcal{C}_s^{abs}(\mathbb{C}_{\pi_f}, \pi_f) > \mathbb{C}(I, \pi)\\
   \end{split} 
\end{equation*}}
For any given $\hat{\mathbb{C}}_i$, $\mathcal{C}_s^{abs}(\hat{\mathbb{C}}_i, a_i)$ can be approximated by sampling states randomly and finding the minimum cost of executing the action $a_i$ in states containing the concepts $\hat{\mathbb{C}}_i$. 
We can again rely on a sampling budget to decide how many samples to check and enforce required locality within sampler.

\section{Explanation Confidence}
\index{Explanation!Confidence Value}%
The methods discussed above (both for learning full model and model component) are guaranteed to identify the exact model in the limit, the accuracy of the methods is still limited by practical sampling budgets we can employ. 
So this means it is important that we are able to establish some level of confidence in the solutions identified. 
In case of learning full model, there are some loose PAC learning guarantees we could employ, but for learning model components we will strive for a more accurate measure.
To assess confidence, we will follow the probabilistic relationship between the random variables as captured by Figure \ref{graph-model} (A) for precondition identification and Figure \ref{graph-model} (B) for cost calculation. Where the various random variables captures the following facts:
$O_{a}^{s}$ - indicates that action $a$ can be executed in state $s$, $c_i \in p_{a}$ - concept $c_i$ is a precondition of $a$, $O_{c_i}^{s}$ - the concept $c_i$ is present in state $s$, $\mathcal{C}_s^{abs}(\{c_i\}, a) \geq k$ - the abstract cost function is guaranteed to be higher than or equal to k and finally  $O_{\mathcal{C}(s, a) \geq k}$ - stands for the fact that the action execution in the state resulted in cost higher than or equal to $k$.
We will allow for inference over these models, by relying on the following simplifying assumptions -  (1) the distribution of all non-precondition concepts in states where the action is executable is the same as their overall distribution across the problem states (which can be empirically estimated), (3) cost distribution of an action over states corresponding to a concept that does not affect the cost function is identical to the overall distribution of cost for the action (which can again be empirically estimated). 

The first assumption implies that you are as likely to see a non-precondition concept in a sampled state where the action is executable as the concept was likely to appear at any sampled state with the same set of concepts (this distribution is denoted as $P(O_{c_i}^{s}|O_{\mathbb{C}\setminus c_i^{s}})$, where $O_{\mathbb{C}\setminus c_i^{s}}$ represents the other observed concepts). While the second one implies that for a concept that has no bearing on the cost function for an action, the likelihood that executing the action in a state where the concept is present will result in a cost greater than $k$ will be the same as that of the action execution resulting in cost greater than $k$ for a randomly sampled state ({\small $p_{\mathcal{C}(.,a) \geq k}$}). 

For a single sample, the posterior probability of explanations for each case can be expressed as follows: for precondition estimation, updated posterior probability for a positive observation can be computed as  
{\small $P(c_i \in p_{a} | O_{c_i}^{s} \wedge O_{\mathbb{C}\setminus c_i^{s}}\wedge O_{a}^{s}) = (1 - P(c_i \not \in p_{a} | O_{c_i}^{s} \wedge O_{\mathbb{C}\setminus c_i^{s}}\wedge  O_{a}^{s}))$}, where \\
{\small\begin{align*}
 & P(c_i \not \in p_{a} | O_{c_i}^{s} \wedge O_{a}^{s}) = \frac{P(O_{c_i}^{s}|O_{\mathbb{C}\setminus c_i^{s}}) \times P(c_i \not \in p_{a})}{P (O_{c_i}^{s}| O_{a}^{s})} 
 \end{align*}}
 and for the case of cost function approximation
 {\small\begin{align*}
 & P(\mathcal{C}_s^{abs}(\{c_i\}, a) \geq k | O_{c_i}^{s} \wedge O_{\mathbb{C}\setminus c_i^{s}}\wedge  O_{\mathcal{C}(s, a) \geq k}) =\\ &\frac{P(\mathcal{C}_s^{abs}(\{c_i\}, a) \geq k)}{P(\mathcal{C}_s^{abs}(\{c_i\}, a) \geq k)) + p_{\mathcal{C}(.,a) \geq k} * P(\neg \mathcal{C}_s^{abs}(\{c_i\}, a) \geq k))} 
 \end{align*}}
 The distribution used in the cost explanation, can either be limited to distribution over states where action $a_i$ is executable or allow for the cost of executing an action in a state where it is not executable to be infinite. The full derivation of these formulas can be found in the paper \cite{blackbox}.

\section{Handling Uncertainty in Concept Mapping}
\index{Vocabulary Asymmetry! Noisy Concepts}%
\index{Noisy Concepts}%

\begin{figure*}
\centering
\includegraphics[scale=0.4]{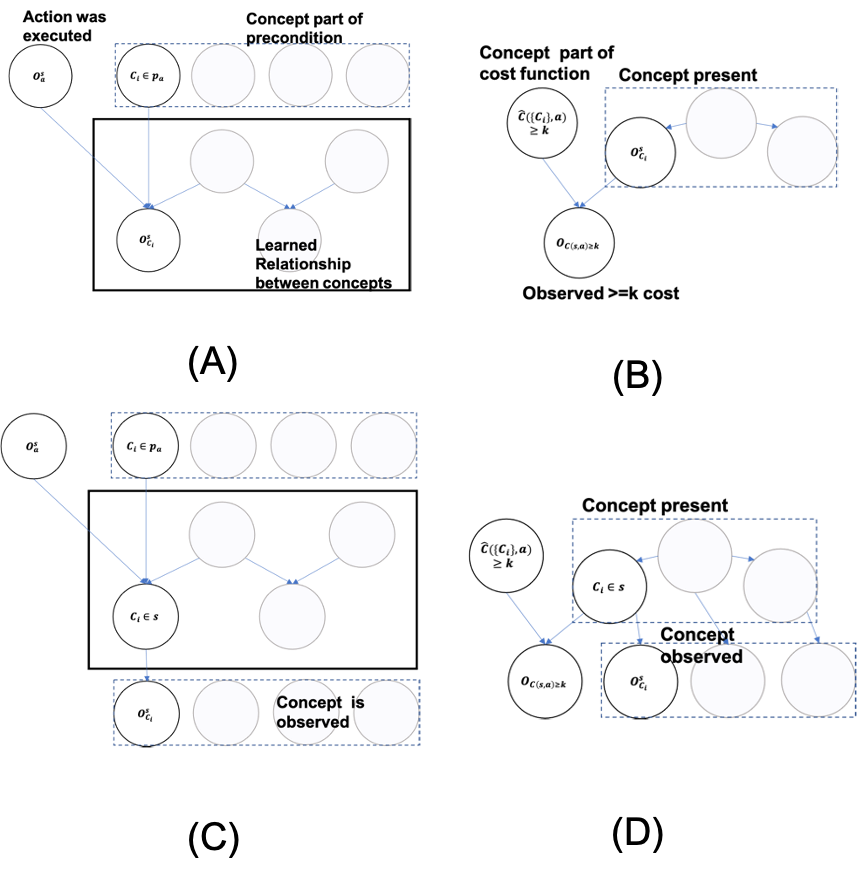}
\caption{\small{A simplified probabilistic graphical models for explanation inference, Subfigure (A) and (B) assumes classifiers to be completely correct, while (C) and (D) presents cases with noisy classifier.}}
\label{graph-model}
\end{figure*}
Given how unlikely it is to have access to a perfect classifier for any concept,
a more practical assumption to adopt could be that we have access to a noisy classifier. However, we assume that we also have access to a probabilistic model for its prediction. 
That is, we have access to a function $P_{\mathbb{C}}: \mathbb{C}\rightarrow [0,1]$ that gives the probability that the concept predicted by the classifier is actually associated with the state. Such probability functions could be learned from the test set used for learning the classifier.
Allowing for the possibility of noisy observation generally has a more significant impact on the precondition calculation than the cost function approximation. 
Since we are relying on just generating a lower bound for the cost function, we can be on the safer side by under-approximating the cost observations received (though this could lead to larger than required explanation). In the case of precondition estimation, we can no longer use a single failure (execution of an action in a state where the concept is absent) as evidence for discarding the concept. Though we can still use it as an evidence to update the probability of the given concept being a precondition. 
We can remove a particular possible precondition from consideration once the probability of it not being a precondition crosses a specified threshold.

To see how we can incorporate these probabilistic observations into our confidence calculation, consider the updated relationships presented in Figure \ref{graph-model} (C) and (D) for precondition and cost function approximation. Note that in previous sections, we made no distinction between the concept being part of the state and actually observing the concept. 
Now we will differentiate between the classifier saying that a concept is present ($O_{c_i}^{s}$) from the fact that the concept is part of the state ($c_i \in \mathbb{C}(S)$).
Now we can use this updated model for calculating the confidence. We can update the posterior of a concept not being a precondition given a negative observation ($O_{\neg c_i}^s$) using the formula
{\small\begin{align*}
\begin{split}
& P(c_i \not \in p_{a} | O_{\neg c_i}^s \wedge O_{a}^{s} \wedge O_{\mathbb{C}\setminus c_i^{s}}) =  \frac{P(O_{\neg c_i}^s|c_i \not \in p_{a} \wedge O_{a}^{s} \wedge O_{\mathbb{C}\setminus c_i^{s}})* P(c_i \not \in p_{a})}{P(O_{\neg c_i}| O_{a}^{s})}        
\end{split}
\end{align*}}
Similarly we can modify the update for a positive observation to include the observation model and also do the same for the cost explanation. For calculation of cost confidence, we will now need to calculate {\small $P( O_{\mathcal{C}(s, a) \geq k}| c_i \not\in \mathbb{C}(s), \mathcal{C}_s^{abs}(\{c_i\}, a) \geq k)$}. This can either be empirically calculated from samples with true label or we can assume that this value is going to be approximately equal to the overall distribution of the cost for the action.


\section{Acquiring New Vocabulary}
\index{Vocabulary Asymmetry! Acquiring New Vocabulary}%
\todo{A core capability of the method discussed in the earlier sections is the ability to detect scenarios where the user specified concept list may not suffice to represent the required  model component. This is important as it is unlikely that the user would always be able to provide the required concepts, either because they are unaware of it or they may just have overlooked some of the concepts when coming up with the list provided to the system. The former case could happen when the user is not an expert or if the simulator is modeling novel phenomena and is itself learned from experience. Systems capable of handling such scenarios would require the capability of teaching the user new concepts and even coming with intuitive labels for such concepts. We are unaware any methods that can handle this in a general enough manner.}

\todo{Handling the latter case would generally be easier, since now the system would only need to query the human for a concept they may know but did not specify. One possible way such concepts can be queried is by presenting state pairs and asking the user to provide concepts absent in one but present in another. 
To see how we could perform such queries , consider the case of identifying failing preconditions. Let us denote the state where the foil fails as $s_f$. Now once the system has established that the current set of concepts cannot reliably explain the failing state, we can sample one of the states where action succeeds ($s'$ and we will refer to such states as positive states) and present the two states to the user and ask them to specify a concept that is absent in state $s_f$ but is present in $s'$. Now the system can keep showing more and more positive states and confirm that identified concept is still present in the state. The process can continue till either the system has enough samples to reliably establish the confidence of the fact that the concept is a precondition or the user notes a state where the concept is absent. If the latter happens you can ask the user to pick another concept that is absent in the original failing state but present in all the positive states. 
We can try to reduce cognitive load on the user's end on choosing the right concept by sampling positive states that are similar to $s_f$ (which would generally mean less number of conceptual difference).  Moreover, in cases where the user is being presented a visual representation of the state, then the system could also provide saliency maps showing areas in the state that are important to the decision maker (which should generally cover patches on the image corresponding to precondition concept).
Though this would mean restricting positive states to the ones where the decision maker would have used the failing action.}

\section{Bibliographic Remarks}
Many of the specific methods discussed in the chapter were first published in \cite{blackbox}.
As mentioned in the chapter many of the methods also have parallels in explainable Machine Learning. 
\index{LIME}%
\index{TCAV}%
For example, \cite{ribeiro2016should} is a popular explanation method for classifiers that relies on local approximation of models and 
\cite{tcav} generates concept based explanations, where individual concepts are captured through classifiers. Concept as classifier has also been utilized by \cite{hayes} to provide policy summaries and \cite{waa2018contrastive} to provide contrastive explanations for MDPs. Though in the case of \cite{waa2018contrastive}, they require designer to also assign positive and negative outcomes to each action that is used to explain why one policy is preferred over another and their method also do not really allow for explanation of infeasible foils. Outside of explanations, the idea of using classifiers to capture state fluents have also been utilized by \cite{konidaris} to learn post-hoc symbolic models from low-level continuous models. One possibility alluded to in the chapter though not discussed in details is that of using the methods to establish the cost of the plan itself. As one can guess here the process would be nearly identical to that for establishing the cost of the foil, but instead of trying to find max cost bounds, we will be looking for min ones. \unsure{Also the utility of establishing a representation of the agent in symbolic terms the user can understand goes beyond just providing helpful explanation. The paper by \cite{kambhampatisymbols} makes a case to develop a symbolic middle layer expressed in human-understandable terms as a basis for all human-AI interaction.}
\clearpage

\chapter{Obfuscatory Behavior and Deceptive Communication}
\label{ch08}

In this chapter, we will focus the discussion on some of the behavioral and communication strategies that a robot can employ in adversarial environments. So far in this book, we have looked at how the robot can be interpretable to the human in the loop while it is interacting with her either through its behavior or through explicit communication. However, in the real world not all of the robot's interactions may be of purely cooperative nature. The robot may come across entities of adversarial nature while it is completing its tasks in the environment. In such cases, the robot may have some secondary objectives like privacy preservation, minimizing information leakage, etc. in addition to its primary objective of achieving the task. Further, in adversarial settings, it is not only essential to minimize information leakage but also to ensure that this process of minimizing information leakage is secure. Since, an adversarial observer may use diagnosis to infer the internal information and then use that information to interfere with the robot's objectives. 

To prevent leakage of sensitive information, the robot should be capable of generating behaviors that can obfuscate the sensitive information about its goals or plans. However, the execution of such obfuscating behaviors may be expensive and in such cases it may be essential to balance the amount of obfuscation needed with the resources available. Further, there may be complex real-world settings that involve both cooperative as well as adversarial observers. In such mixed settings, the robot has to balance the amount of obfuscation desired for adversarial entities with the amount of legibility required for the cooperative entities. 
\todo{Additionally, in this chapter we will look at how communication and the model-reconciliation techniques could be used for deception, in particularly to generate {\em lies}. As we will see, white lies could even be used to improve the overall team performance, although the ethics of such an undertaking has to be carefully considered.}
Now let's look at different types of adversarial behaviors and communication strategies that are available to the robot. 

\section{Obfuscation}
\index{Obfuscation}%
A robot can hide its true objective from an adversarial observer by making several objectives seem plausible at the same time. Thus the true objective stays obfuscated from the observer. The robot can choose to obfuscate its activities as well apart from its objectives. In this section, we will see different types of obfuscation approaches that a robot can employ to hide sensitive information in adversarial situations. 

\subsection{Goal Obfuscation}
\index{Obfuscation!Goal Obfuscation}%

\begin{figure}[t!]
\centering 
\includegraphics[width=\columnwidth]{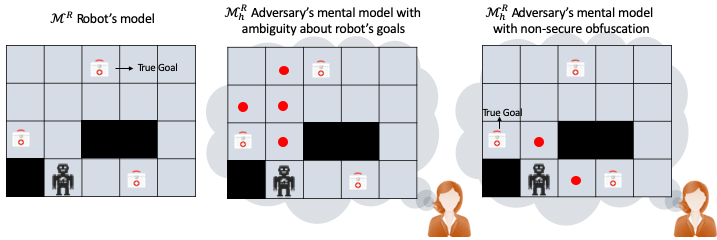}
\caption{Illustration of impact of goal obfuscation and secure goal obfuscation on human's mental model.}
\label{fig:8.1}
\end{figure}

The problem of goal obfuscation arises when the robot wants to obfuscate its true goal from the adversary. This is possible in settings where the adversarial entity is unaware of the robot's true goal but is aware of the possible goals that the robot may try to achieve in the given domain.
\unsure{In this book, we will mostly focus on scenarios where the adversary is a human, but the methods we talk about could also be applied in scenarios where we have another autonomous agent as our adversary.}
This problem setting is also part of the controlled observability planning problem, introduced in Chapter \ref{ch04}. 
In fact, it is similar to the setting seen in the offline goal legibility problem. 
However, here because of the existence of an adversarial observer, instead of being legible with respect to its true goal, the robot obfuscates its true goal. 
\unsure{In order to obfuscate effectively, the robot needs to have access to the adversaries models, incluging both their belief about the robot model ($\mathcal{M}^R_h$) and the observer's sensor model ($\mathcal{O}$).} 
\unsure{As in the case of Chapter \ref{ch04}, the adversaries may not be able to differentiate between certain states/actions leading to partial observability.}
The adversary may use these observations resulting from the robot's behavior to infer some sensitive information about the robot's goals. Depending on the granularity of the observations available to the adversary, the problem of goal obfuscation becomes easy or complex. 
That is with fine-grained observations, goal obfuscation might be harder to achieve or may even be infeasible in some cases, whereas with coarse-grained observations, it might be easier for the robot to obfuscate its true goal. Further, the adversary may be able to use the information gleaned from observations to interfere with or hamper the robot's activities. Therefore, in this setting, the robot is tasked with the additional objective of preventing the adversary from learning sensitive information about its goal. \unsure{Similar to Chapter \ref{ch04}, we will focus on scenarios where the adversaries expectations about the robot model aligns with the robot's true model, i.e., $\mathcal{M}^R_h$ is the same as $\mathcal{M}^R$.}

\begin{definition} 
\index{Obfuscation!Goal Obfuscation!Planning problem}%
\unsure{A \textbf{goal obfuscation planning problem}, is a controlled observability planning problem, where, $\mathcal{G} = \{G^R \cup G_1 \cup \ldots \cup G_{n-1} \}$, is the set of $n$ goals where $G^R$ is the true goal of the robot, and $G_1, \ldots, G_{n-1}$ are the decoy goals.} 
\end{definition}

A solution to a goal obfuscation planning problem is a \textit{k-ambiguous} plan. Recall that, since the observer receives observations of the robot's behavior, each plan is associated with a corresponding observations sequence. Therefore, the objective of a \textit{k-ambiguous} plan is to make the observation sequence consistent with at least $k$ goals, out of which $k-1$ are decoy goals, such that, $k \leq n$. This is done by taking actions towards robot's true goal, such that, the corresponding observation sequence exploits the observer's belief space in order to be consistent with multiple goals. That is, a \textit{k-ambiguous} plan produces a final belief state consistent with its observation sequence where at least $k$ goals are possible. The $k-1$ decoy goals can be chosen specifically to maximize the obfuscation. Figure \ref{fig:8.1} illustrates goal obfuscation for $k=2$ in the middle image. Here the true goal of the robot to pickup the medkit marked in the leftmost image. The adversarial observer has partial observability of the direction the robot moves in (say, the observer receives coarse observations from the GPS sensor). In order to obfuscate its true goal from the adversarial observer, the robot performs a 4-step k-ambiguous plan (for k=2), as shown in the middle image. 

To define a k-ambiguous plan, we use the notation for belief state, introduced in Chapter \ref{ch04}.

\begin{definition} 
\label{def:obf}
\index{K-ambiguous Plan}%
A plan, $\pi_k$, is a \textbf{k-ambiguous plan}, if $\delta(I, \pi_k) \models G^R$ and the final belief state, $b_n \in BS(\pi_k, I)$, satisfies the following,  $|{G \in \mathcal{G}: \exists s \in b_n, s \models G}| \geqslant k$, where $1 \leqslant k \leqslant n$.
\end{definition}

\begin{definition} 
\index{K-ambiguous Observation Sequence}%
An observation sequence $O_k = \langle o_1, \ldots, o_n \rangle$ is \textbf{k-ambiguous observation sequence} if it is an observation sequence emitted by a k-ambiguous plan. 
\end{definition}

The underlying algorithm for the synthesis of the solution is the same as the one described for the offline legibility planning problem. However, the goal test and heuristic function are customized to ensure that the goal obfuscation constraints are met. 


\subsubsection{Goal test} The goal test ensures that the solution plan does not end unless all $k$ goals occur in the observer's belief state. Specifically, it validates whether the true goal has been achieved in the robot's true state and also whether the $k-1$ goals have been achieved in the belief state. If there doesn't exist a plan that can achieve the true goal with the given $k$ goals, we restart the search with next combination of $k$ goals (inclusive of true goal) among the total $n$ candidate goals. In the worst case, the value of $k$ can be reduced until a solution is found.

\subsubsection{Heuristic function} 
Here the heuristic strategy is to generate a plan where the observer's final belief state satisfies \textit{k} goals, making it a \textit{k-ambiguous} plan. We use two heuristic functions to estimate a node's cost: 
\begin{math}
h(s) =  h_{G_A}(s) + h_{\mathcal{G}_{k-1}}(b), 
\end{math}
where the first part computes the heuristic cost to the true goal from the robot's actual state, and the second part computes the heuristic cost to $k-1$ goals from the belief induced by the emitted observation. This is computed by finding the \textit{max} of heuristic cost to each of the $k-1$ goals. The heuristic value of a state is computed as the total of these two values. Thus the heuristic helps to guide the search towards at least $k$ goals in the observer's belief. 

\subsection{Secure Goal Obfuscation}
\index{Obfuscation!Secure Goal Obfuscation}%

The above approach to synthesizing goal obfuscating behavior is not secure. In the sense that it reveals information when the following assumptions are not satisfied: \unsure{(1) adversary has access to the algorithm and has the means to simulate the algorithm for a given input. This means the adversary could identify the potential states generated by the algorithm. (2) the adversary can run the algorithm with different inputs. For example, the adversary can set different goals as true goal to check the variability of the output. Figure \ref{fig:8.1}, rightmost subfigure illustrates how an adversary can rerun the algorithm with different input and glean information about the true goal. By setting the true goal to some other candidate goal, the adversary can run the algorithm and find out the robot's true goal which gave the original observation sequence.}

\unsure{ One way to synthesize a secure goal obfuscating behavior would be to introduce some randomization in the solution synthesis process, that would prevent leakage of additional information. 
To that end, one could randomly choose a set of $k$ goals (inclusive of true goal) from $n$ candidate goals. Then randomly pick a goal from this set as a ``true goal'' and run the algorithm. Let's call this goal as the decoy true goal. We have to maintain the plans to each of the $k$ goals that are consistent with the observation sequence (which includes the robot goal $G^R$). Then by performing a post processing step, we can retrieve the plan that reaches the robot's true goal and is consistent with the observation sequence produced by the plan to the decoy true goal. This process ensures that even if the adversary runs the algorithm again with either the same or different input (different goal as true goal, or different combination of $k$ goals), the identity of the true goal remains \textit{k-ambiguous} even with the observer being aware of the randomization.
Note that unlike the other sections here we are not just considering an observer who is reasoning on their own based on the observations, but rather we are considering a sophisticated adversary who is able to use more sophisticated computing resources to simulate the algorithm. 
}

\unsure{In this setting, a perfectly rational adversary would be one that has unbounded computational resources, and can use them to make perfect inferences. In other words, that is the worst case scenario for this algorithm. The following proposition shows that our algorithm provides \textit{k-ambiguous} guarantee even in the worst case, such that a perfectly rational adversary remains uncertain about the true goal, provided the goals are independent.}

\begin{proposition}
Let $\mathcal{H}$ be a perfectly rational adversary. Given a goal obfuscation planning problem for the robot $\mathcal{R}$. Plans computed using Algorithm \ref{4-procedure:COPP-algo} with \textit{k-ambiguous} goal test and heuristic function are such that, $\mathcal{H}$ will be able to infer the true goal with probability  $\leqslant 1/k$ provided the goals are independent.
\end{proposition}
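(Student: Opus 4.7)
The plan is to prove the bound by analyzing the joint distribution induced by the two layers of randomization and then applying Bayes' rule from the adversary's perspective. Let $\mathbb{G}_k \subseteq \mathcal{G}$ denote the random $k$-subset of candidate goals that the algorithm selects (which by construction must contain $G^R$), let $G_d$ denote the decoy true goal sampled uniformly at random from $\mathbb{G}_k$ and used as the input ``true goal'' when running Algorithm \ref{4-procedure:COPP-algo} with the $k$-ambiguous goal test, and let $O$ be the resulting observation sequence. Since a perfectly rational adversary knows the algorithm and the domain, and since $O$ is by Definition \ref{def:obf} consistent with exactly the goals in $\mathbb{G}_k$, the adversary can recover $\mathbb{G}_k$ from $O$ by simulation. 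The adversary's inference therefore reduces to computing the posterior $P(G^R = G_i \mid O, \mathbb{G}_k)$ for each $G_i \in \mathbb{G}_k$, and its best-case success probability is $\max_i P(G^R = G_i \mid O, \mathbb{G}_k)$.

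Next I would establish the symmetry claim that $P(O \mid G^R = G_i, \mathbb{G}_k) = P(O \mid G^R = G_j, \mathbb{G}_k)$ for all $G_i, G_j \in \mathbb{G}_k$. The key observation is that the randomized selection of $G_d$ is uniform over $\mathbb{G}_k$ and is performed independently of the identity of $G^R$; moreover, the post-processing step only requires that the emitted $O$ remain consistent with some plan reaching $G^R$, a property that by the $k$-ambiguity of $O$ holds simultaneously for every goal in $\mathbb{G}_k$. Combined with the stated independence of goals, this implies that the joint law of $(\mathbb{G}_k, G_d, O)$ is invariant under any relabeling that permutes the members of $\mathbb{G}_k$, so conditioning on different identities for $G^R$ within $\mathbb{G}_k$ yields identical likelihoods for $O$.

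Given the symmetry of likelihoods, Bayes' rule yields
\[
P(G^R = G_i \mid O, \mathbb{G}_k) \;=\; \frac{P(O \mid G^R = G_i, \mathbb{G}_k)\, P(G^R = G_i \mid \mathbb{G}_k)}{\sum_{G_j \in \mathbb{G}_k} P(O \mid G^R = G_j, \mathbb{G}_k)\, P(G^R = G_j \mid \mathbb{G}_k)}.
\]
By independence of goals, the priors $P(G^R = G_i \mid \mathbb{G}_k)$ are equal across $G_i \in \mathbb{G}_k$, and by the symmetry argument the likelihoods are equal as well, so the posterior collapses to the uniform distribution $1/k$ on $\mathbb{G}_k$. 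Hence any inference strategy available to $\mathcal{H}$ identifies the true goal with probability at most $1/k$.

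The main obstacle I anticipate lies in rigorously defending the symmetry of $P(O \mid G^R = G_i, \mathbb{G}_k)$. The algorithm imposes two coupled constraints on $O$: it must be produced by a plan optimal-ish (per the heuristic) for $G_d$, and it must admit a consistent completion reaching $G^R$ used during post-processing. One must argue that the set of observation sequences the randomized procedure can emit is closed under the action of permuting roles among the members of $\mathbb{G}_k$; this is exactly where the goal-independence assumption does the heavy lifting, because without it the conditional distribution over $\mathbb{G}_k$ given $G^R$ could be skewed (e.g., certain goals might tend to co-occur in ambiguous sets), breaking the likelihood symmetry and inflating the posterior beyond $1/k$.
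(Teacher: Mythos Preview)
Your approach is correct and follows the same underlying idea as the paper—exploiting the uniform randomization of the decoy goal $G_d$ over $\mathbb{G}_k$ together with goal independence—but you develop it far more rigorously than the paper does. The paper's entire argument is four sentences: the adversary can simulate the algorithm with each goal in $\mathcal{G}_k$ to recover the decoy true goal $G_i$, and since $G_i$ was drawn uniformly from $\mathcal{G}_k$ it equals $G^R$ with probability $1/k$, hence the bound.

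The paper's sketch leaves implicit exactly the point you take care to prove: that \emph{no} adversary strategy can do better than $1/k$, not merely the strategy of outputting the recovered $G_i$. Your Bayesian decomposition and the likelihood-symmetry claim $P(O \mid G^R = G_i, \mathbb{G}_k) = P(O \mid G^R = G_j, \mathbb{G}_k)$ close that gap, and your identification of the post-processing step as the place where one must verify that $O$ depends only on $(\mathbb{G}_k, G_d)$ and not on $G^R$ is precisely the observation the paper relies on without stating. One small overreach: you write that $O$ is ``consistent with exactly the goals in $\mathbb{G}_k$,'' but Definition~\ref{def:obf} only guarantees \emph{at least} $k$ goals in the final belief; this does not damage your argument, since even if the adversary cannot perfectly recover $\mathbb{G}_k$ from $O$, the conditional independence of $O$ and $G^R$ given $\mathbb{G}_k$ still forces the posterior to be uniform on whatever $\mathbb{G}_k$ was actually used.
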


Let's say goal $G_i$ from the set $\mathcal{G}_k$ is chosen randomly to be the decoy true goal. The observation sequence $O_i$ is obtained by running the algorithm with $G_A = G_i$. The adversary can run the algorithm with each goal from $\mathcal{G}_k$ to search for $G_i$. But $G_i$ can only be the robot's true goal with probability $1/k$. Therefore the adversary can detect the true goal with probability $\leqslant 1/k$.

Note that the goals can be specified using sub-formulae. This makes the input specification much smaller depending on how many states are in the domain that are consistent with the sub-formulae. In the least constrained domain, that may be exponential in the number of propositions not used. 



\subsection{Plan Obfuscation}
\index{Obfuscation!Plan Obfuscation}%

\begin{figure}[t!]
\centering 
\includegraphics[width=\columnwidth]{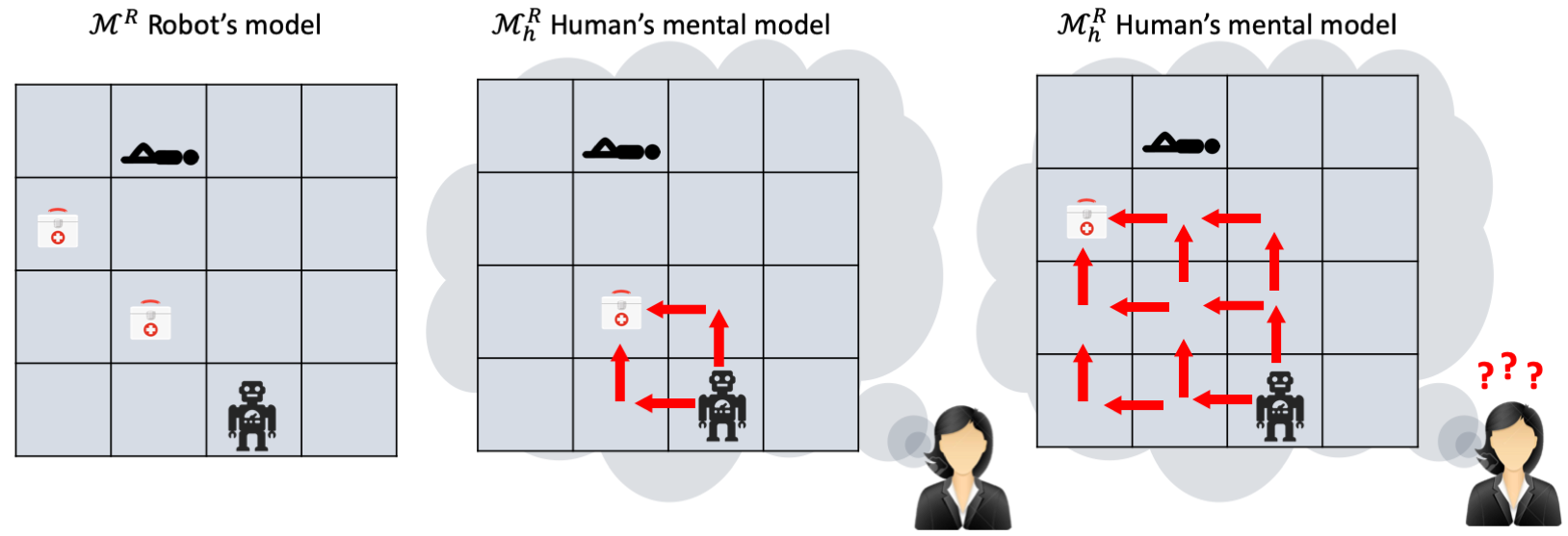}
\caption{Illustration of the impact of plan obfuscation on human's mental model.}
\label{fig:8.2}
\end{figure}

In this problem setting, the adversarial observer is aware of the robot's goal. The robot, however, obfuscates its activities towards its goal thereby hiding sensitive information about its activities. We refer to this behavior as plan obfuscatory behavior. The robot can achieve a plan obfuscatory behavior by computing a plan whose observation sequence conforms to a set of plans that are diverse in terms of their actions, thereby making it hard for the adversarial observer to guess the actions executed by the robot. For instance, in Figure \ref{fig:8.2}, the goal of the robot is to pick up a medkit and treat the victim on the disaster site. It has two choices of medkits to pick from. If it chooses the medkit as shown in the rightmost sub-figure, it can maximize the diversity with respect to its actions. Here, the multiple different paths to the medkit produce the same observation sequence (due to the underlying coarse GPS sensor model that the observer is using), thus hiding the robot's true actions from the adversarial observer.  

\begin{definition}
\index{Obfuscation!Plan Obfuscation!Planning Problem}%
A \textbf{plan obfuscation planning problem} is a controlled observability planning problem, where, $|\mathcal{G}| = 1$, and therefore the true goal is known to the observer.
\end{definition}

The solution to a plan obfuscation problem is an $\ell$-diverse plan. An $\ell$-diverse plan is a plan whose observation sequence is consistent with at least $\ell$ diverse plans to the goal, such that, the diverse plans are at least $d$ distance away from each other. In order to compute an \emph{$\ell$-diverse} plan, we need to keep track of the plans that are consistent with the observation sequence and reach the goal. To compute the diversity between all the pairs of plans consistent with the observation sequence, a plan distance measure like action distance, causal link distance, state sequence distance (introduced in Chapter \ref{ch03}) can be used. In general, the approach can use any valid plan distance. We now define an $\ell$\emph{-diverse} plan.  

We refer to the set of plans that are consistent with the belief sequence of a given plan, as a belief plan set (BPS), introduced in Chapter \ref{ch04}.

\begin{definition} 
\index{Minimally d-distant Belief Plan Set}%
A BPS induced by plan p starting at $s_0$ is \textbf{minimally d-distant}, $d_{min}(BPS(p, s_0))$, if $\displaystyle d = \min_{p1, p2 \in BPS(p, s_0)} \mathcal{D}(p1, p2)$.
\end{definition}

\begin{definition} 
\index{l-diverse Plan}%
A plan, $\pi_l$, is an \textbf{$\bm{\ell}$-diverse plan}, if for a given value of d and distance function $\delta$, $d_{min}(BPS(\pi_l, I)) \geq d$, $|BPS(\pi_l, I)| \geq \ell$, where $\ell \geq 2$ and every plan in $BPS(\pi_l, I)$ achieves the goal in $\mathcal{G}$.
\end{definition}

\unsure{We can again leverage Algorithm \ref{4-procedure:COPP-algo} to generate the plans for obfuscation problem. The goal test and heuristic function are customized to ensure that there are at least $\ell$ diverse plans to the true goal that are consistent with the observation sequence and the minimum distance between these plans is at least $d$.}

\subsubsection{Goal test} 
To ensure the plans in $BPS$, induced by an $\ell$\textit{-diverse} plan, can achieve the goal in $\mathcal{G}$, we can check whether at least $\ell$ plans are reaching the goal or not and whether the minimum distance between plans in $BPS$ is at least $d$. Also in order to ensure termination of the algorithm, there is a cost-bound given as input to the algorithm.

\subsubsection{Heuristic function}
Apart from minimizing the heuristic cost to the goal, the customized heuristic given below also maximizes the $d$ of $d_{min}(BPS(p, s_0))$ induced by plan $p$ starting at $s_0$. This increases the minimum distance between the plan pairs. This distance is computed using a plan distance measure.
\begin{align}
h(s) = h_{G_A}(s) - d_{min}(BPS(p, s_0))
\end{align}

\subsection{Deception}
\index{Deception}%
\index{Decoy Goal}%

In addition to obfuscation, the robot can also employ deceptive behaviors to hide sensitive information. While the concept of obfuscation deals with the notion of confusing the adversary with several decoy candidates, the concept of deception involves making one of the decoy candidates more likely than the robot's true objective/activities. Thus, in order to deceive an adversarial observer, it is crucial to have access to their goal or plan recognition models. 
By incorporating the human's reasoning ability (in regards to identifying the goal or the plan), the robot can synthesize behavior that deceives the adversarial observer into believing that the decoy candidate is the true candidate. 
In the prior literature, synthesis of deceptive behaviors with respect to robot's goals has been studied in path planning scenarios where the observer has  full observability of robot's activities. 
In order to successfully deceive an adversarial observer, the robot's plan has to end when a decoy goal is achieved. However, in reality, the robot has a primary objective of achieving its true goal. Therefore, in cases where the observer has full observability of the robot's activities, deceptive behavior may be hard to synthesize. To that end, the notion of the radius of maximum probability with respect to a goal has been studied in the literature on deceptive planning. This is a radius around a goal location, within which that goal itself becomes the most likely goal. So deception can be maintained until the robot reaches the area within the radius of maximum probability for its true goal. 

\section{Multi-Observer Simultaneous obfuscation and legibility}

Obfuscatory strategies are also crucial in scenarios where there are multiple different types of observers, such that some are of adversarial nature while some others are of cooperative nature. In such cases, the robot has to ensure that its behavior is simultaneously legible to cooperative observers and obfuscatory to adversarial ones. For instance, in soccer, a player may perform feinting trick to confuse an opponent while signaling a teammate. 
Synthesizing a single behavior that is legible and obfuscatory to different observers presents significant technical challenges.
In particular, the robot may have to be deliberately less legible to its friends so that it can be effectively more obfuscatory to its adversaries. 
This problem gives rise to a novel optimization space that involves trading-off the amount of obfuscation desired for adversaries with the amount of legibility desired for friends. 

\subsection{Mixed-Observer Controlled Observability Planning Problem}
\index{MO-COPP}%

We now discuss a problem framework called \emph{mixed-observer controlled observability planning problem}, MO-COPP, that allows a robot to simultaneously control information yielded to both cooperative and adversarial observers while achieving its goal. 
This framework models and exploits situations where different observers have differing sensing capabilities, which result in different ``perceptions'' of the same information.
Typically, different observers in an environment may have different ``sensors'' (perception capabilities) due to differences in prior communication, background knowledge, and \todo{innate differences in sensing capabilities}. 

\begin{figure}[!t]
\begin{subfigure}{\columnwidth} 
\centering
\caption{The robot's goal is to deliver two packages to the delivery area:}
\includegraphics[width=0.5\columnwidth]{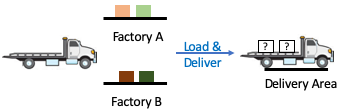}
\label{fig:ex1}
\end{subfigure} 
\begin{subfigure}{\columnwidth}
\centering
\caption{Plan-1 - Robot delivers 1 package from factory A and 1 from B:}
\includegraphics[width=0.8\columnwidth]{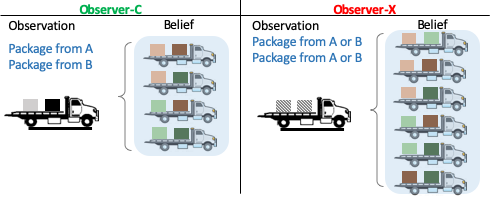}
\label{fig:ex2}
\end{subfigure}
\begin{subfigure}{\columnwidth}
\centering
\caption{Plan-2 - Robot delivers both the packages from factory A, helping observer-C and thwarting observer-X:}
\includegraphics[width=0.8\columnwidth]{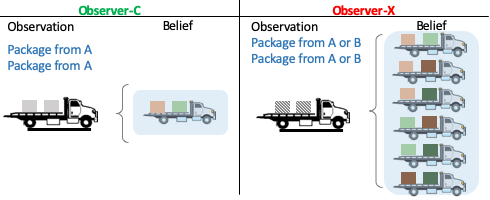}
\label{fig:ex3}
\end{subfigure}
\caption{Different belief updates induced by the same plan for two observers; observer-C is cooperative, observer-X is adversarial.}
\label{fig:example2}
\vspace{-8pt}
\end{figure}

\subsubsection{Example}
Consider the example in Figure \ref{fig:example2}, where an automated truck has to deliver two packages. As shown in Figure \ref{fig:ex1}, there are two packages at factory A and two at factory B. 
Let there be two observers who know the initial inventory at each factory, but do not know which two packages will be delivered by the truck. Each observer has different sensors (or sources of information) about the truck's activities. Observer-C (cooperative observer) has access to a database of barcodes which allows it to determine the identity of the factory that the package was loaded from (but not the identity of the individual packages), whereas observer-X (adversarial observer) does not have any additional information and cannot differentiate between any of the packages. The truck wants to convey its true goal to observer-C but obfuscate it to observer-X. 
When the truck loads one package each from A and B (Figure \ref{fig:ex2}), observer-C cannot distinguish the packages. However, when it loads both the packages from A (Figure \ref{fig:ex3}), observer-C can infer the identity of the loaded packages, while observer-X still does not know the identity of the packages. Therefore, plan-2 makes the true goal legible to observer-C while obfuscating the true goal with all the possible goals for observer-X. 

MO-COPP setting involves an actor ($\mathbf{R}$) (i.e., the robot) and two observers, where one is adversarial observer ($\mathbf{X}$) while the other is cooperative ($\mathbf{C}$). This framework assumes that the robot has full observability of its own activities and knows the sensor models used by the observers, and that the observers have different sensor models. When the robot takes an action and reaches a new state, each observer receives an observation. After obtaining the observations, each observer updates its belief, similar to the belief update in COPP formulation. The robot leverages the known limits in the observers' sensors to control the observability of multiple observers in the environment simultaneously. 
Given a set of candidate goals, the objective of the robot is to convey information about its goal to the cooperative observer and to hide it from the adversarial observer. 
This involves increasing the number of candidate goals possible in the adversary's belief, while decreasing the number of candidate goals possible in cooperative observer's belief. 

\begin{definition}
\index{MO-COPP!Planning Problem}%
A \textbf{mixed-observer controlled observability planning problem} is a tuple, \\
$MO-COPP = \langle \Lambda, \mathcal{M}^R, \mathcal{G}, \{\Omega_i\}_{i \in \Lambda} , \{\mathcal{O}_i\}_{i \in \Lambda},$ $\{\mathcal{B}^i_0\}_{i \in \{\mathbf{X}, \mathbf{C}\}} \rangle $, where,
\begin{itemize}
\item $\Lambda = \{ \mathbf{R},\mathbf{C}, \mathbf{X}\}$ is the set of agents. 
\item $\mathcal{M}^R = \langle  F^R, A^R, I^R, G^R \rangle$ is the robot model
Also, for $a \in A$, $pre(a)$, $adds(a)$, $dels(a)$ are each a subset of fluents representing precondition, add effects and delete effects of $a$.
\item $\mathcal{G} = \{G_1, G_2, \ldots, G_{n-1}, G^R\}$ is the set of candidate goals, where $G^R$ is the true goal of $\mathbf{R}$, which is not known to both $\mathbf{C}$ and $\mathbf{X}$.
\item $\Omega_i$ is the set of observation symbols for agent $i$, which are emitted when $\mathbf{R}$ takes an action and reaches a new state. Further, $\Omega_R = \{o^{\mathbf{R}}_{a,s}|  a \in A, s \in \mathcal{S}\}$.
\item $\mathcal{O}_i: A \times \mathcal{S} \rightarrow \Omega_i$ is agent $i$'s deterministic sensor model. $\mathcal{S}$ is the set of states, where each state is instantiation of all fluents. Further, $\mathcal{O}_\mathbf{R}$ 
maps each action-state pair to a unique observation, $\forall~ a, a' \in A,~ s, s' \in \mathcal{S}, a \neq a' \land s \neq s': \mathcal{O}_\mathbf{R}(a, s) \neq \mathcal{O}_\mathbf{R}(a', s')$, while $\mathcal{O}_\mathbf{X}$ and $\mathcal{O}_\mathbf{C}$ are noisy sensor models that map multiple action-state pairs to the same observation symbol.
\item $\mathcal{B}^i_0$ is the initial belief of an observer, $i \in \{\mathbf{X}, \mathbf{C}\}$. The initial belief is a set of states inclusive of $I$.
\label{dfn:mo-copp-def}
\end{itemize} 
\end{definition}

Every time the robot acts, each $i \in \Lambda$ receives an observation consistent with its sensor model. The sensor model of an observer $i \in \{\mathbf{X}, \mathbf{C}\}$  supports many-to-one mapping of $\langle a,s \rangle$ pairs to observation symbols, i.e., $\exists a, a' \in A, s, s' \in \mathcal{S}, a \neq a' \land s \neq s': \mathcal{O}_i(a, s) = \mathcal{O}_i(a', s')$.
For an agent $i$, the inverse of sensor model gives the set of $\langle a,s \rangle$ pairs consistent with an observation symbol $o^i \in \Omega_i$, i.e., $\mathcal{O}_i^{-1}(o^i) = \{ \langle a, s \rangle | \forall a \in A, s \in \mathcal{S}, \mathcal{O}_i(a, s) = o^i\}$. \unsure{Again, we assume the observer is aware of the true robot model.}

Each observer $i \in \{\mathbf{X}, \mathbf{C}\}$ maintains its own belief, which is a set of states. $\delta(\cdot)$ is a transition function, such that, $\delta(s, a) = \bot$ if $s \not\models pre(a)$; else $\delta(s, a) = s \setminus dels(a) \cup adds(a) $. Now we can define the belief update: (1) at time step $t = 0$, the initial belief of observer $i$ is given by  $\mathcal{B}^i_0$, (2) at time step $t \in \{1, \ldots, \mathcal{T}\}$, $\mathcal{B}^i_{t} = \{ \hat{s}~|~ \exists \hat{a},\bar{s}~~ \delta(\bar{s},  \hat{a}) =  \hat{s}; \bar{s} \in \mathcal{B}^i_{t-1}; \mathcal{O}_i(\hat{a}, \hat{s}) = o^i_t\}$, where $\mathcal{T}$ is the time horizon.   
That is, the belief is updated using the previous belief and the observation received. 
A plan $\pi$ (or sequence of actions) is associated with a sequence of observations, $ObsSeq_i(\pi) = \{o^i_1, \ldots, o^i_\mathcal{T}\}$ for each observer $i$. The belief of $\mathbf{R}$ is a single state due to full observability. 

\subsubsection{MO-COPP Solutions} 
\index{MO-COPP!Solution}%

A plan $\pi$ solves a $MO-COPP = \langle \Lambda, \mathcal{M}^R,$ $\mathcal{G}, \{\Omega_i\}_{i \in \Lambda} , \{\mathcal{O}_i\}_{i \in \Lambda}, \{\mathcal{B}^i_0\}_{i \in \{\mathbf{X}, \mathbf{C}\}} \rangle $, iff: $~\delta(I, \pi) \models G_A$. In other words, any solution to $\mathcal{M}^R$ is a solution to MO-COPP. We measure the quality of a MO-COPP solution in terms of the extent to which the plan is legible to $\mathbf{C}$ and obfuscating to $\mathbf{X}$. Given a solution plan, $\pi$, let $\mathcal{G}^\pi_\mathbf{X} = \{G \in \mathcal{G}: \exists s \in \mathcal{B}^\mathbf{X}_\mathcal{T},~ s \models G \}$, denote the goals present in $\mathbf{X}$'s final belief and $\mathcal{G}^\pi_\mathbf{C} = \{G \in \mathcal{G}: \exists s \in \mathcal{B}^\mathbf{C}_\mathcal{T},~ s \models G \}$, denote the goals present in $\mathbf{C}$'s final belief. Also, we can define the inverse of its observation sequence for observer $i$, as follows: $\forall i \in \{\mathbf{X}, \mathbf{C}\}, ObsSeq^{-1}_i(\pi) = \{ \hat{\pi} | \exists s^i_0 \in \mathcal{B}^i_0, G \in \mathcal{G}^\pi_i, \delta(s^i_0, \hat{\pi}) \models G, ObsSeq_i(\pi) =  ObsSeq_i(\hat{\pi}) \}$.  

Here $|\mathcal{G}^\pi_\mathbf{X}|$ represents the amount of \textbf{goal obfuscation for $\mathbf{X}$}, and by increasing the number of goals in $\mathcal{G}^\pi_\mathbf{X}$, we can increase the goal obfuscation. Similarly, $|\mathcal{G}^\pi_\mathbf{C}|$ represents the amount of \textbf{goal legibility for $\mathbf{C}$}, and by decreasing the number of goals in $\mathcal{G}^\pi_\mathbf{C}$, we can increase the goal legibility. We now introduce a goal difference metric to estimate the quality of a solution plan to MO-COPP.

\begin{definition}
\index{MO-COPP!Goal Difference}%
Given a solution plan $\pi$ that solves MO-COPP, where $|\mathcal{G}| = n$, the \textbf{goal difference}, of $\pi$ is given by:
\begin{math}
GD(\pi) = \frac{|\mathcal{G}^\pi_\mathbf{X}| - |\mathcal{G}^\pi_\mathbf{C}|}{n-1}
\end{math}
where the denominator represents the difference between ideal values of $\mathcal{G}^\pi_\mathbf{X}$ and $\mathcal{G}^\pi_\mathbf{C}$.
\end{definition}

An optimal solution to MO-COPP maximizes the trade-off between amount of goal obfuscation and goal legibility. That is, it maximizes the difference between the number of goals in $\mathcal{G}^\pi_\mathbf{X}$ and $\mathcal{G}^\pi_\mathbf{C}$. 
Equivalently, closer the $GD(\pi)$ value to 1, better is the plan quality. A solution plan with $GD(\pi) = 1$ is an optimal plan. E.g., in Figure \ref{fig:ex3}, plan-2 is an optimal plan with $GD(plan$-$2) = \frac{6-1}{6-1} = 1 $. The denominator is essential for comparing the quality of plans across different problems, with varying number of candidate goals. 

\begin{proposition}
Given a solution plan, $\pi$, to MO-COPP, if $|\mathcal{G}^\pi_\mathbf{C}|=1$, then $G^R \in \mathcal{G}^\pi_\mathbf{C}$.
\end{proposition}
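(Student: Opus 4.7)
The plan is to show that the robot's actual terminal state always lies inside the cooperative observer's final belief, so that the true goal $G^R$ is necessarily one of the goals in $\mathcal{G}^\pi_\mathbf{C}$; combined with $|\mathcal{G}^\pi_\mathbf{C}| = 1$, this forces $\mathcal{G}^\pi_\mathbf{C} = \{G^R\}$.

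First, I would unwind the definitions. Since $\pi$ is a solution to MO-COPP, we have $\delta(I, \pi) \models G^R$ by the solution condition stated just above Definition of the goal-difference. Let $s_0 = I$ and $s_t = \delta(s_{t-1}, a_t)$ be the actual trajectory induced by $\pi = \langle a_1, \ldots, a_\mathcal{T} \rangle$, and let $o^\mathbf{C}_t = \mathcal{O}_\mathbf{C}(a_t, s_t)$ be the observation the cooperative observer receives at step $t$ (these are exactly the observations appearing in $ObsSeq_\mathbf{C}(\pi)$ by construction). Also, by Definition~\ref{dfn:mo-copp-def}, $\mathcal{B}^\mathbf{C}_0$ is an initial belief containing $I$, so $s_0 \in \mathcal{B}^\mathbf{C}_0$.

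Next, I would prove by induction on $t$ that $s_t \in \mathcal{B}^\mathbf{C}_t$ for all $t \in \{0, 1, \ldots, \mathcal{T}\}$. The base case $t = 0$ is immediate from $I \in \mathcal{B}^\mathbf{C}_0$. For the inductive step, assume $s_{t-1} \in \mathcal{B}^\mathbf{C}_{t-1}$. By the belief-update rule
\[
\mathcal{B}^\mathbf{C}_{t} = \{\hat{s} \mid \exists \hat{a}, \bar{s} : \delta(\bar{s}, \hat{a}) = \hat{s},\; \bar{s} \in \mathcal{B}^\mathbf{C}_{t-1},\; \mathcal{O}_\mathbf{C}(\hat{a}, \hat{s}) = o^\mathbf{C}_t\},
\]
the choices $\bar{s} = s_{t-1}$, $\hat{a} = a_t$, $\hat{s} = s_t$ witness membership of $s_t$ in $\mathcal{B}^\mathbf{C}_t$: the transition $\delta(s_{t-1}, a_t) = s_t$ holds by construction of the trajectory, $s_{t-1} \in \mathcal{B}^\mathbf{C}_{t-1}$ by induction, and $\mathcal{O}_\mathbf{C}(a_t, s_t) = o^\mathbf{C}_t$ by the definition of $o^\mathbf{C}_t$.

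Finally, at $t = \mathcal{T}$ we have $s_\mathcal{T} = \delta(I,\pi) \in \mathcal{B}^\mathbf{C}_\mathcal{T}$ and $s_\mathcal{T} \models G^R$. Hence by the defining set-builder for $\mathcal{G}^\pi_\mathbf{C}$, the true goal satisfies $G^R \in \mathcal{G}^\pi_\mathbf{C}$. Since $|\mathcal{G}^\pi_\mathbf{C}| = 1$, the unique element of $\mathcal{G}^\pi_\mathbf{C}$ must be $G^R$, completing the proof. The only subtle point (and it is barely an obstacle) is ensuring that the ``observed'' sequence used in the belief update is precisely the one generated by the real trajectory; this is immediate from how $ObsSeq_\mathbf{C}(\pi)$ is defined, but I would state it explicitly to avoid any ambiguity about observers being aware of the true sensor model.
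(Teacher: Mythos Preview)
Your proposal is correct and follows exactly the paper's approach: show that $I \in \mathcal{B}^\mathbf{C}_0$ and that the true state $s_t$ remains in $\mathcal{B}^\mathbf{C}_t$ at every belief update (since $\langle a_t, s_t\rangle \in \mathcal{O}_\mathbf{C}^{-1}(o^\mathbf{C}_t)$), hence $G^R \in \mathcal{G}^\pi_\mathbf{C}$. The paper states this in two lines; your version simply makes the induction explicit.
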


The above proposition states that when maximum goal legibility is achieved, only one goal $G_\mathbf{A}$ is present in $\mathbf{C}$'s final belief. This is because $I \in \mathcal{B}^\mathbf{C}_0$, and while performing belief update at $t$, $\langle a_t, s_t \rangle \in \mathcal{O}_{\mathbf{C}}^{-1}(o^\mathbf{C}_t$). That is the true state is part of each belief update. Therefore, $G^R \in \mathcal{G}^\pi_\mathbf{C}$.

\subsection{Plan Computation}
\index{MO-COPP!Plan Generation}%
We now present two solution approaches to solve MO-COPP. 
In the first approach, MO-COPP can be formulated as a constraint optimization problem and provide an IP encoding to solve it in $\mathcal{T}$ steps. The IP encoding provides an optimal solution for the given horizon by maximizing the trade-off between the amount of obfuscation and legibility. The second approach builds up on the search algorithm presented in Chapter \ref{ch04} to solve COPP problems. The search algorithm generates solutions that satisfy a prespecified lower bound on the amount of goal obfuscation and goal legibility.

\subsubsection{MO-COPP as Integer Program} 
\index{MO-COPP!IP Encoding}%
\todo{The IP encoding includes variables for an action being executed at a time, each state being true at a time, for a specific observation to emitted at a give time and for specific state and goal being in the belief state. The objective function is essentially the numerator of $GD(\cdot)$ metric, i.e.,}

\begin{align}
max~~ \sum_{G \in \mathcal{G}}~ g^{\mathbf{X}}_{G, \mathcal{T}} -~ \sum_{G \in \mathcal{G}}~ g^{\mathbf{C}}_{G, \mathcal{T}}
\label{eq:1}
\end{align}

Where $g^{i}_{G, \mathcal{T}}$ is an indicator variable capturing the fact that goal $G$ is present in the belief state for the agent $i$ at the time step $\mathcal{T}$. The denominator of the $GD$ metric is skipped, as it is a constant and does not contribute to the optimization. 
Maximizing for this objective provides a single solution that achieves the maximum difference between the number of goals possible for the two observers.
\unsure{One could also formalize the problem as a multi-objective optimization problem, and look at various pareto optimal solutions that trade off the various goals that each observer believes will be achieved.}

The constraints provided to the problem, captures the exact evolution of the state and belief (for each agent) given the robot action, the observation generated by them and the fact that the robot needs to achieve its goal within the specified horizon limit. The exact IP encoding can be found in the paper \cite{kulkarni2019signaling}. Also note that, the current objective function trades off goal obfuscation with goal legibility for the observers. However, the robot can ensure a predefined level of goal obfuscation (say obfuscate with \emph{at least} $k$ candidate goals) by adding an additional constraint that enforces a bound for goal obfuscation and goal legibility. 

\subsubsection{Search Algorithm}
\index{MO-COPP!Search}%

It is also possible to leverage search techniques that address goal obfuscation and goal legibility in isolation to solve MO-COPP. The search algorithm in Chapter \ref{ch04} is adapted to address goal obfuscation and goal legibility simultaneously to two different observers. The bounds on the amount of goal obfuscation and goal legibility desired can be specified, similar to the ones seen in the IP: obfuscate with at least $k$ goals, make it legible with at most $j$ goals. These bounds, $\Phi = \langle \Phi_\mathbf{X}, \Phi_\mathbf{C}\rangle$, are given as input to the search algorithm. 

\begin{algorithm}
    \caption{Heuristic-Guided Search}
    \label{alg:algorithm1}
    \begin{algorithmic}[1]
    \STATE Initialize $\textit{open}$, $\textit{closed}$ and $\textit{temp}$ lists; $\Delta = 1$ 
    \STATE $\langle b^X_{\Delta}, b^C_{\Delta} \rangle \gets$ approx$(I, \mathcal{B}^X_0, \mathcal{B}^C_0)$ 
    \STATE $\textit{open}$.push$(I, \langle b^X_{\Delta}, b^C_{\Delta} \rangle, \langle \mathcal{B}^X_0, \mathcal{B}^C_0 \rangle, priority = 0)$
    \WHILE{$\Delta \leqslant |\mathcal{S}|$}
        \WHILE{$\textit{open} \neq \emptyset$}
        \STATE $s, \langle b^X_{\Delta}, b^C_{\Delta} \rangle, \langle \mathcal{B}^X, \mathcal{B}^C \rangle, h_{node} \gets \textit{open}.\textrm{pop}() $
            \IF{$ |b^X_{\Delta}| > \Delta$ or $|b^C_{\Delta}| > \Delta$}
                \STATE $\textit{temp}.$push$(s, \langle b^X_{\Delta}, b^C_{\Delta} \rangle, \langle \mathcal{B}^X, \mathcal{B}^C \rangle,  h_{node})$
                \STATE continue
            \ENDIF
            
            \STATE add $\langle b^X_{\Delta}, b^C_{\Delta} \rangle$ to $\textit{closed}$
            \IF{$s \models G_A$ and $\mathcal{B}^X \models \Phi_X$ and $\mathcal{B}^C \models \Phi_C$}
            \STATE $\textbf{return}~\pi, ObsSeq_{X}(\pi), ObsSeq_{C}(\pi)$
            \ENDIF
            
            \FOR{$s^{\prime} \in $successors$(s)$}
            \STATE $o^X \gets \mathcal{O}_X(a, s^{\prime})$; $o^C \gets \mathcal{O}_C(a, s^{\prime})$ 
            \STATE $\widehat{\mathcal{B}^X}=$ Update($\mathcal{B}^X, o^X);  \widehat{\mathcal{B}^C} =$ Update($\mathcal{B}^C, o^C)$
            \STATE $\langle \widehat{b^X_{\Delta}}, \widehat{b^C_{\Delta}} \rangle \gets$ approx$(s', \widehat{\mathcal{B}^X}, \widehat{\mathcal{B}^C)}$ 
            \STATE $h_{node} \gets h_{G_A}(s') + h_{\mathcal{G}_{k-1}}(\widehat{\mathcal{B}^X}) - h_{\mathcal{G}_{\mathcal{G}-j}}(\widehat{\mathcal{B}^C})$ 
            \STATE add new node to $open$ if $\langle \widehat{b^X_{\Delta}}, \widehat{b^C_{\Delta}} \rangle$ not in $\textit{closed}$\\
            \ENDFOR
            
        \ENDWHILE
        \STATE increment $\Delta$; copy items from $\textit{temp}$ to $\textit{open}$; empty $\textit{temp}$ \\
    \ENDWHILE
\end{algorithmic}
\end{algorithm}

Each search node maintains the associated beliefs for both observers. The $approx$ function generates an approximate belief, $b^i_\Delta$, of size $\Delta$ (i.e. cardinality of $b^i_\Delta$ is $\Delta$). $b^i_\Delta$ is always inclusive of the true state of the robot, this is because the robot can only take actions that are consistent with its true state. If all such $\Delta-$sized beliefs (or smaller) are explored then $b^i_\Delta$ of $\Delta+1$ size is computed, and this node gets put in the temporary list and is explored in the next outer iteration when $\Delta$ has been incremented. For each $\Delta$, all $\Delta$-sized unique combinations of belief (that include the actual state of the robot) are explored. This allows systematic and complete exploration of multiple paths to a given search node. 
The inner iteration performs heuristic guided forward search (we use greedy best first search) to find a plan while tracking at most $\Delta$ states in each $b^i_\Delta$. In the inner loop, the node expansion is guided by (1) customized heuristic function, which computes value of the node based on true goal and belief constraints given by $\Phi$ for the observers, and (2) goal test, which checks for satisfaction of true goal and satisfaction of the belief constraints given by $\Phi$. 
The algorithm stops either when a solution is found or when all the $\Delta$ iterations have been explored.

\noindent \textbf{Property} \ In both the solution approaches, we can assert a lower bound on the extent of goal obfuscation and goal legibility for a MO-COPP solution plan. 
In IP, we can specify the aforementioned goal constraints to assert this minimum value, while in the search, the goal tests allow us to assert it. By setting $k$, $j$ to desired values, we can eliminate solutions with low $GD$ score. This affords the following guarantee:

\begin{proposition}
\label{prop:min}
Let $\mathbf{X}$ and $\mathbf{C}$ be perfectly rational adversarial and cooperative observers respectively. Given a MO-COPP = $\langle \Lambda, \mathcal{M}^R, \mathcal{G},$ $\{\Omega_i\}_{i \in \Lambda}, \{\mathcal{O}_i\}_{i \in \Lambda}, \{\mathcal{B}^i_0\}_{i \in \{\mathbf{X}, \mathbf{C}\}} \rangle$ with \unsure{equally probable and independent goals}, $|\mathcal{G}|=n$, and goal constraints of at least $k$ goal obfuscation for $\mathbf{X}$ and at most $j$ goal legibility for $\mathbf{C}$, then a solution plan, $\pi$, 
gives the following guarantees:
\begin{enumerate}
\item $\mathbf{X}$ can infer $G_\mathbf{A}$ with probability $\leqslant 1/k$,
\item $\mathbf{C}$ can infer $G_\mathbf{A}$ with probability $\geqslant 1/j$, and
\item Goal difference metric, $GD(\pi) \geqslant \frac{k-j}{n-1}$
\end{enumerate} 
\end{proposition}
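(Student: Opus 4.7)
The plan is to decompose the proposition into its three claims and show that each follows almost directly from the belief-update semantics of MO-COPP together with the enforced goal constraints. The key observation I would rely on throughout is that the algorithms/IP encodings discussed earlier enforce $|\mathcal{G}^\pi_{\mathbf{X}}| \geq k$ and $|\mathcal{G}^\pi_{\mathbf{C}}| \leq j$ by construction (via the goal test in Algorithm \ref{alg:algorithm1} or the explicit obfuscation/legibility constraints in the IP). A second ingredient is the earlier observation (stated just before the proposition as a separate fact) that $G^R \in \mathcal{G}^\pi_{\mathbf{C}}$, i.e., the true goal is always in the cooperative observer's final belief, because the true state is preserved through every belief update.

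First, I would establish claim (3), which is essentially combinatorial. By the construction of the solution plan, $|\mathcal{G}^\pi_{\mathbf{X}}| \geq k$ and $|\mathcal{G}^\pi_{\mathbf{C}}| \leq j$, and $|\mathcal{G}|=n$. Plugging these inequalities into the definition $GD(\pi) = (|\mathcal{G}^\pi_{\mathbf{X}}| - |\mathcal{G}^\pi_{\mathbf{C}}|)/(n-1)$ immediately yields $GD(\pi) \geq (k-j)/(n-1)$. This is the cleanest part of the proof and serves as a warmup.

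Next, I would handle claims (1) and (2) with a single Bayesian argument. Since the goals are assumed to be independent and equally probable, the prior is uniform over $\mathcal{G}$. A perfectly rational observer $i \in \{\mathbf{X},\mathbf{C}\}$ after observing $ObsSeq_i(\pi)$ updates to a posterior supported exactly on the set $\mathcal{G}^\pi_i$ of goals consistent with the observations and the robot model; by uniform prior and the deterministic observation model this posterior is itself uniform on $\mathcal{G}^\pi_i$. Hence $\Pr(G^R \mid ObsSeq_i(\pi)) = 1/|\mathcal{G}^\pi_i|$ whenever $G^R \in \mathcal{G}^\pi_i$. For the adversary, even if $G^R \in \mathcal{G}^\pi_{\mathbf{X}}$, we have $|\mathcal{G}^\pi_{\mathbf{X}}| \geq k$, giving $\Pr \leq 1/k$; and if $G^R \notin \mathcal{G}^\pi_{\mathbf{X}}$ the probability is $0 \leq 1/k$, so claim (1) holds. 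For the cooperative observer, $G^R \in \mathcal{G}^\pi_{\mathbf{C}}$ always holds (by the preceding proposition) and $|\mathcal{G}^\pi_{\mathbf{C}}| \leq j$, so $\Pr \geq 1/j$, establishing claim (2).

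The main obstacle I anticipate is justifying the move from ``perfectly rational observer'' to ``uniform posterior on $\mathcal{G}^\pi_i$''. This requires carefully stating what likelihood the observer assigns to the robot's plan choice given each candidate goal; in the worst case (and under the stated independence/equi-probability assumption) one needs that every goal in $\mathcal{G}^\pi_i$ can produce the same observation sequence with equal weight, which is where the many-to-one structure of $\mathcal{O}_i$ and the definition of $\mathcal{G}^\pi_i$ via $ObsSeq_i^{-1}(\pi)$ must be invoked. I would therefore spend the bulk of the proof formalizing this likelihood-equivalence step, and then the three claims fall out as short corollaries.
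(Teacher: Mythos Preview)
Your proposal is correct and follows essentially the same approach as the paper: the paper's argument is a terse three-liner that infers the probability bounds directly from the enforced constraints $|\mathcal{G}^\pi_{\mathbf{X}}| \geq k$ and $|\mathcal{G}^\pi_{\mathbf{C}}| \leq j$ (invoking only ``observation equivalence'' to justify no additional information leakage), then reads off the $GD$ bound from the definition. Your write-up is actually more careful than the paper's---you explicitly invoke the preceding proposition that $G^R \in \mathcal{G}^\pi_{\mathbf{C}}$, handle the case $G^R \notin \mathcal{G}^\pi_{\mathbf{X}}$, and flag the likelihood-equivalence step that the paper simply asserts---but the underlying logic is identical.
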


Given $|\mathcal{G}^\pi_\mathbf{X}| \geqslant k$, $X$ can infer $G_\mathbf{A}$ with probability $\leqslant 1/k$. Similarly, given $|\mathcal{G}^\pi_\mathbf{C}| \leqslant j$, $C$ can infer $G_\mathbf{A}$ with probability $\geqslant 1/j$. Also, similarly, $GD(\pi) \geqslant \frac{k-j}{n-1}$. The above proposition states that, based on the observation equivalence there is no additional information revealed about the robot's true goal. Therefore, we can derive goal detection upper bound for $\mathbf{X}$ and lower bound for $\mathbf{C}$. Also this allows us to derive a lower bound on the plan quality. 

\section{Lies}
\label{lies}
\index{Lies}%

Most accounts of explanations studied in this book consider cases where an automated agent shares information about its decision-making model, to accurately justify decisions it has made. It is of course possible that if a system would desire so, it could hijack such a model reconciliation communication process to shape the observer's expectation to be divergent from the true model that is being used by the robot. In this section, we will refer to such manipulation of the model reconciliation process as {\em lies}. As with the implicit communication/behavioral manifestations of deception, lies, and explanations are inexorably connected. One could argue that any poct hoc reconstructive explanations discussed in Chapter \ref{ch07} constitute a form of lies (insofar as the model being communicated was never explicitly used in its decision-making process), or by relying on minimal explanations, the agent is engaging in lies by omission (insofar that they may be relying on the human's false beliefs to simplify explanations). To simplify the discussion, when we refer to lies we are referring to scenarios wherein the agent is communicating information it knows to be false.

\subsection{When Should an Agent Lie?}
To start with, the first question we might want to ask is, why would we ever want an agent that lies? Outside purely adversarial scenarios, are there cases where we might want to employ such systems? Rather than look at scenarios where the agent is taking an explicitly adversarial stance, we will consider cases where the lies are used in service of improving the team utility in some capacity.
There is enough evidence from social sciences that lies do play a (potentially positive) role in many real world human-human interaction scenarios. 
\unsure{For example consider the interaction between a doctor and a patient. 
There are few scenarios where the role of lies have been thoroughly debated than in doctor-patient interactions. Whether it relates to the prescription of placebo or withholding certain information from the patient, there are many scenarios within a doctor-patient relationship wherein the people have argued for potential usefulness of deception or lies.
In the case of human-machine teaming scenario, some potential use cases for such lies include}
\index{Lies!Belief Shaping}%
\index{Lies!White Lies}%
\begin{enumerate}
    \item Belief Shaping: Here the system could choose to shape the beliefs of their potential teammate, through explicit communication, to persuade your teammate to engage behavior the autonomous agent believes would result in higher utility.
    \item White Lies: This case could involve cases where the exact explanation may be too expensive to communicate, and the robot could choose to communicate a simpler `explanation', which while technically contains untrue information is used to justify the optimal behavior. Such white lies could be particularly helpful when it may be expensive computationally or time consuming for the human teammate to process the original explanations.
\end{enumerate}
\subsection{How can an Agent Lie?}
Now the question would be how one could generate lies. In general one could leverage the same model space search, and instead of restricting the model updates to only those constrained by the true robot model, in this case the search is free to make any update on the model. If the agent is not allowed to introduce new fluents or new actions, this is still a finite search space for classical planning domains. Though this would constitute a much larger search space than the one considered in the explanation, which brings up an interesting point that {\em at least in computational terms telling the truth might be an easier strategy to following}.

One way to constrain the search space would be to limit the kind of changes that are allowed under the lies. One natural choice may be to limit lies to those that remove factors from the propositional representation of the model (i.e $\Gamma(\mathcal{M}^R_h)$) (Section \ref{ch05:model-rec-sec}). Such lies have been sometimes referred to as {\em lies by omission} in the literature. 
Another possibility may be to leverage theories of model evolution or drift when such information may be available to consider believable changes to models. Or if the lies need to include new possibilities (including new actions or fluents), consider leveraging large language models (like \cite{brown2020language}) or knowledge bases like WordNet \citep{fellbaum2010wordnet} to introduce them in a meaningful way. That is consider existing actions and fluents and use the knowledge base to look for related concepts and try to introduce them into the model.
 
\subsection{Implications of Lies}
\index{Lies!Implications}%
User studies (as reported in \cite{when}) have shown that at the very least lay users seem to be open to the idea of an agent engaging in such deceptive behavior, when it is guaranteed to lead to higher team utility. 
\todo{In this book, we will not investigate the moral and ethical quandaries raised by designing an agent capable of lying, but rather look at a much simpler question --
if we are allowing for deceptive behavior in the hope of higher utility, how does one even guarantee that such behavior would in fact lead to higher team utility?} In some cases, the agent could very well be aware of the fact that it has access to more information than the other humans in the environment (owing to more sophisticated sensors or its location in the environment) and it may be confident in its ability to process and reason with the information it has access to. Though in many scenarios there is a small probability that the information it has access to is incorrect (owing to a faulty sensor or just change in information) or it may have overlooked some factor in its computation. As such the lie could lead to unanticipated negative outcomes. In such cases, it is quite possible that the human teammate could be a lot more critical of its automated teammate that chose to lie as compared to a case where the robot made an unintentional mistake. To us, this speaks for the need for significant further work to be done to not only understand under what conditions an automated system could confidently make such calls, but also better tools to model trust of teammates.

\section{Bibliographical Remarks}

The different obfuscatory behaviors discussed in this chapter have been formulated within the controlled observability planning framework introduced by \cite{unified-anagha}. The work on deceptive planning was introduced by \cite{masters2017deceptive}. The generalized extension of controlled observability framework (referred to as \textsc{mo-copp}) used for balancing goal obfuscation for adversaries with the goal legibility for cooperative observers was introduced by \cite{kulkarni2019signaling}.
In terms of lies, the discussion provided is based on the papers, \cite{when} and \cite{how}, where the works respectively considered when and why the agents should consider generating lies and the computational mechanisms that could be employed towards generating such explanations. Outside of these specific papers, there is a lot of work investigating the utility of lies in various teaming scenarios. There exists a particular extensive literature on the role of lies in decision-making in medical literature \cite{palmieri2009lies}. In particular, many works have argued for the importance of a doctor withholding information from the patient (cf. \cite{korsch1998intelligent} and \cite{holmes1895medical}).

\clearpage
\chapter{Applications}
\label{ch09}
In this section, we will look at four different applications that leverage the ideas discussed in this book. In particular, all the systems discussed in this chapter will explicitly model the human's mental model of the task and among other things use it to generate explanations. In particular, we will look at two broad application domains. One where the systems are designed for collaborative decision-making, i.e systems designed to help user come up with decisions for a specific task and another system designed for helping users specify a declarative model of task (specifically in the context of dialogue planning for an enterprise chat agent).

\section{Collaborative Decision-Making}
\index{Collaborative Decision-Making}%
Our first set of applications will be centered around systems that are designed to help end-users make decisions.
\section{Humans as Actors}
\index{Humans as Actors}%
\begin{figure}[t!]
\centering 
\includegraphics[width=\columnwidth]{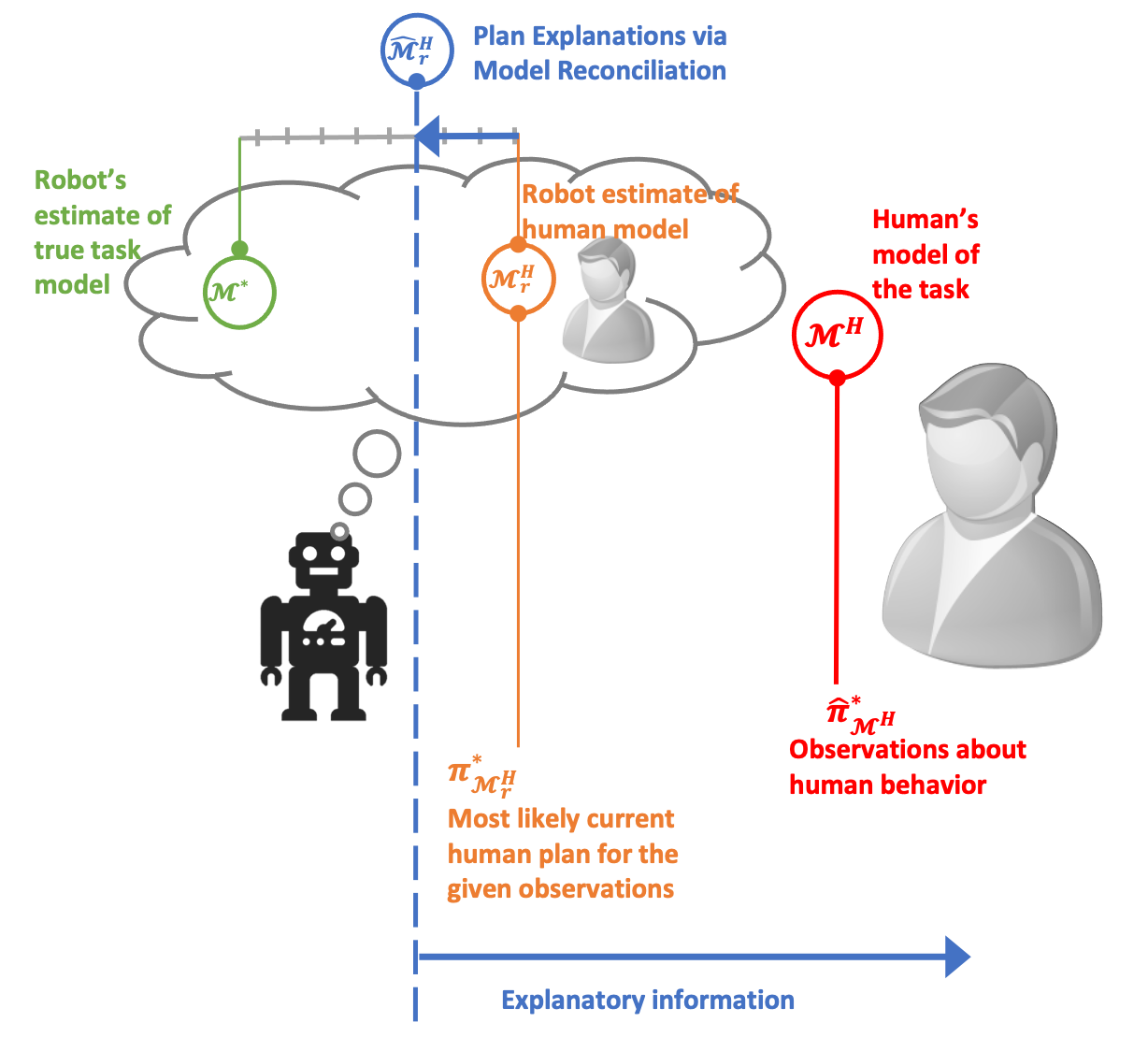}
\caption{Illustration of the scenarios where human is the primary actor and robot the observer.}
\label{ch_10:human_centric}
\end{figure}
All the theoretical and formal discussions in the book until now have focused on scenarios, where the robot is the one acting in the world and the human is just an observer. The specific decision-support scenarios, we will look at in this section, requires us to consider a different scenario, one where the human is the primary actor and the robot is either an observer or an assistant to the robot.
The setting is illustrated in figure \ref{ch_10:human_centric}. 
In this scenario, we have a human with a model $\mathcal{M}^H$ who is acting in the world, and a robot observing the actor. The robot has access to an approximation of the human model $\mathcal{M}^H_r$ and may also have access to a model $\mathcal{M}^*$ which they believe is the true model of the task that the human is pursuing. In addition to the decision-support settings, these settings are also present in settings where the AI system may be trying to teach the human (as in ITS systems \cite{kurt-its1}) and even cases where a robot may be trying to assist the human achieve their goal \cite{chakraborti2015planning}.

Model-reconciliation explanation in this setting would consist of the robot communicating information present in $\mathcal{M}^{*}$, but may be absent in $\mathcal{M}^{H}$ (or the robot believes it to be absent based on its estimate $\mathcal{M}^{H}_r$). One could also formalize a notion of explicable plan in this setting, particularly for cases where the robot may be suggesting plans to the human. In this case, the explicable plan consists of solving for the following objective
\\
\[\textrm{Find:}~ \pi\]
\[\max_{\pi \in \Pi^{\mathcal{M}^H_r}}~    E(\pi, \mathcal{M}^H_r) \]
\[\textrm{Such that}~ \pi ~ \textrm{is executable in both}~\mathcal{M}^H_r~\textrm{and}~\mathcal{M}^*\]\\
That is the goal here becomes to suggest a plan that is explicable with respect to the model $\mathcal{M}^H_r$ in that it is close to the plan expected under the model $\mathcal{M}^H_r$, but is executable in both $\mathcal{M}^H_r$ and $\mathcal{M}^*$. Additionally, we may also want to choose plans whose cost in the model $\mathcal{M}^*$ is low (could be captured by adding a term $-1\times C^*(\pi)$ to the objective).

\subsection{RADAR}
\index{RADAR}%
\index{Decision-Support Systems}%
\index{Proactive Decision Support@PDS}%
The first example, we will consider is a Proactive Decision Support system called RADAR \citep{radar}. Proactive Decision Support (PDS) aims at improving the decision making experience of human decision makers by enhancing both the quality of the decisions and the ease of making them. 
RADAR leverages techniques from automated planning community that aid the human decision maker in constructing plans. Specifically, the system focuses on expert humans in the loop who share a detailed, if not complete, model of the domain with the assistant, but may still be unable to compute plans due to cognitive overload.

\begin{figure*}[t]
\centering
\includegraphics[width=\textwidth]{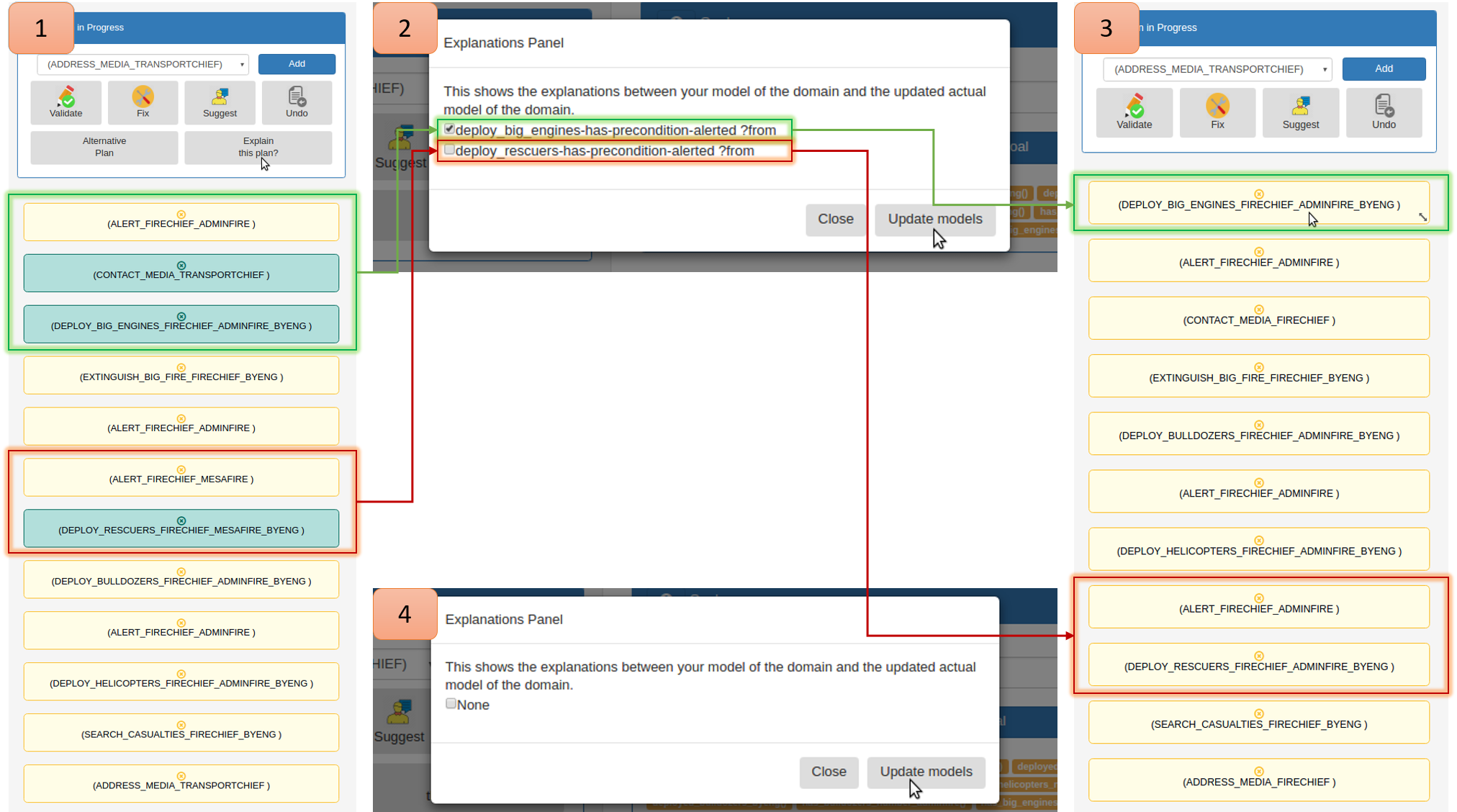}
\caption{RADAR system being applied to a firefighting domain, where the goal is to extinguish a number of fires. (1) RADAR knows that in the environment, the commander needs to inform the fire station's fire chief before deploying big engines and rescuers. In green, Adminfire's fire chief is alerted to deploy big engines from Admin Fire Station. In red, Mesa fire stations' fire chief is alerted to deploy rescuers from Mesa fire station. (2) The human's model believes that there is no need to inform fire chiefs and questions RADAR to explain his plan.  RADAR finds these differences in the domain model and reports it to the human. The human acknowledges that before deploying rescuers one might need to alert the fire chief and rejects the update the fire chief needs to be alerted before deploying big engines. (3) In the alternative plan suggested by RADAR, it takes into account the humans knowledge and plans with the updated model.  (4) Clicking on `Explain This Plan' generates no explanations as there are none (with respect to the current plan) after the models were updated.}
\label{ch12:radar}
\end{figure*}

The system provides a number of useful features like plan suggestion, completion, validation and summarization to the users of the system. The entire system is built to allow users to perform naturalistic decision-making, whereby the system ensures the human is in control of the decision-making process. This means the proactive decision support system focuses on aiding and alerting the human in the loop with his/her decisions rather than generating a static plan that may not work in the dynamic worlds that the plan has to execute in. Figure \ref{ch12:radar}, presents a screenshot of the RADAR system.

The component that is of particular interest to discussions in this book is the ability to support explanations. The system adapts model reconciliation explanations to address possible differences in the planner's model of the domain and the human expectation of it. Such differences could occur if the system may be automatically collecting information from external sources and thus the current estimate of the model diverges from the original specification provided/approved by the user. Here the system performs a model-space search to come up with Minimally Complete Explanation (Chapter \ref{ch05} Section \ref{ch05:explanation_types}) to explain the plan being suggested.
An important distinction in the use of explanations from previous chapter here is the fact that the human has the power to veto the model update if she believes that the planner's model is the one which is faulty, by choosing to approve or not approve individual parts of the explanation.

For example, consider the scenario highlighted in Figure \ref{ch12:radar}, which presents a firefighting scenario where a commander is trying to come up with a plan to extinguish a fire in the city of Tempe. In this scenario, the RADAR system presents a plan (which itself is a completion of some action suggestions made by the commander), which contains unexpected steps to notify the Fire chief at multiple points. When the commander asks for an explanation, the system responds by pointing out that actions for both deploying big fire engines and the deploying rescuers has a precondition that the fire chief should be alerted in its model (which according to the system's model of the commander is missing from the commander's model). The commander responds by agreeing with the system on the fact that the fire chief needs to be informed before deploying rescuers, but also informs the system that fire chief doesn't need to be informed before deploying big fire engines. The system uses this new information to update its own models about the task and generates a new plan.


\subsection{MA-RADAR}
\index{MA-RADAR}%
\index{Multiple Decision Makers}%
\index{Augmented Reality Devices}%
\begin{figure*}[t]
\centering
\includegraphics[width=\textwidth]{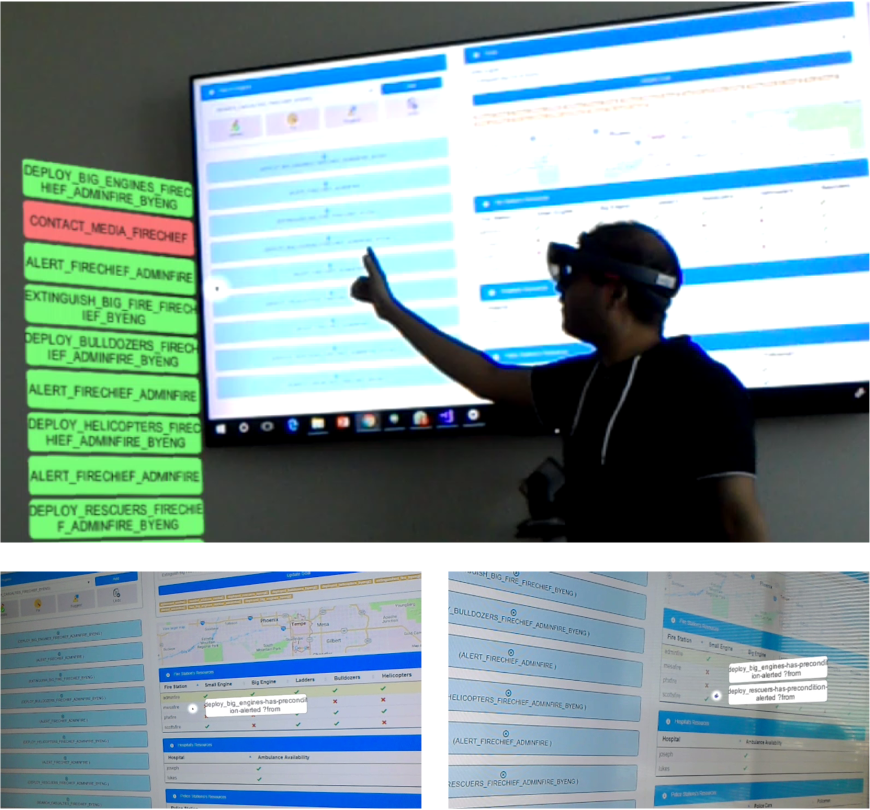}
\caption{A user wearing an augmented reality (AR) device to use the MA-RADAR interface. The figures show, how in addition to the common information shown on the screen the AR device allow users to view information private to each user.}
\label{ch12:ma-radar}
\end{figure*}
Next we will consider yet another extension of RADAR, but one that is focused on supporting multiple user types. In particular, MA-RADAR \citep{ma-radar} considers a scenario where there are multiple users of possibly different backgrounds working together to construct a single solution. Such decision-making scenarios are further complicated when the different users may have differing levels of access and may have privacy concerns limiting the sharing some of the information with other decision-makers in the loop. As such, one of the focuses of  MA-RADAR was to use augmented reality (AR) to allow the different users to view differing views of the task while working together on the same interface. The fact that they are using AR techniques means that in addition to the public information (i.e. the information that all the users can access), each user can view their specific private information which may not be accessible to others. For example, revisiting the firefighting domain, let us consider the case where there are two commanders working together to come up with a plan for controlling the fires. Here in addition to each commanders personal understanding of the task which may be inconsistent with each other and even what the system knows (for example status of some resources etc.), but may have knowledge about certain fluents and actions that are private to each commander and may not necessarily want the other commander to know. In terms of the explanations, each user could have different background knowledge and the system should strive to establish a common ground whenever possible. As such, the system uses the multi-model explanation method discussed in Chapter \ref{ch06} (Section \ref{ch06:model_unc}). Specifically, MA-RADAR combines the individual models into a single annotated model and explanations are generated with respect to this annotated model. The generated explanation is then filtered with respect to each user, so they only view information relevant to them (i.e the information is not redundant as per their model and  is not private to any of the other users). Each user can then use their AR interface to view their specific explanations. Figure \ref{ch12:ma-radar}, presents the augmented reality interface that is shown to the user of the system.

\subsection{RADAR-X}
\index{RADAR-X}%
\index{Preference Elicitation}%
\index{Conflict Set}%
\index{Nearest Feasible Plan}%
\index{Plaussible Subset}%

Next we will consider a variant of the basic RADAR system was extended to support explanatory dialogue and iterative planning. The system, named RADAR-X \citep{radar-x}, assumes that the user may have latent preferences that may not be specified to the system upfront or need not be completely realizable. The explanatory dialogue, particularly contrastive questions (i.e the user asks {\em Why this plan P as opposed to plan Q?}) thus becomes a way for the user to expose their underlying preferences. This system assumes that as a response to a specific plan, the users responds with a partial specification of the plans they prefer. In particular the system expects users to provide partial plans that can be defined by a tuple of the form $\hat{\pi} = \langle \hat{A}, \prec \rangle$, where $\hat{A}$ specifies a multi-set of actions the user expects to see in the plan and $\prec$ specifies the set of ordering constraints defined over $\hat{A}$ that the user expects to be satisfied. A sequential plan is said to satisfy a given partial plan $\hat{\pi}$ if the plan contains each action specified in $\hat{A}$ and they satisfy all the ordering constraints specified in $\prec$.

\unsure{If the partially specified plan is feasible, the system suggests one of the possible instantiations of the partial plan (i.e. a plan that satisfies the given specification) to the decision-maker. The user could possibly further refine their choice by adding more information into the partial specification, until she is happy with the choice made by the system. If the partial specification is not feasible, then the system responds by first providing an explanation as to why the foil is not feasible. This is similar to the explanation generation method specified in Section \ref{contr}, but with the end condition being the case of identifying the updated model where the foils are infeasible.}

Once the explanation is provided, the system tries to generate plans that are closer to the specified foil. The system currently tries to find a subset of the original partial plan that the user specified that can be realized by the system, where for a given partial plan $\hat{\pi}=\langle \hat{A}, \prec\rangle$ a partial plan $\hat{\pi}' = \langle \hat{A}, \prec\rangle$ is considered a subset if $\hat{A}' \subseteq \hat{A}$, and for every $a_1, a_2 \in \hat{A'}$, $a_1 \prec' a_2$ if and only in  $a_1 \prec a_2$. The system currently considers three strategies for coming up with such subsets.
\begin{enumerate}
    \item Present the closest plan: Among the possible maximal subsets the system chooses one of them and presents a plan that corresponds to this subset.
    \item Present plausible subsets: The user is presented with all possible maximal subsets and they can select the one that most closely represents their preferences.
    \item Present conflict sets: The user with minimal conflict sets. That is minimal subset of actions and their corresponding ordering constraints that cannot be achieved together. So the user is asked to make a choice to remove one of the conflicting action from the set.
\end{enumerate}
The system also looks at possible approximations that could be used to speed up the calculations.

\section{Model Transcription Assistants}
In this section, we will consider the problem of systems designed to allow domain experts to transcribe the models in some declarative form. Here the difference in the mental models actually comes from possible mistakes that the user may make while writing the declarative model. Clearly in this case, the system doesn't have access to the human's mental model and the direction of reconciliation is to get closer to the user's mental model.
So here the process involves generating explanations for specific user queries by assuming that the human's model is an abstract version of the current model. Thus exposing relevant fragments of models that correspond to the relevant behavior and thus letting user directly fix any inconsistencies in the exposed fragment.
\subsection{D3WA+}
\index{D3WA+}%
\index{Goal Directed Conversation Agents}%
\unsure{The tool D3WA+ \cite{d3wa+} implemented this idea in the context of transcribing declarative models for dialogue planning for an automated chat agent. It was an extension of a previous system called D3WA \cite{d3wa} developed by IBM. D3WA allowed dialogue editors to encode their knowledge about the dialogue tree in the form of a non-deterministic planning domain. D3WA+ took this base system and extended by providing debug tools to the domain designers that allowed them to query the system to better understand why the system was generating the current dialogue tree. Figure \ref{ch09:d3wa+} presents a screenshot of the D3WA+ interface.}

In particular, the system focused on providing the domain writer with the ability to raise two types of queries:

\begin{enumerate}
    \item Why are there no solutions?
    \item Why does the generated dialogue tree not conform to their expectation?
\end{enumerate}
The first question is expected to be raised when there exists no possible trace from initial state to goal under the current model specification, and the latter when the domain writer was expecting the dialogue tree to take a particular form that is not satisfied by the one generated by the system. Thus the system would need to explain why the current problem is unsolvable. In the second case, the domain writer has to specify their expected dialogue flow on the storyboard panel. A dialogue flow in this case would consist of possible questions the chat agent could raise and possible outcomes. 
Note the specified flow doesn't need to contain a complete dialogue sequence (which starts at the beginning, ends with the end-user, i.e. the one who is expected to use the chat bot, getting the desired outcome and contains every possible intermediate steps) but could very well be a partial specification of the dialogue flow that highlights some part of the dialogue. Assuming that the expected flow can not be supported by the current model specification, the system would need to explain why this particular flow isn't possible. If the user had specified a complete dialogue sequence, then the system can merely test the sequence in the model and provide the failure point. Though this will no longer be possible if the user only specified a partial foil. \unsure{In such cases, we would need to respond to why any possible sequence that satisfies the partial specification provided by the domain writer will be impossible. This can now be mapped into explaining the unsolvability of a modified planning problem, one that constrains the original problem to only allow solutions that align with the specified flow.}

Thus the answer to both these questions maps into explanations of problem unsolvability. The system here leverages approaches discussed in \cite{sreedharan2019can}, to find the minimal subset of fluents for which the problem is unsolvable. A minimal abstraction over this set of fluents (with others projected out) is then presented to the domain writer as an unsolvable core of the problem that they can then try to fix. In addition to the model abstraction, two additional debugging information is provided to the user. An unreachable landmark and the failure information for an example trace. The unsolvable landmark is extracted by considering the delete relaxation of the unsolvable abstract model. The example trace is generated from the most concrete solvable model abstraction for the original model, so that more detailed plans are provided to the user. Such concrete models are generated by searching for the minimum number of fluents to be projected out to make the problem solvable.
\begin{figure*}
\centering
\includegraphics[scale=0.5]{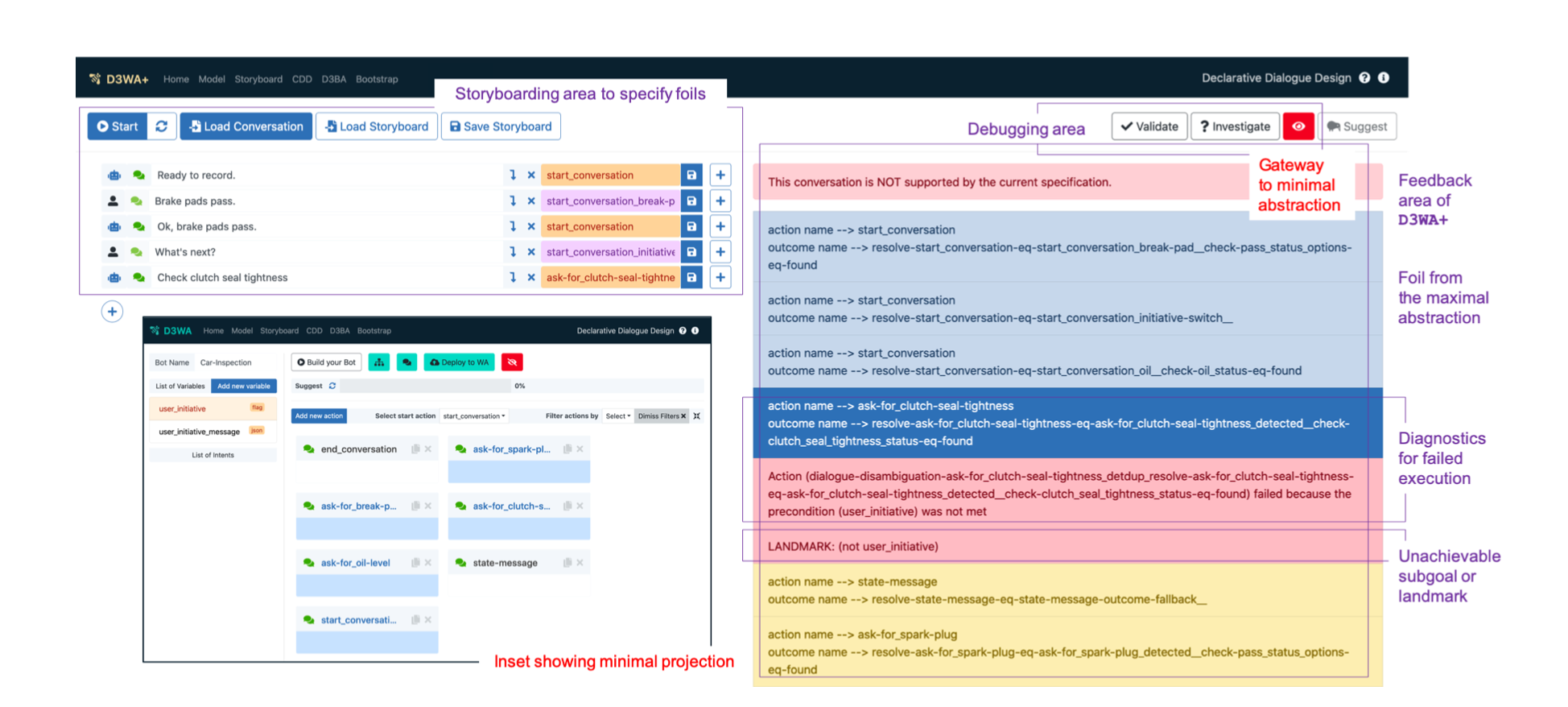}
\caption{An overview of the interface provided by D3WA+ to its end users. Which includes an option to specify foils or expected behavior, a panel that shows an abstract version of the domain and also additional debugging information provided to the user that includes information like: example plan failure, unachievable landmarks etc.}
\label{ch09:d3wa+}
\end{figure*}

\section{Bibliographic Remarks}
\index{FRESCO}%
\unsure{Many of the applications we consider in this chapter have been demonstrated at various conferences including ICAPS and AAAI. RADAR system was first described in \cite{radar}. A version of the system that focused on plan of study generation was described in  \cite{grover2020model}. The paper also presented a user study which validated various components that were part of the system. The multi agent version MA-RADAR was presented in \cite{ma-radar} and was demonstrated in ICAPS 2018 demo track. The contrastive version, i.e., RADAR-X was presented in \cite{radar-x} and was also presented as a demo at AAAI-21. The D3WA+ system was introduced in \cite{d3wa+} , the system was presented as a demo in ICAPS-20 and was awarded the best demo award. Apart from the ones listed in this chapter the ideas presented in the book have also been used in other applications. One prominent one is the FRESCO system \citep{fresco} that was part of an IBM project for a smart room. The system made use of a variant of the model reconciliation for selective plan visualization. They try to model plan visualization as a process of model reconciliation against an empty user model. The process involved identifying the minimum number of components of the actions in the current plan (i.e., the preconditions and effects) that needed to be communicated to the user for the plan to make sense (i.e the plan is valid and optimal). As mentioned earlier, many of the decision-support applications look at settings where the human is the actor and the system has access to a model $\mathcal{M}^H_r$ of the human. All the specific applications discussed assume that $\mathcal{M}^H_r$ is accurate in that it is close to $\mathcal{M}^H$, but this may not be true in general. While this is quite similar to the asymmetry between $\mathcal{M}^R$ and $\mathcal{M}^R_h$. There is a major difference in that unlike the earlier case, the agent with access to the true model, in this case, the human, may not be invested in performing the reconciliation even if she is aware of the asymmetry. In this scenario, the robot may have to initiate the process of reconciliation, one possible way may be to make use of questions to identify information about the human model \cite{grover2020model}.}  

\clearpage
\chapter{Conclusion}

\label{ch10}
This book presents a concise introduction to recent research on human-aware decision-making, particularly ones focused on the generation of behavior that a human would find explainable or deceptive. Human-aware AI or HAAI techniques are characterized by the acknowledgment that for automated agents to successfully interact with humans, they need to explicitly take into account the human's expectations about the agent.
In particular, we look at how for the robot to successfully work with humans, it needs to not only take into account its model $\mathcal{M}^R$, which encodes the robot's beliefs about the task and their capabilities but also take into account the human's expectation of the robot model $\mathcal{M}^R_h$, which captures what the human believes the task to be and what the robot is capable of. 
It is the model  $\mathcal{M}^R_h$ that determines what the human would expect the robot to do and as such if the robot expects to adhere to or influence the human expectations, it needs to take into account this model.
Additionally, this book also introduces three classes of interpretability measures, which capture certain desirable properties of robot behavior. Specifically, we introduce the measures {\em Explicability}, {\em Legibility}, and {\em Predictability}.

\unsure{In the book, we mostly focused on developing and discussing methods to address the first two of these interpretability measures. We looked at specific algorithms that allow us to generate behaviors that boost explicability and legibility.
We also looked at the problem of generating explanations for a given plan, and how it could be viewed as the use of communication to boost the explicability score of a selected plan by updating human expectations. Additionally, we looked at variations of this basic explanation framework under differing settings, including cases where the human models of the robot may be unknown or where the robot's decision-making model may be expressed in terms that the human doesn't understand. We also saw a plan generation method that is able to incorporate reasoning about the overhead of explanation into the plan selection process, thereby allowing for a method that is able to combine the benefits of purely explicable plan generation methods and those that identify explanations after the plan has been selected.}

\unsure{Additionally, we also saw that modeling human expectations not only allows us to create interpretable behaviors but also provides us with the tools needed to generate deceptive and adversarial behaviors. In particular, we saw how one could leverage the modeling of the other agent to create behaviors that obfuscate certain agent model information from an adversarial observer or even deceive them about the model component. We also saw how model reconciliation methods could be molded to create lies, which even in non-adversarial scenarios could help the agent to achieve higher team utility at the cost of providing some white lies to the human.}

\unsure{
While the problem of generating explanations for AI decisions or developing deceptive AI agents has been studied for a while, the framing of these problems within the context of human-aware AI settings is a relatively recent effort. As such, there exists a number of exciting future directions to be explored and technical challenges to overcome, before we can have truly human-aware systems. Some of these challenges are relatively straightforward (at least in their conception), as in the case of scaling up the methods and applying them within more complex scenarios that are more faithful to real-world scenarios. Then there are challenges that may require us to rethink our current strategies and whose formalization itself presents a significant challenge. Here we would like to take a quick look at a few of these challenges.}

\paragraph{Creating a Symbolic Middle Layer}
\index{Symbolic Middle Layer}%
The necessity of symbols in intelligent decision-making has been a topic that has been widely debated within the field of AI for a very long time. Regardless of whether symbols are necessary for intelligence, it remains a fact that people tend to communicate in terms of symbols and concepts. As such, if we want to create systems that can interact with people effectively they should be capable of communicating using symbols and concepts people understand. In chapter \ref{ch07}, we have already seen an example, where an agent translates information about its model into terms that are easier for a human to understand, but if we want to create successful systems that are truly able to collaborate with humans then we need to go beyond just creating explanation generation systems. They need to be able to take input from the human in symbolic terms even when the model may not be represented in those terms. Such advice could include information like instructions the agent should follow, possible domain, and preference information. Interestingly the advice provided by the human would be colored by what they believe the agent model to be, and as such correct interpretation of the specified information may require analyzing the human input in the light of their expectation about the agent model. Some preliminary works in this direction include \cite{guan2020explanation} and for a discussion on the overall direction, the readers can refer to \cite{kambhampatisymbols}.
\paragraph{Trust and Longitudinal Interaction}
\index{Trust}%
\index{Longitudinal Interaction}%
Most of the interactions discussed in this book are single-step interactions, in so far that they focus on the agent proposing a plan for a single task and potentially handling any interaction requirements related to that plan. But we are not focused on creating single-use robots. Rather we want to create automated agents that we can cohabit with and work with on a day-to-day basis. This means the robot's choice of actions can no longer be made in isolation, rather it should consider the implications of choosing a certain course of action on future interaction with humans. We saw some flavors of such consideration in methods like Minimally Monotonic Explanations (MME) (Chapter \ref{ch05}, Section \ref{ch05:explanation_types}) and the discounting of explicability over a time horizon (Chapter \ref{ch03}, Section \ref{Ch03:Design}), though these are still limited cases. As we move forward, a pressing requirement is for the robots to be capable of modeling the level of trust the human holds for the robot and how the robot's actions may influence the human trust.
Such trust-level modeling is of particular importance in longitudinal settings, as it would be the human trust on the robot that would determine whether or not the human would choose to work with it on future tasks.
Thus the impact of the robot action on human trust should be a metric it should consider while coming up with its actions. At the same time, reasoning about trust levels also brings up the question of trust manipulation and how to design agents that are guaranteed to not induce undeserved trust in its capabilities. Some preliminary works in this direction are presented in \cite{zahedi2021trust}.
\paragraph{Learning Human Models}
\index{Learning Human Models}%
The defining feature of many of the techniques discussed in this book is the inclusion of the human's expectations, sensory capabilities, and in general their models into the reasoning process. In many scenarios, such models may not be directly available to the robot and it may be required to learn such models either by observing the human or by receiving feedback from the human. We have already seen examples of learning specific model proxies that are sufficient to generate specific classes of behaviors. But these are specialized models meant for specific applications. More general problems may require the use of additional information and even the complete model. While the model $\mathcal{M}^H$ could be learned by observing the human behavior, the model $\mathcal{M}^R_h$ is usually more expensive to learn as it requires the human to either provide feedback on the robot behavior or provide the full plan they are expecting from the robot. As such learning a fully personalized model for a person may require too much information to be provided by a single person. A more scalable strategy may be to learn approximate models for different user types from previously collected data. As and when the robot comes into contact with a new human, they can use the set of learned models to identify the closest possible model. This model can act as a starting point for the interaction between the system and the user and can be refined over time as the robot interacts with the human. Some initial work in this direction can be found in \cite{soni2021not}.
\paragraph{Super Human AI and Safety}
Currently, most successful AI systems are generally less competent than humans overall but may have an edge over people on specific narrow tasks. There is no particular reason to believe that this condition should always persist. In fact, one could easily imagine a future where AI systems outpace humans in almost all tasks. It may be worth considering how the nature of human-robot interaction may change in such a world. For one thing, the nature and goal of explanation may no longer be about helping humans understand the exact reason for selecting a decision but rather about giving a general sense of why the decisions make sense. For example, it may be enough to establish why the decision is better than any alternative the human could come up with. Going back to the problem of vocabulary mismatch introduced in Chapter \ref{ch07}, it may very well be the case that there may be concepts that the system makes use of that have no equivalent term in human vocabulary and as such explanation might require the system teaching new concepts to the human. The introduction of the ability to reason and model the human mental model also raises additional safety and ethical questions in the context of such  superhuman human-aware AI. For one, the questions of white lies and deception for improving team utility (Chapter \ref{ch08}, Section \ref{lies}) takes on a whole new dimension and could potentially head into the realm of wireheading. It is very much an open question as to how one can build robust methods and safeguards to control for and avoid such potential safety concerns that may arise from the deployment of such systems. As we see more AI systems embrace human-aware AI principles, it becomes even more important to study and try ameliorate unique safety concerns that arise in systems capable of modeling and influencing human's mental models and beliefs.

\clearpage                
    \renewcommand{\sectionmark}[1]{\markright{#1}}
    \addcontentsline{toc}{chapter}{Bibliography}
    \bibliographystyle{plainnat}
    \bibliography{bib}
    \cleardoublepage
    

\chapter*{Authors' Biographies}
\markboth{AUTHORS' BIOGRAPHIES}{AUTHORS' BIOGRAPHYIES}
\addcontentsline{toc}{chapter}{\protect\numberline{}{Authors' Biographies}}

\section*{Sarath Sreedharan}
\begin{wrapfigure}{L}{0.2\textwidth}
    \includegraphics[width=0.2\textwidth]{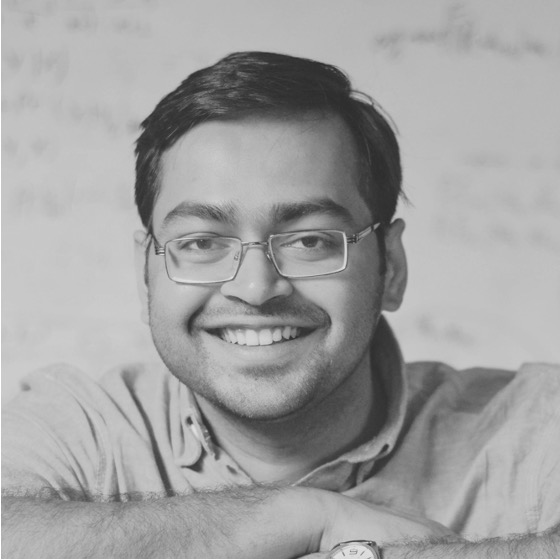}
\end{wrapfigure}
\textbf{Sarath Sreedharan} is a Ph.D. student at Arizona State University working with Prof. Subbarao Kambhampati. His primary research interests lie in the area of human-aware and explainable AI, with a focus on sequential-decision making problems. Sarath's research has been featured in various premier research conferences, including IJCAI, AAAI, AAMAS, ICAPS, ICRA, IROS, etc, and journals like AIJ. He was also the recipient of Outstanding Program Committee Member Award at AAAI-2020.
\\
\section*{Anagha Kulkarni}
\begin{wrapfigure}{L}{0.2\textwidth}
    \includegraphics[width=0.2\textwidth]{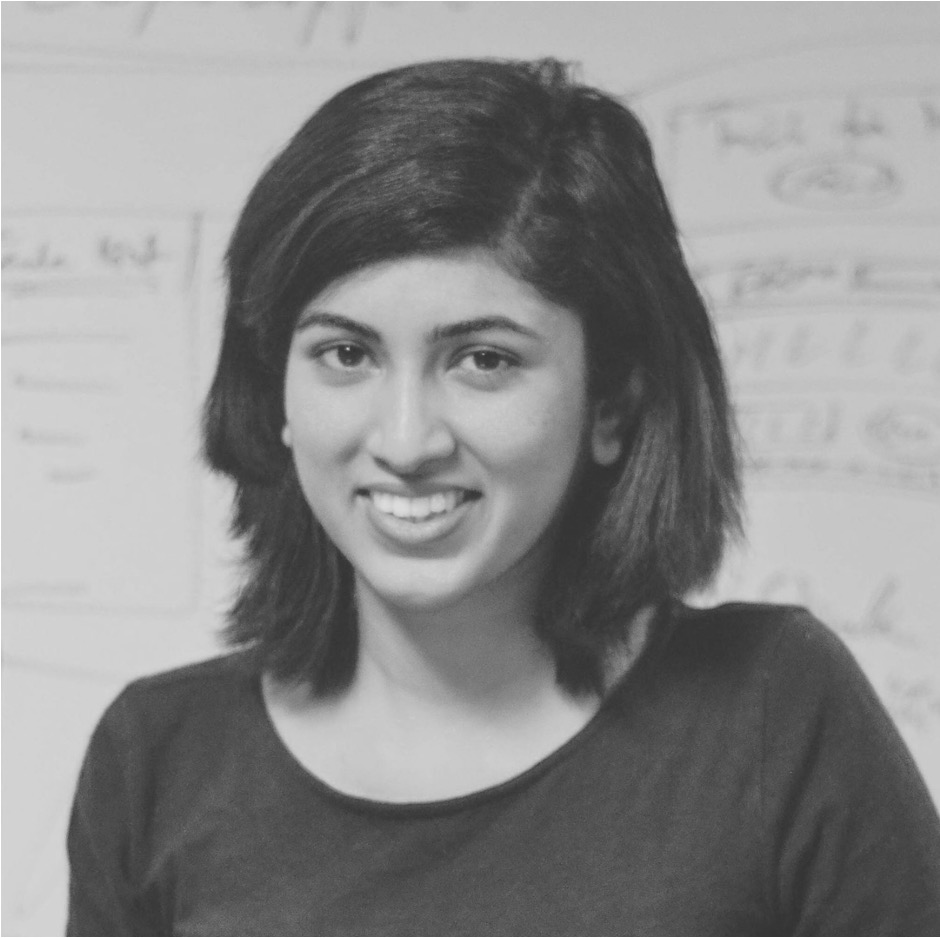}
\end{wrapfigure}
\textbf{Anagha Kulkarni} is an AI Research Scientist at Invitae. Before that, she received her Ph.D. in Computer Science from Arizona State University. Her Ph.D. thesis was in the area of human-aware AI and automated planning. Anagha's research has featured in various premier conferences like AAAI, IJCAI, ICAPS, AAMAS, ICRA and IROS.   
\\
\section*{Subbarao Kambhampati}
\begin{wrapfigure}{L}{0.2\textwidth}
    \includegraphics[width=0.2\textwidth]{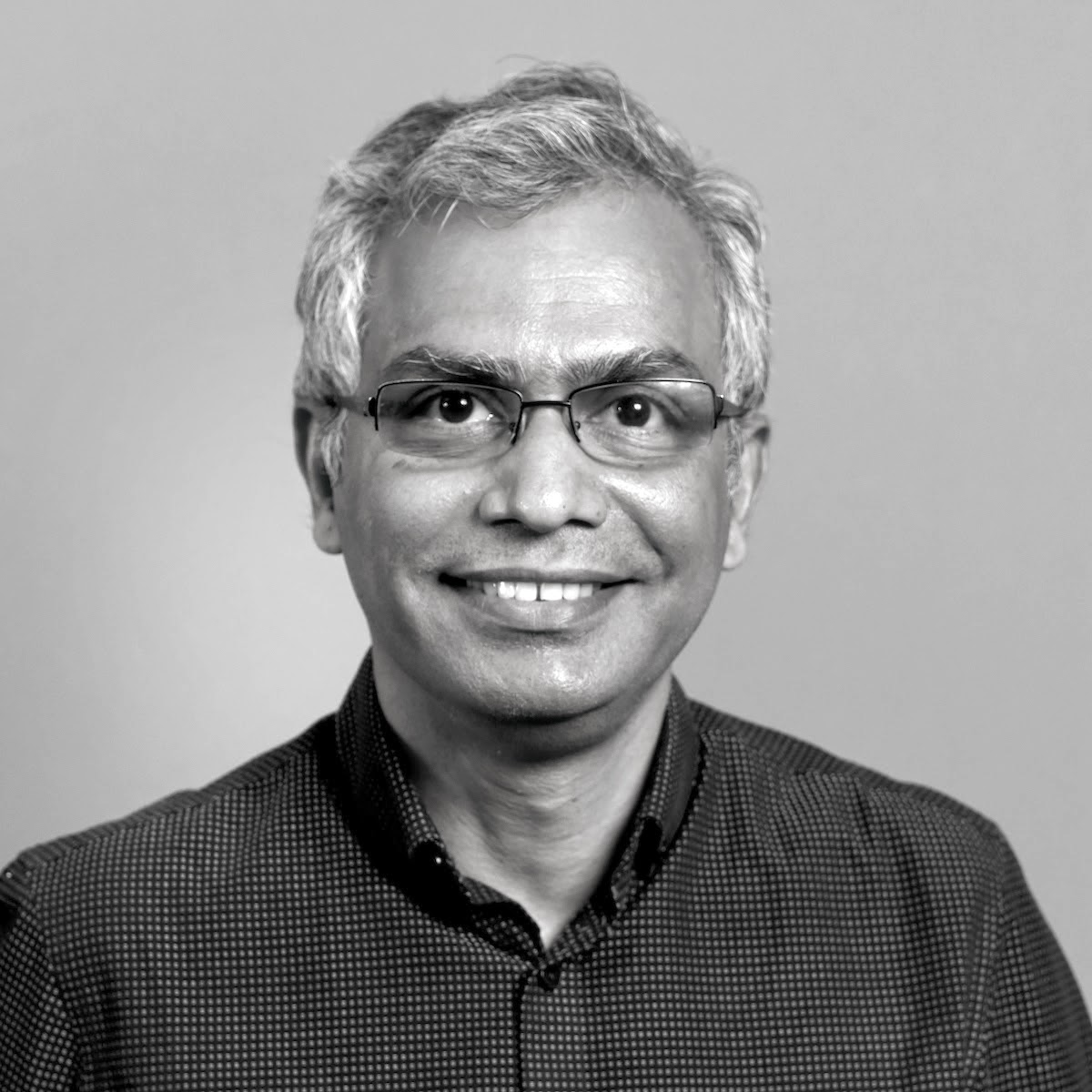}
\end{wrapfigure}
\textbf{Subbarao Kambhampati} is a professor in the School of Computing \& AI at Arizona State University. Kambhampati studies fundamental problems in planning and decision making, motivated in particular by the challenges of human-aware AI systems. He is a fellow of Association for the Advancement of Artificial Intelligence, American Association for the Advancement of Science,  and Association for Computing machinery, and was an NSF Young Investigator. He was the president of the Association for the Advancement of Artificial Intelligence, trustee of International Joint Conference on Artificial Intelligence, and a founding board member of Partnership on AI. Kambhampati’s research as well as his views on the progress and societal impacts of AI have been featured in multiple national and international media outlets. 


\end{document}